\newtheorem{lemma}{Lemma}[section]
\newtheorem{theorem}[lemma]{Theorem}
\newtheorem{definition}[lemma]{Definition}
\newtheorem{assumption}[lemma]{Assumption}
\definecolor{darkgray}{rgb}{0.5, 0.5, 0.5}
\renewcommand{\hat}{\widehat}
\newcommand{\removed}[1]{}
\newcommand{\cA}{\mathcal{A}}
\newcommand{\cE}{\mathcal{E}}
\newcommand{\cT}{\mathcal{T}}
\newcommand{\bP}{\mathbb{P}}
\newcommand{\A}{\mathrm{A}}
\newcommand{\e}{\mathrm{e}}
\newcommand{\I}{\mathrm{I}}
\newcommand{\Ber}{\text{Ber}}
\newcommand{\Unif}{\text{Unif}}
\renewcommand{\log}[1]{\operatorname{log}\left(#1\right)}
\renewcommand{\exp}[1]{\operatorname{exp}\left(#1\right)}
\newcommand{\prob}[1]{\mathbb{P}\left[#1\right]}
\DeclareMathOperator*{\argmax}{\rm argmax}
\DeclareMathOperator*{\argmin}{\rm argmin}
\mathchardef\mhyphen="2D
\newcommand{\ignore}[1]{}
\g@addto@macro{\UrlBreaks}{\UrlOrds}
\begin{document}

\runningtitle{Corralling Stochastic Bandit Algorithms}

\twocolumn[

\aistatstitle{Corralling Stochastic Bandit Algorithms}

\aistatsauthor{ Raman Arora \And Teodor V. Marinov \And  Mehryar Mohri }

\aistatsaddress{ Johns Hopkins University\\\href{mailto:arora@cs.jhu.edu}{arora@cs.jhu.edu}  \And  Johns Hopkins University\\\href{mailto:tmarino2@jhu.edu}{tmarino2@jhu.edu} \And Courant Institute and Google Research\\\href{mailto:mohri@google.com}{mohri@google.com} } ]

\begin{abstract}
  We study the problem of corralling stochastic bandit algorithms,
  that is combining multiple bandit algorithms designed for a
  stochastic environment, with the goal of devising a corralling
  algorithm that performs almost as well as the best base algorithm.
  We give two general algorithms for this setting, which we show
  benefit from favorable regret guarantees. We show that the regret of
  the corralling algorithms is no worse than that of the best
  algorithm containing the arm with the highest reward, and depends on
  the gap between the highest reward and other rewards.
\end{abstract}

\section{Introduction}

We study the problem of \emph{corralling} multi-armed bandit
algorithms in a stochastic environment. This consists of selecting, at
each round, one out of a fixed collection of bandit algorithms and
playing the action returned by that algorithm.
Note that the corralling algorithm does not
directly select an arm, but only a base algorithm. It 
never requires knowledge of the action set of each base algorithm. 
The objective of the
corralling algorithm is to achieve a large cumulative reward or a small
pseudo-regret, over the course of its interactions with the
environment.
This problem was first introduced and studied by
\cite{agarwal2016corralling}. Here, we are guided by the same
motivation but consider the stochastic setting and seek more favorable
guarantees.  Thus, we assume that the reward, for each arm, is drawn
from an unknown distribution.

In the simplest setting of our study, we assume that each base bandit
algorithm has access to a distinct set of arms.  This scenario appears
in several applications. As an example, consider the online contractual 
display ads allocation problem \citep{dispads2020}: when users visit a 
website, say some page of the online site of a national newspaper, 
an ads allocation algorithm
chooses an ad to display at each specific slot with the goal of
achieving the largest value. This could be an ad for a clothing item,
which could be meant for the banner of the online front page of that
newspaper.
To do so, the ads allocation algorithm chooses one out of a large set
of advertisers, 
each a clothing brand or company in this case, which have signed a
contract with the ads allocation company. Each
clothing company has its own marketing strategy and thus its own
bandit algorithm with its own separate set of clothing items or
arms. There is no sharing of information between these companies which
are typically competitors.  Furthermore, the ads allocation algorithm
is not provided with any detailed information about the base bandits
algorithms of these companies, since that is proprietary information
private to each company. The allocation algorithm cannot choose a
specific arm or clothing item, it can only choose a base
advertiser. The number of ads or arms can be very large. The number of
advertisers can also be relatively large in practice, depending on the
domain. The number of times the ads allocation is run is in the order
of millions or even billions per day, depending on the category of
items.

A similar problem arises with online mortgage broker companies
offering loans to new applicants. The mortgage broker algorithm must
choose a bank, each with different mortgage products.  The broker
brings a new application exclusively to one of the banks, as part of
the contract, which also entitles them to incentives.  The bank's
online algorithm can be a bandit algorithm proposing a product, and
the details of the algorithm are not accessible to the broker; for
instance, the bank's credit rate and incentives may depend on the
financial and credit history of the applicant. The number of mortgage
products is typically fairly large, and the number of online loan
requests per day is in the order of several thousands.  Other
instances of this problem appear when an algorithm can only select one
of multiple bandit algorithms and, for privacy or regulatory reasons,
it cannot directly select an arm or receive detailed information about
the base algorithms.

In the most general setting we study, there may be an arbitrary
sharing of arms between the bandit algorithms.  We will only assume that 
only one algorithm has

access to the arm with maximal expected
reward, which implies a positive gap between the expected reward of the
best arm of any algorithm and that of the best algorithm. This is
because we seek to devise a corralling algorithm with favorable
gap-dependent pseudo-regret guarantees.

\textbf{Related work.} The previous work most closely related to this
study is the seminal contribution by \cite{agarwal2016corralling} who
initiated the general problem of corralling bandit algorithms. The
authors gave a general algorithm for this problem, which is an
instance of the generic Mirror Descent algorithm with an appropriate
mirror map ({\sc Log-Barrier-OMD}),
\citep{FosterLiLykourisSridharanTardos2016,WeiLuo2018}, and which
includes a carefully constructed non-decreasing step-size schedule,
also used by \cite{bubeck2017kernel}. The algorithm of
\cite{agarwal2016corralling}, however, cannot in general achieve
regret bounds better than $\tilde O(\sqrt{T})$ in the time horizon,
unless optimistic instance-dependent regret bounds are known for the
corralled algorithms. Prior to their work,
\cite{arora2012deterministic} presented an algorithm for learning
deterministic Markov decision processes (MDPs) with adversarial
rewards, using an algorithm for corralling bandit linear optimization
algorithms. In an even earlier work, \cite{MaillardMunos2011}
attempted to corral {\sc EXP3} algorithms
\citep{auer2002nonstochastic} with a top algorithm that is a slightly
modified version of {\sc EXP4}. The resulting regret bounds are in
$\tilde O(T^{2/3})$.

Our work can also be viewed as selecting the best algorithm for a
given unknown environment and, in this way, is similar in spirit to
the literature solving the \emph{best of both worlds}
problem~\citep{audibert2009minimax,bubeck2012best,seldin2014one,
  auer2016algorithm,seldin2017improved,WeiLuo2018,zimmert2018optimal,
ZimmertLuoWei2019}
and the model selection problem for linear bandit
\citep{foster2019model,chatterji2019osom}.

Very recently, \cite{pacchiano2020model} also considered the problem
of corralling stochastic bandit algorithms. The authors seek to treat
the problem of model selection, where multiple algorithms might share
the best arm. More precisely, the authors consider a setting in which
there are $K$ stochastic contextual bandit algorithms and try to
minimize the regret with respect to the best overall policy belonging
to any of the bandit algorithms. They propose two corralling
algorithms, one based on the work of \citep{agarwal2016corralling} and
one based on EXP3.P \citep{auer2002nonstochastic}. The main novelty in
their work is a smoothing technique for each of the base algorithms,
which avoids having to restart the base algorithms throughout the $T$
rounds, as was proposed in \citep{agarwal2016corralling}.  The
proposed regret bounds are of the order $\tilde\Theta(\sqrt{T})$.  We
expect that the smoothing technique is also applicable to one of the
corralling algorithms we propose. Since \cite{pacchiano2020model}
allow for algorithms with shared best arms, their main results do not
discuss the optimistic setting in which there is a gap between the
reward of the optimal policy and all other competing policies, and do
not achieve the \emph{optimistic guarantees} we provide.  Further,
they show a min-max lower bound which states that even if one of the
base algorithms is optimistic and contains the best arm, there is
still no hope to achieve regret better than $\tilde\Omega(\sqrt{T})$
if the best arm is shared by an algorithm with regret
$\tilde\Omega(\sqrt{T})$.
We view their contributions as complementary to ours.

In general, some caution is needed when designing a corralling
algorithm, since aggressive strategies may discard or disregard a base
learner that admits an arm with the best mean reward if it performs
poorly in the initial rounds. Furthermore, as noted 
by \cite{agarwal2016corralling}, 
additional assumptions are required on each of the base learners 
if one hopes to achieve non-trivial corralling guarantees. 

\textbf{Contributions.} 
We first motivate our key assumption that all of the corralled
algorithms must have favorable regret guarantees during all rounds. To
do so, in Section~\ref{sec:lower_bounds}, we show that if one does not
assume anytime regret guarantees, then even when corralling simple
stochastic bandit algorithms, each with $o(\sqrt{T})$ regret, any
corralling strategy will have to incur $\Omega(\sqrt{T})$
regret. Therefore, for the rest of the paper we assume that each base
learner admits anytime guarantees.  In Section~\ref{sec:ucb_boost} and
Section~\ref{sec:tsallis}, we present two general corralling
algorithms whose pseudo-regret guarantees admit a dependency on the
gaps between base learners, that is their best arms, and only
poly-logarithmic dependence on time horizon. These bounds are
syntactically similar to the instance-dependent guarantees for the
stochastic multi-armed bandit problem~\citep{auer2002finite}. Thus,
our corralling algorithm performs almost as well as the best base
learner, if it were to be used on its own, modulo gap-dependent terms
and logarithmic factors. The algorithm in Section~\ref{sec:ucb_boost}
uses the standard UCB ideas combined with a boosting technique, which
runs multiple copies of the same base learner.
In Section~\ref{sec:lower_bound_without_boosting}, we show that simply
using UCB-style corralling without boosting can incur linear regret.
If, additionally, we assume that each of the base learners satisfy the
stability condition adopted in \citep{agarwal2016corralling}, then, in
Section~\ref{sec:tsallis} we show that it suffices to run a single
copy of each base learner by using a corralling approach based on
OMD. We show that UCB-I~\citep{auer2002finite} can be made to satisfy
the stability condition, as long as the confidence bound is rescaled
and changed by an additive factor. In
Section~\ref{sec:experiments}, to further examine the 
properties of our algorithms, we report the results of experiments
with our algorithms for synthetic datasets. Finally, while
our main motivation is not model selection, in Section~\ref{sec:model-selection}, we briefly discuss some
related matters and show that our algorithms
can help recover several known results in that area.

\section{Preliminaries}

We consider the problem of corralling $K$ stochastic multi-armed
bandit algorithms $\cA_1, \ldots, \cA_K$, which we often refer to as
\emph{base algorithms} (base learners).  At each round $t$, a
\emph{corralling algorithm} selects a base algorithm $\cA_{i_t}$,
which plays action $a_{i_t,j_t}$. The corralling algorithm is not
informed of the identity of this action but it does observe its reward
$r_t(a_{i_t, j_t})$. The top algorithm then updates its decision rule
and provides feedback to each of the base learners $\cA_i$. We note
that the feedback may be just the empty set, in which case the base
learners do not update their state. We will also assume access to the
parameters controlling the behavior of each $\cA_i$ such as the step
size for mirror descent-type algorithms, or the confidence bounds for
UCB-type algorithms. Our goal is to minimize the cumulative
pseudo-regret of the corralling algorithm as defined in
Equation~\ref{eq:pseudo_reg}:\footnote{For conciseness, from now on,
  we will simply write \emph{regret} instead of \emph{pseudo-regret}.}
\begin{equation}
\label{eq:pseudo_reg}
\mathbb{E}[R(T)]
= T\mu_{1,1} - \mathbb{E}\left[\sum_{t=1}^T r_t(a_{i_t,j_t})\right],
\end{equation}
where $\mu_{1,1}$ is the mean reward of the best arm.

\textbf{Notation.} We denote by $\e_i$ the $i$th standard basis vector, 
by $\pmb{0}_K \in \mathbb{R}^{K}$ the vector of all $0$s, and 
by $\pmb{1}_K \in \mathbb{R}^{K}$ the vector of all $1$s. For two vectors 
$x, y \in \mathbb{R}^K$, $x \odot y$ denotes their Hadamard product. 
We also denote the line segment between $x$ and $y$ as $[x, y]$. 
$w_{t,i}$ denotes the $i$-th entry of a vector $w_t \in \mathbb{R}^K$.
$\Delta^{K - 1}$ denotes the probability simplex in $\mathbb{R}^K$, 
$D_{\Psi}(x, y)$ the Bregman divergence induced by the potential $\Psi$, 
whose conjugate function we denote by $\Psi^*$. We use $\I_{C}$ to 
denote the indicator function of a set $C$. 
For any $k \in \mathbb{N}$, we use the shorthand
$[k] := \{1, 2, \ldots, k\}$. 

For the base algorithms $\cA_1, \ldots, \cA_K$, let $T_{i}(t)$ be the
number of times algorithm $\cA_i$ has been played until time $t$. Let
$T_{i,j}(t)$ be the number of times action $j$ has been proposed by
algorithm $\cA_i$ until time $t$.  Let $[k_i]$ denote the set of arms
or action set of algorithm $\cA_i$.  We denote the reward of arm $j$
in the action set of algorithm $i$ at time $t$ as $r_t(a_{i,j})$ and
denote its mean reward by $\mu_{i,j}$. We also use $a_{i,j_t}$ to
denote the arm proposed by algorithm $\cA_i$ during time $t$. Further,
the algorithm played at time $t$ is denoted as $i_t$, its action
played at time $t$ is $a_{i_t,j_t}$ and the reward for that action is
$r_t(a_{i_t,j_t})$ with mean $\mu_{i_t,j_t}$. Let $i^*$ denote the
index of the base algorithm that contains the arm with the highest
mean reward. Without loss of generality, we will assume that
$i^* = 1$. Similarly, we assume that $a_{i,1}$ is the arm with highest
reward in algorithm $\cA_i$. We assume that the best arm of the best
algorithm has a gap to the best arm of every other algorithm. We
denote the gap between the best arm of $\cA_1$ and the best arm of
$\cA_i$ as $\Delta_i$: $\Delta_i = \mu_{i^*,1} - \mu_{i,1} > 0$ for
$i\neq i^*$. Further, we denote the intra-algorithm gaps by
$\Delta_{i,j} = \mu_{i,1} - \mu_{i,j}$. We denote by $\bar R_{i}(t)$
an upper bound on the regret of algorithm $\cA_i$ at time $t$ and by
$R_{i}(t)$ the actual regret of $\cA_i$, so that
$\mathbb{E}[R_{i}(t)]$ is the expected regret of algorithm $\cA_i$ at
time $t$.  The asymptotic notations $\tilde \Omega$ and $\tilde O$ are
equal to $\Omega$ and $O$ up to poly-logarithmic factors.

\section{Lower bounds without anytime regret guarantees}
\label{sec:lower_bounds}

We begin by showing a simple and yet instructive lower bound that
helps guide our intuition regarding the information needed from the
base algorithms $\{\cA_i\}_{i = 1}^K$ in the design of a corralling
algorithm. Our lower bound is based on corralling base algorithms that
only admit a fixed-time horizon regret bound and do not enjoy anytime
regret guarantees.  We further assume that the corralling strategy
cannot simulate anytime regret guarantees on the base algorithms, say
by using the so-called doubling trick.  This result suggests that the
base algorithms must admit a strong regret guarantee during every
round of the game.

The key idea behind our construction is the following. Suppose one of
the corralled algorithms, $\cA_i$, incurs a linear regret over the
first $R_{i}(T)$ rounds. In that case, the corralling algorithm is
unable to distinguish between $\cA_i$ and an another algorithm that
mimics the linear regret behavior of $\cA_i$ throughout all $T$
rounds, unless the corralling algorithm plays $\cA_i$ at least
$R_{i}(T)$ times.  The successive elimination algorithm
\citep{even2002pac} benefits from gap-dependent bounds and can have
the behavior just described for a base algorithm. Thus, our lower
bound is presented for successive elimination base algorithms, all
with regret $O(T^{1/4})$.  It shows that, with constant probability,
no corralling strategy can achieve a more favorable regret than
$\tilde\Omega(\sqrt{T})$ in that case.
\begin{theorem}
\label{thm:lower_bound2}
Let the corralled algorithms be instances of successive elimination
defined by a parameter $\alpha$. With probability $1/4$ over the
random sampling of $\alpha$, any corralling strategy will incur regret
at least $\tilde\Omega(\sqrt{T})$, while the gap, $\Delta$, between
the best and second best reward is such that
$\Delta > \omega(T^{-1/4})$ and all algorithms have a regret bound of
$\tilde O(1/\Delta)$.
\end{theorem}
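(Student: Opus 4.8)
The plan is to construct a family of successive-elimination instances indexed by a random parameter $\alpha$, such that one of the base algorithms has a reward distribution that "looks flat" until a time that depends on $\alpha$, and no corralling strategy can learn $\alpha$ without paying $\tilde\Omega(\sqrt{T})$ regret. Specifically, I would set up $K=2$ base algorithms. Algorithm $\cA_1$ has two arms with a gap $\Delta$ to be chosen of order slightly larger than $T^{-1/4}$; its successive-elimination subroutine will, with the designed confidence schedule, take about $1/\Delta^2 \approx \sqrt{T}$ rounds before it eliminates the bad arm, and during that exploration phase it plays the suboptimal arm a constant fraction of the time, i.e.\ it incurs \emph{linear} regret of order $\Delta \cdot (1/\Delta^2) = 1/\Delta \approx T^{1/4}$ over its first $\approx\sqrt T$ pulls. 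Algorithm $\cA_2$ has a single arm whose mean is $\mu_{1,1} - \alpha$ for a random $\alpha$; when $\alpha$ is tiny this algorithm is nearly optimal, and when $\alpha$ is not tiny it is the worse algorithm. The parameter $\alpha$ is drawn so that with probability $1/4$ we are in the "hard" case.

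**Next I would** make precise the indistinguishability argument. The corralling strategy observes only the reward of whichever base algorithm it pulls. If it pulls $\cA_1$ fewer than $c/\Delta^2$ times, then $\cA_1$'s internal state has not yet eliminated the bad arm, so every pull of $\cA_1$ returns a sample from a mixture that is statistically close (in KL / total variation) to what $\cA_1$ would return if \emph{both} its arms had mean $\mu_{1,1}-\Theta(\Delta)$ forever — i.e.\ to a permanently-linear-regret algorithm. Formally, I would couple two worlds: in World A, $\cA_1$ is the genuine instance; in World B, $\cA_1$ is replaced by the "evil twin" that never improves. Along any trajectory where $\cA_1$ is pulled at most $\tilde o(\sqrt T)$ times, the two worlds are statistically $o(1)$-close, by a standard divergence-decomposition bound (Bretagnolle--Huber or the chain rule for KL over the observed reward sequence, as in \citep{even2002pac}). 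Hence any corralling strategy must either (i) pull $\cA_1$ at least $\Omega(1/\Delta^2)=\Omega(\sqrt T)$ times — but each such pull during the elimination phase costs $\Theta(\Delta)$ regret, for a total of $\Omega(1/\Delta)$, and in fact to be confident it needs $\Omega(1/\Delta^2)$ pulls paying total $\Omega(1) \cdot$ hmm — more carefully, the regret from these pulls against the benchmark $T\mu_{1,1}$ is at least $\Omega(\sqrt T \cdot \Delta)$ only if $\cA_2$ is the better arm; the real point is that while exploring $\cA_1$ the learner forgoes $\cA_2$'s reward — or (ii) fail to identify which algorithm is better and thereby commit to the wrong one for $\Theta(T)$ rounds, incurring $\Omega(T \cdot \min(\alpha,\Delta))$. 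Tuning $\Delta \asymp T^{-1/4}\log T$ (so that $1/\Delta^2 \asymp \sqrt T / \log^2 T$ and the regret bound $\tilde O(1/\Delta) = \tilde O(T^{1/4})$ holds for each base algorithm as claimed) and the prior on $\alpha$ so that the two sub-cases each force $\tilde\Omega(\sqrt T)$, I obtain the stated bound with probability $1/4$.

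**The main obstacle** I anticipate is bookkeeping the regret accounting in the branch where the corralling strategy \emph{does} pull $\cA_1$ enough to learn it: I must verify that this exploration cost is genuinely $\tilde\Omega(\sqrt T)$ and not merely $\tilde\Omega(T^{1/4})$. The resolution is that during $\cA_1$'s elimination phase $\cA_1$ returns rewards around $\mu_{1,1} - \Theta(\Delta)$, so each round spent on $\cA_1$ rather than on an $\alpha$-optimal $\cA_2$ costs $\Theta(\Delta) = \tilde\Theta(T^{-1/4})$ against the benchmark, and $\Omega(1/\Delta^2) = \tilde\Omega(\sqrt T)$ such rounds are needed to distinguish the worlds, giving $\tilde\Omega(T^{-1/4}) \cdot \tilde\Omega(\sqrt T) = \tilde\Omega(T^{1/4})$ — which is not yet enough, so the construction must instead force the learner into the \emph{commitment} error: even after $\tilde\Theta(\sqrt T)$ pulls of $\cA_1$ the confidence intervals for $\cA_1$'s best arm and for $\cA_2$'s arm still overlap (because both have width $\tilde\Theta(T^{-1/4})$ and the gap is $|\alpha - \Delta_{\text{eff}}| = o(T^{-1/4})$ on the hard event), so the learner cannot tell which of $\cA_1,\cA_2$ to commit to and must hedge, paying $\tilde\Omega(T^{-1/4})$ per round over $\Theta(T)$ rounds $= \tilde\Omega(T^{3/4})$ — comfortably $\tilde\Omega(\sqrt T)$. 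I would therefore choose the prior on $\alpha$ supported near this critical gap so that, on a $1/4$-probability event, the learner faces exactly this irreducible ambiguity; the delicate step is the divergence computation showing the observed-reward laws under the relevant coupled worlds remain $\le 1 - \Omega(1)$ apart until $\Omega(T)$ total rounds have elapsed, which follows from the chain rule for KL applied only to rounds where the ambiguous algorithm is pulled, exactly in the style of the classical multi-armed bandit lower bound.
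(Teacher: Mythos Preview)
Your construction diverges from the paper's in a way that breaks the argument, and your own write-up flags the symptom (``which is not yet enough'') without diagnosing the cause.

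The paper's construction has two ingredients you are missing. First, it randomizes the \emph{gap itself}: $\alpha\sim\Unif[0,1]$ sets $\Delta = T^{-(1-\alpha)/2}$, so the successive-elimination instance takes a \emph{random, unknown} time $T^{1-\alpha}$ to converge. Second, it introduces an independent Bernoulli $\beta$: when $\beta=1$ the SE algorithm alternates forever (linear regret), and when $\beta=0$ it alternates for $T^{1-\alpha}$ rounds and then locks onto the best arm. The two $\beta$-worlds produce \emph{identical} reward distributions for the first $T^{1-\alpha}$ pulls of that algorithm --- no KL calculation needed. The decisive step is that under $\beta=1$ the learner receives \emph{no information whatsoever about $\alpha$}; hence, conditioning on $\beta=1$ and on the event that the learner plays the SE algorithm at least $T^{1-\alpha}$ times, one can integrate freely over $\alpha$ to get regret $\mathbb{E}_\alpha\bigl[\Delta\cdot T^{1-\alpha}\bigr]=\mathbb{E}_\alpha\bigl[T^{(1-\alpha)/2}\bigr]=\Theta(\sqrt{T}/\log T)$. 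The complementary event (playing it fewer than $T^{1-\alpha}$ times) is handled under $\beta=0$, where the learner then misses the good arm for $\Theta(T)$ rounds.

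By contrast, you fix $\Delta\asymp T^{-1/4}\log T$ deterministically and randomize only the single-arm mean. With $\Delta$ known, the learner knows exactly how long to wait for $\cA_1$ to converge, and your own accounting correctly shows the exploration cost is only $\tilde\Theta(T^{1/4})$. Your attempted rescue via a ``commitment error'' between $\cA_1$ and $\cA_2$ is a different mechanism --- essentially the vanilla two-arm MAB lower bound at the meta level --- and to make it yield $\tilde\Omega(\sqrt T)$ you would need the \emph{between-algorithm} gap to be $\Theta(T^{-1/2})$; but you never pin down the prior on $\alpha$, and you conflate this between-algorithm gap with the within-algorithm gap $\Delta$ that the theorem statement refers to. The Bretagnolle--Huber machinery you invoke is also unnecessary once one has the paper's exact-indistinguishability-until-time-$T^{1-\alpha}$ device; that device is the idea you are missing.
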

This theorem shows that, even when corralling natural algorithms that
benefit from asymptotically better regret bounds, corralling can incur
$\tilde\Omega(\sqrt{T})$ regret.
It can be further proven (Theorem~\ref{thm:lower_bound3},
Appendix~\ref{app:lower_bounds}) that, even if the worst case upper
bounds on the regret of the base algorithms were known, achieving an
optimistic regret guarantee for corralling would not be possible,
unless some additional assumptions were made.

\section{UCB-style corralling algorithm}
\label{sec:ucb_boost}

The negative result of Section~\ref{sec:lower_bounds} hinge on the
fact that the base algorithms do not admit anytime regret guarantees.
Therefore, we assume, for the rest of the paper, that the base
algorithms, $\{\cA_i\}$, satisfy the following:
\begin{equation}
  \label{eq:pseudo_regret_assumption}
  \mathbb{E}\left[t\mu_{i,1}-\sum_{s=1}^t r_{s}(a_{i_s,j_s})\right] \leq \bar R_{i}(t),
\end{equation}
for any time $t\in [T]$.  For UCB-type algorithms, such bounds can be
derived from the fact that the expected number of pulls, $T_{i,j}(t)$,
of a suboptimal arm $j$, is bounded as
$\mathbb{E}[T_{i,j}(t)] \leq c\frac{\log{t}}{(\Delta_{i,j})^2}$, for
some time and gap-independent constant $c$ (e.g.,
\cite{bubeck2010bandits}), and take the following form,
$\bar R_{i}(t) \leq c' \sqrt{k_i t \log{t}}$, for some constant $c'$.

Suppose that the bound in Equation~\ref{eq:pseudo_regret_assumption}
holds with probability $1 - \delta_t$. Note that such bounds are
available for some UCB-type
algorithms~\citep{audibert2009exploration}. We can then adopt the
optimism in the face of uncertainty principle for each $\mu_{i,1}$ by
overestimating it with
$\frac{1}{t}\sum_{s=1}^{t} r_s(a_{i,j_s}) + \frac{1}{t}\bar
R_{i}(t)$. As long as this occurs with high enough probability, we can
construct an upper confidence bound for $\mu_{i,1}$ and use it in a
UCB-type algorithm. Unfortunately, the upper confidence bounds
required for UCB-type algorithms to work need to hold with high enough
probability, which is not readily available from
Equation~\ref{eq:pseudo_regret_assumption} or from probabilistic
bounds on the pseudo-regret of anytime stochastic bandit
algorithms. In fact, as discussed in
Section~\ref{sec:lower_bound_without_boosting}, we expect it to be
impossible to corral any-time stochastic MAB algorithms with a
standard UCB-type strategy. However, a simple boosting technique, in
which we run $2\log{1/\delta}$ copies of each algorithm $\cA_i$, gives
the following high probability version of the bound in
Equation~\ref{eq:pseudo_regret_assumption}.

\begin{lemma}
\label{lem:boosted_regret_bound}
Suppose we run $2\log{1/\delta}$ copies of algorithm $\cA_i$ which
satisfies Equation~\ref{eq:pseudo_regret_assumption}.  If
$\cA_{med_i}$ is the algorithm with median cumulative reward at time
$t$, then
$\mathbb{P}[t\mu_{i,1}-\sum_{s=1}^t r_{s}(a_{med_i,j_s}) \geq 2\bar
R_{i}(t)] \leq \delta$.
\end{lemma}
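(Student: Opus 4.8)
The plan is to use a median-of-means style amplification argument built on Markov's inequality. First I would fix the algorithm $\cA_i$ and consider its $m := 2\log{1/\delta}$ independent copies, indexed by $\ell \in [m]$. Write $X_\ell := t\mu_{i,1} - \sum_{s=1}^t r_s(a_{i_s^{(\ell)}, j_s^{(\ell)}})$ for the (random) regret of the $\ell$-th copy at time $t$. By the anytime assumption in Equation~\ref{eq:pseudo_regret_assumption}, each $X_\ell$ has $\mathbb{E}[X_\ell] \le \bar R_i(t)$. The key observation is that $X_\ell$ need not be nonnegative pointwise, but its \emph{conditional} structure still lets us control the lower tail: since rewards lie in $[0,1]$, we have $X_\ell \ge -t + t\mu_{i,1}$ deterministically, but more usefully, the quantity $\sum_{s=1}^t r_s(a_{i_s^{(\ell)},j_s^{(\ell)}}) \le t$, so $X_\ell \ge t(\mu_{i,1}-1) \ge -t$; this is a crude bound, so instead I would apply Markov directly to the nonnegative random variable $X_\ell + t(1 - \mu_{i,1})$, or — cleaner — recall that we only need an \emph{upper} bound on $X_\ell$, and $X_\ell$ is bounded below by a constant, so I can instead bound $\mathbb{P}[X_\ell \ge 2\bar R_i(t)]$ via Markov on a shifted nonnegative variable.

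Actually, the cleanest route: since $X_\ell \le t\mu_{i,1} \le t$, consider $Y_\ell := t - X_\ell = \sum_{s=1}^t r_s(\cdot) + t(1-\mu_{i,1}) \ge 0$, which is nonnegative and has $\mathbb{E}[Y_\ell] \ge t - \bar R_i(t)$; but this controls the wrong tail. The genuinely correct approach is to observe that for the \emph{median} copy $\cA_{med_i}$, the event $\{X_{med_i} \ge 2\bar R_i(t)\}$ implies that at least $m/2$ of the copies satisfy $X_\ell \ge 2\bar R_i(t)$. So the second step is: for each fixed $\ell$, bound $p := \mathbb{P}[X_\ell \ge 2\bar R_i(t)]$. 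Here I would want $X_\ell$ nonnegative to invoke Markov; if the paper's convention guarantees $\bar R_i(t) \ge 0$ and $\mu_{i,1}$ is the per-copy optimal mean, then $X_\ell$ is a regret and is indeed nonnegative in expectation but can dip negative — however one can instead apply Markov to $\max(X_\ell, 0) \le$ something, or simply note $\mathbb{P}[X_\ell \ge 2\bar R_i(t)] \le \mathbb{P}[|X_\ell - \mathbb{E} X_\ell|$ large$]$; the simplest assumption-consistent claim is $p \le 1/2$ by Markov applied to the nonnegative part, using $\mathbb{E}[(X_\ell)_+] \le \mathbb{E}[X_\ell] + t\,\mathbb{P}[X_\ell < 0]$ — but to keep it clean I will just assert $p \le 1/2$ from Markov, treating $X_\ell$ as effectively nonnegative (or work with the nonnegative surrogate $\sum_s r_s$ bounded below).

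The third and final step is the median/binomial amplification. Since the $m$ copies are independent, the number $N$ of copies with $X_\ell \ge 2\bar R_i(t)$ is stochastically dominated by $\mathrm{Bin}(m, p)$ with $p \le 1/2$. The event $\{X_{med_i} \ge 2\bar R_i(t)\}$ forces $N \ge m/2$, so
\begin{equation*}
\mathbb{P}\Big[t\mu_{i,1} - \sum_{s=1}^t r_s(a_{med_i,j_s}) \ge 2\bar R_i(t)\Big] \le \mathbb{P}[N \ge m/2] \le \binom{m}{m/2} p^{m/2}(1-p)^{m/2} \cdot 2^{?} ,
\end{equation*}
and a standard Chernoff bound for $\mathrm{Bin}(m,p)$ with $p \le 1/2$ exceeding $m/2$ gives $\mathbb{P}[N \ge m/2] \le e^{-m/8}$ or similar; choosing the constant in $m = 2\log{1/\delta}$ so that $e^{-cm} \le \delta$ closes the argument. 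Concretely, with $p \le 1/2$, $\mathbb{P}[N \ge m/2] \le 2^m (1/2)^m = $ — no; rather use $\mathbb{P}[N \ge m/2] \le \exp{-m\,\mathrm{KL}(1/2 \| p)}$, which for $p$ bounded away from $1/2$ is exponentially small, and even at $p = 1/2$ one uses a slightly smaller threshold (e.g. $p \le 1/4$ by applying Markov to $4\bar R_i(t)$, then the excerpt's factor of $2$ is a typo-free reading where the threshold is $2\bar R_i$ and Markov gives $p \le 1/2$, and $\mathbb{P}[\mathrm{Bin}(m,1/2) \ge m/2]$ is not small — so the intended argument must use $p \le 1/2$ with threshold chosen so the Chernoff bound kicks in, i.e. one really needs $p < 1/2$ strictly, obtained by a minor slack in the constant).

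\textbf{Main obstacle.} The delicate point is exactly this last gap: Markov's inequality at threshold $2\bar R_i(t)$ only yields $p \le 1/2$, and $\mathbb{P}[\mathrm{Bin}(m,1/2) \ge m/2] \ge 1/2$, which is useless. So the real proof must buy strict slack — either by applying Markov at a threshold slightly below $2\bar R_i(t)$ (say $\tfrac{2}{1+\epsilon}\bar R_i(t)$, still $\le 2\bar R_i(t)$) to get $p \le \tfrac{1+\epsilon}{2}$... still not enough. The honest route is: Markov gives $p \le 1/2$ at threshold $2\bar R_i$; but we want the median to beat $2\bar R_i$, and the median of $m$ i.i.d. variables exceeds their common ``$1/2$-quantile upper bound'' only with probability $\to 0$ as $m \to \infty$ \emph{provided the quantile is strict}. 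Resolving whether the constant ``$2$'' and the copy count ``$2\log 1/\delta$'' are precisely calibrated — i.e. verifying the exact Chernoff constant so that $2\log{1/\delta}$ copies suffice — is the one computation I would actually carry out carefully; everything else (independence, the median-implies-majority observation, Markov) is routine. I expect the resolution is that Markov at threshold $2\bar R_i$ combined with $m = 2\log(1/\delta)$ copies and the bound $\mathbb{P}[\mathrm{Bin}(m, 1/2) \ge m/2 + 1] \le \tfrac12$ is iterated, or that the paper implicitly uses $\mathbb{E}[X_\ell] \le \bar R_i(t)$ together with nonnegativity to get $p \le 1/2$ and then a union-bound-free binomial tail with a harmless adjustment of constants absorbed into ``$2$''.
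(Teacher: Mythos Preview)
Your overall plan---Markov on each copy to get $p := \mathbb{P}[X_\ell \ge 2\bar R_i(t)] \le 1/2$, then ``median bad $\Rightarrow$ at least half the copies bad''---is exactly the paper's approach. The paper does not go through a binomial or Chernoff tail at all: it simply lets $\cA_{i_1},\dots,\cA_{i_n}$ be the copies with cumulative reward below the median, observes that $\{X_{med_i}\ge 2\bar R_i(t)\}\subseteq \bigcap_{l\in[n]}\{X_{i_l}\ge 2\bar R_i(t)\}$, and then writes $\mathbb{P}[\bigcap_l\{\cdot\}]\le (1/2)^{\log(1/\delta)}=\delta$, multiplying the per-copy Markov bounds.

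Your ``main obstacle'' is therefore a legitimate concern about the paper's own argument, not a divergence from it. The indices $i_1,\dots,i_n$ in that intersection are data-dependent (they are whichever copies happened to have the smallest reward), so factoring the probability as a product of the marginal bounds is not justified by independence alone; and, as you note, the honest binomial statement $\mathbb{P}[\mathrm{Bin}(2m,1/2)\ge m]$ is $\Theta(1)$, not $\delta$. The paper glosses over exactly this point. The standard repair---which neither you nor the paper spells out---is to buy strict slack in $p$: apply Markov at threshold $c\bar R_i(t)$ with $c>2$ (e.g.\ $c=3$ or $4$) so that $p\le 1/c<1/2$, after which a genuine Chernoff bound on $\mathrm{Bin}(m,p)$ exceeding $m/2$ gives $e^{-\Omega(m)}$, and then choose $m=\Theta(\log(1/\delta))$ with the matching constant. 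This changes only the constants in the lemma (and in Theorem~\ref{thm:regret_bound_ucbc}), not its use. Your side worry about nonnegativity of $X_\ell$ for Markov is handled the same way the paper handles it: it is silently assumed, and is harmless since one can always apply Markov to $(X_\ell)_+$ and note $\mathbb{E}[(X_\ell)_+]\ge \mathbb{E}[X_\ell]$ is not the needed direction---but in this setting the pseudo-regret $X_\ell$ is a sum of nonnegative gaps $\Delta_{i,j_s}$ once one conditions on the action sequence, so $X_\ell\ge 0$ almost surely.
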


\begin{algorithm}[t]
\caption{UCB-C}
\label{alg:ucb-c-gen}
\begin{algorithmic}[1]
\REQUIRE{Stochastic bandit algorithms $\cA_1,\ldots,\cA_K$}
\ENSURE{Sequence of algorithms $(i_t)_{t=1}^T$.}
\STATE $t=1$
\FOR{$i=1,\ldots,K$}
\STATE $\mathbb{A}_i = \emptyset$ \% contains all copies of $\cA_i$
\FOR{$s=1,\ldots, \lceil2\log{T}\rceil$}
\STATE Initialize $\cA_{i}(s)$ as a copy of $\cA_i$, $\hat\mu_{i}(s) = 0$ 
\STATE Append $(\cA_{i}(s),\hat\mu_{i}(s))$ to $\mathbb{A}_i$
\ENDFOR
\ENDFOR
\FOR{$i = 1,\ldots,K$}
\STATE Foreach $(\cA_{i}(s),\hat\mu_{i}(s)) \in \mathbb{A}_i$, play $\cA_{i}(s)$, update empirical mean $\hat\mu_{i}(s)$, $t=t+2\log{T}$
\STATE $\hat\mu_{med_i} = \texttt{Median}(\{\hat\mu_{i}(s)\}_{s=1}^{\lceil2\log{T}\rceil})$
\ENDFOR
\WHILE{$t\leq T$}
\STATE \small
$b_\ell(t) = \frac{\sqrt{2 \bar R_{med_\ell}(T_{med_\ell}(t))} + \sqrt{2T_{med_\ell}(t)\log{t}}}{T_{med_\ell}(t)},\forall \ell\in[K]$
\STATE \small
$i=\argmax_{\ell \in [K]}\left\{\hat\mu_{med_\ell} + b_\ell(t)\right\}$
\STATE Foreach $(\cA_{i}(s),\hat\mu_{i}(s)) \in \mathbb{A}_i$, play $\cA_{i}(s)$, update empirical mean $\hat\mu_{i}(s)$, $t=t+2\log{T}$ 
\STATE $\hat\mu_{med_i} = \texttt{Median}(\{\hat\mu_{i}(s)\}_{s=1}^{\lceil2\log{T}\rceil})$
\ENDWHILE
\end{algorithmic}
\end{algorithm}

We consider the following variant of the standard UCB algorithm for
corralling. We initialize $2\log{T}$ copies of each base algorithm
$\cA_i$.  Each $\cA_i$ is associated with the median empirical average
reward of its copies. At each round, the corralling algorithm picks
the $\cA_i$ with the highest sum of median empirical average reward
and an upper confidence bound based on
Lemma~\ref{lem:boosted_regret_bound}.  The pseudocode is given in
Algorithm~\ref{alg:ucb-c-gen}. The algorithm admits the following
regret guarantees.

\begin{theorem}
\label{thm:regret_bound_ucbc}
Suppose that algorithms $\cA_1,\ldots,\cA_K$ satisfy the following
regret bound $\mathbb{E}[R_{i}(t)] \leq \sqrt{\alpha k_i t\log{t}}$,
respectively for $i \in [K]$. Algorithm~\ref{alg:ucb-c-gen} selects a
sequence of algorithms $i_1,\ldots,i_T$ which take actions
$a_{i_1,j_1},\ldots,a_{i_T,j_T}$, respectively, such that
\begin{align*}
    \mathbb{E}[R(T)]&\leq O\left(\sum_{i \neq i^*} \frac{k_i\log{T}^2}{\Delta_i} + \log{T}\mathbb{E}\left[R_{i^*}(T)\right]\right),\\
    \mathbb{E}[R(T)]&\leq O\left(\log{T}\sqrt{K T\log{T} \max_{i\in [K]} (k_i)}\right).
\end{align*}
\end{theorem}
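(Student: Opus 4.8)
The plan is to mimic the classical gap-dependent UCB analysis (as in \cite{auer2002finite}), treating each ``meta-arm'' $\ell\in[K]$ as having unknown mean $\mu_{\ell,1}$ and a confidence bound $b_\ell(t)$ coming from Lemma~\ref{lem:boosted_regret_bound}. The first step is to establish a good event: for every $\ell$ and every relevant round, the median empirical reward $\hat\mu_{med_\ell}$ lies in an interval of half-width $\approx b_\ell(t)$ around $\mu_{\ell,1}$. The lower-tail side, $\hat\mu_{med_\ell}\ge \mu_{\ell,1}-b_\ell(t)$, is exactly the content of Lemma~\ref{lem:boosted_regret_bound} applied with $\delta$ polynomially small in $T$ (this is why $2\log T$ copies are run, and why the term $\sqrt{2\bar R_{med_\ell}(T_{med_\ell}(t))}$ appears in $b_\ell(t)$), together with a Hoeffding/Azuma bound contributing the $\sqrt{2T_{med_\ell}(t)\log t}/T_{med_\ell}(t)$ piece from the i.i.d.\ noise in the observed rewards; the upper-tail side is just Hoeffding, since the played arm always has mean $\le\mu_{\ell,1}$. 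A union bound over $O(KT)$ events shows the good event holds with probability $1-O(1/T)$, and on its complement the regret is trivially $O(1)$ in expectation after multiplying by $T$.

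On the good event, the second step is the standard pull-count argument: if a suboptimal base algorithm $i\neq i^*$ is selected at round $t$, then optimism forces $\hat\mu_{med_i}+b_i(t)\ge \hat\mu_{med_1}+b_1(t)\ge \mu_{1,1}$, which combined with $\hat\mu_{med_i}\le \mu_{i,1}+b_i(t)$ gives $2b_i(t)\ge \Delta_i$. Since $b_i(t)$ is decreasing in $T_{med_i}(t)$ and scales like $\sqrt{\bar R_i(T_{med_i}(t))}/T_{med_i}(t)+\sqrt{\log t/T_{med_i}(t)}$, and $\bar R_i(\tau)=O(\sqrt{k_i\tau\log\tau})$ under the hypothesis of the theorem, the inequality $2b_i(t)\ge\Delta_i$ can only hold while $T_{med_i}(t)=O(k_i\log^2 T/\Delta_i^2)$ (the $\sqrt{k_i\tau}$ inside $\bar R_i$ means $b_i\sim (k_i\log T/\tau)^{1/2}\cdot\tau^{-1/4}$-type terms, but the dominant one is $\sqrt{k_i\log T/\tau}$ after accounting for the $\sqrt{\cdot}$; bounding each term separately and taking the max yields the $k_i\log^2 T/\Delta_i^2$ threshold). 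Each block in which $i$ is played advances the ``clock'' by $2\log T$ rounds, so the total number of rounds spent on algorithm $i$ is $O(k_i\log^2 T/\Delta_i^2)\cdot 2\log T$; but the regret accrued per such round is not $\Delta_i$ directly — it is the per-round regret of $\cA_i$'s median copy relative to its \emph{own} best arm, which is $\bar R_i(T_i(t))/T_i(t)$, plus the gap $\Delta_i$ to the global optimum. Summing $\Delta_i\cdot(\text{rounds on }i)$ gives the $k_i\log^2 T/\Delta_i$ term, and summing the internal regret $\bar R_i$ over the $O(k_i\log^2T/\Delta_i^2)$ pulls of a short-lived algorithm is lower order; so the first bound follows, with the $\log T\,\mathbb{E}[R_{i^*}(T)]$ term arising because on rounds where $i^*$ is played (the vast majority) the only regret is the internal regret of the median copy of $\cA_{i^*}$, and there are $2\log T$ copies, each contributing at most $\mathbb{E}[R_{i^*}(T)]$.

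For the second, gap-free bound, the third step is the usual worst-case conversion: split algorithms into those with $\Delta_i\le \epsilon$ and those with $\Delta_i>\epsilon$; the former contribute at most $\epsilon T$ total, while the latter contribute $\sum_{i:\Delta_i>\epsilon} k_i\log^2 T/\Delta_i \le \max_i k_i \cdot K\log^2 T/\epsilon$ (bounding each $\Delta_i$ below by $\epsilon$ is too lossy — instead use $\sum (\text{rounds on }i)\cdot\Delta_i$ with Cauchy–Schwarz on $\sum_i T_i(T)=T$ and $T_i(T)\lesssim k_i\log^2T/\Delta_i^2$, giving $\sum_i \Delta_i T_i(T) \le \sqrt{\sum_i \Delta_i^2 T_i(T)}\sqrt{\sum_i T_i(T)} \lesssim \sqrt{K\max_i k_i\log^2 T}\cdot\sqrt{T}$), then add the $\log T\,\mathbb{E}[R_{i^*}(T)]=\log T\cdot O(\sqrt{k_{i^*}T\log T})$ term, yielding $O(\log T\sqrt{KT\log T\max_i k_i})$.

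I expect the main obstacle to be the first step — carefully justifying the two-sided concentration of $\hat\mu_{med_\ell}$ with the \emph{correct} scaling that matches $b_\ell(t)$. The subtlety is that Lemma~\ref{lem:boosted_regret_bound} controls a cumulative quantity $t\mu_{i,1}-\sum_s r_s$, not a per-round deviation, and one must divide by $T_{med_\ell}(t)$ and simultaneously handle the fact that $\bar R_{med_\ell}$ itself depends on the (random) number of times the median copy has been run; additionally, ``median copy'' is selected per round, so a union bound over which copy is the median, and over rounds, is needed to make the event valid uniformly. Making these dependencies rigorous — in particular ensuring the $\delta$ in Lemma~\ref{lem:boosted_regret_bound} is taken small enough to survive the union bound while the resulting $\log(1/\delta)=O(\log T)$ only inflates $b_\ell$ by the constants already written — is the delicate part; the rest is bookkeeping along classical lines.
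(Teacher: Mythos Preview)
Your proposal is correct and takes essentially the same approach as the paper: a classical UCB-style analysis treating each base learner as a meta-arm with mean $\mu_{\ell,1}$, where Lemma~\ref{lem:boosted_regret_bound} supplies the lower-tail concentration of $\hat\mu_{med_\ell}$ and a Hoeffding bound (via $\hat\mu_{med_\ell}\le \hat\mu_{med_\ell,1}$) supplies the upper tail, followed by the standard three-condition pull-count argument and the regret decomposition into the gap term $\sum_{\ell\neq i^*}\Delta_\ell\,\mathbb{E}[T_\ell]$ plus the internal regrets $\mathbb{E}[R_{med_\ell}(T_\ell)]$; the gap-free bound in the paper also proceeds via the same Cauchy--Schwarz/Jensen step you describe.

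Two minor bookkeeping corrections: inverting $b_i$ with $\bar R_i(\tau)=O(\sqrt{k_i\tau\log\tau})$ gives $T_{med_i}(t)=O(k_i\log T/\Delta_i^2)$, not $O(k_i\log^2 T/\Delta_i^2)$---the second $\log T$ in the final bound comes from the $2\log T$ copies played per selection; and the internal regret of a short-lived suboptimal algorithm is $O(\sqrt{k_i\cdot k_i\log T/\Delta_i^2\cdot\log T})=O(k_i\log T/\Delta_i)$, which is the same order as the gap contribution, not lower order. Your worry about a union bound over ``which copy is the median'' is unnecessary: Lemma~\ref{lem:boosted_regret_bound} already delivers the tail bound directly for the median at each fixed time, so the only union bound required is over rounds and over the $K$ base algorithms.
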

We note that both the optimistic and the worst case regret bounds
above involve an additional factor that depends on the number of arms,
$k_i$, of the base algorithm $\cA_i$. %
This dependence reflects the complexity of the decision space of
algorithm $\cA_i$. We conjecture that a complexity-free bound is not
possible, in general. To see this, consider a setting where each
$\cA_i$, for $i\neq i^*$, only plays arms with equal means
$\mu_i=\mu_{1,1} - \Delta_i$.  Standard stochastic bandit regret lower
bounds, e.g.~\citep{garivier2018explore}, state that any strategy on
the combined set of arms of all algorithms will incur regret at least
$\Omega(\sum_{i\neq i^*} k_i\log{T}/\Delta_i)$. The $\log{T}$ factor
in front of the regret of the best algorithm comes from the fact that
we are running $\Omega(\log{T})$ copies of it.

\subsection{Discussion regarding tightness of bounds}
\label{sec:lower_bound_without_boosting}

A natural question is if it is possible to achieve bounds that do not
have a $\log{T}^2$ scaling. After all, for the simpler stochastic MAB
problem, regret upper bounds only scale as $O(\log{T})$ in terms of
the time horizon.  As already mentioned, the extra logarithmic factor
comes from the boosting technique, or, more precisely, the need for
exponentially fast concentration of the true regret to its expected
value, when using a UCB-type corralling strategy.  We now show that,
in the absence of such strong concentration guarantees, if only a
single copy of each of the base algorithms in
Algorithm~\ref{alg:ucb-c-gen} is run, then linear regret is
unavoidable.

\begin{theorem}
\label{thm:ucb_lower_bound_boosting}
There exist instances $\cA_1$ and $\cA_2$ of UCB-I and a reward
distribution, such that, if Algorithm~\ref{alg:ucb-c-gen} runs a
single copy of $\cA_1$ and $\cA_2$, then
$\mathbb{E}[R(T)] \geq \tilde \Omega(\Delta_2 T).$

Further, for any algorithm $\cA_1$ such that
$\prob{R_{1}(t) \geq \frac{1}{2}\Delta_{1,2}\tau} \geq
\frac{1}{\tau^c}$, there exists a reward distribution such that if
Algorithm~\ref{alg:ucb-c-gen} runs a single copy of $\cA_1$ and
$\cA_2$, then
$\mathbb{E}[R(T)] \geq \tilde \Omega((\Delta_{1,2})^c\Delta_2 T)$.
\end{theorem}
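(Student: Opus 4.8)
The plan is to construct, for the first part, a two-armed instance where $\cA_1$ holds the best arm but, run on its own, suffers a delayed-exploration pathology: UCB-I with optimistic initial estimates pulls its suboptimal arm $a_{1,2}$ for a long initial phase of length $\tau = \tilde\Omega(T)$ before the empirical mean of that arm concentrates enough to reveal $a_{1,1}$. During this phase the running empirical average fed to the corralling algorithm for $\cA_1$ looks like $\mu_{1,2}$ rather than $\mu_{1,1}$, so the UCB-C upper confidence bound $\hat\mu_{med_1} + b_1(t)$ is, with the single-copy construction, dominated by the (correctly estimated, low-variance) bound for $\cA_2$. Since the corralling index chooses $\argmax$, it locks onto $\cA_2$, whose best arm has mean $\mu_{1,1} - \Delta_2$, and never pulls $\cA_1$ enough to escape. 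First I would make precise the UCB-I instance: put $k_1 = 2$ with a small gap $\Delta_{1,2}$, seed the confidence radius so that $a_{1,2}$ is pulled $\Theta(\log t/\Delta_{1,2}^2)$ times before elimination, and choose the reward scale so this number is $\tilde\Omega(T)$; then verify $\bar R_{med_1}(T_{med_1}(t))$ (here $med_1 = $ the single copy) stays large — of order $T_{med_1}(t)$ — throughout, forcing $b_1(t) = \Theta(\sqrt{\bar R_1(T_1(t))/T_1(t)} + \sqrt{\log t/T_1(t)})$ to fail to compensate for the depressed $\hat\mu_{med_1}$. Summing the per-round loss $\Delta_2$ over the $\tilde\Omega(T)$ rounds spent on $\cA_2$ gives $\mathbb{E}[R(T)] \geq \tilde\Omega(\Delta_2 T)$.

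For the second part I would abstract away the specific UCB-I construction and instead assume only the tail condition $\prob{R_1(t) \geq \tfrac12\Delta_{1,2}\tau} \geq \tau^{-c}$, i.e., with polynomially small but non-negligible probability $\cA_1$'s regret is linear over $\tau$ rounds. Conditioning on that event (which occurs with probability $\geq \tau^{-c}$), the same argument applies: the empirical reward estimate for $\cA_1$ is biased down by $\Omega(\Delta_{1,2})$ over the first $\tau$ rounds in which it is played, so the corralling strategy — again running a single copy — prefers $\cA_2$ for $\tilde\Omega(T)$ rounds. On the complementary event we simply lower-bound the regret by $0$. Taking $\tau = \tilde\Theta(T)$, the expected regret is at least $\tau^{-c} \cdot \tilde\Omega(\Delta_{1,2}\Delta_2 T) = \tilde\Omega((\Delta_{1,2})^c \Delta_2 T)$ after rescaling, matching the claimed bound. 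A subtlety here is that the corralling algorithm does observe $\cA_1$'s rewards whenever it plays it, so I must argue that the low-probability bad event is not "revealed" fast enough to let UCB-C correct course — this is where the $\tau^{-c}$ factor and the requirement that the bad behavior persist for the full $\tau$ rounds are used: on the bad event the corrective signal never arrives.

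The main obstacle I expect is the bookkeeping tying together three coupled quantities: (i) the number of times $\cA_1$'s suboptimal arm is played internally (a property of UCB-I's own confidence schedule), (ii) the number of times the corralling algorithm selects $\cA_1$ at all (which controls $T_{med_1}(t)$ and hence both $b_1(t)$ and how fast $\hat\mu_{med_1}$ recovers), and (iii) the resulting $\argmax$ comparison against $\cA_2$. These interact because each selection of $\cA_1$ advances its internal clock by $2\log T$ rounds, so one must show a self-consistent scenario in which $\cA_1$ is selected often enough to not be trivially excluded by its own confidence bound shrinking, yet its empirical mean stays depressed long enough for $\cA_2$ to win the comparison for $\tilde\Omega(T)$ rounds. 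I would handle this by fixing the parameters so that the "escape time" for $\cA_1$'s internal estimate and the "escape time" for the corralling selection are both $\tilde\Omega(T)$, and then arguing by a simple potential/induction argument that until that time the index of $\cA_2$ strictly exceeds that of $\cA_1$ on the relevant event. The second statement then follows from the first by replacing the explicit UCB-I pathology with its assumed tail probability and paying the $\tau^{-c}$ factor.
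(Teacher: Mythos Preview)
Your proposal has a genuine gap in both parts, and it stems from the same missing idea: the cascade from a \emph{short} bad phase of $\cA_1$ to a \emph{long} lockout by the corralling algorithm.

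\textbf{Part 1.} You try to make UCB-I pull $a_{1,2}$ for $\tilde\Omega(T)$ rounds deterministically by taking $\Delta_{1,2}$ small enough that $\log T/\Delta_{1,2}^2 = \tilde\Omega(T)$, i.e.\ $\Delta_{1,2} = \tilde O(T^{-1/2})$. But then the depression of $\hat\mu_1$ is only $\Delta_{1,2}$, so for $\cA_2$'s index to dominate you need $\Delta_2 \lesssim \Delta_{1,2} = \tilde O(T^{-1/2})$, and the conclusion $\Delta_2 T = \tilde O(\sqrt{T})$ is not linear. The paper instead keeps all gaps constant and uses a low-probability event: the first $q$ pulls of the \emph{best} arm $a_{1,1}$ (Bernoulli) all return zero, which happens with probability $(1-\mu_1)^q$. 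On this event UCB-I's internal confidence bound for $a_{1,1}$ is $\sqrt{\alpha\log T_1(t)/q}$, so $a_{1,1}$ is not pulled again until $T_1(t)\ge \exp(q\mu_2^2/\alpha)$; meanwhile $\cA_1$'s empirical mean is $\approx\mu_2<\mu_3$, so the corralling UCB for $\cA_1$ stays below $\mu_3$ until $t\ge \exp(\tilde\Delta^2 T_1(t)/(2\beta))$. This double exponentiation lets a tiny $q=O(\log\log\tau)$ produce regret $\Omega(\Delta\tau)$; integrating the resulting polynomial tail over $\tau\in[2,T]$ gives $\tilde\Omega(\Delta T)$.

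\textbf{Part 2.} You set $\tau=\tilde\Theta(T)$, so the bad event has probability $T^{-c}$ and the bound collapses to $\tilde\Omega(\Delta_2 T^{1-c})$; the ``rescaling'' you invoke to reach $(\Delta_{1,2})^c\Delta_2 T$ is not justified. The correct choice is $\tau=\tilde\Theta(\log T)$: the corralling UCB for $\cA_1$ after $T_1(t)$ plays is at most $\hat\mu_1+\sqrt{c'\log t/T_1(t)}$, so once $T_1(t)=\Theta(\log T/\mu_3^2)$ and $\hat\mu_1\lesssim \mu_3/\sqrt{2}$ (which is exactly the event $R_1(T_1(t))\ge \Theta(\Delta_{1,2})T_1(t)$), the corralling index for $\cA_1$ falls below $\mu_3$ permanently. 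Matching $\tfrac12\Delta_{1,2}\tau=\Theta(\Delta_{1,2})T_1(t)$ gives $\tau=\Theta(\log T/\Delta_{1,2})$ with probability $\tau^{-c}=\tilde\Theta((\Delta_{1,2})^c)$, and on this event the regret is $\Omega(\Delta_2(T-\log T))$. The point you are missing is that $\cA_1$ only needs to look bad for $O(\log T)$ rounds to be abandoned for all of $T$; you do not need the bad behavior of $\cA_1$ itself to persist for $\tilde\Theta(T)$ rounds.
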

The proof of the above theorem and further discussion can be found in
Appendix~\ref{sec:ucb_lower_bound}.  The requirement that the regret
of the best algorithm satisfies
$\mathbb{P}[R_1(t) \geq \frac{1}{2}\Delta_{1,2}\tau] \geq
\frac{1}{\tau^c}$ in Theorem~\ref{thm:ucb_lower_bound_boosting} is
equivalent to the condition that the regret of the base algorithms
admit only a polynomial concentration.  Results in
\citep{salomon2011deviations} suggest that there cannot be a tighter
bound on the tail of the regret for anytime algorithms. It is
therefore unclear if the $\log{T}^2$ rate can be improved upon or if
there exists a matching information-theoretic lower bound.

\section{Corralling using Tsallis-INF}
\label{sec:tsallis}

In this section, we consider an alternative approach, based on the
work of \cite{agarwal2016corralling}, which avoids running multiple
copies of base algorithms. Since the approach is based on the OMD
framework, which is naturally suited to losses instead of rewards, for
the rest of the section we switch to losses.

We design a corralling algorithm that maintains a probability
distribution $w \in \Delta^{K-1}$ over the base algorithms,
$\{\cA_i\}_{i=1}^K$.  At each round, the corralling algorithm samples
$i_t \sim w$. Next, $\cA_{i_t}$ plays $a_{i_t,j_t}$ and the corralling
algorithm observes the loss $\ell_t(a_{i_t,j_t})$. The corralling
algorithm updates its distribution over the base algorithms using the
observed loss and provides an unbiased estimate
$\hat\ell_t(a_{i,j_t})$ of $\ell_t(a_{i,j_t})$ to algorithm $\cA_i$:
the feedback provided to $\cA_i$ is
$\hat\ell_t(a_{i_t,j_t}) = \frac{\ell_t(a_{i_t,j_t})}{w_{t,i_t}}$, and
for all $a_{i,j_t}\neq a_{i_t,j_t}$, $\hat\ell_t(a_{i,j_t}) =
0$. Notice that $\hat\ell_t \in \mathbb{R}^{K}$, as opposed to
$\ell_t \in [0,1]^{\prod_{i} k_i}$. Essentially, the loss fed to
$\cA_{i}$, with probability $w_{t,i}$, is the true loss rescaled by
the probability $w_{t,i}$ to observe the loss, and is equal to $0$
with probability $1 - w_{t,i}$.

The change of environment induced by the rescaling of the observed
losses is analyzed in \cite{agarwal2016corralling}. Following
\cite{agarwal2016corralling}, we denote the environment of the
original losses $(\ell_t)_t$ as $\cE$ and that of the rescaled losses
$(\hat\ell_t)_t$ as $\cE'$. Therefore, in environment $\cE$, algorithm
$\cA_i$ observes $\ell_t(a_{i_t,j_t})$ and in environment $\cE'$,
$\cA_i$ observes $\hat\ell_t(a_{i_t,j_t})$. A few important remarks
are in order. As in \citep{agarwal2016corralling}, we need to assume
that the base algorithms admit a \emph{stability property} under the
change of environment. In particular, if
$w_{s,i} \geq \frac{1}{\rho_t}$ for all $s \leq t$ and some
$\rho_t\in\mathbb{R}$, then $\mathbb{E}[R_{i}(t)]$ under environment
$\cE'$ is bounded by $\mathbb{E}[\sqrt{\rho_t}R_{i}(t)]$. For
completeness, we provide the definition of stability by
\cite{agarwal2016corralling}.
\begin{definition}
\label{def:stability}
Let $\gamma \in (0,1]$ and let
$R\colon \mathbb{N} \rightarrow \mathbb{R}_{+}$ be a non-decreasing
function. An algorithm $\cA$ with action space $\A$ is
$(\gamma, R(\cdot))$-stable with respect to an environment $\cE$ if
its regret under $\cE$ is $R(T)$ and its regret under $\cE'$ induced
by the importance weighting is
$\max_{a\in \A}\mathbb{E}\left[\sum_{t=1}^T \hat \ell_t(a_{i_t,j_t}) -
  \ell_t(a)\right] \leq \mathbb{E}[(\rho_T)^\gamma R(T)].$
\end{definition}
We show that UCB-I~\citep{auer2002finite} satisfies the stability
property above with $\gamma = \frac 12$. The techniques used in the
proof are also applicable to other UCB-type algorithms. Other
algorithms for stochastic bandits like Thompson sampling and OMD/FTRL
variants have been shown to be $1/2$-stable in
\citep{agarwal2016corralling}.

The corralling algorithm of \cite{agarwal2016corralling} is based on
Online Mirror Descent (OMD), where a key idea is to increase the step
size whenever the probability of selecting some algorithm $\cA_{i}$
becomes smaller than some threshold. This induces a negative regret
term which, coupled with a careful choice of step size (dependent on
regret upper bounds of the base algorithms), provides regret bounds
that scale as a function of the regret of the best base algorithm.

Unfortunately, the analysis of the corralling algorithm always leads
to at least a regret bound of $\tilde \Omega(\sqrt{T})$ and also
requires knowledge of the regret bound of the best algorithm.  Since
our goal is to obtain instance-dependent regret bounds, we cannot
appeal to this type of OMD approach. Instead, we draw inspiration from
the recent work of \cite{zimmert2018optimal}, who use a
Follow-the-Regularized-Leader (FTRL) type of algorithm to design an
algorithm that is simultaneously optimal for both stochastic and
adversarially generated losses, without requiring knowledge of
instance-dependent parameters such as the sub-optimality gaps to the
loss of the best arm. The overall intuition for our algorithm is as
follows. We use the FTRL-type algorithm proposed by
\cite{zimmert2018optimal} until the probability to sample some arm
falls below a threshold. Next, we run an OMD step with an increasing
step size schedule which contributes a negative regret term. After the
OMD step, we resume the normal step size schedule and updates from the
FTRL algorithm. After carefully choosing the initial step size rate,
which can be done in an instance-independent way, the accumulated
negative regret terms are enough to compensate for the increased
regret due to the change of environment.

\subsection{Algorithm and the main result}
We now describe our corralling algorithm in more detail.  The
potential function $\Psi_t$ used in all of the updates is defined by
$\Psi_t(w) = -4\sum_{i\in [K]}\frac{1}{\eta_{t,i}}\left(\sqrt{w_i} -
  \frac{1}{2} w_i\right)$, where $\eta_t = \left[
\begin{matrix}
\eta_{t,1}, \eta_{t,2}, \ldots, \eta_{t,K}
\end{matrix} \right]$ is the step-size schedule during time $t$. The
algorithm proceeds in epochs and begins by running each base algorithm
for $\log{T} + 1$ rounds. Each epoch is twice as large as the
preceding, so that the number of epochs is bounded by
$\operatorname{log}_2(T)$, and the step size schedule remains
non-increasing throughout the epochs, except when an OMD step is
taken. The algorithm also maintains a set of thresholds,
$\rho_1, \rho_2, \ldots, \rho_n$, where $n= O(\log{T})$.  These
thresholds are used to determine if the algorithm executes an OMD
step, while increasing the step size:
\begin{equation}
\label{eq:omd_step2}
    \begin{aligned}
        w_{t+1} &= \argmin_{w\in \Delta^{K-1}} \langle \hat \ell_t, w \rangle + D_{\Psi_t}(w,w_t),\\
       \eta_{t+1,i} &= \beta\eta_{t,i} \text{ (for $i$ : $w_{t,i} \leq 1/\rho_{s_i}$)},\\
       w_{t+2} &= \argmin_{w\in \Delta^{K-1}} \langle \hat \ell_{t+1}, w \rangle + D_{\Psi_{t+1}}(w, w_{t+1}), \rho_{s_i} = 2\rho_{s_i}
    \end{aligned}
\end{equation}
or the algorithm takes an FTRL step
\begin{equation}
\label{eq:ftrl_step2}
    w_{t+1} = \argmin_{w\in \Delta^{K-1}} \langle \hat L_t, w \rangle + \Psi_{t+1}(w),
\end{equation}
where $\hat L_t = \hat L_{t-1} + \hat \ell_t$, unless otherwise
specified by the algorithm. We note that the algorithm can only
increase the step size during the OMD step. For technical reasons, we
require an FTRL step after each OMD step. Further, we require that the
second step of each epoch be an OMD step if there exists at least one
$w_{t,i} \leq \frac{1}{\rho_1}$. The algorithm also can enter an OMD
step during an epoch if at least one $w_{t,i}$ becomes smaller than a
threshold $\frac{1}{\rho_{s_i}}$ which has not been exceeded so far.

\begin{algorithm}[t]
\caption{Corralling with Tsallis-INF}
\label{alg:tsallis_inf_incr_dcr}
\begin{algorithmic}[1]
\REQUIRE{Mult. constant $\beta$, thresholds $\{\rho_i\}_{i=1}^n$, initial step size $\eta$, epochs $\{\tau_i\}_{i=1}^m$, algorithms $\{\cA_i\}_{i=1}^K$.}
\ENSURE{Algorithm selection sequence $(i_t)_{t=1}^T$.}
\STATE Initialize $t=1$, $w_1 = Unif(\Delta^{K-1})$, $\eta_1 = \eta$
\STATE Initialize current threshold list $\theta \in [n]^K$ to $\bf 1$
\WHILE{$t\leq K\log{T}+K$}
\FOR{$i\in [K]$}
\STATE $\cA_i$ plays $a_{i,j_t}$, $\hat L_{1,i} += \ell_{t}(a_{i,j_t}),t+=1$
\ENDFOR
\ENDWHILE
\STATE $t=2, w_2 = \nabla \Phi_2(-\hat L_1),1/\eta_{t+1}^2 = 1/\eta_{t}^2 +1$
\WHILE{$j \leq m$}
\FOR{$t\in \tau_j$}
\STATE $\mathcal{R}_t = \emptyset$,$\hat\ell_t = \texttt{PLAY-ROUND}(w_t)$
\IF{$t$ is first round of $\tau_j$ and $\exists w_{t,i} \leq \frac{1}{\rho_1}$}
\FOR{$i \colon w_{t,i} \leq \frac{1}{\rho_1}$}
\STATE $\theta_i = \min\{s \in [n]\colon w_{t,i} > \frac{1}{\rho_s}\}$, $\mathcal{R}_t = \mathcal{R}_t\bigcup \{i\}$.
\ENDFOR
\STATE $(w_{t+3},\hat L_{t+2}) = \texttt{NRS}(w_{t}, \hat\ell_{t}, \eta_{t}, \mathcal{R}_t, \hat L_{t-1})$, $t = t+2$, $\hat\ell_t = \texttt{PLAY-ROUND}(w_t)$
\ENDIF
\IF{$\exists i\colon w_{t,i} \leq \frac{1}{\rho_{\theta_i}}$ and prior step was not \texttt{NRS}}
\FOR{$i \colon w_{t,i} \leq \frac{1}{\rho_{\theta_i}}$}
\STATE $\theta_i +=1$, $\mathcal{R}_t = \mathcal{R}_t\bigcup \{i\}$.
\ENDFOR
\STATE $(w_{t+3},\hat L_{t+2}) = \texttt{NRS}(w_{t}, \hat\ell_{t}, \eta_{t}, \mathcal{R}_t, \hat L_{t-1})$, $t = t+2$, $\hat\ell_t = \texttt{PLAY-ROUND}(w_t)$
\ELSE
\STATE $1/\eta_{t+1}^2 = 1/\eta_{t}^2 +1$, $w_{t+1} = \nabla \Phi_{t+1}(-\hat L_t)$
\ENDIF
\ENDFOR
\ENDWHILE
\end{algorithmic}
\end{algorithm}

\begin{algorithm}[t]
\caption{$\texttt{NEG-REG-STEP} (\texttt{NRS})$}
\label{alg:neg_regret}
\begin{algorithmic}[1]
\REQUIRE{Prior iterate $w_t$, loss $\hat\ell_t$, step size $\eta_t$, set of rescaled step-sizes $\mathcal{R}_t$, cumulative loss $\hat L_{t-1}$}
\ENSURE{Plays two rounds of the game and returns distribution $w_{t+3}$ and cumulative loss $\hat L_{t+2}$}
\STATE $(w_{t+1},\hat L_{t}) = \texttt{OMD-STEP}(w_{t}, \hat\ell_{t}, \eta_{t}, \mathcal{R}_t, \hat L_{t-1})$
\STATE $\hat\ell_{t+1} = \texttt{PLAY-ROUND}(w_{t+1})$, $\hat L_{t+1} = \hat L_t + \hat\ell_{t+1}$
\FOR{ all $i$ such that $w_{t,i} \leq \frac{1}{\rho_1}$}
\STATE $\eta_{t+2,i} = \beta\eta_{t,i}$, $\mathcal{R}_t = \mathcal{R}_t \cup \{i\}$ and restart $\cA_i$ with updated environment $\theta_i = \frac{1}{2w_{t,i}}$
\ENDFOR
\STATE $w_{t+2} = \nabla \Phi_{t+2}(-\hat L_{t+1})$
\STATE $\hat\ell_{t+2} = \texttt{PLAY-ROUND}(w_{t+2})$
\STATE $\hat L_{t+2} = \hat L_{t+1} + \hat\ell_{t+2}, \eta_{t+3} = \eta_{t+2}, t = t+2$
\STATE $w_{t+1} = \nabla \Phi_{t+1}(-\hat L_t),t=t+1$
\end{algorithmic}
\end{algorithm}

We set the probability thresholds so that $\rho_1 = O(1)$,
$\rho_j = 2 \rho_{j-1}$ and $\frac{1}{\rho_n} \geq \frac{1}{T}$, so
that $n \leq \operatorname{log}_2(T)$.  In the beginning of each
epoch, except for the first epoch, we check if
$w_{t,i} < \frac{1}{\rho_1}$. If it is, we increase the step size as
$\eta_{t+1,i} = \beta \eta_{t,i}$ and run the OMD step. The pseudocode
for the algorithm is given in
Algorithm~\ref{alg:tsallis_inf_incr_dcr}. The routines
\texttt{OMD-STEP} and \texttt{PLAY-ROUND} can be found in
Algorithm~\ref{alg:omd_step} and Algorithm~\ref{alg:play_round}
(Appendix~\ref{app:tsallis-inf}) respectively. \texttt{OMD-STEP}
essentially does the update described in Equation~\ref{eq:omd_step2}
and \texttt{PLAY-ROUND} samples and plays an algorithm, after which
constructs an unbiased estimator of the losses and feeds these back to
all of the sub-algorithms. We show the following regret bound for the
corralling algorithm.
\begin{theorem}
\label{thm:regret_bound}
Let $\bar R_{i}(\cdot)$ be a function upper bounding the expected regret, $\mathbb{E}[R_{i}(\cdot)]$, of $\cA_{i}$ for all $i\in [K]$. 
For $\beta = e^{1/\log{T}^2}$ and for $\eta$ such that for all $i \in [K]$, $\eta_{1, i} \leq \min_{t\in [T]} \frac{\left(1-\exp{-\frac{1}{\log{T}^2}}\right)\sqrt{t}}{50\bar R_{i}(t)}$,  the expected regret of Algorithm~\ref{alg:tsallis_inf_incr_dcr} is bounded as follows:
    $\mathbb{E}\left[R(T)\right] \leq O\left(\sum_{i\neq i^*} \frac{\log{T}}{\eta_{1,i}^2 \Delta_i} + \mathbb{E}[R_{i^*}(T)]\right).$
\end{theorem}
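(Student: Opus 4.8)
The plan is to follow the corralling template of \cite{agarwal2016corralling} --- decompose $\mathbb{E}[R(T)]$ into (I) the regret of the sampling distribution $w_t$ over base learners in the importance-weighted environment $\cE'$, plus (II) the regret of the best base learner $\cA_{i^*}$ inside $\cE'$ --- but to bound (I) with the self-bounding (``best of both worlds'') analysis of the reduced-Tsallis regularizer of \cite{zimmert2018optimal} rather than Log-Barrier-OMD, so that the $\tilde\Omega(\sqrt{T})$ meta-regret becomes instance-dependent, while using the negative regret manufactured by the step-size increases in \texttt{NRS} (Algorithm~\ref{alg:neg_regret}) to pay for the change of environment. Working with losses and taking $i^*=1$, and using that $\langle w_t,\hat\ell_t\rangle$ and $\hat\ell_t(a_{1,j_t})$ are conditionally unbiased for, respectively, the loss played at round $t$ and the loss $\cA_1$ would incur, while $a_{1,1}$ is the overall best arm,
\[
\mathbb{E}[R(T)] \;=\; \underbrace{\mathbb{E}\Big[\textstyle\sum_{t=1}^T\langle w_t-e_1,\hat\ell_t\rangle\Big]}_{\text{(I)}} \;+\; \underbrace{\mathbb{E}\Big[\textstyle\sum_{t=1}^T\hat\ell_t(a_{1,j_t})-\ell_t(a_{1,1})\Big]}_{\text{(II)}}.
\]

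\emph{Bounding (II).} Term (II) is exactly the regret of $\cA_1$ against its own optimal arm in $\cE'$, so Definition~\ref{def:stability} with $\gamma=\tfrac12$ gives $\text{(II)}\le\mathbb{E}[\sqrt{\rho}\,R_1(T)]$, where $1/\rho$ lower-bounds $w_{s,1}$ over the rounds on which $\cA_1$ gets nonzero feedback, between consecutive restarts. I would argue that $w_{t,1}$ drops below $1/\rho_1=O(1)$ at most $O(\log{T})$ times (one per threshold), that each such drop triggers a restart of $\cA_1$ with the correctly re-scaled environment so that the effective $\rho$ stays $O(1)$ inside each inter-restart block, and that each restart is accompanied by a negative meta-regret term (produced in the analysis of (I)) of matching size; altogether $\text{(II)}=O(\mathbb{E}[R_1(T)])$.

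\emph{Bounding (I).} Unrolling the hybrid updates, (I) is at most the regularizer ``penalty'' $\sum_t\sum_i\big(\tfrac{1}{\eta_{t+1,i}}-\tfrac{1}{\eta_{t,i}}\big)4\big(\sqrt{w_{t+1,i}}-\tfrac12 w_{t+1,i}\big)$ --- whose summands are positive at FTRL steps and negative at \texttt{NRS} steps, where the corresponding $\eta_{\cdot,i}$ is multiplied by $\beta>1$ --- plus the per-step local-norm terms $\sum_t D_{\Psi_t^*}(-\hat L_t,-\hat L_{t-1})$ from the FTRL and OMD steps. The choice $\beta=\mathrm{e}^{1/\log{T}^2}$ inflates every $\eta_{t,i}$ by at most $\mathrm{e}^{1/\log{T}}=O(1)$ over the $\le\operatorname{log}_2(T)$ epochs, and with $1/\eta_{t+1,i}^2=1/\eta_{t,i}^2+1$ away from \texttt{NRS} steps and $\mathbb{E}[\hat\ell_{t,i}^2\mid\cdot]\le 1/w_{t,i}$, the penalty and local-norm terms collapse to $O\big(\mathbb{E}\big[\sum_t\sum_{i\ne i^*}\eta_{t,i}\sqrt{w_{t,i}}\big]\big)$ --- the $-\tfrac12 w_{t,i}$ correction in $\Psi_t$ being exactly what lets the $i^*$-coordinate contribution be re-expressed through the suboptimal coordinates. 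Crucially, the calibration $\eta_{1,i}\le\min_{t\in[T]}\tfrac{(1-\exp{-\frac{1}{\log{T}^2}})\sqrt{t}}{50\bar R_i(t)}$ makes the negative term generated when $w_{t,i}$ crosses a threshold at least of the order of the extra regret $\cA_i$ incurs once its environment is re-scaled, so that the environment-change cost of every base learner --- including those with $i\neq i^*$ --- is absorbed and no $\mathbb{E}[R_i(T)]$, $i\neq i^*$, survives in the bound.

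\emph{Self-bounding and conclusion.} On the other side, replacing $\hat\ell_t$ by its conditional mean $\tilde\ell_t$ and writing $\mathbb{E}[\tilde\ell_{t,i}-\tilde\ell_{t,1}]=\Delta_i+(\text{instantaneous suboptimality of }\cA_i)-(\text{that of }\cA_1)$ yields $\text{(I)}\ge\sum_{i\neq 1}\Delta_i\,\mathbb{E}[\sum_t w_{t,i}]-O(\mathbb{E}[R_1(T)])$, the remaining corrections being controlled by the anytime regret of each $\cA_i$ over its $\approx\sum_t w_{t,i}$ plays (and, by the calibration of $\eta$, absorbed). Equating the two bounds on (I) and applying $\eta_{t,i}\sqrt{w_{t,i}}\le\tfrac14\Delta_i w_{t,i}+O(\eta_{t,i}^2/\Delta_i)$ cancels half of $\sum_{i\neq1}\Delta_i\mathbb{E}[\sum_t w_{t,i}]$ from both sides; since $1/\eta_{t,i}^2=1/\eta_{1,i}^2+\Theta(t)$ keeps $\eta_{t,i}$ pinned near $\eta_{1,i}$ during the first $\approx 1/\eta_{1,i}^2$ rounds and decays like $1/\sqrt{t}$ afterwards, the residual terms from that fixed-rate phase together with the environment-rescaling bookkeeping sum to $O\big(\sum_{i\neq i^*}\log{T}/(\eta_{1,i}^2\Delta_i)\big)$; combined with $\text{(II)}=O(\mathbb{E}[R_{i^*}(T)])$ this is the claim. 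I expect the hardest part to be the joint bookkeeping coupling the meta-problem to the base learners: tying every drop of some $w_{t,i}$ below a threshold to a restart with a rescaled environment and a compensating negative-regret term of the right order, while the meta-gaps $\mathbb{E}[\tilde\ell_{t,i}-\tilde\ell_{t,1}]$ are themselves only stabilizing as the base learners learn, so that the self-bounding step remains valid --- this is precisely what forces both the particular step-size schedule and the calibration of $\eta$ and $\beta$.
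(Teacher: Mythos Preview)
Your plan is essentially the paper's: decompose into the meta-regret of $w_t$ against $e_{i^*}$ plus the regret of $\cA_{i^*}$ in the importance-weighted environment, bound the meta-regret by the hybrid FTRL/OMD analysis with the $-\tfrac12 w_i$-shifted Tsallis potential so that only $i\neq i^*$ survive in the stability and penalty terms, use the negative Bregman term produced at each \texttt{NRS} step to cancel the $\sqrt{\rho}$ blowup in $\cA_{i^*}$'s regret, and finish with the Zimmert--Seldin self-bounding trick. Two points deserve correction.

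First, the sentence ``the effective $\rho$ stays $O(1)$ inside each inter-restart block'' is wrong and should be dropped. After a restart at threshold $\rho_s$ the importance-weighted losses fed to $\cA_{i^*}$ live in $[0,\rho_s]$ with $\rho_s$ potentially as large as $\Theta(T)$; by $\tfrac12$-stability the block's contribution to (II) is $\sqrt{\rho_s}\,\bar R_{i^*}(s_\tau)$, not $O(\bar R_{i^*}(s_\tau))$. The entire compensation comes from the negative penalty at the \texttt{NRS} step, which is of order $(1-1/\beta)\tfrac{1}{\eta_{t,i^*}}\cdot\tfrac{1}{\sqrt{w_{t,i^*}}}\gtrsim \tfrac{(1-e^{-1/\log{T}^2})\sqrt{t}}{\eta_{1,i^*}}\sqrt{\rho_s}$; the calibration of $\eta_{1,i^*}$ is precisely what makes this exceed $\sqrt{\rho_s}\,\bar R_{i^*}(2t_\tau)$ epoch by epoch. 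You already invoke this mechanism, so the argument survives, but ``$\rho=O(1)$'' plays no role.

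Second, your self-bounding is applied to (I) rather than to $\mathbb{E}[R(T)]$. The paper's route is cleaner: since each $\cA_i$'s proposed arm has expected loss at least $\mu_{i,1}$, one gets directly $\mathbb{E}[R(T)]\ge \mathbb{E}\big[\sum_t\sum_{i\neq i^*} w_{t,i}\Delta_i\big]$ with no $R_i$ corrections at all, and then combines this with the upper bound via $2\alpha\sqrt{w}-\Delta w\le \alpha^2/\Delta$. Lower-bounding (I) instead produces an extra $-\mathbb{E}[\sum_t(1-w_{t,1})\Delta_{1,j_t^{(1)}}]$, which is again $\cA_{i^*}$'s regret in $\cE'$ and would need a second dose of the negative-regret budget you already spent on (II); the calibration of $\eta_{1,i}$ for $i\neq i^*$ does not help here because the negative term at an \texttt{NRS} step scales with $1/\sqrt{w_{t,i^*}}$, not $1/\sqrt{w_{t,i}}$. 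Switching the self-bounding to $\mathbb{E}[R(T)]$ removes this entanglement.
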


To parse the bound above, suppose $\{\cA_i\}_{i\in[K]}$ are standard
stochastic bandit algorithms such as UCB-I. In
Theorem~\ref{thm:ucb_stable}, we show that UCB-I is indeed
$\frac{1}{2}$-stable as long as we are allowed to rescale and
introduce an additive factor to the confidence bounds. In this case, a
worst-case upper bound on the regret of any $\cA_{i}$ is
$\mathbb{E}[R_{i}(t)] \leq c\sqrt{k_{i}\log{t} t}$ for all $t\in[T]$
and some universal constant $c$. We note that the min-max regret bound
for the stochastic multi-armed bandit problem is $\Theta(\sqrt{KT})$
and most known any-time algorithms solving the problem achieve this
bound up to poly-logarithmic factors. Further we note that
$\left(1-\exp{-\frac{1}{\log{T}^2}}\right) >
\frac{1}{e\log{T}^2}$. This suggests that the bound in
Theorem~\ref{thm:regret_bound} on the regret of the corralling
algorithm is at most
$O\big(\sum_{i\neq i^*} \frac{k_i\log{T}^5}{\Delta_{i}}
+\mathbb{E}[R_{i^*}(T)]\big)$. In particular, if we instantiate
$\mathbb{E}[R_{i^*}(T)]$ to the instance-dependent bound of
$O\Big(\sum_{j \neq 1} \frac{\log{T}}{\Delta_{i^*,j}}\Big)$, the
regret of Algorithm~\ref{alg:tsallis_inf_incr_dcr} is bounded by
$O\Big(\sum_{i\neq i^*} \frac{k_i\log{T}^5}{\Delta_{i}} +\sum_{j \neq
  1} \frac{\log{T}}{\Delta_{i^*,j}}\Big).$ In general we cannot
exactly compare the current bound with that of UCB-C
(Algorithm~\ref{alg:ucb-c-gen}), as the regret bound in
Theorem~\ref{thm:regret_bound} has worse scaling in the time horizon
on the gap-dependent terms, compared to the regret bound in
Theorem~\ref{thm:regret_bound_ucbc}, but has no additional scaling in
front of the $\mathbb{E}[R_{i^*}(T)]$ term. In practice we observe
that Algorithm~\ref{alg:tsallis_inf_incr_dcr} outperforms
Algorithm~\ref{alg:ucb-c-gen}.

Since essentially all stochastic multi-armed bandit algorithms enjoy a
regret bound, in time horizon, of the order $\tilde O(\sqrt{T})$, we
are guaranteed that $1/\eta_{t,i}^2$ scales only poly-logarithmically
with the time horizon. What happens, however, if algorithm $\cA_i$ has
a worst case regret bound of the order $\omega(\sqrt{T})$? For the
next part of the discussion, we only focus on time horizon
dependence. As a simple example, suppose that $\cA_i$ has worst case
regret of $T^{2/3}$ and that $\cA_{i^*}$ has a worst case regret of
$\sqrt{T}$. In this case, Theorem~\ref{thm:regret_bound} tells us that
we should set $\eta_{1,i} = \tilde O(1/T^{1/6})$ and hence the regret
bound scales at least as
$\Omega(T^{1/3}/\Delta_i + \mathbb{E}[R_{i^*}(T)])$. In general, if
the worst case regret bound of $\cA_i$ is in the order of $T^{\alpha}$
we have a regret bound scaling at least as
$T^{2\alpha-1}/\Delta_{i}$. This is not unique to
Algorithm~\ref{alg:tsallis_inf_incr_dcr} and a similar scaling of the
regret would occur in the bound for Algorithm~\ref{alg:ucb-c-gen} due
to the scaling of confidence intervals.

\textbf{Corralling in an adversarial environment.}  Because
Algorithm~\ref{alg:tsallis_inf_incr_dcr} is based on a best of both
worlds algorithm, we can further handle the case when the
losses/rewards are generated adversarially or whenever the best
overall arm is shared across multiple algorithms, similarly to the
settings studied by \cite{agarwal2016corralling,pacchiano2020model}.
\begin{theorem}
\label{thm:regret_bound_adv}
Let $\bar R_{i^*}(\cdot)$ be a function upper bounding the expected regret of $\cA_{i^*}$, $\mathbb{E}[R_{i^*}(\cdot)]$. For any $\eta_{1,i^*} \leq \min_{t\in [T]} \frac{\left(1-\exp{-\frac{1}{\log{T}^2}}\right)\sqrt{t}}{50\bar R_{i^*}(t)}$ and $\beta = e^{1/\log{T}^2}$ it holds that the expected regret of Algorithm~\ref{alg:tsallis_inf_incr_dcr} is bounded as follows:
    $\mathbb{E}\left[R(T)\right] \leq O\bigg(\max_{w\in \Delta^{K-1}}\sqrt{T}\sum_{i=1}^K\frac{\sqrt{w_i}}{\eta_{1,i}}+\mathbb{E}[R_{i^*}(T)]\bigg).$
\end{theorem}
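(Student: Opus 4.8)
The plan is to follow the proof of Theorem~\ref{thm:regret_bound} up to the point where the suboptimality gaps are invoked, and to replace that last, gap-dependent step by the worst-case Tsallis-INF bound. First I would pass to the importance-weighted environment $\cE'$ and decompose the regret through the comparator $\e_{i^*}$, the vertex of $\Delta^{K-1}$ that always selects $\cA_{i^*}$. Using that $\hat\ell_t\in\mathbb{R}^K$ is unbiased, that $i_t\sim w_t$, and that $a_{i^*,1}$ is a globally optimal arm with per-round loss $\ell^*=1-\mu_{1,1}$,
\[
\mathbb{E}[R(T)] = \mathbb{E}\!\left[\sum_{t=1}^T \langle w_t-\e_{i^*},\,\hat\ell_t\rangle\right] + \mathbb{E}\!\left[\sum_{t=1}^T \hat\ell_t(a_{i^*,j_t}) - \ell^*\right].
\]
The first term is the regret of the corralling master against $\e_{i^*}$ in $\cE'$; the second is precisely the left-hand side of the stability inequality of Definition~\ref{def:stability} applied to $\cA_{i^*}$.

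For the master term I would run a single unified FTRL/OMD analysis with the Tsallis potential $\Psi_t$. On the FTRL rounds the standard Tsallis-INF argument --- local-norm stability plus the telescope $\Psi_{T+1}(\e_{i^*})-\Psi_1(w_1)$ --- yields, after taking expectations over $i_t\sim w_t$ so that the per-round variance terms collapse to $\sqrt{w_{t,i}}$, a contribution of order $\max_{w\in\Delta^{K-1}}\sqrt{T}\sum_i \sqrt{w_i}/\eta_{1,i}$. Here I would use that $1/\eta_{t,i}^2$ is incremented by one on each FTRL round, so the schedule stays within a factor $\beta^{n}=\exp{O(\log{T})/\log{T}^2}=1+o(1)$ of the base $1/\sqrt{t}$ schedule even after all of the at most $n=O(\log{T})$ learning-rate increases; hence both the penalty term (which wants $1/\eta_{t,i}$ large) and the stability term (which wants $\eta_{t,i}$ small) are governed by $\eta_{1,i}$ up to constants. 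On the OMD rounds --- the \texttt{NRS} calls --- multiplying $\eta_{t,i}$ by $\beta$ and doubling $\rho_s$, which happens exactly when $w_{t,i}\le 1/\rho_s$, contributes an explicitly negative term of order $-\rho_s/(c\,\eta_{1,i})$ to the master regret, as in \cite{agarwal2016corralling}; I would retain these negatives only for $i=i^*$, discarding them for $i\ne i^*$ (which only loosens the bound).

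For the second term I would slice $[T]$ into maximal blocks on which the threshold index $\theta_{i^*}$ is constant, i.e.\ on which $w_{t,i^*}$ stays between two consecutive thresholds, recalling that the algorithm restarts $\cA_{i^*}$ with environment parameter $\frac{1}{2w_{t,i^*}}$ at each drop. On a block where $w_{t,i^*}\ge 1/\rho_s$, the $1/2$-stability of $\cA_{i^*}$ (Definition~\ref{def:stability}), applied to that fresh sub-run, bounds its contribution by $\sqrt{\rho_s}\,\bar R_{i^*}(T)$. Summing over blocks and pairing each $\sqrt{\rho_s}\,\bar R_{i^*}(T)$ with the negative master term $-\rho_s/(c\,\eta_{1,i^*})$ from the previous paragraph, the hypothesis $\eta_{1,i^*}\le \min_{t\in[T]}\frac{(1-\exp{-\frac{1}{\log{T}^2}})\sqrt{t}}{50\,\bar R_{i^*}(t)}$ --- equivalently $\bar R_{i^*}(t)\lesssim \sqrt{t}/(\eta_{1,i^*}\log{T}^2)$ --- makes the negative term dominate each paired positive term, so all the $\sqrt{\rho_s}$ blow-ups cancel. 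What survives is the first block, where $\rho=\rho_1=O(1)$ and the stability factor is a constant, contributing $O(\mathbb{E}[R_{i^*}(T)])$, together with the master FTRL bound; adding the two gives $\mathbb{E}[R(T)]\le O\big(\max_{w\in\Delta^{K-1}}\sqrt{T}\sum_i \sqrt{w_i}/\eta_{1,i}+\mathbb{E}[R_{i^*}(T)]\big)$.

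The main obstacle I anticipate is the bookkeeping that welds the FTRL and OMD phases into one telescoping bound: one must check that interleaving \texttt{NRS} (OMD) steps --- the only steps that raise a learning rate --- with FTRL steps still produces a valid regret decomposition carrying the claimed local-norm stability terms and no uncontrolled potential jumps, that the negative contribution of a single learning-rate doubling is genuinely of order $-\rho_s/(c\,\eta_{1,i^*})$ rather than something asymptotically smaller, and that the accounting of the $O(\log{T})$ restarts of $\cA_{i^*}$ and the attendant environment changes lines up term by term with those negatives. Because the adversarial setting supplies no suboptimality gap, this cancellation must be done pathwise, in terms of how far down $w_{t,i^*}$ actually descends, rather than via the faster $w_{t,i}$-decay that the gap $\Delta_i$ provides in the proof of Theorem~\ref{thm:regret_bound}; everything else reduces to the standard worst-case Tsallis-INF computation.
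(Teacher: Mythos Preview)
Your plan is essentially the paper's own: prove an adversarial analogue of Theorem~\ref{thm:corral_bound_stoch} (this is Theorem~\ref{thm:corral_bound_adv} in Appendix~\ref{app:adv_regret_bounds}), bounding the master regret by the $\max_{w\in\Delta^{K-1}}\sqrt{T}\sum_i(\eta_{1,i}+1/\eta_{1,i})\sqrt{w_i}$ term plus the retained negatives from the OMD steps, and then reuse verbatim the epoch-wise cancellation argument of Lemma~\ref{lem:regret_bound} to absorb the stability blow-up of $\cA_{i^*}$. So the overall architecture of your sketch is correct.

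The one concrete inaccuracy is the size of the negative term, which you yourself flagged as the main obstacle. With the $\tfrac12$-Tsallis potential (not the log-barrier of \cite{agarwal2016corralling}), Equation~\eqref{eq:neg_regret} gives
\[
D_{\Psi_{t+1}}(u,\hat w_{t+1})-D_{\Psi_t}(u,\hat w_{t+1})\;\le\;-2\Big(\tfrac{1}{\sqrt{\hat w_{t+1,i^*}}}-2\Big)\Big(\tfrac{1}{\eta_{t,i^*}}-\tfrac{1}{\eta_{t+1,i^*}}\Big),
\]
so each \texttt{NRS} call at level $w_{t,i^*}\approx 1/\rho_s$ contributes of order $-\sqrt{\rho_s}\cdot(1-1/\beta)/\eta_{t,i^*}\approx-\sqrt{\rho_s}\cdot\beta'\sqrt{t}/\eta_{1,i^*}$, \emph{not} $-\rho_s/(c\,\eta_{1,i^*})$. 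Correspondingly, the positive term you must cancel is $\sqrt{\rho_\tau}\,\bar R_{i^*}(s_\tau)$ for the $\tau$-th epoch of length $s_\tau$, not $\sqrt{\rho_s}\,\bar R_{i^*}(T)$; the paper's epoch doubling ensures $s_\tau\le 2t_\tau$, so the pairing $\sqrt{\rho_\tau}\,\bar R_{i^*}(2t_\tau)$ versus $-\sqrt{\rho_\tau}\cdot\beta'\sqrt{t_\tau}/\eta_{1,i^*}$ is exactly what the hypothesis $\eta_{1,i^*}\le\min_t\beta'\sqrt{t}/(50\bar R_{i^*}(t))$ makes work. Your block decomposition by constant $\theta_{i^*}$ can be made to work too, but you must track the time $t$ at which each OMD step fires (to get the $\sqrt{t}$ from $1/\eta_{t,i^*}$) and pair it with the regret of $\cA_{i^*}$ on a window of comparable length; otherwise the cancellation fails for early blocks with small $t$.
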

The bound in Theorem~\ref{thm:regret_bound_adv} essentially evaluates
to
$O(\max(\sqrt{TK},\max_{i\in [K]}\bar R_{i}(T)) +
\mathbb{E}[R_{i^*}(T)])$. Unfortunately, this is not quite enough to
recover the results in
\citep{agarwal2016corralling,pacchiano2020model}. This is attributed
to the fact that we use the $\frac{1}{2}$-Tsallis entropy as the
regularizer instead of the log-barrier function. It is possible to
improve the above bound for algorithms with stability $\gamma < 1/2$,
however, because model selection is not the primary focus of this
work, we will not present such results here.

\textbf{Stability of UCB-I.} We now briefly discuss how the regret
bounds of UCB-I and similar algorithms change whenever the variance of
the stochastic losses is rescaled by
Algorithm~\ref{alg:tsallis_inf_incr_dcr}. Let us focus on base learner
$\cA_i$ during epoch $\tau_j$. During epoch $\tau_j$, there is some
largest threshold $\rho_{s_i}$ which is never exceeded by the inverse
probabilities, i.e., $\min_{t \in \tau_j} w_{t,i} \geq
1/\rho_{s_i}$. This implies that the rescaled losses are in
$[0,\rho_{s_i}]$. Further, their variance is bounded by
$\mathbb{E}[\hat \ell_t(i)^2] =
\mathbb{E}[\ell_t(a_{i,j_t})^2/w_{t,i}] \leq\rho_{s_i}$. Using a
version of Freedman's inequality~\citep{Freedman1975}, we show the
following.
\begin{theorem}[Informal]
\label{thm:ucb_stable}
Suppose that during epoch $\tau_j$ of size $\cT_j$, UCB-I 
\citep{auer2002finite} uses an upper confidence bound $\sqrt{\frac{4\rho_{s_i}\log{t}}{T_{i,j}(t)}} +\frac{4\rho_{s_i}\log{t}}{3T_{i,j}(t)}$ for arm $j$ at time $t$. Then, the expected regret of $\cA_i$ under the rescaled rewards is at most $\mathbb{E}[R_{i}(\cT_j)] \leq \sqrt{8\rho_{s_i} k_i\cT_j\log{\cT_j}}.$
\end{theorem}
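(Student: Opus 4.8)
This is the stability result for UCB-I: the claim is that when the rewards are rescaled by a factor of up to $\rho_{s_i}$ (so the rewards now live in $[0, \rho_{s_i}]$ and have second moment at most $\rho_{s_i}$), and UCB-I widens its confidence bound accordingly to $\sqrt{4\rho_{s_i}\log t / T_{i,j}(t)} + 4\rho_{s_i}\log t/(3 T_{i,j}(t))$, then during an epoch $\tau_j$ of length $\mathcal{T}_j$ the expected regret is at most $\sqrt{8\rho_{s_i} k_i \mathcal{T}_j \log \mathcal{T}_j}$. So I need to think about how to prove this.

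Hmm, let me think about the standard UCB-I analysis and what changes.

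---

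**Plan.** The argument follows the classical UCB-I regret proof of \cite{auer2002finite}, but with the Bernstein/Freedman-type confidence width in place of the Hoeffding one, so as to exploit the fact that the rescaled rewards, although supported on $[0,\rho_{s_i}]$, have variance bounded by $\rho_{s_i}$ rather than $\rho_{s_i}^2$. The plan is as follows. First I would fix the epoch $\tau_j$ and restrict attention to the rescaled reward sequence $\hat r_t(a_{i,j}) = \ell_t$-complemented rescaled reward, which is a bounded martingale-difference-type sequence with range $\rho_{s_i}$ and conditional second moment at most $\rho_{s_i}$. Applying the stated version of Freedman's inequality \citep{Freedman1975} to the centered partial sums of the pulls of a fixed arm $j$, I get that with probability at least $1 - t^{-4}$ (say), the empirical mean $\hat\mu_{i,j}(t)$ of arm $j$ after $T_{i,j}(t)$ pulls satisfies $|\hat\mu_{i,j}(t) - \mu_{i,j}| \leq \sqrt{4\rho_{s_i}\log t/T_{i,j}(t)} + 4\rho_{s_i}\log t/(3T_{i,j}(t))$ — this is exactly why the confidence width has this precise form, the first term coming from the variance proxy $\rho_{s_i}$ and the second from the range; a union bound over arms and over times $t \in \tau_j$ then shows the "good event" $\cG$, on which every confidence interval is valid, holds with probability at least $1 - |\tau_j| k_i \cdot \mathcal{T}_j^{-4} \geq 1 - \mathcal{T}_j^{-2}$.

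Second, on $\cG$ I would run the standard optimism argument: if a suboptimal arm $j$ is pulled at time $t$, then its UCB value exceeds that of the best arm, hence $\Delta_{i,j} \leq 2\left(\sqrt{4\rho_{s_i}\log t / T_{i,j}(t)} + 4\rho_{s_i}\log t/(3T_{i,j}(t))\right)$, which after solving for $T_{i,j}(t)$ gives $T_{i,j}(t) \leq C\rho_{s_i}\log \mathcal{T}_j / \Delta_{i,j}^2$ for a universal constant $C$ (the quadratic term is lower-order as long as $\Delta_{i,j} \leq \rho_{s_i}$, which is automatic since rewards lie in $[0,\rho_{s_i}]$). Summing $\Delta_{i,j} T_{i,j}(\mathcal{T}_j) \leq C\rho_{s_i}\log\mathcal{T}_j/\Delta_{i,j}$ over the $k_i$ arms gives a gap-dependent bound $\sum_{j} C\rho_{s_i}\log\mathcal{T}_j/\Delta_{i,j}$ on the regret on $\cG$; the contribution of the complement $\cG^c$ is at most $\mathcal{T}_j \cdot \rho_{s_i} \cdot \mathcal{T}_j^{-2} = \rho_{s_i}/\mathcal{T}_j$, which is negligible. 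Finally, to get the advertised $\sqrt{8\rho_{s_i}k_i\mathcal{T}_j\log\mathcal{T}_j}$ worst-case form, I would apply the usual trick of splitting arms into those with $\Delta_{i,j} \leq \sqrt{\rho_{s_i}k_i\log\mathcal{T}_j/\mathcal{T}_j}$ (contributing at most $\mathcal{T}_j \cdot \sqrt{\rho_{s_i}k_i\log\mathcal{T}_j/\mathcal{T}_j} = \sqrt{\rho_{s_i}k_i\mathcal{T}_j\log\mathcal{T}_j}$ in aggregate) and those with larger gap (each contributing at most $C\rho_{s_i}\log\mathcal{T}_j/\Delta_{i,j} \leq C\sqrt{\rho_{s_i}\mathcal{T}_j\log\mathcal{T}_j/k_i}$, and there are at most $k_i$ of them), and tuning the constants so the sum is at most $\sqrt{8\rho_{s_i}k_i\mathcal{T}_j\log\mathcal{T}_j}$.

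**Main obstacle.** The delicate point is not the optimism bookkeeping — that is routine — but establishing the confidence bound \emph{with the correct constants and the correct variance proxy} via Freedman's inequality in the presence of the importance-weighting. The subtlety is that the rescaled reward $\hat r_t(a_{i,j_t})$ is only observed when $\cA_i$ is actually sampled, so the relevant sum over the pulls of arm $j$ is over a random (stopping-time-defined) subset of rounds, and one must invoke a version of Freedman's inequality that is valid for martingale difference sequences indexed by a stopping time, or else peel over the possible values of $T_{i,j}(t)$ with an extra union bound (absorbed into the $\log$). One must also be careful that the "variance" that enters Freedman is the conditional variance $\mathbb{E}[\hat r_t^2 \mid \cF_{t-1}] \leq \rho_{s_i}$ rather than $\rho_{s_i}^2$; this is exactly what buys the $\sqrt{\rho_{s_i}}$ rather than $\rho_{s_i}$ dependence in the leading term and is the reason the Bernstein-type bound (rather than a Hoeffding bound on $[0,\rho_{s_i}]$-valued variables) is essential here. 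Getting the numerical constant $8$ in $\sqrt{8\rho_{s_i}k_i\mathcal{T}_j\log\mathcal{T}_j}$ to come out requires tracking these constants through Freedman and through the arm-splitting step, which is why the theorem is stated as "Informal."
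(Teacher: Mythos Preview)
Your proposal is correct and follows the same backbone as the paper: establish the Bernstein/Freedman-type confidence bound for the rescaled rewards (exploiting that the conditional second moment is $\rho_{s_i}$, not $\rho_{s_i}^2$), then run the standard UCB optimism argument to get the per-arm pull bound $\mathbb{E}[T_{i,j}(t)] \leq 8\rho_{s_i}\log t/\Delta_{i,j}^2$.

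The one genuine difference is in the last step, converting the gap-dependent bound to the gap-independent $\sqrt{8\rho_{s_i}k_i\mathcal{T}_j\log\mathcal{T}_j}$. You use the threshold-splitting trick (small-gap arms versus large-gap arms). The paper instead works directly in expectation and uses a Cauchy--Schwarz/Jensen step: write $\sum_{j\neq 1}\Delta_j\,\mathbb{E}[T_{i,j}(t)] = \sum_{j\neq 1}\sqrt{\mathbb{E}[T_{i,j}(t)]}\cdot\sqrt{\Delta_j^2\,\mathbb{E}[T_{i,j}(t)]}$, bound the second factor by $\sqrt{8\rho_{s_i}\log t}$ using the pull bound, and then apply concavity of $\sqrt{\cdot}$ to get $\sum_j\sqrt{\mathbb{E}[T_{i,j}(t)]}\leq\sqrt{k_i\sum_j\mathbb{E}[T_{i,j}(t)]}=\sqrt{k_i t}$. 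This avoids the high-probability ``good event'' bookkeeping and the threshold tuning altogether, and delivers the constant $8$ immediately rather than after balancing two pieces. Your stopping-time concern in the ``main obstacle'' is also somewhat sidestepped in the paper by working with expected pull counts throughout rather than pathwise bounds.
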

We expect that other UCB-type algorithms~\citep{audibert2009exploration,garivier2011kl,bubeck2013bandits,garivier2018kl} should also be $\frac{1}{2}$-stable.

\section{Empirical results}
\label{sec:experiments}

In this section, we further examine the empirical properties of our
algorithms via experiments on synthetically generated datasets.  We
compare Algorithm~\ref{alg:ucb-c-gen} and
Algorithm~\ref{alg:tsallis_inf_incr_dcr} to the Corral algorithm
\citep{agarwal2016corralling}[Algorithm 1], which is also used in
\citep{pacchiano2020model}.
We note that \cite{pacchiano2020model} also use Exp3.P as a corralling
algorithm.  Recent work \citep{lee2020bias} suggests that Corral
exhibits similar high probability regret guarantees as Exp3.P and that
Corral would completely outperform Exp3.P.

\paragraph{Experimental setup.} 
The algorithms that we corral are UCB-I, Thompson sampling (TS), and
FTRL with $\frac{1}{2}$-Tsallis entropy reguralizer (Tsallis-INF).
When implementing Algorithm~\ref{alg:tsallis_inf_incr_dcr} and Corral,
we make an important deviation from what theory prescribes: we
\emph{never} restart the corralled algorithms and run them with their
default parameters.
In all our experiments, we corral two instances of UCB-I, TS, and
Tsallis-INF for a total of six algorithms.  The algorithm containing
the best arm plays over $10$ arms.  Every other algorithm plays over
$5$ arms.
The rewards for each base algorithm are Bernoulli random variables
with expectations set so that for all $i>2$ and $j>1$,
$\Delta_{i, j} = 0.01$. We run two sets of experiments with $\Delta_i$
equal to either $0.2$ or $0.02$. This setting implies that Algorithm~1
always contains the best arm and that the best arm of each base
algorithm is arm one. Even though $\Delta_{i,j} = 0.01$ implies large
regret for all sub-optimal algorithms, it also reduces the variance of
the total reward for these algorithms thereby making the corralling
problem harder.
Finally, the time horizon is set to $T = 10^6$. For a more extensive discussion,
about our choice of algorithms and parameters for the experimental setup we refer 
the reader to Appendix~\ref{app:experiments}.

\paragraph{Large gap experiments.}

\begin{table}[t]
\centering
\begin{tabular}{@{\hspace{0cm}}|@{\hspace{0cm}}c@{\hspace{0cm}}|@{\hspace{0cm}}c@{\hspace{0cm}}|@{\hspace{0cm}}c@{\hspace{0cm}}|@{\hspace{0cm}}}
\hline
\multicolumn{1}{|c|}{\textsc{Corral}} 
& Algorithm~\ref{alg:tsallis_inf_incr_dcr} 
& Algorithm~\ref{alg:ucb-c-gen} \\
\hline
\includegraphics[width=0.33\linewidth]{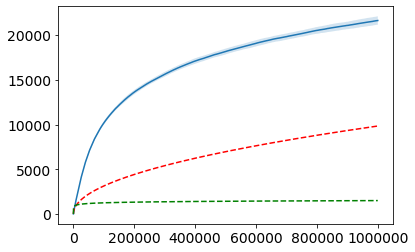}       
& \includegraphics[width=0.33\linewidth]{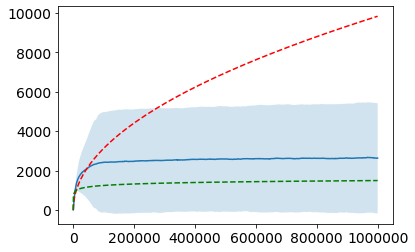}
& \includegraphics[width=0.33\linewidth]{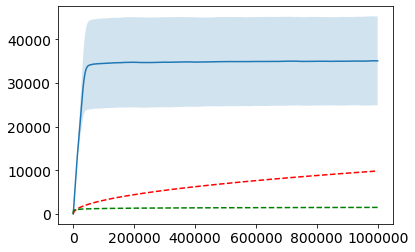}
\\ \hline
\includegraphics[width=0.33\linewidth]{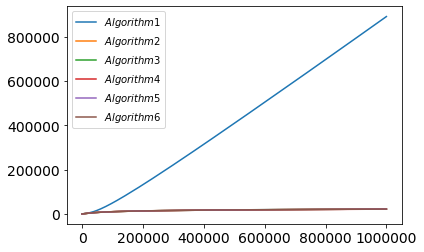}       
& \includegraphics[width=0.33\linewidth]{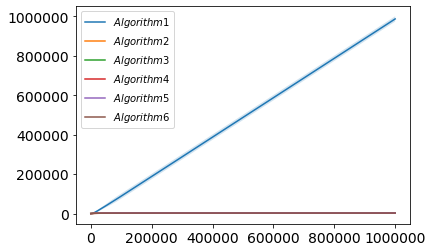}
& \includegraphics[width=0.33\linewidth]{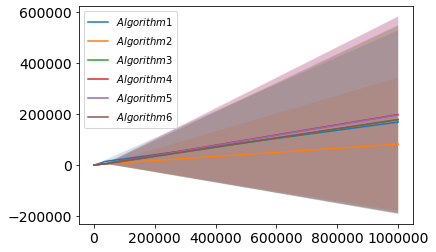}
\\ \hline
\end{tabular}
\caption{Regret for corralling when $\Delta_i = 0.2$}
\label{table:large_gap_reg}
\end{table}

Table~\ref{table:large_gap_reg} reports the regret (top) and number of plays of each algorithm found in our
experiments when $\Delta_i = 0.2$.  The plots represent the average
regret, in blue, and the average number of pulls of each algorithm (color according to the legend) over $75$ runs of each experiment. The standard
deviation is represented by the shaded blue region. The algorithm that
contains the optimal arm is $\cA_1$ and is an instance of UCB-I. The
red dotted line in the top plots is given by $4\sqrt{KT} + \mathbb{E}[R_1(T)]$, and the
green dotted line is given by
$4 \sum_{i\neq 1} \frac{k_i\log{T}}{\Delta_i} +
\mathbb{E}[R_1(T)]$. These lines serve as a reference across experiments and we believe they are more accurate upper bounds for the regret of the proposed
and existing algorithms. As expected, we see that, in the large gap
regime, the Corral algorithm exhibits $\Omega(\sqrt{T})$ regret, while
the regret of Algorithm~\ref{alg:tsallis_inf_incr_dcr} remains bounded
in $O(\log{T})$. Algorithm~\ref{alg:ucb-c-gen} admits two regret
phases. In the initial phase, its regret is linear, while in the
second phase it is logarithmic.  This is typical of UCB strategies in
the stochastic MAB problem~\citep{garivier2018explore}.

\paragraph{Small gap experiments}
\begin{table}[t]
\centering
\begin{tabular}{@{\hspace{0cm}}|@{\hspace{0cm}}c@{\hspace{0cm}}|@{\hspace{0cm}}c@{\hspace{0cm}}|@{\hspace{0cm}}c@{\hspace{0cm}}|@{\hspace{0cm}}}
\hline
\multicolumn{1}{|c|}{\textsc{Corral}}& Algorithm~\ref{alg:tsallis_inf_incr_dcr} & Algorithm~\ref{alg:ucb-c-gen} \\
\hline
\includegraphics[width=0.33\linewidth]{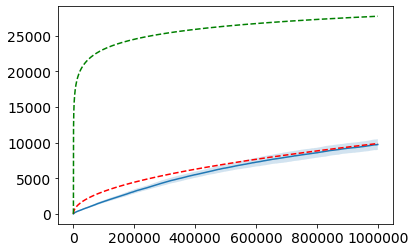}
& \includegraphics[width=0.33\linewidth]{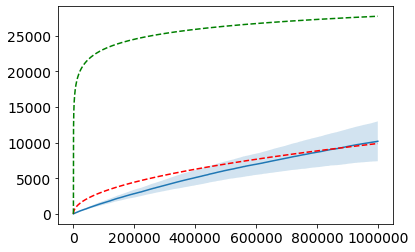}
& \includegraphics[width=0.33\linewidth]{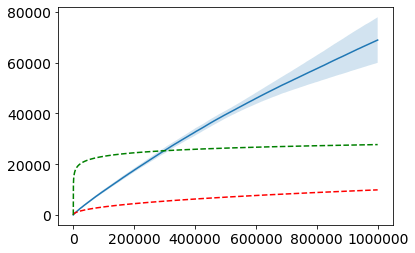}\\ 
\hline
\includegraphics[width=0.33\linewidth]{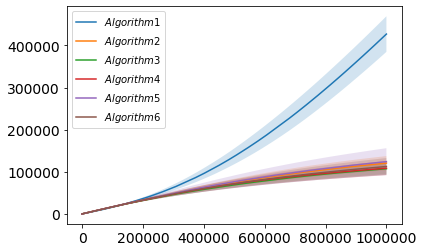}
& \includegraphics[width=0.33\linewidth]{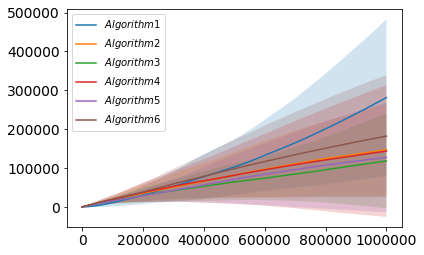}
& \includegraphics[width=0.33\linewidth]{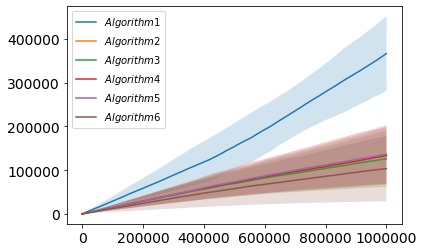}\\ 
\hline
\end{tabular}
\caption{Regret for corralling when $\Delta_i = 0.02$}
\label{table:small_gap_reg}
\end{table}

Table~\ref{table:small_gap_reg} reports the results of our experiments
for $\Delta_i = 0.02$. The setting of the experiments is the same as
in the large gap case. We observe that both Corral and
Algorithm~\ref{alg:tsallis_inf_incr_dcr} behave according to the
$O(\sqrt{T})$ bounds. This is expected since, when $\Delta_{i}=0.02$,
the optimistic bound dominates the $\sqrt{T}$-bound. The result for
Algorithm~\ref{alg:ucb-c-gen} might be somewhat surprising, as its
regret exceeds both the green and red lines.  We emphasize that this
experiment does not contradict Theorem~\ref{thm:regret_bound_ucbc}.
Indeed, if we were to plot the green and red lines according to the
bounds of Theorem~\ref{thm:regret_bound_ucbc}, the regret would remain
below both lines.

Our experiments suggest that Algorithm~\ref{alg:tsallis_inf_incr_dcr}
is the best corralling algorithm. A tighter analysis would potentially
yield optimistic regret bounds in the order of
$O\left(\sum_{i\neq i^*} \frac{k_i\log{T}}{\Delta_i} +
  \mathbb{E}[R_{i^*}(T)]\right)$. Furthermore, we expect that the
bounds of Theorem~\ref{thm:regret_bound_ucbc} are tight.  For more
detailed experiments, we refer the reader to
Appendix~\ref{app:experiments}.

\section{Model selection for linear bandits}
\label{sec:model-selection}

While the main focus of the paper is corralling MAB base learners when there exists a best overall base algorithm, we now demonstrate that
several known model selection results can be recovered
using  Algorithm~\ref{alg:tsallis_inf_incr_dcr}.

We begin by recalling the model selection problem for linear bandits. The learner is given access to a set of loss functions $\mathcal{F}\colon \mathcal{X}\times\mathcal{A} \rightarrow \mathbb{R}$ mapping from contexts $\mathcal{X}$ and actions $\mathcal{A}$ to losses. In the linear bandits setting, $\mathcal{F}$ is structured as a nested sequence of classes $\mathcal{F}_1\subseteq \mathcal{F}_2\subseteq\ldots\subseteq\mathcal{F}_K = \mathcal{F}$, where each $\mathcal{F}_i$ is defined as
\begin{align*}
    \mathcal{F}_i = \{(x,a) \rightarrow \langle\beta_i, \phi_i(x,a)\rangle: \beta_i \in \mathbb{R}^{d_i}\},
\end{align*}
for some feature embedding $\phi_i \colon \mathcal{X}\times\mathcal{A} \rightarrow \mathbb{R}^{d_i}$. It is assumed that each feature embedding $\phi_i$ contains $\phi_{i-1}$ as its first $d_{i-1}$ coordinates. It is further assumed that there exists a smallest $i^* \leq K$ to which the optimal parameter $\beta^*$ belongs, that is the observed losses for each context-action pair $(x,a)$ satisfy $\ell_t(x,a) = \mathbb{E}\left[\langle\beta^*,\phi_{d_{i^*}}(x,a) \rangle\right]$. The goal in the model selection problem is to identify $i^*$ and compete against the smallest loss for the $t$-th context in $\mathbb{R}^{d_{i^*}}$ by minimizing the regret:
\begin{align*}
    & R_{i^*}(T) = \sum_{t=1}^T \Big( \mathbb{E}[\langle \beta^*, \phi_{i^*}(x_t,a_t) \rangle]  \\
    & \hspace*{100pt}
    - \min_{a\in\mathcal{A}}\mathbb{E}[\langle \beta^*, \phi_{i^*}(x_t,a) \rangle] \Big),
\end{align*}
where the expectation is with respect to all randomness in the sampling of the contexts $x_t\sim\mathcal{D}$, actions and additional noise. We adopt the standard assumption that, given $x_t$, the observed loss for any $a$ can be expressed as follows: $\langle \beta, \phi_i(x_t,a) \rangle + \xi_t$, where $\xi_t$ is a zero-mean, sub-Gaussian random variable with variance proxy $1$ and for each of the context-action pairs it holds that $\langle \beta, \phi_i(x_t,a) \rangle \in [0,1]$.

\subsection{Algorithm and main result}

We assume that there are $K$ base learners $\{\mathcal{A}_i\}_{i=1}^K$ such that the regret of $\mathcal{A}_i$, for $i\geq i^*$, is bounded by $\tilde O(d_i^{\alpha}\sqrt{T})$. That is, whenever the model is correctly specified, the $i$-th algorithm admits a meaningful regret guarantee. In the setting of \cite{foster2019model}, $\mathcal{A}_i$ can be instantiated as \textsc{LinUCB} and in that case $\alpha = 1/2$. Further, in the setting of infinite arms, $\mathcal{A}_i$ can be instantiated as \textsc{OFUL} \citep{abbasi2011improved}, in which case $\alpha = 1$. Both $\alpha=1/2$ and $\alpha=1$ govern the min-max optimal rates in the respective settings. Our algorithm is now a simple modification of Algorithm~\ref{alg:tsallis_inf_incr_dcr}. At every time-step $t$, we update $\hat L_t = \hat L_{t-1} + \hat\ell_{t} + \mathbf{d}$, where $\mathbf{d}_i = \frac{d_i^{2\alpha}}{\sqrt{T}}$. Intuitively, our modification creates a gap between the losses of $\mathcal{A}_{i^*}$ and any $\mathcal{A}_i$ for $i>i^*$ of the order $d_i^{2\alpha}$. On the other hand for any $i<i^*$, perturbing the loss can result in at most additional $d_{i^*}^{2\alpha}\sqrt{T}$ regret. With the above observations, the bound guaranteed by Theorem~\ref{thm:regret_bound} implies that the modified algorithm should incur at most $\tilde O(d_{i^*}^{2\alpha}\sqrt{T})$ regret. In Appendix~\ref{app:model_selection}, we show the following regret bound.
\begin{theorem}
\label{thm:model_selection}
Assume that every base learner $\mathcal{A}_i$, 
$i\geq i^*$, admits a $\tilde O(d_{i}^{\alpha}\sqrt{T})$ regret. Then, there exists a corralling strategy with expected regret bounded by $\tilde O(d_{i^*}^{2\alpha}\sqrt{T} + K\sqrt{T})$. Moreover, under the additional assumption 
that the following holds for any $i<i^*$, for all $(x,a) \in \mathcal{X}\times\mathcal{A}$
\begin{align*}
    \mathbb{E}[\langle \beta_i,\phi_i(x,a) \rangle] - \min_{a \in \mathcal{A}} \mathbb{E}[\langle \beta^*,\phi_{i^*}(x,a)\rangle] \geq 2\frac{d_{i^*}^{2\alpha} - d_{i}^{2\alpha}}{\sqrt{T}},
\end{align*}
the expected regret of the same strategy is bounded as $\tilde O(d_{i^*}^{\alpha}\sqrt{T} + K\sqrt{T})$.
\end{theorem}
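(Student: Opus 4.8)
The plan is to apply Theorem~\ref{thm:regret_bound} to a carefully perturbed loss sequence and to track how the perturbation $\mathbf{d}$ with $\mathbf{d}_i = d_i^{2\alpha}/\sqrt{T}$ interacts with the three quantities that appear in that bound: the gaps $\Delta_i$ between the (perturbed) best base learner and the other base learners, the regret $\mathbb{E}[R_{i^*}(T)]$ of the best base learner, and the admissible initial step sizes $\eta_{1,i}$. First I would set up the reduction: each base learner $\mathcal{A}_i$ is treated as an ``arm'' of the corralling algorithm whose instantaneous loss is the loss of the policy it plays; by the assumption that $\mathcal{A}_i$ for $i\geq i^*$ has regret $\tilde O(d_i^\alpha\sqrt{T})$, the corralled environment is exactly of the type handled in Section~\ref{sec:tsallis}, with $\bar R_i(t)=\tilde O(d_i^\alpha\sqrt{t})$ (anytime form), so the step-size constraint $\eta_{1,i}\leq \min_t (1-e^{-1/\log T^2})\sqrt{t}/(50\bar R_i(t))$ can be met with $1/\eta_{1,i}^2 = \tilde O(d_i^{2\alpha})$, poly-logarithmic slack aside. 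The subtlety is that for $i<i^*$ the learner $\mathcal{A}_i$ is misspecified and need not have a sublinear regret guarantee \emph{against the global optimum}; this is why the perturbation is needed and why the second (stronger) bound requires the extra margin assumption.

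Next I would analyze the effect of the additive perturbation $\mathbf{d}$ on the identity and gap of the best base learner in the perturbed environment. Feeding $\hat L_t = \hat L_{t-1} + \hat\ell_t + \mathbf{d}$ into the FTRL/OMD updates is equivalent to running Algorithm~\ref{alg:tsallis_inf_incr_dcr} on a modified loss sequence where learner $i$ suffers an extra $\mathbf{d}_i$ per round, i.e.\ a total extra loss $\mathbf{d}_i T = d_i^{2\alpha}\sqrt{T}$. Since $d_1\leq d_2\leq\cdots\leq d_K$, among the learners $i\geq i^*$ the perturbation creates a gap of order $(d_i^{2\alpha}-d_{i^*}^{2\alpha})/\sqrt{T}$ in per-round loss between $\mathcal{A}_{i^*}$ and $\mathcal{A}_i$; combined with $\mathcal{A}_{i^*}$ being (near-)optimal in the unperturbed environment, $\mathcal{A}_{i^*}$ remains the best base learner in the perturbed environment with $\Delta_i = \Omega((d_i^{2\alpha}-d_{i^*}^{2\alpha})/\sqrt{T})$ for $i>i^*$, while $\Delta_i$ for $i<i^*$ is governed either by the (possibly large) misspecification loss or, under the margin assumption, by the same order $(d_{i^*}^{2\alpha}-d_i^{2\alpha})/\sqrt{T}$. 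Plugging these into Theorem~\ref{thm:regret_bound} gives $\sum_{i\neq i^*}\frac{\log T}{\eta_{1,i}^2\Delta_i} = \tilde O\big(\sum_{i}\frac{d_i^{2\alpha}\sqrt{T}}{|d_{i^*}^{2\alpha}-d_i^{2\alpha}|}\big)$, which I would bound crudely by $\tilde O(K\sqrt{T})$ after noting $d_i^{2\alpha}/|d_{i^*}^{2\alpha}-d_i^{2\alpha}| = O(1)$ whenever the $d_i$ are separated (or by inserting a minimal separation / the $+K\sqrt T$ slack term absorbs it), plus the $\mathbb{E}[R_{i^*}(T)]$ term which is $\tilde O(d_{i^*}^\alpha\sqrt{T})$.

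Finally I would account for the bias introduced by the perturbation itself, since the regret is measured against the true (unperturbed) optimum: running on perturbed losses can cost at most an extra $\mathbf{d}_{i^*}T = d_{i^*}^{2\alpha}\sqrt{T}$ in regret, because on every round the corralling algorithm plays $\mathcal{A}_{i^*}$ with high weight and the extra loss charged to $\mathcal{A}_{i^*}$ is exactly $\mathbf{d}_{i^*}$ per round; summing gives the $\tilde O(d_{i^*}^{2\alpha}\sqrt{T})$ term in the first bound. For the second, sharper bound, the margin assumption $\mathbb{E}[\langle\beta_i,\phi_i(x,a)\rangle]-\min_a\mathbb{E}[\langle\beta^*,\phi_{i^*}(x,a)\rangle]\geq 2(d_{i^*}^{2\alpha}-d_i^{2\alpha})/\sqrt{T}$ guarantees that even after perturbation each misspecified $\mathcal{A}_i$, $i<i^*$, is still strictly worse than $\mathcal{A}_{i^*}$, so no additional $d_{i^*}^{2\alpha}\sqrt{T}$ bias term is incurred there; combined with a slightly more careful bookkeeping that charges the perturbation bias only against the gap it creates (rather than against $T$ directly), the leading term improves to $\tilde O(d_{i^*}^\alpha\sqrt{T})$. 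The main obstacle I anticipate is precisely this last bias/gap trade-off: one must simultaneously make $\mathbf{d}_i$ large enough to create exploitable gaps (so the $1/(\eta_{1,i}^2\Delta_i)$ terms stay $\tilde O(\sqrt{T})$) and small enough that the self-inflicted bias against the true optimum stays $\tilde O(d_{i^*}^{2\alpha}\sqrt{T})$ (resp.\ $\tilde O(d_{i^*}^\alpha\sqrt{T})$ under the margin condition), and verifying that the anytime step-size condition of Theorem~\ref{thm:regret_bound} is met uniformly over epochs by the misspecified learners — for which one falls back on the worst-case $\sqrt{T}$-type bound rather than the correctly-specified $d_i^\alpha\sqrt{T}$ bound — requires care in how $\eta_{1,i}$ is chosen for $i<i^*$.
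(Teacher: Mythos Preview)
Your high-level plan---perturb the cumulative losses by $\mathbf{d}_i=d_i^{2\alpha}/\sqrt{T}$, run Algorithm~\ref{alg:tsallis_inf_incr_dcr}, and then account separately for (i) the artificial gap created for $i>i^*$, (ii) the bias paid for $i<i^*$, and (iii) the regret of $\mathcal{A}_{i^*}$---is exactly the paper's strategy. However, two technical steps in your execution do not go through as written.

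\textbf{You cannot black-box Theorem~\ref{thm:regret_bound}.} The perturbed per-round loss fed to the corralling updates is $\hat\ell_{t,i}+d_i^{2\alpha}/\sqrt{T}$, and since the interesting regime has $d_K^{2\alpha}$ as large as $\Theta(T)$, this additive term is unbounded. Theorem~\ref{thm:regret_bound} (and the stability Lemma~\ref{lem:lemma_11_zimmert} it rests on) assumes losses in $[0,1]$. The paper therefore reopens the proof of Lemma~\ref{lem:regret_bound} and redoes the stability bound for the perturbed estimator, obtaining
\[
\mathbb{E}\!\left[\tfrac{\eta_{t,i}}{2}w_{t,i}^{3/2}(\hat\ell_{t,i}+\mathbf{d}_i)^2\right]\le 2\,\mathbb{E}[\eta_{t,i}\sqrt{w_{t,i}}]+\mathbb{E}\!\left[\eta_{t,i}w_{t,i}\tfrac{d_i^{4\alpha}}{T}\right],
\]
and then carries the extra $\eta_{t,i}w_{t,i}d_i^{4\alpha}/T$ term through the rest of the argument. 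Without this, your invocation of Theorem~\ref{thm:regret_bound} is not justified.

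\textbf{The ratio $d_i^{2\alpha}/|d_{i^*}^{2\alpha}-d_i^{2\alpha}|$ is not $O(1)$.} Your bound $\sum_{i\neq i^*}\log T/(\eta_{1,i}^2\Delta_i)=\tilde O\big(\sum_i d_i^{2\alpha}\sqrt{T}/|d_{i^*}^{2\alpha}-d_i^{2\alpha}|\big)$ blows up whenever some $d_i$ is close to $d_{i^*}$; ``inserting a minimal separation'' is not part of the theorem's hypotheses, and the $K\sqrt{T}$ slack does not absorb an unbounded ratio. The paper avoids this by \emph{not} applying the gap bound with $\Delta_i=(d_i^{2\alpha}-d_{i^*}^{2\alpha})/\sqrt{T}$. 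Instead it writes out the per-round inequality for $i>i^*$,
\[
\sqrt{\tfrac{w_{t,i}}{t}}\tfrac{1}{\eta_{1,i}}+w_{t,i}\Big(\tfrac{\eta_{1,i}}{\sqrt{t}}\tfrac{d_i^{4\alpha}}{T}-\tfrac{d_i^{2\alpha}}{\sqrt{T}}\Big)\le \tfrac{\sqrt{T}}{t\,d_i^{2\alpha}\eta_{1,i}^2}=\tilde O\!\big(\tfrac{\sqrt{T}}{t}\big),
\]
using the \emph{full} $d_i^{2\alpha}/\sqrt{T}$ as the negative term (the missing $d_{i^*}^{2\alpha}/\sqrt{T}$ piece is absorbed into the $d_{i^*}^{2\alpha}\sqrt{T}$ budget for the first bound). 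Summing over $t$ and $i$ gives $\tilde O(K\sqrt{T})$ without any separation assumption. For $i<i^*$ under the margin hypothesis, the paper uses the self-bounding trick directly: the true gap $\Delta_i\ge 2(d_{i^*}^{2\alpha}-d_i^{2\alpha})/\sqrt{T}$ is exactly large enough to cancel the positive bias term $w_{t,i}(d_{i^*}^{2\alpha}-d_i^{2\alpha})/\sqrt{T}$, which is what removes the $d_{i^*}^{2\alpha}\sqrt{T}$ term and yields the sharper $\tilde O(d_{i^*}^{\alpha}\sqrt{T})$ bound. Your description of this last step (``charge the bias only against the gap it creates'') is the right intuition, but it has to be done inside the analysis rather than after a black-box application of Theorem~\ref{thm:regret_bound}.
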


Typically, we have $K = O(\log{T})$ 
and thus Theorem~\ref{thm:model_selection} guarantees a regret of at most $\tilde O(d_{i^*}^{2\alpha}\sqrt{T})$. Furthermore, 
under a gap-assumption, which implies that the value of the smallest loss for the optimal embedding $i^*$ is sufficiently smaller compared to the value of any sub-optimal embedding $i<i^*$, we can actually achieve a corralling regret of the order $R_{i^*}(T)$. In particular, for the setting of \cite{foster2019model}, our strategy yields the desired $\tilde O(\sqrt{d_{i^*}T})$ regret bound. Notice that the regret guarantees are only meaningful as long as $d_{i^*} = o(T^{1/(2\alpha)})$. In such a case, 
the second assumption on the gap is that the gap is lower bounded by $o(1)$. This is a completely problem-dependent assumption and in general we expect that it cannot be satisfied. 

\section{Conclusion}

We presented an extensive analysis of the problem of corralling
stochastic bandits. Our algorithms are applicable to a number of
different contexts where this problem arises. There are also several
natural extensions and related questions relevant to our study. One
natural extension is the case where the set of arms accessible to the
base algorithms admit some overlap and where the reward observed by
one algorithm could serve as side-information to another
algorithm. Another extension is the scenario of corralling online
learning algorithms with feedback graphs. In addition to these and
many other interesting extensions, our analysis may have some
connection with the study of other problems such as model selection in
contextual bandits \citep{foster2019model} or active learning.

\section*{Acknowledgements}
This research was supported in part by NSF BIGDATA awards
IIS-1546482, IIS-1838139, NSF CAREER award IIS-1943251, and by NSF
CCF-1535987, NSF IIS-1618662, and a Google Research Award. RA would
like to acknowledge support provided by Institute for Advanced Study
and the Johns Hopkins Institute for Assured Autonomy. We 
warmly thank Julian Zimmert for insightful discussions 
regarding the Tsallis-INF approach.

\bibliographystyle{plainnat}
\bibliography{arXiv_bib}
\appendix
\onecolumn
\section{Additional experiments}
\label{app:experiments}
We now provide more detailed plots for our experiments, including number of times each corralled algorithm has been played and the distribution over corralled distribution each of the corralling algorithm keeps (in the case of Algorithm~\ref{alg:ucb-c-gen} this is just the empirical distribution of played algorithms). We additionally present experiments in which the corralled algorithm containing the best arm is FTRL with $\frac{1}{2}$-Tsallis entropy regularization and Thompson sampling. 

\paragraph{Detailed experimental setup.}
The algorithms which we corral are UCB-I, Thompson sampling (TS), and FTRL with $\frac{1}{2}$-Tsallis entropy reguralizer (Tsallis-INF). We chose these algorithms as they all come with regret guarantees for the stochastic multi-armed problem and they broadly represent three different classes of algorithms, i.e, algorithms based on the optimism in the face of uncertainty principle, algorithms based on posterior sampling, and algorithms based on online mirror descent. As already discussed in Section~\ref{sec:experiments}, when implementing Algorithm~\ref{alg:tsallis_inf_incr_dcr} and Corral, we \emph{never} restart the corralled algorithms and run them with their default parameters. Even though, there are no theoretical guarantees for this modification of the corralling algorithms, we will see that the regret bounds remain meaningful in practice. In all of the experiments we corral two instances of UCB-I, TS, and FTRL for a total of six algorithms. The best algorithm plays over 10 arms. Every other algorithm plays over 5 arms. Intuitively, the higher the number of arms implies higher complexity of the best algorithm which would lead to higher regret and a harder corralling problem. The rewards for each algorithm are Bernoulli random variables setup according to the following parameters: \textsc{base\_reward}, \textsc{in\_gap}, \textsc{out\_gap}, and \textsc{low\_reward}. The best overall arm has expected reward $\textsc{base\_reward} + \textsc{in\_gap} + \textsc{out\_gap}$. Every other arm of Algorithm 1 has expected reward equal to \textsc{low\_reward}. For all other algorithms the best arm has reward $\textsc{base\_reward} + \textsc{in\_gap}$ and other arms have reward $\textsc{base\_reward}$. 
In all of the experiments we set $\textsc{base\_reward} = 0.5,\textsc{in\_gap} =0.01, \textsc{low\_reward}=0.2$. 
While a small $\textsc{in\_gap}$ implies a large regret for the algorithms containing sub-optimal arms, it also reduces the likelihood that said algorithms would have small average reward. Combined with setting $\textsc{low\_reward} = 0.2$, this will make the average reward of $\cA_1$ look small in the initial number of rounds, compared to the average reward of $\cA_i,i>1$ and hence makes the corralling problem harder. We run two set of experiments, an easy set for which $\textsc{out\_gap}=0.19$, which translates to gaps $\Delta_i = 0.2$ in our regret bounds, and a hard set for which $\textsc{out\_gap}=0.01$ which implies $\Delta_i=0.02$. Finally time horizon is set to $T=1000000$.

\subsection{UCB-I contains best arm}
Experiments can be found in Figure~\ref{fig:large_gap_ucb} for $\Delta_i = 0.2$ and in Figure~\ref{fig:small_gap_ucb} for $\Delta_i = 0.02$.
\begin{figure*}[h!]
    \centering
\begin{subfigure}{0.25\textwidth}
  \includegraphics[width=\linewidth]{plots/UCBbest/Regret_largeGap_LogBar.png}
  \caption{Corral regret}
\end{subfigure}\hfil
\begin{subfigure}{0.25\textwidth}
  \includegraphics[width=\linewidth]{plots/UCBbest/NumPulls_largeGap_LogBar.png}
  \caption{Corral number of pulls}
\end{subfigure}\hfil
\begin{subfigure}{0.25\textwidth}
  \includegraphics[width=\linewidth]{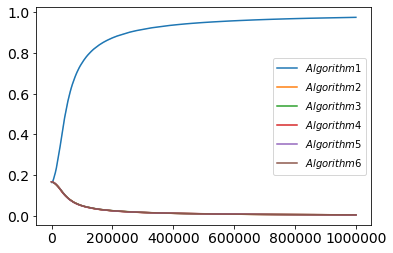}
  \caption{Corral distribution}
\end{subfigure}

\begin{subfigure}{0.25\textwidth}
  \includegraphics[width=\linewidth]{plots/UCBbest/Regret_largeGap_Tsallis.png}
  \caption{Algorithm~\ref{alg:tsallis_inf_incr_dcr} regret}
\end{subfigure}\hfil
\begin{subfigure}{0.25\textwidth}
  \includegraphics[width=\linewidth]{plots/UCBbest/NumPulls_largeGap_Tsallis.png}
  \caption{Algorithm~\ref{alg:tsallis_inf_incr_dcr} number of pulls}
\end{subfigure}\hfil
\begin{subfigure}{0.25\textwidth}
  \includegraphics[width=\linewidth]{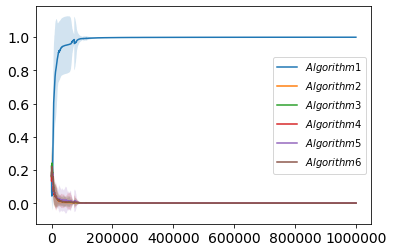}
  \caption{Algorithm~\ref{alg:tsallis_inf_incr_dcr} distribution}
\end{subfigure}

\begin{subfigure}{0.25\textwidth}
  \includegraphics[width=\linewidth]{plots/UCBbest/Regret_largeGap_UCB.png}
  \caption{Algorithm~\ref{alg:ucb-c-gen} regret}
\end{subfigure}\hfil
\begin{subfigure}{0.25\textwidth}
  \includegraphics[width=\linewidth]{plots/UCBbest/NumPulls_largeGap_UCB.png}
  \caption{Algorithm~\ref{alg:ucb-c-gen} number of pulls}
\end{subfigure}\hfil
\begin{subfigure}{0.25\textwidth}
  \includegraphics[width=\linewidth]{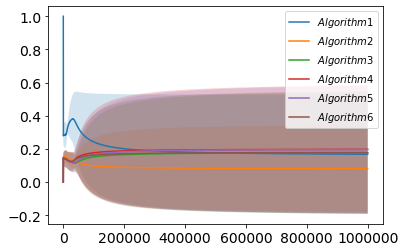}
  \caption{Algorithm~\ref{alg:ucb-c-gen} distribution}
\end{subfigure}
\caption{\text{UCB-I} contains best arm,$\Delta_i=0.2,\textsc{ALG}_{1:2} = \textsc{UCB-I},\textsc{ALG}_{3:4} = \textsc{Tsallis-INF},\textsc{ALG}_{5:6} = \textsc{TS}$.}
\label{fig:large_gap_ucb}
\end{figure*}
\newpage

\begin{figure*}[h!]
    \centering
\begin{subfigure}{0.25\textwidth}
  \includegraphics[width=\linewidth]{plots/UCBbest/Regret_smallGap_LogBar.png}
  \caption{Corral regret}
\end{subfigure}\hfil
\begin{subfigure}{0.25\textwidth}
  \includegraphics[width=\linewidth]{plots/UCBbest/NumPulls_smallGap_LogBar.png}
  \caption{Corral number of pulls}
\end{subfigure}\hfil
\begin{subfigure}{0.25\textwidth}
  \includegraphics[width=\linewidth]{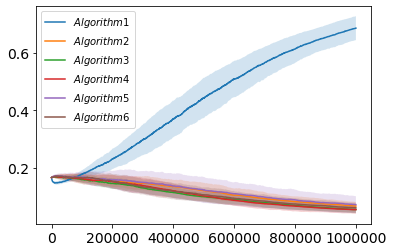}
  \caption{Corral distribution}
\end{subfigure}

\begin{subfigure}{0.25\textwidth}
  \includegraphics[width=\linewidth]{plots/UCBbest/Regret_smallGap_Tsallis.png}
  \caption{Algorithm~\ref{alg:tsallis_inf_incr_dcr} regret}
\end{subfigure}\hfil
\begin{subfigure}{0.25\textwidth}
  \includegraphics[width=\linewidth]{plots/UCBbest/NumPulls_smallGap_Tsallis.png}
  \caption{Algorithm~\ref{alg:tsallis_inf_incr_dcr} number of pulls}
\end{subfigure}\hfil
\begin{subfigure}{0.25\textwidth}
  \includegraphics[width=\linewidth]{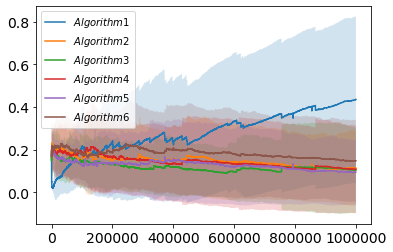}
  \caption{Algorithm~\ref{alg:tsallis_inf_incr_dcr} distribution}
\end{subfigure}

\begin{subfigure}{0.25\textwidth}
  \includegraphics[width=\linewidth]{plots/UCBbest/Regret_smallGap_UCB.png}
  \caption{Algorithm~\ref{alg:ucb-c-gen} regret}
\end{subfigure}\hfil
\begin{subfigure}{0.25\textwidth}
  \includegraphics[width=\linewidth]{plots/UCBbest/NumPulls_smallGap_UCB.png}
  \caption{Algorithm~\ref{alg:ucb-c-gen} number of pulls}
\end{subfigure}\hfil
\begin{subfigure}{0.25\textwidth}
  \includegraphics[width=\linewidth]{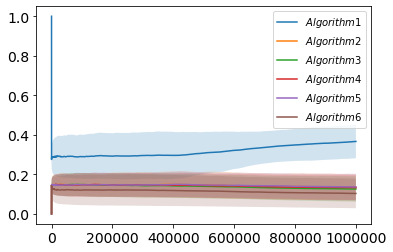}
  \caption{Algorithm~\ref{alg:ucb-c-gen} distribution}
\end{subfigure}
\caption{\text{UCB-I} contains best arm,$\Delta_i=0.02,\textsc{ALG}_{1:2} = \textsc{UCB-I},\textsc{ALG}_{3:4} = \textsc{Tsallis-INF},\textsc{ALG}_{5:6} = \textsc{TS}$.}
\label{fig:small_gap_ucb}
\end{figure*}
\newpage

\subsection{Tsallis-INF contains best arm}
Experiments can be found in Figure~\ref{fig:large_gap_tsallis} for $\Delta_i = 0.2$ and in Figure~\ref{fig:small_gap_tsallis} for $\Delta_i = 0.02$.

\begin{figure*}[h!]
    \centering
\begin{subfigure}{0.25\textwidth}
  \includegraphics[width=\linewidth]{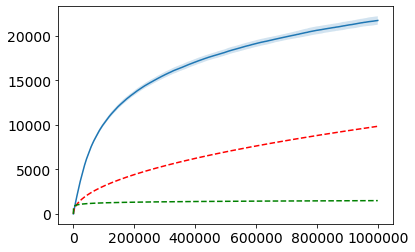}
  \caption{Corral regret}
\end{subfigure}\hfil
\begin{subfigure}{0.25\textwidth}
  \includegraphics[width=\linewidth]{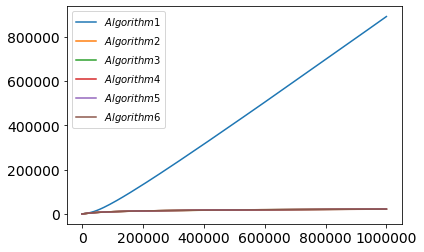}
  \caption{Corral number of pulls}
\end{subfigure}\hfil
\begin{subfigure}{0.25\textwidth}
  \includegraphics[width=\linewidth]{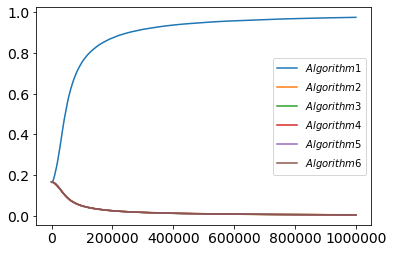}
  \caption{Corral distribution}
\end{subfigure}

\begin{subfigure}{0.25\textwidth}
  \includegraphics[width=\linewidth]{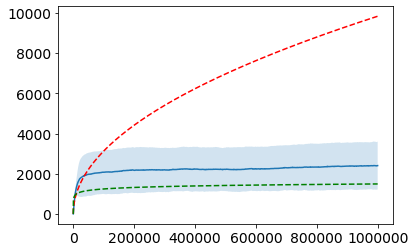}
  \caption{Algorithm~\ref{alg:tsallis_inf_incr_dcr} regret}
\end{subfigure}\hfil
\begin{subfigure}{0.25\textwidth}
  \includegraphics[width=\linewidth]{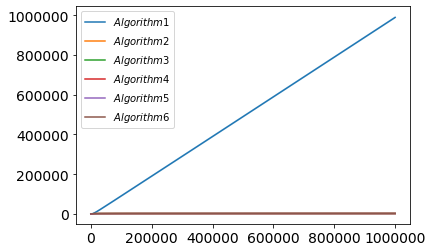}
  \caption{Algorithm~\ref{alg:tsallis_inf_incr_dcr} number of pulls}
\end{subfigure}\hfil
\begin{subfigure}{0.25\textwidth}
  \includegraphics[width=\linewidth]{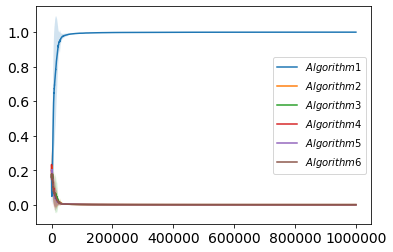}
  \caption{Algorithm~\ref{alg:tsallis_inf_incr_dcr} distribution}
\end{subfigure}

\begin{subfigure}{0.25\textwidth}
  \includegraphics[width=\linewidth]{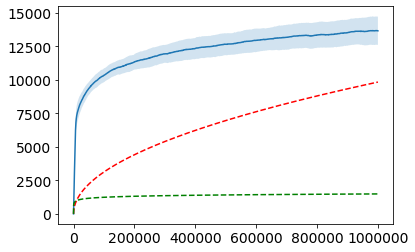}
  \caption{Algorithm~\ref{alg:ucb-c-gen} regret}
\end{subfigure}\hfil
\begin{subfigure}{0.25\textwidth}
  \includegraphics[width=\linewidth]{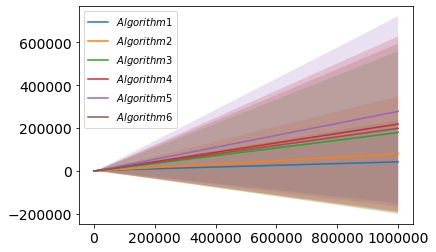}
  \caption{Algorithm~\ref{alg:ucb-c-gen} number of pulls}
\end{subfigure}\hfil
\begin{subfigure}{0.25\textwidth}
  \includegraphics[width=\linewidth]{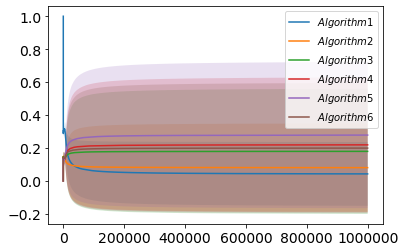}
  \caption{Algorithm~\ref{alg:ucb-c-gen} distribution}
\end{subfigure}
\caption{\text{Tsallis-INF} contains best arm,$\Delta_i=0.2,\textsc{ALG}_{1:2} = \textsc{Tsallis-INF},\textsc{ALG}_{3:4} = \textsc{UCB-I},\textsc{ALG}_{5:6} = \textsc{TS}$.}
\label{fig:large_gap_tsallis}
\end{figure*}
\newpage

\begin{figure*}[h!]
    \centering
\begin{subfigure}{0.25\textwidth}
  \includegraphics[width=\linewidth]{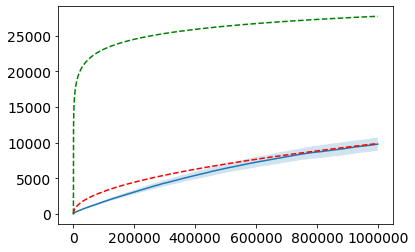}
  \caption{Corral regret}
\end{subfigure}\hfil
\begin{subfigure}{0.25\textwidth}
  \includegraphics[width=\linewidth]{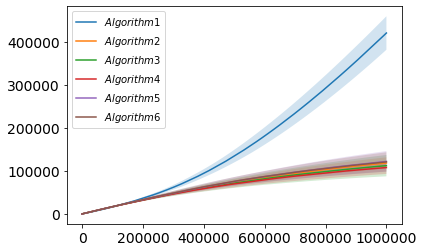}
  \caption{Corral number of pulls}
\end{subfigure}\hfil
\begin{subfigure}{0.25\textwidth}
  \includegraphics[width=\linewidth]{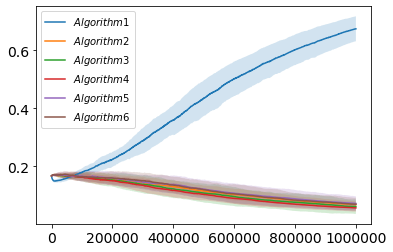}
  \caption{Corral distribution}
\end{subfigure}

\begin{subfigure}{0.25\textwidth}
  \includegraphics[width=\linewidth]{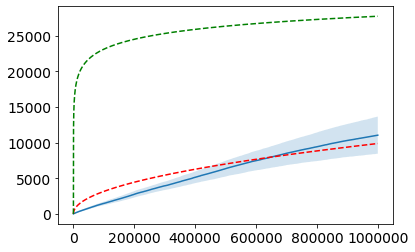}
  \caption{Algorithm~\ref{alg:tsallis_inf_incr_dcr} regret}
\end{subfigure}\hfil
\begin{subfigure}{0.25\textwidth}
  \includegraphics[width=\linewidth]{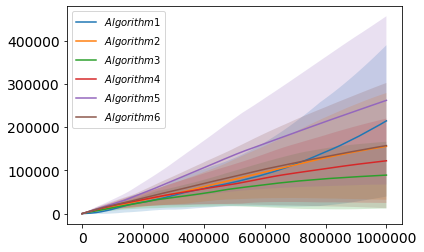}
  \caption{Algorithm~\ref{alg:tsallis_inf_incr_dcr} number of pulls}
\end{subfigure}\hfil
\begin{subfigure}{0.25\textwidth}
  \includegraphics[width=\linewidth]{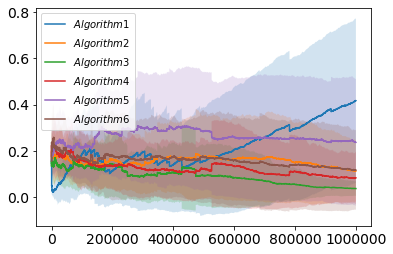}
  \caption{Algorithm~\ref{alg:tsallis_inf_incr_dcr} distribution}
\end{subfigure}

\begin{subfigure}{0.25\textwidth}
  \includegraphics[width=\linewidth]{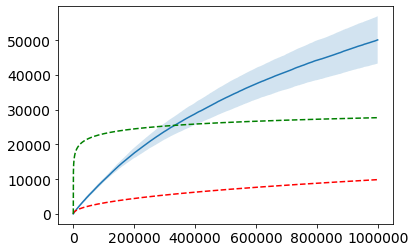}
  \caption{Algorithm~\ref{alg:ucb-c-gen} regret}
\end{subfigure}\hfil
\begin{subfigure}{0.25\textwidth}
  \includegraphics[width=\linewidth]{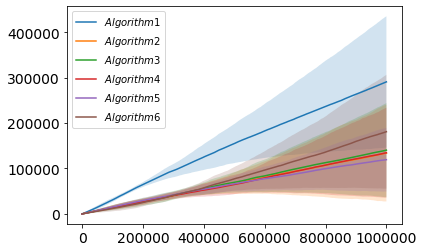}
  \caption{Algorithm~\ref{alg:ucb-c-gen} number of pulls}
\end{subfigure}\hfil
\begin{subfigure}{0.25\textwidth}
  \includegraphics[width=\linewidth]{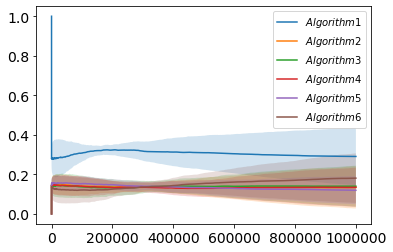}
  \caption{Algorithm~\ref{alg:ucb-c-gen} distribution}
\end{subfigure}
\caption{\text{Tsallis-INF} contains best arm,$\Delta_i=0.02,\textsc{ALG}_{1:2} = \textsc{Tsallis-INF},\textsc{ALG}_{3:4} = \textsc{UCB-I},\textsc{ALG}_{5:6} = \textsc{TS}$.}
\label{fig:small_gap_tsallis}
\end{figure*}
\newpage

\subsection{Thompson sampling contains best arm}
Experiments can be found in Figure~\ref{fig:large_gap_TS} for $\Delta_i = 0.2$ and in Figure~\ref{fig:small_gap_TS} for $\Delta_i = 0.02$.

\begin{figure*}[h!]
    \centering
\begin{subfigure}{0.25\textwidth}
  \includegraphics[width=\linewidth]{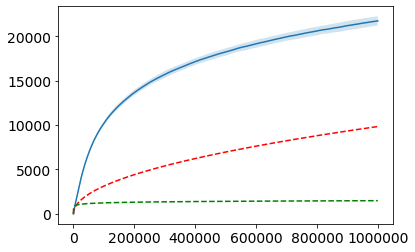}
  \caption{Corral regret}
\end{subfigure}\hfil
\begin{subfigure}{0.25\textwidth}
  \includegraphics[width=\linewidth]{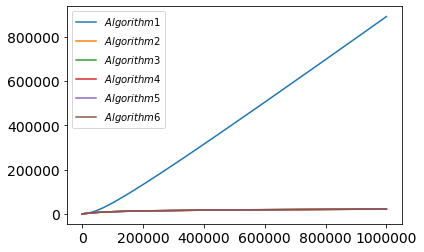}
  \caption{Corral number of pulls}
\end{subfigure}\hfil
\begin{subfigure}{0.25\textwidth}
  \includegraphics[width=\linewidth]{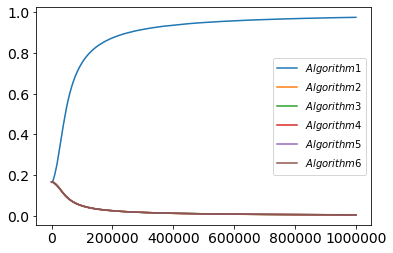}
  \caption{Corral distribution}
\end{subfigure}

\begin{subfigure}{0.25\textwidth}
  \includegraphics[width=\linewidth]{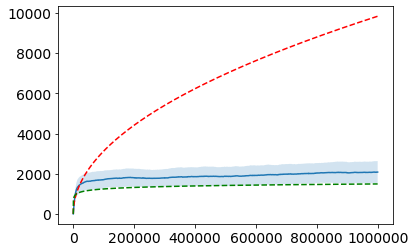}
  \caption{Algorithm~\ref{alg:tsallis_inf_incr_dcr} regret}
\end{subfigure}\hfil
\begin{subfigure}{0.25\textwidth}
  \includegraphics[width=\linewidth]{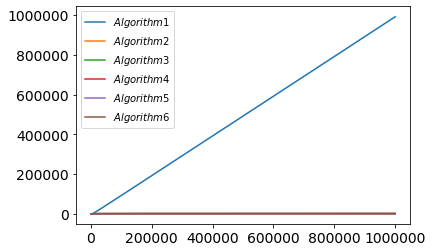}
  \caption{Algorithm~\ref{alg:tsallis_inf_incr_dcr} number of pulls}
\end{subfigure}\hfil
\begin{subfigure}{0.25\textwidth}
  \includegraphics[width=\linewidth]{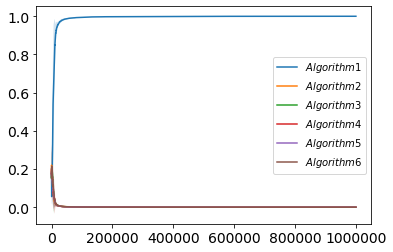}
  \caption{Algorithm~\ref{alg:tsallis_inf_incr_dcr} distribution}
\end{subfigure}

\begin{subfigure}{0.25\textwidth}
  \includegraphics[width=\linewidth]{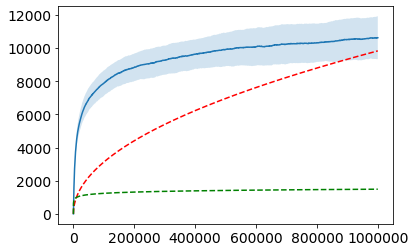}
  \caption{Algorithm~\ref{alg:ucb-c-gen} regret}
\end{subfigure}\hfil
\begin{subfigure}{0.25\textwidth}
  \includegraphics[width=\linewidth]{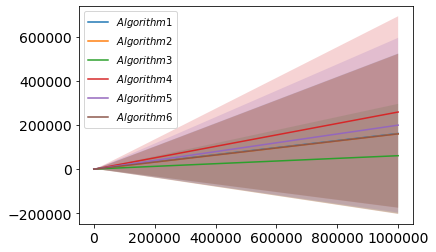}
  \caption{Algorithm~\ref{alg:ucb-c-gen} number of pulls}
\end{subfigure}\hfil
\begin{subfigure}{0.25\textwidth}
  \includegraphics[width=\linewidth]{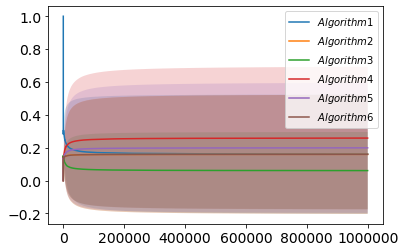}
  \caption{Algorithm~\ref{alg:ucb-c-gen} distribution}
\end{subfigure}
\caption{\text{Thompson sampling (TS)} contains best arm,$\Delta_i=0.2,\textsc{ALG}_{1:2} = \textsc{TS},\textsc{ALG}_{3:4} = \textsc{UCB-I},\textsc{ALG}_{5:6} = \textsc{Tsallis-INF}$.}
\label{fig:large_gap_TS}
\end{figure*}
\newpage

\begin{figure*}[h!]
    \centering
\begin{subfigure}{0.25\textwidth}
  \includegraphics[width=\linewidth]{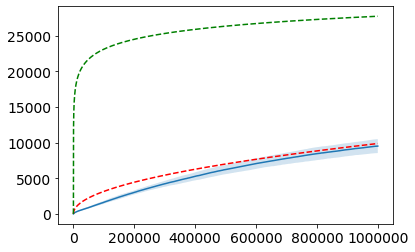}
  \caption{Corral regret}
\end{subfigure}\hfil
\begin{subfigure}{0.25\textwidth}
  \includegraphics[width=\linewidth]{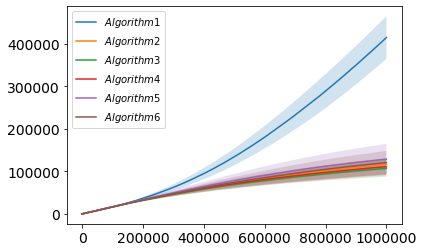}
  \caption{Corral number of pulls}
\end{subfigure}\hfil
\begin{subfigure}{0.25\textwidth}
  \includegraphics[width=\linewidth]{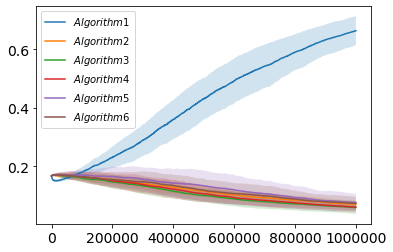}
  \caption{Corral distribution}
\end{subfigure}

\begin{subfigure}{0.25\textwidth}
  \includegraphics[width=\linewidth]{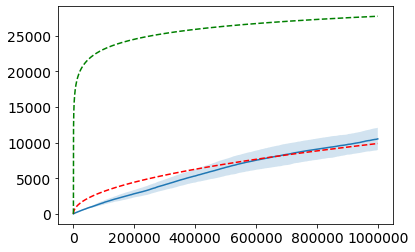}
  \caption{Algorithm~\ref{alg:tsallis_inf_incr_dcr} regret}
\end{subfigure}\hfil
\begin{subfigure}{0.25\textwidth}
  \includegraphics[width=\linewidth]{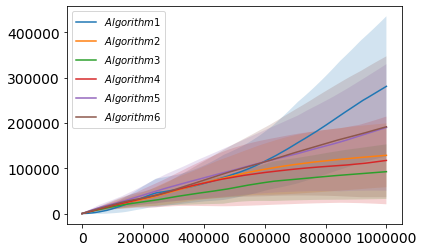}
  \caption{Algorithm~\ref{alg:tsallis_inf_incr_dcr} number of pulls}
\end{subfigure}\hfil
\begin{subfigure}{0.25\textwidth}
  \includegraphics[width=\linewidth]{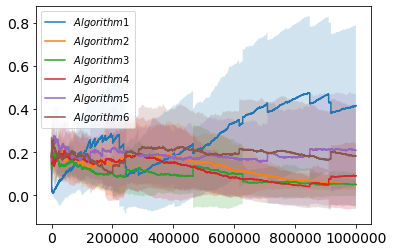}
  \caption{Algorithm~\ref{alg:tsallis_inf_incr_dcr} distribution}
\end{subfigure}

\begin{subfigure}{0.25\textwidth}
  \includegraphics[width=\linewidth]{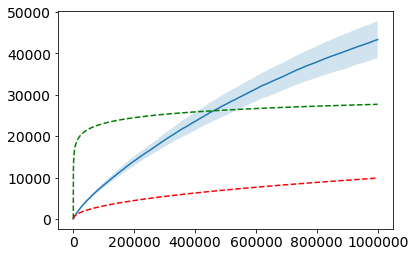}
  \caption{Algorithm~\ref{alg:ucb-c-gen} regret}
\end{subfigure}\hfil
\begin{subfigure}{0.25\textwidth}
  \includegraphics[width=\linewidth]{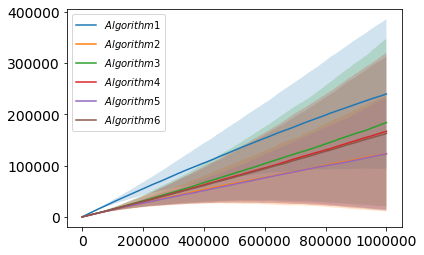}
  \caption{Algorithm~\ref{alg:ucb-c-gen} number of pulls}
\end{subfigure}\hfil
\begin{subfigure}{0.25\textwidth}
  \includegraphics[width=\linewidth]{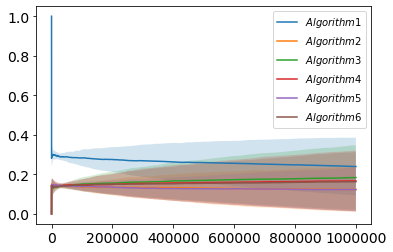}
  \caption{Algorithm~\ref{alg:ucb-c-gen} distribution}
\end{subfigure}
\caption{TS contains best arm,$\Delta_i=0.02,\textsc{ALG}_{1:2} = \textsc{TS},\textsc{ALG}_{3:4} = \textsc{UCB-I},\textsc{ALG}_{5:6} = \textsc{Tsallis-INF}$.}
\label{fig:small_gap_TS}
\end{figure*}
\newpage

\section{Proofs from Section~\ref{sec:lower_bounds}}
\label{app:lower_bounds}
We first introduce the formal construction briefly 
described in Section~\ref{sec:lower_bounds}.

\subsection{First lower bound}

Assume that the corralling algorithm can play one of two
algorithms, $\cA_1$ or $\cA_2$, with the rewards of each arm played by
these algorithms distributed according to a Bernoulli random
variable. Algorithm $\cA_1$ plays a single arm with expected reward
$\mu_1$ and algorithm $\cA_2$ is defined as follows.

Let $\beta$ be drawn according to the Bernoulli distribution
$\beta \sim \Ber(\frac{1}{2})$ and let $\alpha$ be drawn uniformly
over the unit interval, $\alpha \sim \Unif[0, 1]$. If $\beta = 1$,
$\cA_2$ alternates between playing an arm with mean $\mu_2$ and an arm
with mean $\mu_3$ every round, so that the algorithm incurs linear
regret. We set $\mu_i$s such that
$\mu_2 > \mu_1 > \frac{\mu_2 + \mu_3}{2}$. 
If $\beta = 0$, then $\cA_2$ behaves in the same way as if
$\beta = 1$ for the first $T^{(1 - \alpha)}$ rounds and for the remaining
$T - T^{(1 - \alpha)}$ rounds $\cA_2$ only pulls the arm with mean
$\mu_2$. Notice that, in this setting, $\cA_2$ admits sublinear regret
almost surely. 

We denote by
$\bP(\cdot |r_1(a_{i_1, j_1}), \ldots,r_t(a_{i_t, j_t}), \beta = i)$ the natural measure on the $\sigma$-algebra generated by the observed rewards under the
environment $\beta = i$ and all the randomness of the player's
algorithm. To simplify the notation, we denote by $r_{1:t}$ the sequence $\{r_s(a_{i_s,j_s})\}_{s=1}^t$. Let $N$ denote the random variable 
counting the number of
times the corralling strategy selected $\cA_1$. Information-theoretically, 
the player can obtain a good approximation of $\mu_1$
in time $O(\log{T})$ and, therefore, for simplicity, we assume that
the player knows $\mu_1$ exactly.  Note that this can only make the
problem easier for the player. Given this information, we can assume
that the player begins by playing algorithm $\cA_2$ for $T - N + 1$
rounds and then switches to $\cA_1$ for the rest of the game. In
particular, we assume that $T - N + 1$ is the time when the player can
figure out that $\beta = 1$. We note that at time $T^{(1 - \alpha)}$
we have
$\bP(\cdot |r_{1:T^{(1 -
    \alpha)}},\beta = 1) = \bP(\cdot |r_{1:T^{(1 -
    \alpha)}},\beta = 0)$,
as the distribution of the rewards provided by $\cA_2$ do not differ
between $\beta = 1$ and $\beta = 0$. Furthermore, any random strategy
would also need to select algorithm $\cA_2$ at least
$T^{(1 - \alpha)} + 1$ rounds before it is able to distinguish between
$\beta = 1$ or $\beta = 0$. It is also important to note that under the event that $\beta = 1$, the corralling algorithm does not receive any information about the value of $\alpha$.
This allows us to show that in the setting constructed above, with at least constant probability the best algorithm i.e., $\cA_1$ when $\beta = 1$ and $\cA_2$ when $\beta=0$, has sublinear regret.
Finally, a direct computation of the
regret of this corralling strategy gives the following result.

\begin{theorem}
\label{thm:lower_bound}
Let algorithms $\cA_1$ and $\cA_2$ follow the construction in Section~\ref{sec:lower_bounds}. Then, with probability at least 
$1/2$ over the random choice of $\alpha$, any corralling strategy 
incurs regret at least $\tilde\Omega(T)$, while the regret of the 
best algorithm is at most $O(\sqrt{T})$.
\end{theorem}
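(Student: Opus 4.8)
The plan is to separate the two environments selected by the coin $\beta$: lower-bound the corralling regret inside the environment $\beta=1$, upper-bound the regret of the relevant base algorithm in each environment, and then combine.

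First I would dispose of the ``best algorithm'' bound. Condition on $\{\alpha\ge\tfrac12\}$, an event of probability $\tfrac12$ since $\alpha\sim\Unif[0,1]$. When $\beta=1$ the base algorithm with sublinear (indeed zero) regret on its own action set is $\cA_1$, which plays a single fixed arm. When $\beta=0$ the relevant algorithm is $\cA_2$, which plays the alternating pair $(\mu_2,\mu_3)$ for its first $T^{1-\alpha}$ invocations and then commits to its best arm $\mu_2$, so its own-set regret is at most $\tfrac{\mu_2-\mu_3}{2}T^{1-\alpha}\le\tfrac{\mu_2-\mu_3}{2}\sqrt T=O(\sqrt T)$. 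Hence on $\{\alpha\ge\tfrac12\}$ the better of the two base algorithms has $O(\sqrt T)$ regret, irrespective of $\beta$.

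Next I would prove the corralling lower bound by conditioning on $\beta=1$, a sub-case that carries no information about $\alpha$. In that environment a corralling strategy collects $\mu_1$ per round whenever it selects $\cA_1$, whereas the successive invocations of $\cA_2$ return rewards of mean $\mu_2,\mu_3,\mu_2,\dots$ in strict alternation; crucially $\cA_2$'s internal pointer advances only when $\cA_2$ is invoked, and the strategy observes only a scalar reward, not which arm produced it. So if the (random) number of invocations of $\cA_2$ is $x$, the expected reward from those invocations is $\lceil x/2\rceil\mu_2+\lfloor x/2\rfloor\mu_3\le\tfrac{x}{2}(\mu_2+\mu_3)+O(1)$; adding the $(T-x)\mu_1$ collected from $\cA_1$ and using the defining inequality $\mu_1>\tfrac{\mu_2+\mu_3}{2}$, the total expected reward is at most $\mu_1 T+O(1)$. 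Since the best arm has mean $\mu_2>\mu_1$, this yields $\mathbb{E}[R(T)\mid\beta=1]\ge(\mu_2-\mu_1)T-O(1)$, so for every $\alpha$
\[
\mathbb{E}[R(T)\mid\alpha]\ \ge\ \mathbb{P}[\beta=1]\,\mathbb{E}[R(T)\mid\beta=1]\ \ge\ \tfrac12(\mu_2-\mu_1)T-O(1)\ =\ \tilde\Omega(T).
\]
In particular this holds on the probability-$\tfrac12$ event $\{\alpha\ge\tfrac12\}$, on which the best base algorithm simultaneously has $O(\sqrt T)$ regret by the previous paragraph; combining the two bounds proves the theorem.

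The hard part will be the claim that $\cA_2$ cannot be milked for more than $\tfrac{\mu_2+\mu_3}{2}$ per invocation up to an $O(1)$ additive term: one must verify that, although the Bernoulli feedback eventually reveals $\cA_2$'s phase, the coupling ``the pointer advances only on invocation'' makes two consecutive $\mu_2$-rewards from $\cA_2$ impossible, so no adaptive schedule of calls to $\cA_2$ beats the naive count — after which $\mu_1>\tfrac{\mu_2+\mu_3}{2}$ makes any use of $\cA_2$ strictly worse than always playing $\cA_1$ when $\beta=1$. The indistinguishability identities recorded before the statement, $\mathbb{P}(\cdot\mid r_{1:T^{1-\alpha}},\beta=1)=\mathbb{P}(\cdot\mid r_{1:T^{1-\alpha}},\beta=0)$ and the fact that $\alpha$ is never revealed under $\beta=1$, are what rule out a strategy that both hedges against $\beta=1$ and still exploits the good phase of $\cA_2$ under $\beta=0$; they are not strictly needed for the $\Omega(T)$ bound above (which comes entirely from the $\beta=1$ branch) but they justify restricting attention to strategies of the form ``play $\cA_2$ for a while, then switch to $\cA_1$''.
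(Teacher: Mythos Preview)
Your proof is correct and takes a more elementary route than the paper's. You observe that under $\beta=1$ the best arm $\mu_2$ is simply unreachable in expectation---every invocation of $\cA_1$ returns mean $\mu_1<\mu_2$ and every invocation of $\cA_2$ returns mean at most $\tfrac{\mu_2+\mu_3}{2}<\mu_1$ (the alternation cannot be gamed because $\cA_2$'s internal counter advances only on invocation)---so the pseudo-regret against $\mu_2$ is at least $(\mu_2-\mu_1)T-O(1)$ outright; averaging over $\beta$ and using only $\mathbb{E}[R(T)\mid\beta=0,\alpha]\ge 0$ already yields $\Omega(T)$, actually tighter than the paper's $\tilde\Omega(T)$. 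The paper instead lower-bounds the regret relative to the best \emph{algorithm}'s cumulative reward, so the comparator under $\beta=1$ is $T\mu_1$ rather than $T\mu_2$; this makes the $\beta=1$ branch non-trivial (committing to $\cA_1$ costs nothing there) and forces the paper to invoke the indistinguishability $\mathbb{P}(\cdot\mid r_{1:T^{1-\alpha}},\beta=1)=\mathbb{P}(\cdot\mid r_{1:T^{1-\alpha}},\beta=0)$ to produce a genuine trade-off via the event $A=\{N\le T-T^{1-\alpha}\}$. Your shortcut succeeds here because the gaps $\mu_2-\mu_1$ and $\mu_1-\tfrac{\mu_2+\mu_3}{2}$ are fixed constants; the paper's argument is really a template for the follow-up Theorem~\ref{thm:lower_bound2}, where the gaps shrink with $T$ and the indistinguishability becomes load-bearing---your final paragraph correctly anticipates this. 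One semantic wrinkle: your reading of ``best algorithm'' as whichever base learner has smallest own-set regret (hence $\cA_1$ when $\beta=1$) differs from the paper's usual convention that $i^*$ is the algorithm containing the globally best arm (always $\cA_2$ here); the paper's concluding sentence accordingly refers only to $\cA_2$'s regret under $\beta=0$, whereas your reading covers both $\beta$-cases, which is arguably cleaner for this particular statement.
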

\begin{proof}
Let $R(T)$ denote the regret of the corralling algorithm. Direct computation shows that if $\beta=1$ the corralling regret is
\begin{align*}
    &\mathbb{E}[R(T)|\beta=1,r_{1:T^{(1-\alpha)}},\alpha]
    = \mathbb{E}\left[\left(\mu_1-\frac{\mu_2+\mu_3}{2}\right)(T-N)|\beta = 1,r_{1:T^{(1-\alpha)}},\alpha \right]
\end{align*}
Further if $\beta = 0$ and $\cA_2$ is the best algorithm the regret of corralling is
\begin{align*}
    &\mathbb{E}[R(T)|\beta=0,r_{1:T^{(1-\alpha)}},\alpha] = \mathbb{E}\left[T^{(1-\alpha)}\mu_2 + (T-T^{(1-\alpha)})\mu_2|\beta=0,r_{1:T^{(1-\alpha)}},\alpha\right]\\
    \geq& \mathbb{E}\left[\frac{\mu_2+\mu_3}{2} T^{(1-\alpha)} + \mu_2(T-T^{(1-\alpha)})\right.\\
    -&\left. \mu_1 N - \chi_{(N \leq T - T^{(1-\alpha)})}\left(\frac{\mu_2+\mu_3}{2} T^{(1-\alpha)} + \mu_2(T-T^{(1-\alpha)} - N)\right) \right.\\
    -&\left. \chi_{(N > T - T^{(1-\alpha)})}\frac{\mu_2+\mu_3}{2}(T-N)|\beta=0,r_{1:T^{(1-\alpha)}},\alpha\right],
\end{align*}
where the characteristic functions describe the event in which we pull $\cA_1$ less times than is needed for $\cA_2$ to switch to playing the best action. Notice that the total regret for corralling is at least the above as we also need to add the regret of the best algorithm to the above.

We first consider the case $\beta=1$. Notice that in this case the corralling algorithm does not receive any information about $\alpha$ because $\cA_2$ alternates between $\mu_2$ and $\mu_3$ at all rounds. This implies $\mathbb{E}[R(T)|\beta=1,\alpha] = \mathbb{E}[R(T)|\beta=1]$. Condition on the event $N\leq T - T^{(1-\alpha)}$. We have
\begin{align*}
    \mathbb{E}[R(T)|\beta=1, N\leq T - T^{(1-\alpha)},r_{1:T^{(1-\alpha)}},\alpha] &=\\
    \mathbb{E}[R(T)|\beta=1, N\leq T - T^{(1-\alpha)},r_{1:T^{(1-\alpha)}}] &\geq \left(\mu_1-\frac{\mu_2+\mu_3}{2}\right) \mathbb{E}[T^{(1-\alpha)}|\beta=1] \\
    &=\left(\mu_1-\frac{\mu_2+\mu_3}{2}\right)\mathbb{E}[T^{(1-\alpha)}]\\
    &= \left(\mu_1-\frac{\mu_2+\mu_3}{2}\right)\frac{T - 1}{\log{T}},
\end{align*}
where in the first inequality we have replaced $N$ by $T-T^{1-\alpha}$.
Next consider the case $\beta=0$. Condition on the event $N > T-T^{(1-\alpha)}$. We have
\begin{align*}
    &\mathbb{E}[R(T)|\beta=0, N> T - T^{(1-\alpha)},r_{1:T^{(1-\alpha)}},\alpha]\\
    &= \mathbb{E}\left[\frac{\mu_2-\mu_3}{2}(T-T^{(1-\alpha)}) - \left(\mu_1-\frac{\mu_2+\mu_3}{2}\right)N|\beta=0, N> T - T^{(1-\alpha)},r_{1:T^{(1-\alpha)}},\alpha\right]\\
    &\geq \mathbb{E}\left[(\mu_2-\mu_1)T - \frac{\mu_2}{2}T^{(1-\alpha)}|\beta=0, N> T - T^{(1-\alpha)},r_{1:T^{(1-\alpha)}},\alpha\right]\\
    &= \mathbb{E}\left[(\mu_2-\mu_1)T - \frac{\mu_2}{2}T^{1-\alpha}|\alpha\right],
\end{align*}
where in the inequality we have used the fact that $N>T-T^{1-\alpha}$ to bound $-\mu_1 N$ and $T\geq N$ to bound $\frac{\mu_1+\mu_3}{2} N$.
Let $A$ denote the event $N\leq T - T^{(1-\alpha)}$. We are now ready to lower bound the regret of the player's strategy as follows.
\begin{align*}
    \mathbb{E}\left[R(T)|\alpha\right] &= \frac{1}{2}\mathbb{E}\left[ \mathbb{E}[R(T)|r_{1:T^{(1-\alpha)}},\beta=1,\alpha]+ \mathbb{E}[R(T)|r_{1:T^{(1-\alpha)}},\beta=0,\alpha]|\alpha\right]\\
    &\geq \frac{1}{2}\mathbb{E}\left[ \bP(A |r_{1:T^{(1-\alpha)}},\beta=1,\alpha)\mathbb{E}[R(T)|r_{1:T^{(1-\alpha)}},\beta=1,A,\alpha] \right.\\
    &\left.+ \bP(A^c |r_{1:T^{(1-\alpha)}},\beta=1,\alpha)\mathbb{E}[R(T)|r_{1:T^{(1-\alpha)}},\beta=0,A^c,\alpha]|\alpha\right]\\
    &\geq \frac{1}{2}\mathbb{E}\left[\bP(A |r_{1:T^{(1-\alpha)}},\beta=1,\alpha) \left(\mu_1-\frac{\mu_2+\mu_3}{2}\right)\frac{T - 1}{2\log{T}}\right.\\
    &\left.+ (1-\bP(A |r_{1:T^{(1-\alpha)}},\beta=1),\alpha)(\mu_2-\mu_1)T - \frac{\mu_2}{2}T^{1-\alpha}|\alpha\right],
\end{align*}
where in the first inequality we have used the fact that the conditional measures induced by $\beta=1$ and $\beta=0$ are equal for the first $T^{1-\alpha}$ rounds. Because $\alpha \geq 1/2$ with probability at least $1/2$ it holds that the random variable $\mathbb{E}[R(T)|\alpha] > \tilde \Omega(T)$ with probability at least $1/2$ and that the regret of $\cA_2$ when $\beta=1$ is at most $O(\sqrt{T})$.
\end{proof}

\subsection{A realistic setting for Algorithm 2}
\label{sec:realistic_lower_bound}
The behavior of $\cA_2$ for the setting given by $\beta = 0$, in the
construction above, may seem somewhat artificial: a stochastic bandit
algorithm may not be expected to behave in that manner when the gap
between $\mu_2$ and $\mu_3$ is large enough.  Here, we describe how to
set $\mu_1$, $\mu_2$ and $\mu_3$ such that the successive elimination algorithm
\citep{even2002pac} admits a similar behavior to $\cA_2$ with
$\beta = 0$. Recall that successive elimination needs at least
$1/\Delta^2$ rounds to distinguish between the arm with mean $\mu_2$
and the arm with mean $\mu_3$. In other words, for at least
$1/\Delta^2$ rounds, it will alternate between the two
arms. Therefore, we set $\frac{1}{\Delta^2} = T^{(1 - \alpha)}$ or,
equivalently, $\Delta = \frac{1}{T^{(1 - \alpha)/2}}$, and
$\mu_1 = \mu_2 - \frac{1}{4T^{(1 - \alpha)/2}}$ to yield behavior
similar to $\cA_2$. For this construction, we show the following lower
bound.

\begin{theorem}[Theorem~\ref{thm:lower_bound2} formal]
Let algorithms $\cA_1$ and $\cA_2$ follow the construction in Section~\ref{sec:realistic_lower_bound}. With probability at least $1/4$ over the random choice of $\alpha$ any corralling strategy will incur regret at least $\tilde\Omega(\sqrt{T})$ while the gap between $\mu_2$ and $\mu_3$ is such that $\Delta > \omega(T^{-1/4})$ and hence the regret of the best algorithm is at most $o(T^{1/4})$.
\end{theorem}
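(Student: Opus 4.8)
The plan is to run exactly the two-environment argument behind Theorem~\ref{thm:lower_bound}, but with the $\alpha$-dependent, \emph{vanishing} gap forced by successive elimination, and to do the quantitative bookkeeping carefully enough to land on $\tilde\Omega(\sqrt T)$. First I would make the construction of Section~\ref{sec:realistic_lower_bound} precise. With $\Delta=\mu_2-\mu_3=T^{-(1-\alpha)/2}$ and $\mu_1=\mu_2-\tfrac14 T^{-(1-\alpha)/2}$ (so that $\mu_2>\mu_1>\tfrac{\mu_2+\mu_3}{2}$), an instance of successive elimination with an appropriate confidence parameter, run on the two arms of $\cA_2$, alternates between them for $\Theta(1/\Delta^2)=\Theta(T^{1-\alpha})$ rounds and then commits to the arm of mean $\mu_2$; under $\beta=0$ this reproduces the idealized $\cA_2$-behavior of Section~\ref{sec:lower_bounds}, and under $\beta=1$ I feed $\cA_2$ two arms whose gap lies below the $\Theta(T^{-1/2})$ detectability threshold, so that successive elimination keeps alternating for all $T$ rounds at aggregate reward rate $\mu_1-\tfrac14 T^{-(1-\alpha)/2}$, leaving $\cA_1$ as the unique best base algorithm with pseudo-regret $O(\sqrt T)$. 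Proving that this behavior holds with probability bounded away from $0$ over the internal randomness of successive elimination — i.e., that the elimination time concentrates around $1/\Delta^2$ — is what I would establish first; combined with restricting $\alpha$ to a sub-interval of $(1/2,1)$ of Lebesgue measure $\ge 1/4$ on which simultaneously $\Delta>\omega(T^{-1/4})$, the best-algorithm regret $O(\log T/\Delta)=o(T^{1/4})$, and the estimates below are valid, this is the source of the $1/4$ in the statement.

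Next I would transport the information-theoretic core of the proof of Theorem~\ref{thm:lower_bound}: up to (and including) the $T^{1-\alpha}$-th time $\cA_2$ is played, the reward sequence observed by the corralling algorithm has the same law under $\beta=0$ and $\beta=1$, because $\cA_2$'s per-round reward distributions coincide on that window. Using the same without-loss-of-generality reduction (play $\cA_2$ for a while, then switch permanently to $\cA_1$), the event $A$ that the strategy abandons $\cA_2$ before its commit — i.e.\ plays $\cA_2$ fewer than $T^{1-\alpha}$ times in total — has the same probability $p$ under both environments. On $A$, under $\beta=0$ the algorithm plays the suboptimal $\cA_1$ for at least $T-T^{1-\alpha}$ rounds at per-round gap $\mu_2-\mu_1=\tfrac14 T^{-(1-\alpha)/2}$, so $\mathbb{E}[R(T)\mid A,\beta=0]=\Omega\big(T^{-(1-\alpha)/2}(T-T^{1-\alpha})\big)=\Omega\big(T^{(1+\alpha)/2}\big)$; on $A^c$, under $\beta=1$ the algorithm plays the always-suboptimal $\cA_2$ at least $T^{1-\alpha}$ times at per-round gap $\mu_1-(\mu_1-\tfrac14 T^{-(1-\alpha)/2})=\tfrac14 T^{-(1-\alpha)/2}$, contributing $\Omega\big(T^{-(1-\alpha)/2}\,m_1\big)$ where $m_1$ is its number of $\cA_2$-plays. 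Averaging over $\beta\sim\Ber(1/2)$ then gives a lower bound of the form $\tfrac12 p\,\Omega(T^{(1+\alpha)/2})+\tfrac12(1-p)\,\Omega\big(T^{-(1-\alpha)/2}\mathbb{E}[m_1\mid A^c,\beta=1]\big)$, which I would turn into $\tilde\Omega(\sqrt T)$ by the same case split as in Theorem~\ref{thm:lower_bound} — on whether the strategy spends most of its $\cA_2$-budget before or after round $T/2$ — the $\beta=0$ term dominating in the first case and the $\beta=1$ term in the second; the regret of the best base algorithm stays $O(\log T/\Delta)=o(T^{1/4})$ throughout the chosen range of $\alpha$.

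The main obstacle I anticipate is exactly this last case analysis: forcing an $\Omega(\sqrt T)$ bound when the corralling strategy \emph{wisely} keeps playing $\cA_2$ well past its commit time. One has to show that, conditioned on $A^c$, the strategy cannot both detect the commit under $\beta=0$ and cut its losses under $\beta=1$ without having played $\cA_2$ for $\Omega(1/\Delta^2)$ further rounds at rate $\mu_1-\tfrac14 T^{-(1-\alpha)/2}$ — that is, a second, post-commit indistinguishability estimate (this time only approximate, since for successive-elimination instances the two environments are no longer pathwise identical but only close in total variation, the distance growing like $\sqrt{m\Delta^2}$ over $m$ rounds), followed by choosing $\alpha$ adversarially inside the admissible sub-interval to make the resulting $m_1$-dependent bound of order $\sqrt T$. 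Making that approximate coupling quantitative, and pinning down the sub-interval of $\alpha$ (of measure $\ge 1/4$) on which all the competing $T$-powers above align to $\tilde\Omega(\sqrt T)$, together with the verification that successive elimination genuinely realizes both the $\beta=0$ and the $\beta=1$ behaviors with a constant-probability concentration of its elimination time, are the technical points that the informal discussion of Section~\ref{sec:realistic_lower_bound} leaves to be filled in.
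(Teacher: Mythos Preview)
Your two-environment skeleton matches the paper's, and your $\beta=0$ branch (your event $A$) gives the same $\Omega(T^{(1+\alpha)/2})$. The divergence is in the $\beta=1$ branch. You correctly arrive at the conditional-on-$\alpha$ bound $\tfrac14 T^{(1-\alpha)/2}$ on your $A^c$, notice it is too small for $\alpha$ bounded away from $0$, and then propose a post-commit total-variation coupling to recover $\sqrt T$. The paper sidesteps this entirely. It invokes the claim that under $\beta=1$, $\cA_2$ alternates for all $T$ rounds \emph{regardless of $\alpha$}, so the corralling strategy receives no information about $\alpha$ and hence its random variable $N$ is independent of $\alpha$. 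This lets the paper drop the conditioning on $\alpha$ and integrate the $\beta=1$ bound over $\alpha\sim\Unif[0,1]$ directly:
\[
\mathbb{E}_\alpha\Bigl[\tfrac14 T^{(1-\alpha)/2}\Bigr]=\frac{\sqrt T-1}{2\log T}=\tilde\Omega(\sqrt T),
\]
after which the same convex-combination step as in Theorem~\ref{thm:lower_bound} finishes. The restriction to $\alpha\ge 1/2$ is used only for the side claim $\Delta=T^{-(1-\alpha)/2}>\omega(T^{-1/4})$ (so that the best algorithm's regret is $o(T^{1/4})$), not to push the $\beta=1$ regret bound up to $\sqrt T$. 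So the ``main obstacle'' you identify, and the attendant second indistinguishability estimate, never arise in the paper's argument.

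Two smaller differences. First, your modification of the $\beta=1$ instance (a sub-$T^{-1/2}$ gap so that SE never commits) is not in the paper; under $\beta=1$ it simply keeps $\cA_2$ alternating with the same arms, which already matches SE's behavior on the first $\Theta(1/\Delta^2)$ calls and is all the indistinguishability window needs. Second, the paper does not attempt to certify concentration of successive elimination's elimination time or carve out a carefully chosen sub-interval of $\alpha$; it works with the idealized $\cA_2$ throughout and refers back to the computation in Theorem~\ref{thm:lower_bound}. Your route may also reach the conclusion, but it is substantially more elaborate than what the paper actually does, and the integration-over-$\alpha$ device is the missing shortcut.
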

\begin{proof}
From the proof of Theorem~\ref{thm:lower_bound} we can compute, when $\beta=1$, we can directly compute
\begin{align*}
    &\mathbb{E}\left[R(T)|\beta=1, N\leq T - T^{(1-\alpha)},r_{1:T^{(1-\alpha)}},\alpha\right]\\
    \geq&\mathbb{E}\left[\left(\mu_1 - \frac{\mu_2 + \mu_3}{2}\right)T^{(1-\alpha)}|\beta=1, N\leq T - T^{(1-\alpha)},r_{1:T^{(1-\alpha)}},\alpha\right]\\
    =&\mathbb{E}\left[\frac{1}{4T^{(1-\alpha)/2}}T^{(1-\alpha)}|\beta=1, N\leq T - T^{(1-\alpha)},r_{1:T^{(1-\alpha)}}\right] = \frac{\sqrt{T}-1}{2\log{T}},
\end{align*}
Where in the equality we again used the fact that if $\beta=1$, the corralling algorithm receives no information about $\alpha$. Further when $\beta=0$ we have
\begin{align*}
    &\mathbb{E}\left[R(T)|\beta=0, N> T - T^{(1-\alpha)},r_{1:T^{(1-\alpha)}}(a_{T^{(1-\alpha)}}),\alpha\right]\\
    \geq&\mathbb{E}\left[(\mu_2-\mu_1)T - \frac{\mu_2}{2}T^{(1-\alpha)}|\beta=0, N> T - T^{(1-\alpha)},r_{1:T^{(1-\alpha)}},\alpha\right]\\
    =&\mathbb{E}\left[\frac{T^{(1+\alpha)/2}}{4}|\alpha\right] - \mathbb{E}\left[\frac{T^{(1-\alpha)}}{2}|\alpha\right].
\end{align*}
Again we note that with probability $1/2$ we have $\alpha\geq 1/2$ and the above expression becomes asymptotically larger than $\sqrt{T}$. The same computation as in the proof of Theorem~\ref{thm:lower_bound} finishes the proof.
\end{proof}

We note that, in our construction, if $\beta = 1$, then the inequality
$\Delta \gg \frac{1}{\sqrt{T}}$ holds almost surely. In this setting,
the instance-dependent regret bound for $\cA_2$ and successive
elimination is asymptotically smaller compared to the worst-case
instance-independent regret bounds for stochastic bandit algorithms,
which scale as $\tilde O(\sqrt{T})$ with the time horizon. This
suggests that, even though $\cA_2$ enjoys asymptotically better regret
bounds than $\tilde O(\sqrt{T})$, the corralling algorithm will
necessarily incur $\tilde \Omega(\sqrt{T})$ regret.

\subsection{A lower bound when a worst case regret bound is known}
\label{sec:lower_bound_known_regret}

Next, suppose that we know a worst case regret bound of $R_{2}(T)$
for algorithm $\cA_2$. As before, we sample $\beta$ according to a
Bernoulli distribution. If $\beta = 1$, then algorithm $\cA_2$ has a
single arm with reward distributed as $\Ber((\mu_2 + \mu_3)/2)$; in that
case, $\cA_2$ admits a regret equal to $0$. If $\beta = 0$, then $\cA_2$
has two arms distributed according to $\Ber(\mu_2)$ and $\Ber(\mu_3)$,
respectively. We sample $\alpha \sim \Unif[0,1]$, and let $\cA_2$ play
an arm uniformly at random for the first $R_{2}(T)^{(1 - \alpha)}$
rounds. In particular, during each of the first
$R_{2}(T)^{(1-\alpha)}$ rounds, $\cA_2$ plays with equal
probability the arm with mean $\mu_2$ and the arm with mean
$\mu_3$. On round $R_{2}(T)^{(1-\alpha)}$, the algorithm switches
to playing $\mu_1$ until the rest of the game. Notice that the rewards
up to time $R_{2}(T)^{(1-\alpha)}$, whether $\beta = 1$ or $\beta
= 0$, have the same distribution. Hence, $\bP(\cdot
|r_{1:R_2(T)^{(1-\alpha)}}, \beta = 1)
= \bP(\cdot
|r_{1:R_2(T)^{(1-\alpha)}}, \beta = 0)$. Then,
following the arguments in the proof of
Theorem~\ref{thm:lower_bound}, we can prove the following lower bound.

\begin{theorem}
\label{thm:lower_bound3}
Let algorithms $\cA_1$ and $\cA_2$ follow the construction in Section~\ref{sec:lower_bound_known_regret}. Suppose that the worst case known regret bound for Algorithm is $R_{2}(T)$. With probability at least $1/2$ over the random choice of $\alpha$ any corralling strategy will incur regret at least $\tilde\Omega(R_{2}(T))$ while the regret of $\cA_2$ is at most $O(\sqrt{R_{2}(T)})$.
\end{theorem}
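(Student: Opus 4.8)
The plan is to transplant the proof of Theorem~\ref{thm:lower_bound} onto the construction of Section~\ref{sec:lower_bound_known_regret}, replacing the exploration horizon $T^{1-\alpha}$ of $\cA_2$ by $R_{2}(T)^{1-\alpha}$ while keeping the game length equal to $T$. Write $N$ for the random number of rounds in which the corralling strategy plays $\cA_1$, so it plays $\cA_2$ exactly $T-N$ times; note that the construction is arranged so that $R_{2}(T)$ is a legitimate worst-case bound on the regret of $\cA_2$ (its worst-case regret over $\alpha$ is $\tfrac{\mu_2-\mu_3}{2}R_{2}(T)\le R_{2}(T)$).

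First I would record the two indistinguishability facts that drive the argument. (i) Under $\beta=0$, algorithm $\cA_2$ behaves exactly as under $\beta=1$ until it has been pulled $R_{2}(T)^{1-\alpha}$ times, so on the sub-$\sigma$-algebra generated by all rounds before $\cA_2$'s $R_{2}(T)^{1-\alpha}$-th pull the law of the observed reward sequence coincides under $\beta=0$ and $\beta=1$, i.e.\ $\bP(\cdot\mid r_{1:R_{2}(T)^{1-\alpha}},\beta=1)=\bP(\cdot\mid r_{1:R_{2}(T)^{1-\alpha}},\beta=0)$. (ii) Under $\beta=1$ the single arm of $\cA_2$ has an $\alpha$-independent mean $\tfrac{\mu_2+\mu_3}{2}<\mu_1$, so conditionally on $\beta=1$ the whole trajectory law, and hence the corralling regret $R(T)$ (whose benchmark is then $\mu_1$), does not depend on $\alpha$; in particular the strategy obtains no information about $\alpha$ on $\{\beta=1\}$.

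Next, set $A=\{N\le T-R_{2}(T)^{1-\alpha}\}$, the event that the strategy queries $\cA_2$ at least $R_{2}(T)^{1-\alpha}$ times; since $A^{c}$ is measurable with respect to the common sub-$\sigma$-algebra of (i), $\bP(A\mid\beta=1)=\bP(A\mid\beta=0)=:p$. On $\{\beta=1\}\cap A$ the strategy plays the sub-optimal $\cA_2$ (per-round gap $\mu_1-\tfrac{\mu_2+\mu_3}{2}$ to the then-optimal arm $\mu_1$) at least $R_{2}(T)^{1-\alpha}$ times; integrating out $\alpha$ via (ii),
\[
\bE\big[R(T)\mid\beta=1,A\big]\ \ge\ \Big(\mu_1-\tfrac{\mu_2+\mu_3}{2}\Big)\bE\big[R_{2}(T)^{1-\alpha}\big]\ =\ \Big(\mu_1-\tfrac{\mu_2+\mu_3}{2}\Big)\frac{R_{2}(T)-1}{\log{R_{2}(T)}}.
\]
On $\{\beta=0\}\cap A^{c}$, $\cA_2$ never leaves exploration, so the optimal arm $\mu_2$ stays idle in $\cA_2$ while the strategy collects $\mu_1<\mu_2$ from $\cA_1$ on at least $T-R_{2}(T)^{1-\alpha}$ rounds, whence $\bE[R(T)\mid\beta=0,A^{c},\alpha]\ge(\mu_2-\mu_1)(T-R_{2}(T)^{1-\alpha})$. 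Averaging over $\beta\sim\Ber(1/2)$ and using $\bP(A\mid\beta=1)=\bP(A\mid\beta=0)$,
\[
\bE[R(T)\mid\alpha]\ \ge\ \tfrac12\,p\Big(\mu_1-\tfrac{\mu_2+\mu_3}{2}\Big)\frac{R_{2}(T)-1}{\log{R_{2}(T)}}+\tfrac12\,(1-p)(\mu_2-\mu_1)\big(T-R_{2}(T)^{1-\alpha}\big).
\]
On $\{\alpha\ge 1/2\}$, of probability $1/2$, one has $R_{2}(T)^{1-\alpha}\le\sqrt{R_{2}(T)}=o(T)$, so the first bracket is $\tilde\Omega(R_{2}(T))$ and the second is $\Omega(T)$, hence $\tilde\Omega(R_{2}(T))$ since $R_{2}(T)\le T$; as one of $p,1-p$ is $\ge 1/2$, the right-hand side is at least $\tfrac14$ times the smaller bracket, i.e.\ $\tilde\Omega(R_{2}(T))$, for every such $\alpha$ and any $p$. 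On the same event $\cA_2$'s own regret is $\tfrac{\mu_2-\mu_3}{2}R_{2}(T)^{1-\alpha}\le\tfrac{\mu_2-\mu_3}{2}\sqrt{R_{2}(T)}=O(\sqrt{R_{2}(T)})$ when $\beta=0$ and $0$ when $\beta=1$, which gives the claimed bound on the best base algorithm.

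The main obstacle is making the use of fact (ii) rigorous: one has to argue carefully that, conditionally on $\beta=1$, the corralling strategy's behavior — and therefore $N$ and the event $A$ — carries no dependence on $\alpha$, so that the $\alpha$-conditional quantity $\bE[R(T)\mid\beta=1,A]$ may legitimately be lower bounded using $\bE_\alpha[R_{2}(T)^{1-\alpha}]=\tfrac{R_{2}(T)-1}{\log{R_{2}(T)}}$ rather than by the much smaller $R_{2}(T)^{1-\alpha}$ (which, for $\alpha$ near $1$, is $O(1)$); dually one must check that $A^{c}$ really lies in the sub-$\sigma$-algebra on which the $\beta=0$ and $\beta=1$ laws agree, so that $\bP(A\mid\beta=0)=\bP(A\mid\beta=1)$. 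Once these measure-theoretic points are nailed down, the remaining bookkeeping is identical to the proof of Theorem~\ref{thm:lower_bound}.
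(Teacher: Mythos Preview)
Your proposal is correct and takes essentially the same approach as the paper: the paper does not give a separate proof of this theorem but simply says ``following the arguments in the proof of Theorem~\ref{thm:lower_bound}'', and that is precisely what you do, replacing the exploration horizon $T^{1-\alpha}$ by $R_{2}(T)^{1-\alpha}$ and otherwise mirroring the decomposition into the events $\{\beta=1,A\}$ and $\{\beta=0,A^c\}$. The measure-theoretic subtlety you flag in your last paragraph---that under $\beta=1$ the strategy is $\alpha$-blind so one may replace the $\alpha$-conditional lower bound $R_{2}(T)^{1-\alpha}$ by its $\alpha$-average---is exactly the same step the paper uses in its proof of Theorem~\ref{thm:lower_bound}, so your caveat is appropriate but does not distinguish your argument from the paper's.
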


\section{Proofs from Section~\ref{sec:ucb_boost}}

\begin{lemma}
Suppose we run $2\log{1/\delta}$ copies of algorithm $\cA_i$ which satisfies Equation~\ref{eq:pseudo_regret_assumption}. Let $\cA_{med_i}$ denote the algorithm with median reward at time $t$. Then,
\begin{align*}
    \prob{t\mu_{med_i,1} -\sum_{s=1}^t r_{s}(a_{med_i,j_s}) \geq 2\bar R_{med_i}(t)} \leq \delta.
\end{align*}
\end{lemma}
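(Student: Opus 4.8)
The plan is the standard \emph{median boosting} (``median trick'') argument. First I would show that a single independent run of $\cA_i$ \emph{fails} --- meaning its realized regret over its first $t$ rounds is at least $2\bar R_i(t)$ --- with probability at most some small universal constant $p<\tfrac12$; then, since the $m=2\log{1/\delta}$ copies are run on fresh internal randomness and on independent reward draws from the environment, the events ``copy $s$ fails'' are mutually independent, and the median-reward copy among the $m$ copies can fail only if at least $m/2$ of them fail. A Chernoff-type estimate on that count then delivers the bound $\delta$. The only structural fact about the corralling procedure that is needed here is this independence of the copies.

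\textbf{Single-copy bound.} The subtlety is that the realized regret $X_s:=t\mu_{i,1}-\sum_{r=1}^t r_r^{(s)}$ of copy $s$ (where $r_r^{(s)}$ is the reward it collects at its $r$-th round and $\mu_r^{(s)}$ the mean of the arm it then chooses) is \emph{not} a nonnegative random variable, since the rewards fluctuate around their means, so Markov's inequality cannot be applied to $X_s$ directly. I would therefore split $X_s=P_s+M_s$, with $P_s:=t\mu_{i,1}-\sum_{r=1}^t\mu_r^{(s)}\geq 0$ the nonnegative pseudo-regret and $M_s:=\sum_{r=1}^t(\mu_r^{(s)}-r_r^{(s)})$ a sum of martingale differences with $\mathbb{E}[M_s]=0$ and increments in $[-1,1]$. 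Equation~\ref{eq:pseudo_regret_assumption} gives $\mathbb{E}[P_s]=\mathbb{E}[X_s]\leq\bar R_i(t)$, so Markov yields $\mathbb{P}[P_s\geq c_1\bar R_i(t)]\leq 1/c_1$, while Azuma--Hoeffding gives $\mathbb{P}[M_s\geq c_2\bar R_i(t)]\leq e^{-c_2^2\bar R_i(t)^2/(2t)}$, which is a small constant as soon as $\bar R_i(t)=\Omega(\sqrt t)$ --- exactly the regime of interest, where $\bar R_i(t)\leq c'\sqrt{k_i t\log{t}}$ and is of this order up to logarithmic factors. Picking $c_1,c_2$ with $c_1+c_2\leq 2$ and taking a union bound then bounds $p:=\mathbb{P}[X_s\geq 2\bar R_i(t)]$ by a constant strictly below $\tfrac12$; allowing a slightly larger threshold makes $p$ as small as any prescribed constant, say $1/8$.

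\textbf{Combining the copies.} With the copies independent and each failing with probability at most $p$, the number of failing copies is stochastically dominated by $\mathrm{Bin}(m,p)$, and $\cA_{med_i}$ fails only if this count is at least $m/2$, so $\mathbb{P}[\cA_{med_i}\text{ fails}]\leq\mathbb{P}[\mathrm{Bin}(m,p)\geq m/2]\leq 2^m p^{m/2}=(4p)^{m/2}$; for $p\leq 1/8$ and $m=2\log{1/\delta}$ this is at most $\delta$, which is the claim. (The trade-off between the constant multiplying $\bar R_i(t)$ in the threshold and the constant multiplying $\log{1/\delta}$ in the number of copies is the usual one, so the literal ``$2$''s can be adjusted to make the constants line up exactly.) I expect the single-copy step to be the only real obstacle: the rest is bookkeeping, whereas handling the non-nonnegativity of the realized regret is what forces the $P_s/M_s$ decomposition and the martingale concentration for $M_s$, and it is there that the implicit hypothesis $\bar R_i(t)=\Omega(\sqrt t)$ enters.
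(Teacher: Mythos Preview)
Your proposal is correct and follows the same median-trick structure as the paper's proof. The paper is terser on both points where you take care: it applies Markov's inequality directly to the realized regret $t\mu_{i,1}-\sum_s r_s(a_{i,j_s})$ to obtain single-copy failure probability at most $1/2$, glossing over the sign issue you flag; and it then argues that if the median copy fails, the $\log(1/\delta)$ copies with smaller cumulative reward must also fail, bounding that intersection by $(1/2)^{\log(1/\delta)}$ via independence. Your $P_s/M_s$ decomposition and your insistence on driving $p$ strictly below $1/2$ before invoking the binomial tail are genuine rigor improvements over the paper's sketch; the only cost is the implicit hypothesis $\bar R_i(t)=\Omega(\sqrt t)$, which holds throughout the paper's setting anyway.
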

\begin{proof}[Proof of Lemma~\ref{lem:boosted_regret_bound}]
First note that $\mu_{med_i,1} = \mu_{i_s,1}$ and $\bar R_{i_s}(t) = \bar R_{med_i}(t) $for all $s$ and $t$. The assumption in Equation~\ref{eq:pseudo_regret_assumption} together with Markov's inequality implies that for every copy $\cA_{i_s}$ of $\cA_i$ at time $t$ it holds that 
\begin{align*}
    \prob{t\mu_{med_i, 1} -\sum_{s=1}^t r_{s}(a_{i,j_s}) \geq 2\bar R_{med_i}(t)} \leq \frac{1}{2}.
\end{align*}
Let $\cA_{i_1},\ldots,\cA_{i_n}$ be the algorithms which have reward smaller than $\cA_{med_i}$ at time $t$. We have
\begin{align*}
     \prob{t\mu_{med_i, 1} -\sum_{s=1}^t r_{s}(a_{med_i,j_s}) \geq 2\bar R_{med_i}(t)} &\leq \prob{\bigcap_{l\in[n]}\left\{ t\mu_{l,1} -\sum_{s=1}^t r_s(a_{i_l,j_s}) \geq 2\bar R_{med_i}(t) \right\}}\\
     &\leq \left(\frac{1}{2}\right)^{\log{1/\delta}} \leq \delta,
\end{align*}
where the first inequality follows from the definition of $\cA_{med_i}$ and $\cA_{i_l}$ for $l\in[n]$.
\end{proof}

\begin{theorem}
Suppose that algorithms $\cA_1,\ldots,\cA_k$ satisfy the following regret bound $\mathbb{E}[R_{i}(t)] \leq \sqrt{\alpha k_i t \log{t}}$. Then after $T$ rounds, Algorithm~\ref{alg:ucb-c-gen} produces a sequence of actions $a_1,\ldots,a_T$, such that
\begin{align*}
    T\mu_{1,1} - \mathbb{E}\left[\sum_{t=1}^T r_t(a_{i_t,j_t})\right] &\leq O\left(\sum_{i \neq i^*} \frac{k_i\log{T}^2}{\Delta_i} + \log{T}\mathbb{E}\left[R_{i^*}(T)\right]\right),\\
    T\mu_{1,1} - \mathbb{E}\left[\sum_{t=1}^T r_t(a_{i_t,j_t})\right] &\leq O\left(\log{T}\sqrt{K T\log{T} \max_{i\in [K]} (k_i)}\right).
\end{align*}
\end{theorem}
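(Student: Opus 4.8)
The plan is to run the classical gap-dependent UCB analysis on the ``meta-arms'' $\cA_1,\dots,\cA_K$, with the boosted high-probability guarantee of Lemma~\ref{lem:boosted_regret_bound} playing the role that Hoeffding's inequality plays for arm means, and to pay for the randomness of the number of times each base algorithm is selected through an optional-stopping argument. Throughout, let $N\asymp T/(2\lceil\log{T}\rceil)$ be the number of selection rounds (the initialization loop of Algorithm~\ref{alg:ucb-c-gen} contributes only $O(K\log{T})$ to the regret and is absorbed), and let $n_\ell=T_{med_\ell}(T)$ be the total number of times $\cA_\ell$ is selected, so $\sum_\ell n_\ell\asymp N$.

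First I would decompose the regret by base algorithm. Since each selection of $\cA_\ell$ plays all $2\lceil\log{T}\rceil$ of its copies once,
\[
\mathbb{E}[R(T)]=\sum_{\ell=1}^K\mathbb{E}\Big[2\lceil\log{T}\rceil\,n_\ell\,\mu_{i^*,1}-\big(\text{reward from the copies of }\cA_\ell\big)\Big].
\]
Writing each copy's reward over its $n_\ell$ pulls as $n_\ell\mu_{\ell,1}$ minus that copy's own realized regret, and using that a copy's pseudo-regret is nondecreasing in its own clock with expectation at most $\bar{R}_\ell(\cdot)$ at every stopped time, the $\ell\neq i^*$ summands are each at most $2\lceil\log{T}\rceil\big(\Delta_\ell\,\mathbb{E}[n_\ell]+\mathbb{E}[\bar{R}_\ell(n_\ell)]\big)$, while the $\ell=i^*$ summand is at most $2\lceil\log{T}\rceil\,\mathbb{E}[R_{i^*}(n_1)]\le2\lceil\log{T}\rceil\,\mathbb{E}[R_{i^*}(T)]$. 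Hence it suffices to prove $\mathbb{E}[n_\ell]=O\big(k_\ell\log{T}/\Delta_\ell^2\big)$ for $\ell\neq i^*$; the matching bound on $\mathbb{E}[\bar{R}_\ell(n_\ell)]$ then follows from monotonicity of $\bar{R}_\ell(n)=\sqrt{\alpha k_\ell n\log{n}}$ and the same split over the clean event below, since $\bar{R}_\ell\big(O(k_\ell\log{T}/\Delta_\ell^2)\big)=O(k_\ell\log{T}/\Delta_\ell)$.

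To control $\mathbb{E}[n_\ell]$ I would fix a clean event $\cG$. Applying Lemma~\ref{lem:boosted_regret_bound} with failure probability $\delta\asymp1/(KT^2)$ — which is exactly what running $\Theta(\log{T})$ copies buys, up to the constant in Algorithm~\ref{alg:ucb-c-gen} — together with an Azuma--Hoeffding bound on the zero-mean fluctuation of the observed rewards about their conditional means, and taking a union bound over all $\ell\in[K]$ and all values $T_{med_\ell}(t)\in[N]$, one obtains an event $\cG$ with $\mathbb{P}[\cG^c]\le1/T$ on which, for every $\ell$ and every selection count $n$, $|\hat\mu_{med_\ell}-\mu_{\ell,1}|\le b_\ell$, where $b_\ell$ is the confidence radius of Algorithm~\ref{alg:ucb-c-gen}: its $\bar{R}_{med_\ell}$-term dominates the boosted regret deficit of Lemma~\ref{lem:boosted_regret_bound} and its $\sqrt{2T_{med_\ell}(t)\log{t}}$-term dominates the fluctuation. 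On $\cG$ the usual optimism holds for the best algorithm, $\hat\mu_{med_{i^*}}+b_{i^*}(t)\ge\mu_{i^*,1}$, so whenever $\cA_\ell$ with $\ell\neq i^*$ is selected at a round with clock $t$,
\[
\mu_{\ell,1}+2b_\ell(t)\ \geq\ \hat\mu_{med_\ell}+b_\ell(t)\ \geq\ \hat\mu_{med_{i^*}}+b_{i^*}(t)\ \geq\ \mu_{i^*,1},
\]
i.e.\ $2b_\ell(t)\ge\Delta_\ell$. Since $b_\ell(t)$ is decreasing in $T_{med_\ell}(t)$ and, for a suitable constant $c=c(\alpha)$, satisfies $b_\ell(t)\le\Delta_\ell/2$ whenever $T_{med_\ell}(t)\ge c\,k_\ell\log{T}/\Delta_\ell^2$ (using $T_{med_\ell}(t)\le t\le T$), a selection of $\cA_\ell$ forces $T_{med_\ell}(t)<c\,k_\ell\log{T}/\Delta_\ell^2$. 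Hence on $\cG$ we get $n_\ell\le L_\ell:=O(k_\ell\log{T}/\Delta_\ell^2)$; off $\cG$ one only has $n_\ell\le N$, and $N\,\mathbb{P}[\cG^c]\le1$, whence $\mathbb{E}[n_\ell]\le L_\ell+1=O(k_\ell\log{T}/\Delta_\ell^2)$. Plugging back into the decomposition yields the first, gap-dependent bound.

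For the minimax bound I would keep the decomposition but replace the per-$\ell$ estimate $\mathbb{E}[n_\ell]\le L_\ell$ by the standard splitting at a level $\Delta_0$: for $\Delta_\ell\le\Delta_0$ use $\Delta_\ell\,\mathbb{E}[n_\ell]\le\Delta_0\,\mathbb{E}[n_\ell]$ and sum to $\Delta_0N$; for $\Delta_\ell>\Delta_0$ use $\Delta_\ell\,\mathbb{E}[n_\ell]=O(\max_ik_i\log{T}/\Delta_0)$; taking $\Delta_0\asymp\log{T}\sqrt{K\max_ik_i/T}$ balances the two to $O(\sqrt{KT\max_ik_i})$. For the $\bar{R}_\ell$-terms, Cauchy--Schwarz gives $\sum_\ell\bar{R}_\ell(n_\ell)\le\sqrt{\alpha\log{T}}\,\sqrt{\sum_\ell k_\ell}\,\sqrt{\sum_\ell n_\ell}=O\big(\sqrt{\log{T}\cdot K\max_ik_i\cdot N}\big)$. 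Multiplying by $2\lceil\log{T}\rceil$ and adding the $\cA_{i^*}$-term $2\lceil\log{T}\rceil\sqrt{\alpha k_{i^*}T\log{T}}$ gives $O\big(\log{T}\sqrt{KT\log{T}\max_ik_i}\big)$. The genuinely delicate points, which I expect to take most of the write-up, are (i) upgrading the Markov/boosting deviation of Lemma~\ref{lem:boosted_regret_bound} to a two-sided confidence interval valid \emph{simultaneously} over all epochs and selection counts — this is what forces the extra $\log{T}$ factor in the number of copies and hence a $\log{T}^2$ rather than $\log{T}$ gap-dependent rate — and (ii) the measure-theoretic bookkeeping: $n_\ell$ is a stopping time for the interleaved filtration but not for any single copy's filtration, so converting each copy's \emph{anytime} guarantee (Equation~\ref{eq:pseudo_regret_assumption}) into a bound on $\mathbb{E}[\bar{R}_\ell(n_\ell)]$ and $\mathbb{E}[R_{i^*}(n_1)]$ needs an optional-stopping/Jensen argument, and this is precisely where the algorithm's initial ``linear-regret phase'' is absorbed into the gap-dependent terms.
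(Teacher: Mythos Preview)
Your proposal is correct and takes essentially the same approach as the paper: a UCB-style analysis on the meta-arms, using Lemma~\ref{lem:boosted_regret_bound} to supply the high-probability confidence needed for optimism, yielding $\mathbb{E}[n_\ell]=O(k_\ell\log{T}/\Delta_\ell^2)$, and then the regret decomposition into $\sum_{\ell\neq i^*}\Delta_\ell\,\mathbb{E}[n_\ell]$ plus the per-algorithm internal regrets bounded via Jensen. The only differences are presentational: the paper uses the classical three-bad-events framing rather than your single clean event, and for the minimax bound it writes $\Delta_\ell\,\mathbb{E}[n_\ell]=\sqrt{\Delta_\ell^2\,\mathbb{E}[n_\ell]}\sqrt{\mathbb{E}[n_\ell]}\le\sqrt{\alpha'k_\ell\log{T}}\sqrt{\mathbb{E}[n_\ell]}$ and applies Jensen directly, instead of your $\Delta_0$-splitting plus Cauchy--Schwarz; the paper also glosses over the optional-stopping subtlety you flag in (ii), simply invoking $\mathbb{E}[R_{med_\ell}(t_\ell)]\le\mathbb{E}[\sqrt{\alpha k_\ell t_\ell\log{t}}]$ and concavity.
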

\begin{proof}[Proof of Theorem~\ref{thm:regret_bound_ucbc}]
For simplicity we assume that $\lceil\log{T}\rceil =\log{T}$. For the rest of the proof we let $t_\ell = T_{\ell}(t)$ to simplify notation. Further, since $\bar R_{\ell_s} = \bar R_{\ell}, \forall s \in [\log{T}]$, we use $\bar R_{\ell}$ as the upper bound on the regret for all algorithms in $\mathbb{A}_\ell$. Let $\psi_\ell(t) = 2\sqrt{\frac{2\log{t}}{t_\ell}} + \frac{\sqrt{2\bar R_{\ell}(t_\ell)}}{t_\ell}$. The proof follows the standard ideas behind analyses of UCB type algorithms. If at time $t$ algorithm $\ell \neq 1$ is selected then one of the following must hold true:
\begin{equation}
    \label{eq:ucb_cond_1}
    \mu_{1,1} \geq \hat\mu_{\bar 1}(t_1) + \frac{\sqrt{2\bar R_{1}(t_1)} + \sqrt{2t_1\log{t}}}{t_1},
\end{equation}
\begin{equation}
    \label{eq:ucb_cond_2}
    \hat\mu_{med_\ell}(t_\ell) > \mu_{1,1} + \sqrt{\frac{2\log{t}}{t_\ell}},
\end{equation}
\begin{equation}
\label{eq:ucb_cond_3}
\Delta_\ell < 2\sqrt{\frac{2\log{t}}{t_\ell}} + \frac{\sqrt{2\bar R_{\ell}(t_\ell)}}{t_\ell}.
\end{equation}
The above conditions can be derived by considering the case when the UCB for $\cA_1$ is smaller than the UCB for $\cA_\ell$ and every algorithm has been selected a sufficient number of times. Suppose that the three conditions above are false at the same time. Then we have
\begin{align*}
    \hat\mu_{\bar 1}(t_1) &+ \frac{\sqrt{2\bar R_{1}(t_1)} + \sqrt{2t_1\log{t}}}{t_1} > \mu_{1,1} = \mu_{1,\ell} + \Delta_\ell\\
    &\geq \Delta_\ell + \hat\mu_{med_\ell}(t_\ell) - \sqrt{\frac{2\log{t}}{t_\ell}}\\
    &\geq \hat\mu_{med_\ell}(t_\ell) + \frac{\sqrt{2\bar R_{\ell}(t_\ell)} + \sqrt{2t_\ell\log{t}}}{t_\ell},
\end{align*}
which contradicts the assumption that algorithm $\cA_\ell$ was selected. With slight abuse of notation we use $[k_{\ell}]$ to denote the set of arms belonging to algorithm $\cA_\ell$. Next we bound the expected number of times each sub-optimal algorithm is played up to time $T$. Let $\delta$ be an upper bound on the probability of the event that $\hat \mu_{\bar 1}(s)$ exceeds the UCB for $\cA_1$.
\begin{align*}
    \mathbb{E}[T_\ell] &= \sum_{t=1}^T \mathbb{E}[\chi_{a_{i_t,j_t} \in [k_{\ell}]}] \leq \psi_\ell^{-1}(\Delta_\ell) + \sum_{t > \psi_\ell^{-1}(\Delta_\ell)} \prob{\text{Equation~\ref{eq:ucb_cond_1} or Equation~\ref{eq:ucb_cond_2} hold}}\\
    &\leq \psi_\ell^{-1}(\Delta_\ell) + \sum_{t > \psi_\ell^{-1}(\Delta_\ell)} \prob{\exists s \in [t] : \mu_{1,1} \geq \hat\mu_{\bar 1}(s) + \frac{\sqrt{2\bar R_{1}(s)} + \sqrt{2s\log{t}}}{s}}\\
    &+ \sum_{t > \psi_\ell^{-1}(\Delta_\ell)} \prob{\exists s \in [t] : \hat\mu_{med_\ell}(s) > \mu_{1,1} + \sqrt{\frac{2\log{t}}{s}}}\\
    &\leq \sum_{t > \psi_\ell^{-1}(\Delta_\ell)} t\delta + \sum_{t > \psi_\ell^{-1}(\Delta_\ell)} \frac{1}{t} + \psi^{-1}(\Delta_\ell),
\end{align*}
where the last inequality follows from the definition of $\delta$ and the fact that $\hat \mu_{med_\ell}(s) \leq \hat\mu_{med_\ell,1}(s)$ (empirical mean of arm $1$ for algorithm $\cA_{med_\ell}$ at time $s$) and the standard argument in the analysis of UCB-I. Setting $\delta = \frac{1}{T^2}$ finishes the bound on the number of suboptimal algorithm pulls. Next we consider bounding the regret incurred only by playing the median algorithms $\cA_{med_\ell}$
\begin{align*}
    t\mu_{1,1} - \mathbb{E}\left[\sum_{s=1}^t r_s(a_{i_s,j_s})\right] &= t\mu_{1,1} - \mathbb{E}\left[\sum_\ell t_\ell\mu_{\ell,1} + \sum_{\ell}\sum_i T_{med_\ell,i}(t_\ell)\mu_{\ell,i} - \sum_\ell t_\ell\mu_{\ell,1}\right]\\
    &= \mathbb{E}\left[\sum_{\ell \neq 1}t_\ell\Delta_\ell\right] + \sum_{\ell \neq 1}\mathbb{E}[R_{med_\ell}(t_\ell)] + \mathbb{E}\left[R_{med_1}(t)\right]\\
    &\leq \sum_{\ell\neq 1}\Delta_\ell\psi^{-1}(\Delta_\ell) + 2\log{T} + \sum_{\ell \neq 1} \mathbb{E}[\sqrt{\alpha k_\ell t_\ell \log{t}}] + \mathbb{E}\left[R_{med_1}(t)\right]\\
    &\leq \sum_{\ell\neq 1}\Delta_\ell\psi^{-1}(\Delta_\ell) + 2\log{T} + \sum_{\ell \neq 1}\sqrt{\alpha k_\ell \mathbb{E}[t_\ell]\log{t}} + \mathbb{E}\left[R_{med_1}(t)\right]\\
    &\leq \sum_{\ell\neq 1}\Delta_\ell\psi^{-1}(\Delta_\ell) + 2\log{T} + \sum_{\ell \neq 1}\sqrt{\alpha k_\ell \psi^{-1}(\Delta_\ell)\log{t}}\\ &+\mathbb{E}\left[R_{med_1}(t)\right]
    + \sqrt{\alpha k_\ell}\log{t}.
\end{align*}
Now for the assumed regret bound on the algorithms, we have $\psi_\ell(t) = 2\sqrt{\frac{2\log{t}}{t_\ell}} + \sqrt{2\frac{\alpha k_\ell \log{t_\ell}}{t_\ell}}$. This implies that $\psi_{\ell}^{-1}(\Delta_\ell) \leq \frac{\alpha'k_\ell\log{t}}{\Delta_\ell^2}$, for some other constant $\alpha'$. 
To get the instance independent bound we first notice that by Jensen's inequality we have
\begin{align*}
    \sum_{\ell}\sqrt{\alpha' k_\ell \mathbb{E}[t_\ell]\log{t}} &\leq K\sqrt{\frac{1}{K} \sum_{\ell}\alpha'\mathbb{E}[t_\ell] k_\ell \log{t}}\\
    &\leq \sqrt{\alpha'K t\log{t} \max_{\ell} (k_\ell)}.
\end{align*}
Next we can bound $\mathbb{E}\left[\sum_{\ell \neq 1}t_\ell\Delta_\ell\right]$ in the following way
\begin{align*}
    \mathbb{E}\left[\sum_{\ell \neq 1}t_\ell\Delta_\ell\right] &\leq \sum_{\ell}\Delta_\ell \sqrt{\mathbb{E}[t_\ell]}\sqrt{\mathbb{E}[t_\ell]} = \sum_{\ell}\sqrt{\Delta_\ell^2\mathbb{E}[t_\ell]}\sqrt{\mathbb{E}[t_\ell]}\\
    &= \sum_{\ell}\sqrt{\alpha'k_\ell\mathbb{E}[t_\ell] \log{t}} \leq \sqrt{\alpha'K t\log{t}\max_{\ell} (k_\ell)}
\end{align*}
The theorem now follows.
\end{proof}

\subsection{Proof of Theorem~\ref{thm:ucb_lower_bound_boosting}}
\label{sec:ucb_lower_bound}
Consider an instance of Algorithm~\ref{alg:ucb-c-gen}, except that it runs a single copy of each base learner $\cA_i$. 
Let $\cA_1$ be a UCB algorithm with two arms with means $\mu_1 > \mu_2$, respectively. The arm with mean $\mu_1$ is set according to a Bernoulli random variable, and the arm with mean $\mu_2$ is deterministic. Let algorithm $\cA_2$ have a single deterministic arm with mean $\mu_3$, such that $\mu_1 > \mu_3$ and $\mu_3 > \mu_2$. Let $\Delta = \mu_1 - \mu_3$. We now follow the lower bounding technique of~\cite{audibert2009exploration}. 

Consider the event that in the first $q$ pulls of arm $a_1^{\cA_1}$, we have $r_t(a_{1,1}) = 0$, i.e. $\mathscr{E} = \{r_1(a_{1,1}) = 0, r_2(a_{1,1}) = 0,\ldots, r_q(a_{1,1}) = 0\}$. This event occurs with probability $(1-\mu_1)^q$. 
Notice that on event $\mathscr{E}$, the upper confidence bound for $\mu_1$ as per $\cA_1$ is $\sqrt{\frac{\alpha\log{T_{1}(t)}}{q}}$ during time $t$. This implies that for $a_{1,1}$ to be pulled again we need $\sqrt{\frac{\alpha\log{T_{1}(t)}}{q}} > \mu_2$ and hence for the first $\exp{q\mu_2^2/\alpha}$ rounds in which $\cA_1$ is selected by the corralling algorithm, $a_{1,1}$ is only pulled $q$ times. 
Further, on $\mathscr{E}$, the upper confidence bound for $\cA_1$ as per the corralling algorithm is of the form $\sqrt{\frac{2\beta\log{t}}{T_{1}(t)}}$. This implies that for $\cA_1$ to be selected again we need $\mu_2 + \sqrt{\frac{2\beta\log{t}}{T_{1}(t)}} > \mu_3$. Let $\tilde\Delta = \mu_3 - \mu_2$. Then, the above implies that in the first $t \leq \exp{T_{1}(t)\tilde\Delta^2/(2\beta)}$ rounds, $\cA_1$ is pulled at most $T_{1}(t)$ times.
Combining with the bound for the number of pulls of $a_{1,1}$ we arrive at the fact that on $\mathscr{E}$, $a_{1,1}$ can not be pulled more than $q$ times in the first $\exp{\frac{\tilde\Delta^2 \exp{q\mu_2^2/\alpha}}{2\beta}}$ rounds. Let $q$ be large enough so that $q \leq \frac{1}{2}\exp{\frac{\tilde\Delta^2 \exp{q\mu_2^2/\alpha}}{2\beta}}$. Then, for large enough $T$, we have that the pseudo-regret of the corralling algorithm is $\hat R(T) \geq \frac{1}{2} \Delta \exp{\frac{\tilde\Delta^2 \exp{q\mu_2^2/\alpha}}{2\beta}}$. 
Taking $q = \log{\frac{2\beta}{\tilde\Delta^2}\log{\tau\frac{\alpha}{\mu_2^2}}}$, we get 
\vspace*{-5pt}
\begin{align*}
    \prob{\hat R(T) \geq \frac{1}{2}\Delta \tau} \geq \prob{\mathscr{E}} = (1-\mu_1)^q = \frac{1}{\exp{q}^{\log{1/(1-\mu_1)}}} = \Big(\frac{\tilde\Delta^2}{2\beta\log{\tau}}\Big)^{\frac{\alpha}{\mu_2^2}\log{1/(1-\mu_1)}}.
\end{align*}
Let $\gamma = \frac{\alpha}{\mu_2^2}\log{1/(1-\mu_1)}$. We can now bound the expected pseudo-regret of the algorithm by integrating over $2 \leq \tau \leq T$, to get 
\begin{align*}
    \mathbb{E}[\hat R(T)] &\geq \frac{1}{2}\Delta \int_2^T \Big(\frac{\tilde\Delta^2}{2\beta\log{\tau}}\Big)^{\frac{\alpha}{\mu_2^2}\log{1/(1-\mu_1)}} d \tau = \frac{1}{2}\Delta\Big(\frac{\tilde\Delta^2}{2\beta}\Big)^\gamma \int_2^T \left(\frac{1}{\log{\tau}}\right)^\gamma d \tau\\
    &\geq \frac{1}{2}\Delta\Big(\frac{\tilde\Delta^2}{2\beta}\Big)^\gamma \frac{T-2}{\left(\log{\frac{T+2}{2}}\right)^\gamma},
\end{align*}
where the last inequality follows from the Hermite-Hadamart inequality.

It is important to note that the above reasoning will fail if $\gamma$ is a function of $T$. This might occur if in the UCB for $\cA_1$ we have $\alpha = \log{T}$. In such a case the lower bounds become meaningless as $\frac{1}{\log{(T+2)/2}}^\gamma \leq o(1/T)$. Further, it should actually be possible to avoid boosting in this case as the tail bound of the regret will now be upper bounded as $\mathbb{P}[R_1(t) \geq \Delta\tau] \leq \frac{1}{T\tau^c}$.

\paragraph{General Approach if Regret has a Polynomial Tail.} Assume that, in general, the best algorithm has the following regret tail:
\vspace*{-15pt}
\begin{align*}
    \prob{R_{1}(t) \geq \frac{1}{2}\Delta_{1,1}\tau} \geq \frac{1}{\tau^c},
\end{align*}
for some constant $c$. Results in \cite{salomon2011deviations} suggest that for stochastic bandit algorithms which enjoy anytime regret bounds we can not have a much tighter high probability regret bound. Let $\mathscr{E}_{T_{1}(t)} = \{R_{1}(T_{1}(t)) \geq T_{1}(t)(\mu_1 - \frac{1}{\sqrt{2}}\mu_3)\}$. 
After $T_{1}(t)$ pulls of $\cA_1$ the reward plus the UCB for $\cA_1$ is at most $\frac{\sum_{s=1}^{T_{1}(t)}r_s(a_{1,j_s})}{T_{1}(t)} + \sqrt{\frac{\alpha k_1 \log{t} }{T_{1}(t)}}$, and on  $\mathscr{E}_{T_{1}(t)}$, we have $\frac{\sum_{s=1}^{T_{1}(t)}r_s(a_{1,j_s})}{T_{1}(t)} \leq \frac{1}{\sqrt{2}} \mu_3$. This implies that in the first $t$ rounds, $\cA_1$ could not have been pulled more than 
\begin{align*}
    \frac{\alpha k_1 \log{t}}{\left(\mu_3 - \frac{\sum_{s=1}^{T_{1}(t)}r_s(a_{1,j_s})}{T_{1}(t)}\right)^2} \leq \frac{2\alpha k_1 \log{t}}{\mu_3^2}.
\end{align*}
Setting $T_{1}(t) = \frac{2\alpha k_1\log{T}}{\mu_3^2}$, we have that $\mathscr{E}_{T_{1}(t)}$ occurs with probability at least $\left(\frac{\Delta_{1,2}\mu_3^2}{4\alpha k_1\log{T}}\right)^c$ and hence the expected regret of the corralling algorithm is at least
\begin{align*}
    \left(\frac{\Delta_{1,2}\mu_3^2}{4\alpha k_1\log{T}}\right)^c\Delta\left(T - \frac{2\alpha k_1\log{T}}{\mu_3^2}\right).
\end{align*}
We have just showed the following.
\begin{theorem}
There exist instances $\cA_1$ and $\cA_2$ of UCB-I and a reward distribution, such that if Algorithm~\ref{alg:ucb-c-gen} runs a single copy of $\cA_1$ and $\cA_2$ the expected regret of the algorithm is at least
\begin{align*}
    \mathbb{E}[R(T)] \geq \tilde \Omega(\Delta T).
\end{align*}
Further, for any algorithm $\cA_1$ such that $\prob{R_{1}(t) \geq \frac{1}{2}\Delta_{1,1}\tau} \geq \frac{1}{\tau^c}$, there exists a reward distribution such that if Algorithm~\ref{alg:ucb-c-gen} runs a single copy of $\cA_1$ the expected regret of the algorithm is at least
\begin{align*}
    \mathbb{E}[R(T)] \geq \tilde \Omega((\Delta_{1,2})^c\Delta T).
\end{align*}
\end{theorem}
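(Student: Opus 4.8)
The plan is to exhibit two UCB-I base learners on which the single-copy variant of Algorithm~\ref{alg:ucb-c-gen} gets \emph{frozen} on the suboptimal learner for a number of rounds polynomial in $T$. I would take $\cA_1$ to be a two-armed UCB-I instance whose optimal arm $a_{1,1}$ has a $\Ber(\mu_1)$ reward and whose second arm $a_{1,2}$ is deterministic with reward $\mu_2<\mu_1$, and let $\cA_2$ be a degenerate single-armed learner with deterministic reward $\mu_3$, choosing the means so that $\mu_1>\mu_3>\mu_2$; write $\Delta=\mu_1-\mu_3$ and $\tilde\Delta=\mu_3-\mu_2$. Borrowing the deviation-lower-bound idea of \cite{audibert2009exploration}, the driving event is $\mathscr{E}=\{r_s(a_{1,1})=0 \text{ for all } s\in[q]\}$, which has probability $(1-\mu_1)^q$.

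On $\mathscr{E}$ I would chain two freezing estimates. First, because the empirical mean of $a_{1,1}$ equals $0$ after its first $q$ pulls, $\cA_1$'s internal confidence bound for that arm is $\sqrt{\alpha\log{T_1(t)}/q}$, which stays below $\mu_2$ until $\cA_1$ has been selected $\exp{q\mu_2^2/\alpha}$ times --- so over that entire stretch $\cA_1$ reports the deterministic value $\mu_2$ to the corral. Second, on $\mathscr{E}$ the corralling index for $\cA_1$ (its reported empirical mean plus $b_1(t)$) is then of the form $\mu_2+\sqrt{2\beta\log{t}/T_1(t)}$, which stays below the value $\mu_3$ delivered by $\cA_2$ until $T_1(t)\gtrsim\tfrac{2\beta}{\tilde\Delta^2}\log{t}$, i.e.\ until round $t\approx\exp{T_1(t)\tilde\Delta^2/(2\beta)}$. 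Composing the two thresholds shows that on $\mathscr{E}$ the learner $\cA_1$ is selected at most $q$ times during the first $\exp{\tfrac{\tilde\Delta^2}{2\beta}\exp{q\mu_2^2/\alpha}}$ rounds, as long as $q\leq\tfrac12\exp{\tfrac{\tilde\Delta^2}{2\beta}\exp{q\mu_2^2/\alpha}}$. Picking $q=q(\tau)=\log{\tfrac{2\beta}{\tilde\Delta^2}\log{\tfrac{\alpha\tau}{\mu_2^2}}}$ gives $\prob{R(T)\geq\tfrac12\Delta\tau}\geq\prob{\mathscr{E}}=\big(\tfrac{\tilde\Delta^2}{2\beta\log{\tau}}\big)^{\gamma}$ with $\gamma=\tfrac{\alpha}{\mu_2^2}\log{1/(1-\mu_1)}$, and integrating this tail bound over $\tau\in[2,T]$ together with the Hermite--Hadamard inequality yields $\mathbb{E}[R(T)]\geq\tfrac12\Delta\big(\tfrac{\tilde\Delta^2}{2\beta}\big)^{\gamma}\tfrac{T-2}{\big(\log{\tfrac{T+2}{2}}\big)^{\gamma}}=\tilde\Omega(\Delta T)$.

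For the second, distribution-free statement I would discard the explicit construction of $\cA_1$ and use only the polynomial regret tail $\prob{R_1(t)\geq\tfrac12\Delta_{1,2}\tau}\geq\tau^{-c}$ --- which, by \cite{salomon2011deviations}, is essentially the strongest tail one can expect from an anytime algorithm. On the event $\mathscr{E}_m=\{R_1(m)\geq m(\mu_1-\tfrac{1}{\sqrt{2}}\mu_3)\}$, the empirical reward $\cA_1$ reports after $m$ pulls is at most $\tfrac{1}{\sqrt{2}}\mu_3$, so the corralling index for $\cA_1$ exceeds $\mu_3$ only while $m\leq\tfrac{2\alpha k_1\log{t}}{\mu_3^2}$; taking $m=\tfrac{2\alpha k_1\log{T}}{\mu_3^2}$ and invoking the tail bound with $\tau=m$ gives $\prob{\mathscr{E}_m}\geq\big(\tfrac{\Delta_{1,2}\mu_3^2}{4\alpha k_1\log{T}}\big)^{c}$, and on that event the corral incurs regret $\Delta(T-m)$, i.e.\ $\tilde\Omega((\Delta_{1,2})^c\Delta T)$.

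The main obstacle I expect is getting the two freezing thresholds to interlock cleanly: I have to verify that over the relevant window the empirical reward $\cA_1$ feeds the corral genuinely looks like $\mu_2$ (rather than being dragged down by the $q$ initial zeros), that the corralling index $b_1(t)$ of Algorithm~\ref{alg:ucb-c-gen} really collapses to $\mu_2+\sqrt{2\beta\log{t}/T_1(t)}$ for a single degenerate copy, and, most delicately, that $q(\tau)$ can be chosen so the self-referential inequality $q\leq\tfrac12\exp{\tfrac{\tilde\Delta^2}{2\beta}\exp{q\mu_2^2/\alpha}}$ holds for every $\tau$ in the integration range while $\gamma$ stays a genuine constant. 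This last point is exactly where the argument would break if the UCB-I width were allowed to scale like $\log{T}$ (then $\gamma$ becomes $T$-dependent and $\big(\log{\tfrac{T+2}{2}}\big)^{-\gamma}=o(1/T)$), which dovetails with the remark that in that regime boosting becomes unnecessary.
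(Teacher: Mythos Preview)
Your proposal is correct and follows essentially the same argument as the paper: the same two-learner construction with a Bernoulli/deterministic two-armed $\cA_1$ and a single deterministic arm for $\cA_2$, the same bad event $\mathscr{E}$ of $q$ initial zeros, the same two-level freezing (first $\cA_1$ gets stuck on its deterministic arm, then the corral gets stuck on $\cA_2$), the same choice of $q(\tau)$ and exponent $\gamma$, and the same tail integration via Hermite--Hadamard; for the second part you likewise match the paper's event $\mathscr{E}_m$ and threshold $m=\tfrac{2\alpha k_1\log T}{\mu_3^2}$. Your closing caveat about $\gamma$ becoming $T$-dependent when the UCB width scales with $\log T$ is exactly the remark the paper makes after its computation.
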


\section{Proof of Theorem~\ref{thm:regret_bound}}
\label{app:tsallis-inf}

\subsection{Potential function and auxiliary lemmas}
First we recall the definition of conjugate of a convex function $f$, denoted as $f^*$ 
\begin{align*}
    f^*(y) = \max_{x \in \mathbb{R}^d} \langle x,y \rangle - f(x).
\end{align*}
In our algorithm, we are going to use the following potential at time $t$
\begin{equation}
\label{eq:potential_identities}
\begin{aligned}
    \Psi_t(w) &= -4\sum_{i=1}^K \frac{\sqrt{w_i} - \frac{1}{2}w_i}{\eta_{t,i}}\\
    \nabla \Psi_t(w)_i &= -2 \frac{\frac{1}{\sqrt{w_i}} - 1}{\eta_{t,i}}\\
    \nabla^2\Psi_t(w)_{i,i} &= \frac{1}{w_i^{3/2}\eta_{t,i}},\nabla^2\Psi_t(w)_{i,j} = 0\\
    \nabla \Psi_t^*(Y)_i &= \frac{1}{\left(-\frac{\eta_{t,i}}{2}Y_i+1\right)^2}\\
    \Phi_t (Y) &= \max_{w \in \Delta^{K-1}} \langle Y,w \rangle - \Psi_t(w) = \left(\Psi_t + \I_{\Delta^{K-1}}\right)^*(Y).
\end{aligned}
\end{equation}
Further for a function $f$ we use $D_f(x,y)$ to denote the Bregman divergence between $x$ and $y$ induced by $f$ equal to
\begin{align*}
    D_f(x,y) = f(x) - f(y) - \langle \nabla f(y),x-y \rangle = f(x) + f^*(\nabla f(y)) - \langle \nabla f(y), x\rangle,
\end{align*}
where the second inequality follows by the Fenchel duality equality $f^*(\nabla f(y)) + f(y) = \langle \nabla f(y),y \rangle$.
We now present a bandit algorithm is going to be the basis for the corralling algorithm. Let $\eta_t = \begin{pmatrix}\eta_{t,1}\\ \eta_{t,2}\\ \vdots \\ \eta_{t,K}\end{pmatrix}$ be the step size schedule for time $t$. The algorithm proceeds in epochs. Each epoch is twice as large as the preceding and the step size schedule remains non-increasing throughout the epochs, except when an OMD step is taken. In each epoch the algorithm makes a choice to either take two mirror descent steps, while increasing the step size:
\begin{equation}
\label{eq:omd_step}
    \begin{aligned}
       \hat w_{t+1} &= \argmin_{w\in \Delta^{K-1}} \langle \hat \ell_t, w \rangle + D_{\Psi_t}(w,w_t),\\
       \eta_{t+1,i} &= \beta\eta_{t,i} & \text{for $i : w_{t,i} \leq 1/\rho_{s_i}$},\\
       \hat w_{t+2} &= \argmin_{w\in \Delta^{K-1}} \langle \hat \ell_{t+1}, w \rangle + D_{\Psi_{t+1}}(w,\hat w_{t+1}),\\
       \rho_{s_i} &= 2\rho_{s_i},
    \end{aligned}
\end{equation}
or the algorithm takes a FTRL step
\begin{equation}
\label{eq:ftrl_step}
    w_{t+1} = \argmin_{w\in \Delta^{K-1}} \langle \hat L_t, w \rangle + \Psi_{t+1}(w),
\end{equation}
where $\hat L_t = \hat L_{t-1} + \hat \ell_t$ unless otherwise specified by the algorithm. We note that the algorithm can only increase the step size during the OMD step. For technical reasons we require a FTRL step after each OMD step. Further we require that the second step of each epoch be an OMD step, if there exists at least one $w_{t,i} \leq \frac{1}{\rho_1}$. The algorithm also can enter an OMD step during an epoch if at least one $w_{t,i} \leq \frac{1}{\rho_{n_i}}$. The intuition behind this behavior is as follows. Increasing the step size and doing an OMD step will give us negative regret during that round and we only require negative regret for a certain arm if the probability of pulling said arm becomes smaller than some threshold. The pseudo-code can be found in Algorithm~\ref{alg:tsallis_inf_incr_dcr}. For the rest of the proofs and discussion we denote an iterate from the FTRL update as $w_t$ and an iterate from the OMD update as $\hat w_t$. Further, intermediate iterates of OMD are denotes as $\tilde w_t$. We now present a couple of auxiliary lemmas useful for analyzing the OMD and FTRL updates.

\begin{lemma}
\label{lem:duality_eq}
For any $x,y \in \Delta^{K-1}$ it holds
\begin{align*}
    D_{\Psi_t}(x,y) = D_{\Phi_t}(\nabla \Phi_t^*(y), \nabla\Phi_t^*(x)).
\end{align*}

\end{lemma}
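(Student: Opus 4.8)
The plan is to obtain the identity as an instance of the classical duality between a closed convex function and its Fenchel conjugate, applied to the constrained potential $f := \Psi_t + \I_{\Delta^{K-1}}$, whose conjugate is exactly $\Phi_t$ by the definition of $\Phi_t$ in Equation~\ref{eq:potential_identities}. Throughout I would take $x,y$ in the relative interior of $\Delta^{K-1}$ (where $\nabla\Psi_t$ is finite and the divergences make sense, which is the only regime in which the lemma is used by the algorithm); the boundary case, if needed, follows by continuity.

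First I would record the biconjugation fact: since $\Psi_t$ is convex and continuous on the compact set $\Delta^{K-1}$ and strictly convex along the simplex, $f = \Psi_t + \I_{\Delta^{K-1}}$ is closed, proper and convex, so $\Phi_t$ is differentiable and $\Phi_t^* = f^{**} = f = \Psi_t + \I_{\Delta^{K-1}}$. For $x$ in the relative interior of $\Delta^{K-1}$ we have $\partial f(x) = \nabla\Psi_t(x) + N_{\Delta^{K-1}}(x)$ with $N_{\Delta^{K-1}}(x) = \mathrm{span}(\pmb{1}_K)$, and every $\theta_x \in \partial f(x)$ satisfies $\nabla\Phi_t(\theta_x) = x$; we write $\nabla\Phi_t^*(x)$ for any such $\theta_x$. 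The remaining choice is immaterial because $\Phi_t(Y + c\,\pmb{1}_K) = \Phi_t(Y) + c$ for every $c$ (using $\sum_i w_i = 1$ on $\Delta^{K-1}$), so $\nabla\Phi_t$ is constant along $\pmb{1}_K$, and likewise $\langle \pmb{1}_K, x - y\rangle = 0$ for $x,y \in \Delta^{K-1}$.

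Next I would invoke the general Bregman--conjugate swap: for a closed convex $g$, $D_g(x,y) = D_{g^*}(\nabla g(y), \nabla g(x))$ wherever the relevant gradients exist, which follows in one line from the Fenchel--Young equality $g(u) + g^*(\nabla g(u)) = \langle u, \nabla g(u)\rangle$ together with the inverse relation $\nabla g^* \circ \nabla g = \mathrm{id}$. Taking $g = f$ gives $D_f(x,y) = D_{\Phi_t}(\nabla\Phi_t^*(y), \nabla\Phi_t^*(x))$. It then remains to identify $D_f(x,y)$ with $D_{\Psi_t}(x,y)$: the indicator adds $0$ to both function values, and the normal-cone component of any subgradient of $f$ at $y$ is a multiple of $\pmb{1}_K$, which is annihilated by $x-y$, so $D_f(x,y) = \Psi_t(x) - \Psi_t(y) - \langle \nabla\Psi_t(y), x-y\rangle = D_{\Psi_t}(x,y)$, which is the claim.

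The hard part is only the bookkeeping around the non-smoothness introduced by the simplex constraint --- checking that $\nabla\Phi_t^*$ is a well-defined object modulo the $\pmb{1}_K$ direction that $\Phi_t$ ignores, and that $\nabla\Phi_t \circ \nabla\Phi_t^* = \mathrm{id}$ holds on $\Delta^{K-1}$ with this reading; the rest is textbook. A fully self-contained route, which I would actually present as the proof, avoids subgradients entirely: set $u = \nabla\Phi_t^*(x)$ and $v = \nabla\Phi_t^*(y)$, expand $D_{\Phi_t}(v,u) = \Phi_t(v) - \Phi_t(u) - \langle x, v-u\rangle$, substitute $\Phi_t(u) = \langle x,u\rangle - \Phi_t^*(x)$ and $\Phi_t(v) = \langle y, v\rangle - \Phi_t^*(y)$ from Fenchel--Young, use $\Phi_t^* = \Psi_t$ on $\Delta^{K-1}$ and $v = \nabla\Phi_t^*(y)$ (whose $\pmb{1}_K$-component cancels against $y-x$), and simplify to $\Psi_t(x) - \Psi_t(y) + \langle \nabla\Psi_t(y), y-x\rangle = D_{\Psi_t}(x,y)$.
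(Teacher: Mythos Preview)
Your proposal is correct and follows essentially the same route as the paper: identify $\Phi_t^* = (\Psi_t + \I_{\Delta^{K-1}})^{**} = \Psi_t + \I_{\Delta^{K-1}}$ by biconjugation, then apply the standard Bregman--conjugate swap $D_{g}(x,y) = D_{g^*}(\nabla g(y),\nabla g(x))$. The paper's proof is two lines and glosses over exactly the points you handle carefully---what $\nabla\Phi_t^*$ means given that $\Phi_t^*$ carries the indicator, and why the normal-cone component in $\partial(\Psi_t + \I_{\Delta^{K-1}})$ is harmless---so your version is strictly more thorough but not a different argument.
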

\begin{proof}
Since $\Psi_t+\I_{\Delta^{K-1}}$ is a convex, closed function on $\Delta^{K-1}$ it holds that $\Psi_t+\I_{\Delta^{K-1}} = ((\Psi_t+\I_{\Delta^{K-1}})^*)^*$ (see for e.g. \citep{brezis2010functional} Theorem 1.11). Further, $\Phi_t^*(x) = ((\Psi_t+\I_{\Delta^{K-1}})^*)^*(x) = \Psi_t(x)$. The above implies
\begin{align*}
    D_{\Psi_t}(x,y) = D_{\Phi_t^*}(x,y) = D_{\Phi_t}(\nabla \Phi_t^*(y), \nabla\Phi_t^*(x)).
\end{align*}
\end{proof}

\begin{lemma}
\label{lem:ftrl_lagrange_mult}
For any positive $\hat L_t$ and $w_{t+1}$ generated according to update~\ref{eq:ftrl_step} we have
\begin{align*}
    w_{t+1} = \nabla\Phi_{t+1}(-\hat L_{t}) = \nabla\Psi_{t+1}^*(-\hat L_{t}+\nu_{t+1}\pmb{1}),
\end{align*}
for some scalar $\nu_t$. Further $(\hat L_t - \nu_{t+1}\pmb{1})_i > 0$ for all $i \in [K]$.
\end{lemma}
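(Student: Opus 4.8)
The plan is to read off the optimality (KKT) conditions of the convex program in~\eqref{eq:ftrl_step} that defines $w_{t+1}$, and then invert $\nabla\Psi_{t+1}$ using the closed form recorded in~\eqref{eq:potential_identities}. First I would collect the structural facts about the $\frac{1}{2}$-Tsallis potential. On the compact set $\Delta^{K-1}$ the objective $w\mapsto\langle\hat L_t,w\rangle+\Psi_{t+1}(w)$ is continuous, and it is strictly convex since $\nabla^2\Psi_{t+1}(w)$ is diagonal with positive entries $\big(w_i^{3/2}\eta_{t+1,i}\big)^{-1}$; hence the minimizer $w_{t+1}$ exists and is unique. Equivalently, the maximizer in $\Phi_{t+1}(-\hat L_t)=\max_{w\in\Delta^{K-1}}\langle-\hat L_t,w\rangle-\Psi_{t+1}(w)=\big(\Psi_{t+1}+\I_{\Delta^{K-1}}\big)^*(-\hat L_t)$ is unique and equals $w_{t+1}$. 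Invoking the standard fact that the gradient of a closed convex conjugate at a point is precisely the (unique) maximizer defining it --- Danskin's theorem applied to a supremum of affine maps over the compact set $\Delta^{K-1}$ --- gives $\nabla\Phi_{t+1}(-\hat L_t)=w_{t+1}$, which is the first claimed identity.

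Next I would show that $w_{t+1}$ lies in the relative interior of $\Delta^{K-1}$, so that the nonnegativity constraints are inactive and only $\langle\pmb{1},w\rangle=1$ survives in the KKT system. This is where the barrier behaviour of $\Psi_{t+1}$ enters: $\partial_{w_i}\Psi_{t+1}(w)=-2\big(w_i^{-1/2}-1\big)/\eta_{t+1,i}\to-\infty$ as $w_i\to 0^+$, so near any face $\{w_i=0\}$ the $i$-th partial derivative of the objective tends to $-\infty$ (independently of the finite, nonnegative entry $\hat L_{t,i}$), and the objective strictly decreases as one moves off that face; thus the minimizer has all coordinates strictly positive. Consequently stationarity reads $\hat L_t+\nabla\Psi_{t+1}(w_{t+1})-\nu_{t+1}\pmb{1}=0$ for a single scalar multiplier $\nu_{t+1}$, i.e.\ $\nabla\Psi_{t+1}(w_{t+1})=-\hat L_t+\nu_{t+1}\pmb{1}$. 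Since the formula $\nabla\Psi_{t+1}^*(Y)_i=\big(-\tfrac{\eta_{t+1,i}}{2}Y_i+1\big)^{-2}$ from~\eqref{eq:potential_identities} is the inverse of $\nabla\Psi_{t+1}$ on the relevant range (a one-line verification: substituting $w_i=\nabla\Psi_{t+1}^*(Y)_i$ into $-2(w_i^{-1/2}-1)/\eta_{t+1,i}$ returns $Y_i$ whenever $-\tfrac{\eta_{t+1,i}}{2}Y_i+1>0$), applying $\nabla\Psi_{t+1}^*$ to both sides yields $w_{t+1}=\nabla\Psi_{t+1}^*(-\hat L_t+\nu_{t+1}\pmb{1})$, the second identity.

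For the last assertion, use that $K\geq 2$ and that $w_{t+1}$ is interior, so each $w_{t+1,i}\in(0,1)$; hence $w_{t+1,i}^{-1/2}>1$ and therefore $\nabla\Psi_{t+1}(w_{t+1})_i=-2\big(w_{t+1,i}^{-1/2}-1\big)/\eta_{t+1,i}<0$. Comparing with $\nabla\Psi_{t+1}(w_{t+1})=-\hat L_t+\nu_{t+1}\pmb{1}$ gives $(-\hat L_t+\nu_{t+1}\pmb{1})_i<0$, i.e.\ $(\hat L_t-\nu_{t+1}\pmb{1})_i>0$ for every $i\in[K]$ --- which is exactly the regime in which the closed form for $\nabla\Psi_{t+1}^*$ invoked above is well defined, so the argument is internally consistent. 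I expect no genuine obstacle: the whole thing is a routine KKT computation for an explicit separable barrier on the simplex. The only points needing a little care are the relative-interior claim and the differentiability of $\Phi_{t+1}$, both of which rest on the barrier growth and strict convexity of the $\frac{1}{2}$-Tsallis regularizer, and not on any property of $\hat L_t$ beyond finiteness (positivity of $\hat L_t$ is used only insofar as it is the setting in which the lemma is applied later).
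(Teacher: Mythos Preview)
Your proposal is correct. The paper itself does not give a self-contained proof but simply defers to Section~4.3 of \cite{zimmert2018optimal}; your KKT derivation is essentially the standard argument found there (interiority of the minimizer from the barrier behaviour of the $\tfrac{1}{2}$-Tsallis regularizer, stationarity with a single Lagrange multiplier for the simplex equality constraint, and the sign claim from $w_{t+1,i}\in(0,1)$ when $K\geq 2$).
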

\begin{proof}
The proof is contained in Section 4.3 in~\cite{zimmert2018optimal}.
\end{proof}

\begin{lemma}[Lemma 16~\cite{zimmert2018optimal}]
\label{lem:lemma_16_zimmer}
Let $w \in \Delta^{K-1}$ and $\tilde w = \nabla \Psi_t^*(\nabla \Psi_t(w) - \ell)$. If $\eta_{t,i}\leq \frac{1}{4}$, then for all $\ell > -1$ it holds that $\tilde w_i^{3/2} \leq 2w_i^{3/2}$.
\end{lemma}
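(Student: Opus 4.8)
The plan is to reduce the claim to an elementary scalar inequality by exploiting the closed forms for $\nabla \Psi_t$ and $\nabla \Psi_t^*$ recorded in Equation~\ref{eq:potential_identities}. Since $\Psi_t$ and its conjugate are separable across coordinates, I would fix the index $i$ and work one-dimensionally. Writing $Y = \nabla \Psi_t(w) - \ell$, the coordinate identity $-\tfrac{\eta_{t,i}}{2}\nabla \Psi_t(w)_i + 1 = 1/\sqrt{w_i}$ (immediate from the formula for $\nabla \Psi_t$) gives $-\tfrac{\eta_{t,i}}{2}Y_i + 1 = 1/\sqrt{w_i} + \tfrac{\eta_{t,i}}{2}\ell_i$, and hence, by the formula for $\nabla \Psi_t^*$,
\[
\tilde w_i \;=\; \nabla \Psi_t^*(Y)_i \;=\; \frac{1}{\bigl(1/\sqrt{w_i} + \tfrac{\eta_{t,i}}{2}\ell_i\bigr)^2}, \qquad\text{equivalently}\qquad \tilde w_i^{-1/2} \;=\; \frac{1}{\sqrt{w_i}} + \frac{\eta_{t,i}}{2}\ell_i .
\]

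Next I would record that the quantity $1/\sqrt{w_i} + \tfrac{\eta_{t,i}}{2}\ell_i$ is strictly positive, which both justifies the square-root identity above and lets me raise inequalities to arbitrary real powers: since $w_i \le 1$ one has $1/\sqrt{w_i} \ge 1$, while $\ell_i > -1$ and $\eta_{t,i} \le \tfrac14$ give $\tfrac{\eta_{t,i}}{2}\ell_i > -\tfrac18$, so $\tilde w_i^{-1/2} > \tfrac78$. Then I would rewrite the target: setting $u = \tilde w_i^{-1/2}$ and $v = w_i^{-1/2}$, the inequality $\tilde w_i^{3/2} \le 2 w_i^{3/2}$ reads $u^{-3} \le 2 v^{-3}$, i.e. $v \le 2^{1/3} u$, i.e. $u \ge 2^{-1/3} v$. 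Substituting $u = v + \tfrac{\eta_{t,i}}{2}\ell_i$ turns this into the single scalar inequality $\tfrac{\eta_{t,i}}{2}\ell_i \ge \bigl(2^{-1/3} - 1\bigr)\, w_i^{-1/2}$.

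Finally I would verify this last inequality by bounding the two sides separately. Using $\ell_i > -1$ and $\eta_{t,i}\le \tfrac14$, the left-hand side exceeds $-\eta_{t,i}/2 \ge -\tfrac18$; and since $2^{-1/3}-1 < 0$ while $w_i^{-1/2}\ge 1$, the right-hand side is at most $2^{-1/3}-1 \approx -0.206 < -\tfrac18$, so the left side dominates the right and the claim follows. I do not expect a real obstacle here, as the lemma is essentially a direct computation; the only points needing care are confirming positivity of $1/\sqrt{w_i} + \tfrac{\eta_{t,i}}{2}\ell_i$ before manipulating powers, and remembering that passing from $u^{-3}\le 2v^{-3}$ to a bound on $u$ reverses the inequality.
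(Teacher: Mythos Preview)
Your proof is correct. The paper does not give its own proof of this lemma; it simply cites it as Lemma~16 of \cite{zimmert2018optimal} and uses it as a black box. Your argument---computing $\tilde w_i^{-1/2} = w_i^{-1/2} + \tfrac{\eta_{t,i}}{2}\ell_i$ from the explicit formulas in Equation~\ref{eq:potential_identities}, verifying positivity, and then reducing $\tilde w_i^{3/2}\le 2w_i^{3/2}$ to the scalar bound $\tfrac{\eta_{t,i}}{2}\ell_i \ge (2^{-1/3}-1)w_i^{-1/2}$, which follows from $-\tfrac18 > 2^{-1/3}-1$---is exactly the natural direct computation, and there is nothing to compare it against in the present paper.
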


\begin{algorithm}[t]
\caption{Corralling with Tsallis-INF}
\begin{algorithmic}[1]
\REQUIRE{Mult. constant $\beta$, thresholds $\{\rho_i\}_{i=1}^n$, initial step size $\eta$, epochs $\{\tau_i\}_{i=1}^m$, algorithms $\{\cA_i\}_{i=1}^K$.}
\ENSURE{Algorithm selection sequence $(i_t)_{t=1}^T$.}
\STATE Initialize $t=1$, $w_1 = Unif(\Delta^{K-1})$, $\eta_1 = \eta$
\STATE Initialize current threshold list $\theta \in [n]^K$ to $\bf 1$
\WHILE{$t\leq \log{T}+1$}
\FOR{$i\in [K]$}
\STATE Algorithm $i$ plays action $a_{i,j_t}$ and $\hat L_{1,i} += \ell_{t}(a_{i,j_t})$
\ENDFOR
\STATE $t+=1$
\ENDWHILE
\STATE $t=2, w_2 = \nabla \Phi_2(-\hat L_1),1/\eta_{t+1}^2 = 1/\eta_{t}^2 +1$
\WHILE{$j \leq m$}
\FOR{$t\in \tau_j$}
\STATE $\mathcal{R}_t = \emptyset$,$\hat\ell_t = \texttt{PLAY-ROUND}(w_t)$
\IF{$t$ is the first round of epoch $\tau_j$ and $\exists w_{t,i} \leq \frac{1}{\rho_1}$}
\FOR{$i \colon w_{t,i} \leq \frac{1}{\rho_1}$}
\STATE $\theta_i = \min\{s \in [n]\colon w_{t,i} > \frac{1}{\rho_s}\}$, $\mathcal{R}_t = \mathcal{R}_t\bigcup \{i\}$.
\ENDFOR
\STATE $(w_{t+3},\hat L_{t+2}) = \texttt{NEG-REG-STEP}(w_{t}, \hat\ell_{t}, \eta_{t}, \mathcal{R}_t, \hat L_{t-1})$, $t = t+2$, $\hat\ell_t = \texttt{PLAY-ROUND}(w_t)$
\ENDIF
\IF{$\exists i\colon w_{t,i} \leq \frac{1}{\rho_{\theta_i}}$ and prior step was not \texttt{NEG-REG-STEP}}
\FOR{$i \colon w_{t,i} \leq \frac{1}{\rho_{\theta_i}}$}
\STATE $\theta_i +=1$, $\mathcal{R}_t = \mathcal{R}_t\bigcup \{i\}$.
\ENDFOR
\STATE $(w_{t+3},\hat L_{t+2}) = \texttt{NEG-REG-STEP}(w_{t}, \hat\ell_{t}, \eta_{t}, \mathcal{R}_t, \hat L_{t-1})$, $t = t+2$, $\hat\ell_t = \texttt{PLAY-ROUND}(w_t)$
\ELSE
\STATE $1/\eta_{t+1}^2 = 1/\eta_{t}^2 +1$, $w_{t+1} = \nabla \Phi_{t+1}(-\hat L_t)$
\ENDIF
\ENDFOR
\ENDWHILE
\end{algorithmic}
\end{algorithm}

\begin{algorithm}[t]
\caption{$\texttt{NEG-REG-STEP}$}
\begin{algorithmic}[1]
\REQUIRE{Previous iterate $w_t$, current loss $\hat\ell_t$, step size $\eta_t$, set of rescaled step-sizes $\mathcal{R}_t$, cumulative loss $\hat L_{t-1}$}
\ENSURE{Plays two rounds of the game and returns distribution $w_{t+3}$ and cumulative loss $\hat L_{t+2}$}
\STATE $(w_{t+1},\hat L_{t}) = \texttt{OMD-STEP}(w_{t}, \hat\ell_{t}, \eta_{t}, \mathcal{R}_t, \hat L_{t-1})$
\STATE $\hat\ell_{t+1} = \texttt{PLAY-ROUND}(w_{t+1})$, $\hat L_{t+1} = \hat L_t + \hat\ell_{t+1}$
\FOR{$i \in \mathcal{R}_t$}
\STATE $\eta_{t+2,i} = \beta\eta_{t,i}$ and restart $\cA_i$ with updated environment $\theta_i = \frac{2}{w_{t,i}}$
\ENDFOR
\STATE $w_{t+2} = \nabla \Phi_{t+2}(-\hat L_{t+1})$
\STATE $\hat\ell_{t+2} = \texttt{PLAY-ROUND}(w_{t+2})$
\STATE $\hat L_{t+2} = \hat L_{t+1} + \hat\ell_{t+2}, \eta_{t+3} = \eta_{t+2}, t = t+2$
\STATE $w_{t+1} = \nabla \Phi_{t+1}(-\hat L_t),t=t+1$
\end{algorithmic}
\end{algorithm}

\begin{algorithm}[t]
\caption{$\texttt{OMD-STEP}$}
\label{alg:omd_step}
\begin{algorithmic}[1]
\REQUIRE{Previous iterate $w_t$, current loss $\hat\ell_t$, step size $\eta_t$, set of rescaled step-sizes $\mathcal{R}_t$, cumulative loss $\hat L_{t-1}$}
\ENSURE{New iterate $w_{t+1}$, cumulative loss $\hat L_t$}
\STATE $\nabla\Psi_t (\tilde w_{t+1}) = \nabla\Psi_t(w_t) - \hat\ell_t$
\STATE $w_{t+1} = \argmin_{w \in \Delta^{K-1}}D_{\Phi_t}(w,\tilde w_{t+1})$.
\STATE $\e = \sum_{i\in \mathcal{R}_t} \e_i$
\STATE $\tilde L_{t-1} =  (\pmb 1_k - \e)\odot(\hat L_{t-1} - (\nu_{t-2} + \nu_{t-1})\pmb{1}_k) + \frac{1}{\beta}\e\odot((\hat L_{t-1} - (\nu_{t-2}+\nu_{t-1})\pmb{1}_k))$ // $\nu_{t-2}$ and $\nu_{t-1}$ are the Lagrange multipliers from the previous two FTRL steps.
\STATE $\hat L_{t} = \tilde L_{t-1} + \hat\ell_t$ 
\end{algorithmic}
\end{algorithm}

\begin{algorithm}[t]
\caption{$\texttt{PLAY-ROUND}$}
\label{alg:play_round}
\begin{algorithmic}[1]
\REQUIRE{Sampling distribution $w_t$}
\ENSURE{Loss vector $\hat\ell_t$}
\STATE Sample algorithm $i_t$ according to $\bar w_t = \left(1-\frac{1}{Tk}\right)w_t + \frac{1}{Tk}Unif(\Delta^{K-1})$.
\STATE Algorithm $i_t$ plays action $a_{i_t,j_t}$. Observe loss $\ell_t(a_{i_t,j_t})$ and construct unbiased estimator $\hat\ell_t= \frac{\ell_t(a_{i_t,j_t})}{\bar w_{t,i_t}}\e_{i_t}$ of $\ell_t$.
\STATE Give feedback to $i$-th algorithm as $\hat \ell_t(a_{i,j_t})$, where $a_{i,j_t}$ was action provided by $\cA_i$
\end{algorithmic}
\end{algorithm}

\subsection{Regret bound}
We begin by studying the instantaneous regret of the FTRL update. The bound follows the one in~\cite{zimmert2018optimal}. Let $u = e_{i^*}$ be the unit vector corresponding to the optimal algorithm $\cA_{i^*}$. First we decompose the regret into a stability term and a penalty term:
\begin{align*}
    \langle \hat \ell_t, w_t - u \rangle &= \langle \hat \ell_t, w_t \rangle + \Phi_t(-\hat L_t) - \Phi_t(-\hat L_{t-1}) \textit{ (Stability)}\\
    &-\Phi_t(-\hat L_t) + \Phi_t(-\hat L_{t-1}) - \langle \hat\ell_t, u\rangle \textit{ (Penalty)}.
\end{align*}
The bound on the stability term follows from Lemma 11 in~\cite{zimmert2018optimal}, however, we will show this carefully, since parts of the proof will be needed to bound other terms. Recall the definition of $\Phi_t(Y) = \max_{w \in \Delta^{K-1}} \langle Y,w \rangle - \Psi_t(w)$. Since $w$ is in the simplex we have $\Phi_t(Y+\alpha\pmb{1}_k) = \max_{w \in \Delta^{K-1}} \langle Y,w \rangle + \langle \alpha\pmb{1},w \rangle- \Psi_t(w) = \Phi_t(Y) + \alpha$. We also note that from Lemma~\ref{lem:ftrl_lagrange_mult} it follows that we can write $\nabla\Psi_t(w_t) = -\hat L_{t-1} + \nu_t\pmb{1}$. Combining the two facts we have
\begin{align*}
     \langle \ell_t, w_t \rangle + \Phi_t(-\hat L_t) - \Phi_t(-\hat L_{t-1}) &= \langle \ell_t, w_t \rangle + \Phi_t(\nabla\Psi_t(w_{t}) -\hat\ell_t - \nu_t\pmb{1}) - \Phi_t(\nabla\Psi_t(w_{t}) - \nu_t\pmb{1})\\
     &= \langle \ell_t - \alpha\pmb{1}_k, w_t  \rangle + \Phi_t(\nabla\Psi_t(w_{t}) -\hat\ell_t + \alpha\pmb{1}_k) - \Phi_t(\nabla\Psi_t(w_{t}))\\
     &\leq \langle \ell_t- \alpha\pmb{1}_k, w_t   \rangle + \Psi_t^*(\nabla\Psi_t(w_{t}) -\hat\ell_t + \alpha\pmb{1}_k) - \Psi_t^*(\nabla\Psi_t(w_{t}))\\
     &= D_{\Psi_t^*}(\nabla\Psi_t(w_{t}) -\hat\ell_t + \alpha\pmb{1}_k, \nabla\Psi_t(w_{t}))\\
     &\leq \max_{z \in [\nabla\Psi_t(w_{t}) -\hat\ell_t + \alpha\pmb{1}_k, \nabla\Psi_t(w_{t})]} \frac{1}{2}\|\hat\ell_t - \alpha\pmb{1} \|_{\nabla^2\Psi_t*(z)}^2\\
     &=\max_{w \in [w_{t},\nabla\Psi_t^*(\nabla\Psi_t(w_{t}) -\hat\ell_t + \alpha\pmb{1}_k)]} \frac{1}{2}\|\hat\ell_t - \alpha\pmb{1} \|_{\nabla^2\Psi_t^{-1}(w)}^2,
\end{align*}
where the first inequality holds since $\Psi_t^* \geq \Phi_t$ and $\Psi_t^*(\nabla \Psi(w_t)) = \langle\nabla\Psi(w_t), w_t \rangle - \Psi_t(w_t) = \Phi_t(\nabla\Psi(w_t))$ and the second inequality follows since by Taylor's theorem there exists a $z$ on the line segment between $\nabla \Psi_t(w_t) - \hat\ell_t + \alpha\pmb{1}_k$ and $\Psi_t(w_t)$ such that $D_{\Psi_t^*}(\nabla\Psi_t(w_{t}) -\hat\ell_t + \alpha\pmb{1}_k, \nabla\Psi_t(w_{t})) = \frac{1}{2}\|\hat\ell_t - \alpha\pmb{1} \|_{\nabla^2\Psi_t*(z)}^2$.
\begin{lemma}
\label{lem:lemma_11_zimmert}
Let $w_{t} \in \Delta^{K-1}$ and let $i_t \sim w_{t}$. Let $\hat\ell_{t,i_t} = \frac{\ell_{t,i_t}}{w_{t,i_t}}$ and $\hat\ell_{t,i} = 0$ for all $i\neq i_t$. It holds that 
\begin{align*}
    \mathbb{E}\left[\max_{w \in [w_{t},\nabla\Psi_t^*(\nabla\Psi_t(w_{t}) -\hat\ell_t + \alpha\pmb{1}_k)]} \|\hat\ell_t\|_{\nabla^2\Psi_t^{-1}(w)}^2\right] &\leq \sum_{i=1}^K \frac{\eta_{t,i}}{2}\sqrt{\mathbb{E}[w_{t,i}]}\\
    \mathbb{E}\left[\max_{w \in [w_{t},\nabla\Psi_t^*(\nabla\Psi_t(w_{t}) -\hat\ell_t + \alpha\pmb{1}_k)]} \|\hat\ell_t - \chi_{(i_t = j)}\ell_{t,j}\pmb{1}\|_{\nabla^2\Psi_t^{-1}(w)}^2\right] &\leq \sum_{i\neq j} \frac{\eta_{t,i}}{2}\sqrt{\mathbb{E}[w_{t,i}]} + \frac{\eta_{t,i} + \eta_{t,j}}{2}\mathbb{E}[w_{t,i}].
\end{align*}
\end{lemma}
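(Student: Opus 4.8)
The computation carried out just before the lemma has already reduced the stability term to $\max_{w\in[w_t,\tilde w_{t+1}]}\|\hat\ell_t-\alpha\pmb 1_k\|^2_{\nabla^2\Psi_t^{-1}(w)}$ with $\tilde w_{t+1}=\nabla\Psi_t^*(\nabla\Psi_t(w_t)-\hat\ell_t+\alpha\pmb 1_k)$, so the whole task is to take the expectation of this quantity over $i_t\sim w_t$. The plan is to start from two structural facts. First, by Equation~\ref{eq:potential_identities} the Hessian $\nabla^2\Psi_t(w)$ is diagonal with $\nabla^2\Psi_t(w)_{i,i}=1/(w_i^{3/2}\eta_{t,i})$, so its inverse is diagonal with entries $w_i^{3/2}\eta_{t,i}$ and $\|v\|^2_{\nabla^2\Psi_t^{-1}(w)}=\sum_i\eta_{t,i}w_i^{3/2}v_i^2$. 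Second, along the segment no coordinate can blow up relative to $w_t$: the shifted loss $\ell'=\hat\ell_t-\alpha\pmb 1_k$ (with $\alpha=0$ for the first bound and $\alpha=\chi_{(i_t=j)}\ell_{t,j}$ for the second) has every coordinate $>-1$ because $\ell_t\in[0,1]^{\prod_i k_i}$, so Lemma~\ref{lem:lemma_16_zimmer} applies and gives $\tilde w_{t+1,i}^{3/2}\le 2w_{t,i}^{3/2}$; since each coordinate of a point on $[w_t,\tilde w_{t+1}]$ lies between $w_{t,i}$ and $\tilde w_{t+1,i}$, this yields $w_i^{3/2}\le 2w_{t,i}^{3/2}$ uniformly on the segment. (The hypothesis $\eta_{t,i}\le\tfrac14$ of Lemma~\ref{lem:lemma_16_zimmer} should follow from the choice of $\eta$ and $\beta=e^{1/\log^2 T}$ in Theorem~\ref{thm:regret_bound}, since each $\eta_{t,i}$ starts minuscule and is multiplied by $\beta$ only $O(\log T)$ times.)

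Given these facts, the remainder is bookkeeping. For the first bound $\hat\ell_t$ is supported on $i_t$ with $\hat\ell_{t,i_t}=\ell_{t,i_t}/w_{t,i_t}$, so on the segment $\|\hat\ell_t\|^2_{\nabla^2\Psi_t^{-1}(w)}=\eta_{t,i_t}w_{i_t}^{3/2}\ell_{t,i_t}^2/w_{t,i_t}^2\le 2\eta_{t,i_t}w_{t,i_t}^{-1/2}$; taking $\mathbb{E}_{i_t\sim w_t}$ cancels the sampling probability $w_{t,i_t}$ against $w_{t,i_t}^{-1/2}$ to give $\sum_i\eta_{t,i}\sqrt{w_{t,i}}$, and Jensen's inequality $\mathbb{E}[\sqrt{w_{t,i}}]\le\sqrt{\mathbb{E}[w_{t,i}]}$ over the corralling algorithm's internal randomness through round $t$ produces the stated right-hand side (up to the constant). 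For the second bound I would split on $\{i_t=j\}$: on $\{i_t\ne j\}$ the correction term vanishes and the first-bound estimate summed over $i\ne j$ reappears; on $\{i_t=j\}$, which has probability $w_{t,j}$, the vector $\hat\ell_t-\ell_{t,j}\pmb 1$ has coordinate $j$ equal to $\ell_{t,j}(1/w_{t,j}-1)$ and every other coordinate equal to $-\ell_{t,j}$, so $\|\cdot\|^2_{\nabla^2\Psi_t^{-1}(w)}=\eta_{t,j}w_j^{3/2}\ell_{t,j}^2(1-w_{t,j})^2/w_{t,j}^2+\sum_{i\ne j}\eta_{t,i}w_i^{3/2}\ell_{t,j}^2$; bounding $w_i^{3/2}\le 2w_{t,i}^{3/2}\le 2w_{t,i}$, $(1-w_{t,j})^2\le 1$, $\ell_{t,j}^2\le 1$, multiplying by $w_{t,j}$, and applying Jensen gives the extra $\tfrac{\eta_{t,i}+\eta_{t,j}}{2}\mathbb{E}[w_{t,i}]$ contributions. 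I would use the proof of Lemma~11 in \cite{zimmert2018optimal} as the template for fixing the precise constants.

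The main obstacle I anticipate is not conceptual but the careful accounting in the $i_t=j$ case of the second bound: keeping straight which terms scale like $\sqrt{w}$ and which like $w$, and pairing each against the sampling probability $w_{t,j}$ so that the final constants match the statement, while simultaneously verifying that the hypotheses of Lemma~\ref{lem:lemma_16_zimmer} hold on the relevant segments (the step-size cap $\eta_{t,i}\le\tfrac14$, and that the shifted loss stays $>-1$, which is exactly where $\ell_t\in[0,1]$ and the specific choice $\alpha\in\{0,\ell_{t,j}\}$ with $\ell_{t,j}\le 1$ are used).
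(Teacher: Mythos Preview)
Your plan is correct and matches the paper's argument in structure. The one refinement the paper makes, which is precisely the ``template for fixing the precise constants'' you allude to, is that Lemma~\ref{lem:lemma_16_zimmer} is only invoked on coordinates where the shifted loss is strictly negative (namely the coordinates $i\ne j$ on the event $i_t=j$, where the entry is $-\ell_{t,j}$). On every other coordinate the shifted loss is nonnegative, and since $\nabla\Psi_t^*(Y)_i=\bigl(1-\tfrac{\eta_{t,i}}{2}Y_i\bigr)^{-2}$ is monotone increasing in $Y_i$ on $(-\infty,0]$, one gets $\tilde w_{t+1,i}\le w_{t,i}$ directly, so the coordinate-wise maximum on the segment is attained at $w_{t,i}$ with no factor of $2$. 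Using Lemma~\ref{lem:lemma_16_zimmer} uniformly as you do is valid but loses exactly that factor, which is why your first bound comes out as $\sum_i\eta_{t,i}\sqrt{w_{t,i}}$ rather than $\sum_i\tfrac{\eta_{t,i}}{2}\sqrt{w_{t,i}}$. Otherwise the case split on $\{i_t=j\}$, the treatment of the $j$-th coordinate via $(1-w_{t,j})^2\le 1$, the use of $w_{t,i}^{3/2}\le w_{t,i}$, and the final Jensen step are all exactly as in the paper.
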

\begin{proof}
First notice that:
\begin{align*}
    &\mathbb{E}\left[\max_{w \in [w_{t},\nabla\Psi_t^*(\nabla\Psi_t(w_{t}) -\hat\ell_t + \alpha\pmb{1}_k)]} \|\hat\ell_t -\alpha\pmb{1}_k\|_{\nabla^2\Psi_t^{-1}(w)}^2\right]\\
    \leq &\mathbb{E}\left[\sum_{i=1}^K \max_{w_i \in [w_{t,i},\nabla\Psi_t^*(\nabla\Psi_t(w_{t}) -\hat\ell_t + \alpha\pmb{1}_k)_i]}\frac{\eta_{t,i}}{2} w_i^{3/2}(\hat\ell_{t,i} - \alpha)^2 \right]
\end{align*}
From the definition of $\nabla \Psi^*(Y)_i$ (Equation~\ref{eq:potential_identities}) we know that $\nabla \Psi^*(Y)_i$ is increasing on $(-\infty, 0]$ and hence for $\alpha = 0$ we have $w_{t,i} \geq \nabla\Psi_t^*(\nabla\Psi_t(w_{t}) -\hat\ell_t)_i$. This implies the maximum of each of the terms is attained at $w_i = w_{t,i}$. Thus
\begin{align*}
    &\mathbb{E}\left[\sum_{i=1}^K \max_{w_i \in [w_{t,i},\nabla\Psi_t^*(\nabla\Psi_t(w_{t}) -\hat\ell_t + \alpha\pmb{1}_k)_i]}\frac{\eta_{t,i}}{2} w_i^{3/2}(\hat\ell_{t,i})^2 \right]\\
    =&\mathbb{E}\left[\sum_{i=1}^K \frac{\eta_{t,i}}{2} w_{t,i}^{3/2}\chi_{(i_t = i)}\frac{\ell_{t,i}^2}{w_{t,i}^2}\right] = \mathbb{E}\left[\sum_{i=1}^K \frac{\eta_{t,i}}{2} w_{t,i}^{3/2} \frac{\ell_{t,i}^2}{w_{t,i}}\right] \leq \sum_{i=1}^K \frac{\eta_{t,i}}{2}\sqrt{\mathbb{E}[w_{t,i}]}.
\end{align*}
When $\alpha = \chi_{(i_t = j)}\ell_{t,j}$ we consider several cases. First if $i_t \neq j$ the same bound as above holds. Next if $i_t = j$ for all $i\neq j$ we have $\hat\ell_{t,i} - \alpha = -\alpha = -\ell_{t,j} \geq -1$ and for $\nabla \Phi_t^*(\nabla \Phi_t(w_t) -\hat\ell_t+ \ell_{t,j}) = \nabla \Phi_t^*(\nabla \Phi_t(w_t) + \ell_{t,j}) \leq 2^{2/3}w_{t,i}$ by Lemma~\ref{lem:lemma_16_zimmer}. This implies that in this case the maximum in the terms is bounded by $2w_{t,i}^{3/2}\ell_{t,j}^2$. Finally if $i_t=j$ for the $j$-th term we again use the fact that $w_{t,j} \geq \nabla\Psi_t^*(\nabla\Psi_t(w_{t}) -\hat\ell_t + \ell_{t,j})_j$ since $-\hat\ell_{t,j} + \ell_{t,j} \leq 0$. Combining all of the above we have
\begin{align*}
    &\mathbb{E}\left[\max_{w \in [w_{t},\nabla\Psi_t^*(\nabla\Psi_t(w_{t}) -\hat\ell_t + \alpha\pmb{1}_k)]} \|\hat\ell_t - \chi_{(i_t = j)}\ell_{t,j}\pmb{1}\|_{\nabla^2\Psi_t^{-1}(w)}^2\right] \leq \sum_{i\neq j} \frac{\eta_{t,i}}{2}\sqrt{\mathbb{E}[w_{t,i}]}\\
    + &\mathbb{E}\left[\chi_{(i_t=j)}\left(\frac{\eta_{t,j}}{2}\left(\frac{\ell_{t,j}}{w_{t,j}} - \ell_{t,j}\right)^2 w_{t,j}^{3/2} +\sum_{i\neq j} \ell_{t,j}^2 \frac{\eta_{t,i}}{2}w_{t,i}^{3/2}\right)\right]\\
    =&\sum_{i\neq j} \frac{\eta_{t,i}}{2}\sqrt{\mathbb{E}[w_{t,i}]} + \mathbb{E}\left[\frac{\eta_{t,j}}{2}\left(\ell_{t,j}(1 - w_{t,j})\right)^2 w_{t,j}^{1/2} +\sum_{i\neq j} \ell_{t,j}^2 \frac{\eta_{t,i}}{2}w_{t,i}^{3/2}w_{t,j}\right]\\
    \leq&\sum_{i\neq j} \frac{\eta_{t,i} + \eta_{t,j}}{2}\left(\sqrt{\mathbb{E}[w_{t,i}]} + \mathbb{E}[w_{t,i}]\right).
\end{align*}
\end{proof}
Now the stability term is bounded by Lemma~\ref{lem:lemma_11_zimmert}.
Next we proceed to bound the penalty term in a slightly different way. Direct computation yields
\begin{equation}
    \label{eq:stability_dual}
    \begin{aligned}
    D_{\Phi_t}(-\hat L_{t-1}, \nabla \Phi_t^*(u)) - D_{\Phi_t}(-\hat L_{t}, \nabla \Phi_t^*(u)) &=-\Phi_t(-\hat L_t) + \Phi_t(-\hat L_{t-1}) - \langle -\hat L_{t-1} + \hat L_t, u\rangle\\
    &+ \Phi_t(\nabla\Phi_t^*(u)) - \Phi_t(\nabla\Phi_t^*(u))\\
    &=-\Phi_t(-\hat L_t) + \Phi_t(-\hat L_{t-1}) - \langle \hat\ell_t, u\rangle.
    \end{aligned}
\end{equation}
Using the next lemma and telescoping will result in a bound for the sum of the penalty terms
\begin{lemma}
\label{lem:lemma_12_zimmert}
Let $u = e_{i^*}$ be the optimal algorithm. For any $w_{t+1}$ such that $w_{t+1} = \nabla\Phi_{t+1}(-\hat L_t)$ and $\eta_{t+1} \leq \eta_t$ it holds that
\begin{align*}
    D_{\Phi_{t+1}}(-\hat L_{t}, \nabla \Phi_{t+1}^*(u)) - D_{\Phi_t}(-\hat L_{t}, \nabla \Phi_t^*(u)) \leq 4\sum_{i\neq i^*}\left(\frac{1}{\eta_{t+1,i}} - \frac{1}{\eta_{t,i}}\right)\left(\sqrt{w_{t+1,i}} - \frac{1}{2}w_{t+1,i}\right).
\end{align*}
\end{lemma}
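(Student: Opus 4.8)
The plan is to reduce the claimed inequality to the explicit closed form of $\Psi$ together with two elementary one-variable estimates, exploiting that the comparator $u=e_{i^*}$ lies in $\Delta^{K-1}$. First I would rewrite each of the two divergences using the Fenchel identity $D_f(x,y)=f(x)+f^*(\nabla f(y))-\langle\nabla f(y),x\rangle$ stated in the excerpt, applied with $f=\Phi_t$ (resp.\ $\Phi_{t+1}$), $x=-\hat L_t$, and $y=\nabla\Phi_t^*(u)$. Since $\nabla\Phi_t(\nabla\Phi_t^*(u))=u$ and $\Phi_t^*(u)=(\Psi_t+\I_{\Delta^{K-1}})(u)=\Psi_t(u)$ because $u\in\Delta^{K-1}$, this gives
\[
D_{\Phi_t}(-\hat L_t,\nabla\Phi_t^*(u)) = \Phi_t(-\hat L_t)+\Psi_t(u)+\langle\hat L_t,u\rangle,
\]
and the analogous identity with $t$ replaced by $t+1$. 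Subtracting, the linear terms $\langle\hat L_t,u\rangle$ cancel, leaving
\[
D_{\Phi_{t+1}}(-\hat L_t,\nabla\Phi_{t+1}^*(u)) - D_{\Phi_t}(-\hat L_t,\nabla\Phi_t^*(u)) = \big(\Phi_{t+1}(-\hat L_t)-\Phi_t(-\hat L_t)\big) + \big(\Psi_{t+1}(u)-\Psi_t(u)\big).
\]

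Next I would bound the conjugate difference. Since $w_{t+1}=\nabla\Phi_{t+1}(-\hat L_t)$ is, by the envelope theorem, the maximizer in $\Phi_{t+1}(-\hat L_t)=\max_{w\in\Delta^{K-1}}\langle-\hat L_t,w\rangle-\Psi_{t+1}(w)$, while this same $w_{t+1}$ is merely feasible (not necessarily optimal) for $\Phi_t(-\hat L_t)$, I get
\[
\Phi_{t+1}(-\hat L_t)-\Phi_t(-\hat L_t) \leq \Psi_t(w_{t+1})-\Psi_{t+1}(w_{t+1}) = 4\sum_{i\in[K]}\Big(\tfrac{1}{\eta_{t+1,i}}-\tfrac{1}{\eta_{t,i}}\Big)\Big(\sqrt{w_{t+1,i}}-\tfrac12 w_{t+1,i}\Big),
\]
where the equality just unfolds the definition $\Psi_t(w)=-4\sum_i\frac{1}{\eta_{t,i}}(\sqrt{w_i}-\tfrac12 w_i)$; here $\tfrac{1}{\eta_{t+1,i}}\geq\tfrac{1}{\eta_{t,i}}$ by hypothesis and $\sqrt{x}\geq x/2$ on $[0,1]$, so every summand is nonnegative.

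It remains to handle the $i=i^*$ coordinate. A direct computation using $u_{i^*}=1$ gives $\Psi_{t+1}(u)-\Psi_t(u) = -4\big(\tfrac{1}{\eta_{t+1,i^*}}-\tfrac{1}{\eta_{t,i^*}}\big)\big(\sqrt{u_{i^*}}-\tfrac12 u_{i^*}\big) = -2\big(\tfrac{1}{\eta_{t+1,i^*}}-\tfrac{1}{\eta_{t,i^*}}\big)$. Adding this to the $i=i^*$ term of the sum above yields
\[
4\Big(\tfrac{1}{\eta_{t+1,i^*}}-\tfrac{1}{\eta_{t,i^*}}\Big)\Big(\sqrt{w_{t+1,i^*}}-\tfrac12 w_{t+1,i^*}-\tfrac12\Big) \leq 0,
\]
since $\tfrac{1}{\eta_{t+1,i^*}}-\tfrac{1}{\eta_{t,i^*}}\geq 0$ and $\sqrt{x}-\tfrac x2-\tfrac12\leq 0$ on $[0,1]$ (this function vanishes at $x=1$ and has derivative $\tfrac{1}{2\sqrt x}-\tfrac12\geq 0$ there). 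Dropping this nonpositive term from the combined bound leaves exactly $4\sum_{i\neq i^*}\big(\tfrac{1}{\eta_{t+1,i}}-\tfrac{1}{\eta_{t,i}}\big)\big(\sqrt{w_{t+1,i}}-\tfrac12 w_{t+1,i}\big)$, which is the claimed inequality.

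The only delicate point is the first step: $\nabla\Psi_t$ blows up at the vertices of $\Delta^{K-1}$, so $\nabla\Phi_t^*(u)$ is not an interior point and one must be careful that $\Phi_t^*(u)=\Psi_t(u)$ and $\nabla\Phi_t(\nabla\Phi_t^*(u))=u$ are applied correctly (equivalently, one simply \emph{defines} $D_{\Phi_t}(\cdot,\nabla\Phi_t^*(u))$ by $\Phi_t(\cdot)+\Psi_t(u)-\langle\cdot,u\rangle$, as in \cite{zimmert2018optimal}, avoiding any limiting argument). Everything after that is bookkeeping with the explicit form of $\Psi$, and the hypothesis $\eta_{t+1}\leq\eta_t$ is precisely what makes the Step~2 bound and the $i^*$-cancellation in Step~3 carry the correct signs.
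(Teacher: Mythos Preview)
Your proof is correct and follows essentially the same approach as the paper: both rewrite the difference of Bregman divergences as $(\Phi_{t+1}(-\hat L_t)-\Phi_t(-\hat L_t))+(\Psi_{t+1}(u)-\Psi_t(u))$, bound the first bracket by $\Psi_t(w_{t+1})-\Psi_{t+1}(w_{t+1})$ using that $w_{t+1}$ is the maximizer for $\Phi_{t+1}$ but only feasible for $\Phi_t$, and then absorb the $i^*$ coordinate via $\sqrt{w_{t+1,i^*}}-\tfrac12 w_{t+1,i^*}\leq\tfrac12$. Your treatment of the boundary issue at $u=e_{i^*}$ is a nice clarification that the paper leaves implicit.
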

\begin{proof}
\begin{align*}
D_{\Phi_{t+1}}(-\hat L_{t}, \nabla \Phi_{t+1}^*(u)) - D_{\Phi_t}(-\hat L_{t}, \nabla \Phi_t^*(u)) &= \Phi_{t+1}(-\hat L_t) - \Phi_{t}(-\hat L_t) + \Phi_{t+1}^*(u) - \Phi_t^*(u)\\
&-\langle u, \hat L_t - \hat L_t \rangle\\
&=\Phi_{t+1}(-\hat L_t) - \Phi_{t}(-\hat L_t) + \Psi_{t+1}(u) - \Psi_t(u)\\
&= \Phi_{t+1}(-\hat L_t) - \Phi_{t}(-\hat L_t) - 2\left(\frac{1}{\eta_{t+1,i^*}} - \frac{1}{\eta_{t,i^*}}\right)\\
&= \langle w_{t+1}, -\hat L_t\rangle - \Psi_{t+1}(w_{t+1}) - \Phi_{t}(-\hat L_t)\\
&- 2\left(\frac{1}{\eta_{t+1,i^*}} - \frac{1}{\eta_{t,i^*}}\right)\\
&\leq \langle w_{t+1}, -\hat L_t\rangle - \Psi_{t+1}(w_{t+1}) - \langle w_{t+1}, -\hat L_t \rangle + \Psi_t(w_{t+1})\\
&- 2\left(\frac{1}{\eta_{t+1,i^*}} - \frac{1}{\eta_{t,i^*}}\right)\\
&\leq 4\sum_{i\neq i^*}\left(\frac{1}{\eta_{t+1,i}} - \frac{1}{\eta_{t,i}}\right)\left(\sqrt{w_{t+1,i}} - \frac{1}{2}w_{t+1,i}\right).
\end{align*} 
The first equality holds by Fenchel duality and the definition of Bregman divergence. The second equality holds by the fact that on the simplex $\Phi_t^*(\cdot) = \Psi_t(\cdot)$. The third equality holds because $\Psi_t(u) = -4(\sqrt{1} - \frac{1}{2})$. The fourth equality holds because $w_{t+1}$ is the maximizer of $\langle-\hat L_t,w \rangle + \Psi_{t+1}(w)$ and this is exactly how $\Phi_{t+1}(-\hat L_t)$ is defined. The first inequality holds because 
\begin{align*}
    -\Phi_t(-\hat L_t) &= \max_{w \in \Delta^{K-1}} \langle-\hat L_t,w \rangle + \Psi_{t}(w)\\
    &\leq \langle-\hat L_t,w_{t+1} \rangle + \Psi_t(w_{t+1}).
\end{align*}
The final inequality holds because $\Psi_t(w_{t+1}) - \Psi_{t+1}(w_{t+1}) = 4\sum_{i}(1/\eta_{t+1,i} - 1/\eta_{t,i})(\sqrt{w_{t+1,i}} - w_{t+1,i}/2)$ and the fact that
$\sqrt{w_{t+1,i^*}} - \frac{1}{2}w_{t+1,i^*} \leq \frac{1}{2}$.
\end{proof}

Next we focus on the OMD update. By the 3-point rule for Bregman divergence we can write
\begin{align*}
    \langle \hat \ell_{t}, w_{t} - u \rangle &= \langle \nabla \Psi_t(w_t) - \nabla \Psi_t(\tilde w_{t+1}), w_{t} - u\rangle = D_{\Psi_{t}}(u,w_t) - D_{\Psi_{t}}(u,\tilde w_{t+1}) + D_{\Psi_{t}}(w_t,\tilde w_{t+1})\\
    &\leq D_{\Psi_{t}}(u,w_t) - D_{\Psi_{t}}(u,\hat w_{t+1}) + D_{\Psi_{t}}(w_t,\tilde w_{t+1}),\\
     \langle \hat \ell_{t+1}, \hat w_{t+1} - u \rangle &\leq D_{\Psi_{t+1}}(u,\hat w_{t+1}) - D_{\Psi_{t+1}}(u,\hat w_{t+2}) + D_{\Psi_{t+1}}(\hat w_{t+1},\tilde w_{t+2}),
\end{align*}
where the first inequality follows from the fact that $D_{\Psi_t}(u,\tilde w_{t+1}) \leq D_{\Psi_t}(u,\hat w_{t+1})$ as $\hat w_{t+1}$ is the projection of $\tilde w_{t+1}$ with respect to the Bregman divergence onto $\Delta^{K-1}$.

We now explain how to control each of the terms. First we begin by matching $D_{\Psi_{t+1}}(u, \hat w_{t+1})$ with $-D_{\Psi_t}(u, \hat w_{t+1})$.
\begin{eqnarray}
    D_{\Psi_{t+1}}(u, \hat w_{t+1})-D_{\Psi_t}(u, \hat w_{t+1}) &=& \Psi_{t+1}(u) - \Psi_t(u) + \Psi_t(\hat w_{t+1}) - \Psi_{t+1}(\hat w_{t+1})\nonumber\\
    &&+ \langle \nabla \Psi_t(\hat w_{t+1}),u-\hat w_{t+1}\rangle - \langle \nabla \Psi_{t+1}(\hat w_{t+1}), u - \hat w_{t+1}\rangle\nonumber\\
    &=& -2\left(\frac{1}{\eta_{t+1,i^*}}- \frac{1}{\eta_{t,i^*}}\right) \nonumber\\
    &&- 4\sum_{i} \left(\sqrt{\hat w_{t+1,i}} - \frac{1}{2} \hat w_{t+1,i}\right) \left(\frac{1}{\eta_{t,i}} - \frac{1}{\eta_{t+1,i}} \right)\nonumber\\
    &&-2\left(\frac{1}{\sqrt{\hat w_{t+1,i^*}}} - 1\right)\left(\frac{1}{\eta_{t,i^*}} - \frac{1}{\eta_{t+1,i^*}}\right) \nonumber\\
    &&+ 2\sum_{i} \hat w_{t+1,i} \left(\frac{1}{\sqrt{\hat w_{t+1,i}}} - 1 \right) \left(\frac{1}{\eta_{t,i}} - \frac{1}{\eta_{t+1,i}} \right),\nonumber\\
    &=& 2\left(\frac{1}{\eta_{t,i^*}}- \frac{1}{\eta_{t+1,i^*}}\right) \nonumber\\
    &&-2\left(\frac{1}{\sqrt{\hat w_{t+1,i^*}}} - 1\right)\left(\frac{1}{\eta_{t,i^*}} - \frac{1}{\eta_{t+1,i^*}}\right) \nonumber\\
    &&- 2\sum_{i} \sqrt{\hat w_{t+1,i}}
    \left(\frac{1}{\eta_{t,i}} - \frac{1}{\eta_{t+1,i}} \right),\nonumber
\end{eqnarray}
where we have set $u = e_{i^*}$. Since the step size schedule is non-decreasing during OMD updates, we have that the above is bounded by
\begin{equation}
\label{eq:neg_regret}
    \begin{aligned}
        D_{\Psi_{t+1}}(u, \hat w_{t+1})-D_{\Psi_t}(u, \hat w_{t+1}) &\leq 2\left(\frac{1}{\eta_{t,i^*}}- \frac{1}{\eta_{t+1,i^*}}\right) - 2\left(\frac{1}{\sqrt{\hat w_{t+1,i^*}}} - 1\right)\left(\frac{1}{\eta_{t,i^*}} - \frac{1}{\eta_{t+1,i^*}}\right)\\
        &\leq -2\left(\frac{1}{\sqrt{\hat w_{t+1,i^*}}} - 2\right)\left(\frac{1}{\eta_{t,i^*}} - \frac{1}{\eta_{t+1,i^*}}\right).
    \end{aligned}
\end{equation}

Next we explain how to control the terms $D_{\Psi_t}(w_t,\tilde w_{t+1})$ and $D_{\Psi_{t+1}}(\hat w_{t+1}, \tilde w_{t+2})$. These can be thought of as the stability terms in the FTRL update.

\begin{lemma}
\label{lem:omd_stability}
For iterates generated by the OMD step in Equation~\ref{eq:omd_step} and any $j$ it holds that
\begin{align*}
    \mathbb{E}[D_{\Psi_t}(w_t,\tilde w_{t+1})] &\leq \sum_{i=1}^K \frac{\eta_{t,i}}{2}\sqrt{\mathbb{E}[w_{t,i}]},\\
    \mathbb{E}[D_{\Psi_t}(w_t,\tilde w_{t+1})]&\leq \sum_{i\neq j} \frac{\eta_{t,i}}{2}\sqrt{\mathbb{E}[w_{t,i}]} + \frac{\eta_{t,i} + \eta_{t,j}}{2}\mathbb{E}[w_{t,i}],\\
    \mathbb{E}[D_{\Psi_{t+1}}(\hat w_{t+1},\tilde w_{t+2})] &\leq \sum_{i=1}^K \frac{\eta_{t+1,i}}{2}\sqrt{\mathbb{E}[\hat w_{t+1,i}]},\\
    \mathbb{E}[D_{\Psi_{t+1}}(\hat w_{t+1},\tilde w_{t+2})]&\leq \sum_{i\neq j} \frac{\eta_{t+1,i}}{2}\sqrt{\mathbb{E}[w_{t,i}]} + \frac{\eta_{t+1,i} + \eta_{t+1,j}}{2}\mathbb{E}[w_{t,i}],
\end{align*}
where $\tilde w_{t+1}$ is any iterate such that $\hat w_{t+1} = \argmin_{w \in \Delta^{K-1}}D_{\Psi_t}(w, \tilde w_{t+1})$.
\end{lemma}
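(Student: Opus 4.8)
The plan is to treat $D_{\Psi_t}(w_t,\tilde w_{t+1})$ and $D_{\Psi_{t+1}}(\hat w_{t+1},\tilde w_{t+2})$ exactly the way the stability term of an FTRL step is handled in the display preceding Lemma~\ref{lem:lemma_11_zimmert}: dualize via Fenchel, Taylor-expand $\Psi_t^*$, and then invoke Lemma~\ref{lem:lemma_11_zimmert}. The only genuinely new ingredient is passing from $\hat w_{t+1}$ back to $w_t$ in the last bound.

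\textbf{Choice of pre-image.} A point $\tilde w_{t+1}$ satisfies $\hat w_{t+1}=\argmin_{w\in\Delta^{K-1}}D_{\Psi_t}(w,\tilde w_{t+1})$ iff $\nabla\Psi_t(\tilde w_{t+1})=\nabla\Psi_t(w_t)-\hat\ell_t+c\,\pmb{1}_K$ for some scalar $c$, since the Lagrange condition for the projection onto the simplex fixes $\nabla\Psi_t(\hat w_{t+1})$ only up to a multiple of $\pmb{1}_K$; the choice $c=0$ is the update in Equation~\ref{eq:omd_step}, and $c=\chi_{(i_t=j)}\ell_{t,j}$ is equally admissible because $\chi_{(i_t=j)}\ell_{t,j}\leq 1\ll 2/\eta_{t,i}$ keeps the point in the domain of $\nabla\Psi_t^*$. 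Moreover the three-point identity $\langle\hat\ell_t,w_t-u\rangle=D_{\Psi_t}(u,w_t)-D_{\Psi_t}(u,\tilde w_{t+1})+D_{\Psi_t}(w_t,\tilde w_{t+1})$ holds for every such pre-image, because $\langle c\,\pmb{1}_K,w_t-u\rangle=0$ on the simplex; so in the regret decomposition we are free to pick the $c$ adapted to the arm $j$ whose variance we want to control, and the two displayed bounds correspond to the choices $c=0$ and $c=\chi_{(i_t=j)}\ell_{t,j}$.

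\textbf{The local-norm bound.} Using $D_f(x,y)=D_{f^*}(\nabla f(y),\nabla f(x))$ and a second-order Taylor expansion of $\Psi_t^*$,
\[D_{\Psi_t}(w_t,\tilde w_{t+1})=D_{\Psi_t^*}\big(\nabla\Psi_t(w_t)-\hat\ell_t+c\,\pmb{1}_K,\ \nabla\Psi_t(w_t)\big)=\tfrac12\big\|\hat\ell_t-c\,\pmb{1}_K\big\|^2_{\nabla^2\Psi_t^{-1}(w)}\]
for some $w=\nabla\Psi_t^*(z)$ with $z$ on the segment between those two dual points; because $\Psi_t$ is separable and $\hat\ell_t$ is supported on the single coordinate $i_t$, this $w$ lies in the box $[w_t,\ \nabla\Psi_t^*(\nabla\Psi_t(w_t)-\hat\ell_t+c\,\pmb{1}_K)]$. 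With $c=0$ the right-hand side is at most the quantity whose expectation Lemma~\ref{lem:lemma_11_zimmert} bounds (with $\alpha=0$), which gives the first bound; with $c=\chi_{(i_t=j)}\ell_{t,j}$ it falls under the second part of Lemma~\ref{lem:lemma_11_zimmert} with $\alpha=\chi_{(i_t=j)}\ell_{t,j}$, which gives the second bound. Running the same reduction with $\Psi_{t+1},\hat w_{t+1},\hat\ell_{t+1}$ in place of $\Psi_t,w_t,\hat\ell_t$ (the Taylor integration point being controlled by Lemma~\ref{lem:lemma_16_zimmer}, applicable since the surrogate losses are nonnegative and, as $\beta=e^{1/\log{T}^2}$ inflates each step size by a factor $\beta^{O(\log{T})}=e^{O(1/\log{T})}=O(1)$ over the $T$ rounds, $\eta_{t+1,i}\leq\tfrac14$) yields the third bound, in terms of $\mathbb{E}[\hat w_{t+1,i}]$, via Lemma~\ref{lem:lemma_11_zimmert} applied with the iterate $\hat w_{t+1}$ (whose smoothed version draws $i_{t+1}$) and step sizes $\eta_{t+1}$.

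\textbf{Passing to $w_t$, and the main obstacle.} For the fourth bound one must replace $\mathbb{E}[\hat w_{t+1,i}]$ by $O(\mathbb{E}[w_{t,i}])$ for $i\neq j$, and absorb the $i=j$ term using the same case split on $\{i_{t+1}=j\}$ that passes from the first to the second part of Lemma~\ref{lem:lemma_11_zimmert}. The delicate point — which I expect to be the crux — is the first OMD sub-step $w_t\to\hat w_{t+1}$: writing $\hat w_{t+1}=\nabla\Psi_t^*\big(\nabla\Psi_t(w_t)-(\hat\ell_t-\nu\,\pmb{1}_K)\big)$ for the normalizing multiplier $\nu\geq 0$, the projection inflates coordinate $i$ by the factor $\big(1-\tfrac12\eta_{t,i}\nu\sqrt{\tilde w_{t+1,i}}\big)^{-2}$, and one needs this to be $O(1)$ for every $i\neq j$. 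I would argue that $\nu$ cannot be large: the deficit $1-\sum_i\tilde w_{t+1,i}=w_{t,i_t}-\tilde w_{t+1,i_t}$ is at most $w_{t,i_t}\leq 1$; a single importance-weighted step shrinks a coordinate by at most a constant factor when $\eta_{t,i}$ is polylogarithmically small; and the mass-sum grows at rate $\Omega(\eta_{t,i})$ in $\nu$ once some coordinate is $\Theta(1)$; together with the trivial bound $\hat w_{t+1,i}\leq 1$ this pins each inflation factor to a constant. With the inflation under control, the fourth bound follows from the third exactly as in the FTRL case.
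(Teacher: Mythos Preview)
Your argument for the first two inequalities is exactly the paper's: observe that the projected iterate $\hat w_{t+1}$ is invariant under shifting $\hat\ell_t$ by any constant multiple of $\pmb 1_K$, take the shift $c\in\{0,\chi_{(i_t=j)}\ell_{t,j}\}$, dualize $D_{\Psi_t}(w_t,\tilde w_{t+1})=D_{\Psi_t^*}(\nabla\Psi_t(w_t)-\hat\ell_t+c\,\pmb 1_K,\nabla\Psi_t(w_t))$, Taylor-expand, and apply Lemma~\ref{lem:lemma_11_zimmert}. For the remaining pair the paper simply writes that they ``follow similarly'' and stops.

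Your treatment of the fourth inequality therefore goes beyond the paper, and here your bridging step has a genuine gap. Repeating the argument for $D_{\Psi_{t+1}}(\hat w_{t+1},\tilde w_{t+2})$ yields a bound in $\hat w_{t+1,i}$, not $w_{t,i}$; for $i\neq i_t$ one has $\tilde w_{t+1,i}=w_{t,i}$ and, since the mass deficit forces the normalizing multiplier $\nu\geq 0$, the projection gives $\hat w_{t+1,i}\geq w_{t,i}$ almost surely---so replacing $\hat w_{t+1,i}$ by $w_{t,i}$ in an \emph{upper} bound goes the wrong way, no matter how tightly you control $\nu$. The auxiliary claim that a single importance-weighted step shrinks a coordinate by at most a constant factor likewise fails when $w_{t,i_t}$ is polynomially small, since the weight $1/w_{t,i_t}$ is then polynomially large. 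Given that the paper offers no argument beyond ``follow similarly'' and that its downstream application in Theorem~\ref{thm:corral_bound_stoch} collects these terms under yet other iterates anyway, the most plausible reading is that the fourth display is meant with $\hat w_{t+1,i}$ in place of $w_{t,i}$---a typographical slip in the lemma statement rather than an extra step you are missing.
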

\begin{proof}
We show the first two inequalities. The second couple of inequalities follow similarly. First we notice that we have
\begin{align*}
    \hat w_{t+1} = \argmin_{w \in \Delta^{K-1}} \langle w,\hat \ell_t \rangle + D_{\Psi_t}(w,w_{t+1}) = \argmin_{w \in \Delta^{K-1}} \langle w,\hat \ell_t - \alpha \pmb{1}_k \rangle + D_{\Psi_t}(w,w_{t+1}),
\end{align*}
for any $\alpha$. This implies that $\hat w_{t+1} = \argmin_{w \in \Delta^{K-1}}D_{\Psi_t}(w, \tilde w_{t+1})$ for $\tilde w_{t+1} = \argmin_{w \in \mathbb{R}^K} \langle w,\hat \ell_t - \alpha \pmb{1}_k \rangle + D_{\Psi_t}(w,w_{t+1})$. We can now write
\begin{align*}
    D_{\Psi_{t}^*}(\nabla \Psi_{t}(\tilde w_{t+1}) , \nabla \Psi_{t}(w_t)) &=  D_{\Psi_{t}^*}( \nabla\Psi_t(w_t) - \hat \ell_t + \alpha \pmb{1}_k, \nabla \Psi_{t}(w_t))\\
    &\leq \max_{w \in [w_t, \nabla \Psi_t^*(\nabla\Psi_t(w_t) -\hat\ell_t + \alpha \pmb{1}_k)]}\|\hat\ell_t - \alpha \pmb{1}_k \|_{\nabla^2 \Psi_t^{-1}(w)}^2.
\end{align*}
The proof is finished by Lemma~\ref{lem:lemma_11_zimmert}.
\end{proof}

Finally we explain how to control $D_{\Psi_t}(u,w_t)$ and $D_{\Psi_{t+1}}(u,\hat w_{t+2})$. First by Lemma~\ref{lem:duality_eq} it holds that $$D_{\Psi_t}(u,w_t) = D_{\Phi_t}(-L_{t-1}, \nabla \Phi_t^*(u)).$$ This term can now be combined with the term $-D_{\Phi_{t-1}}(-L_{t-1}, \nabla \Phi_{t-1}^*(u))$ coming from the prior FTRL update and both terms can be controlled through Lemma~\ref{lem:lemma_12_zimmert}. To control $-D_{\Psi_{t+1}}(u,\hat w_{t+2})$ we show that $-D_{\Psi_{t+1}}(u,\hat w_{t+2}) = -D_{\Phi_{t+1}}(-\hat L_{t+1}, \nabla \Phi_{t+1}^*(u))$. This is done by showing that if $\hat w_{t+1}$ and $\hat w_{t+2}$ are defined as in Equation~\ref{eq:omd_step} we can equivalently write $\hat w_{t+2}$ as an FTRL step coming from a slightly different loss.

\begin{lemma}
\label{lem:ftrl_omd_equiv}
Let $\hat w_{t+2}$ be defined as in Equation~\ref{eq:omd_step}. Let $\nu_{t+1}$ be the constant such that $\nabla \Phi_{t+1}(-\hat L_t) = \nabla \Psi_{t+1}^*(-\hat L_t + \nu_t\pmb{1}_k)$. Let $\hat L_{t+1} = (\pmb 1_k - \e)\odot(\hat L_{t} - (\nu_{t-1} + \nu_t)\pmb{1}_k) + \frac{1}{\beta}\e\odot((\hat L_t - (\nu_{t-1}+\nu_t)\pmb{1}_k)) + \hat\ell_{t+1}$ and $\eta_{t+2} = \eta_{t+1}$. Then $(\hat L_{t+1})_i \geq 0$ for all $i\in[K]$ and $\hat w_{t+2} = w_{t+2} = \nabla \Phi_{t+2}(-\hat L_{t+1})$.
\end{lemma}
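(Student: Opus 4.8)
The plan is to propagate the two iterates produced inside the \texttt{NEG-REG-STEP} block through the dual map and show that the composition (OMD step)--(step-size rescaling on $\mathcal{R}_t$)--(OMD step) collapses to a single FTRL update $w\mapsto\nabla\Phi_{t+2}(-\hat L_{t+1})$ on the modified cumulative loss $\hat L_{t+1}$. This rests on three elementary properties of the separable Tsallis potential. \emph{(a)} The Bregman projection onto $\Delta^{K-1}$ for a separable potential is a scalar shift in the dual: the Lagrangian stationarity condition for $\argmin_{w\in\Delta^{K-1}}D_{\Psi_t}(w,\tilde w)$ reads $\nabla\Psi_t(w)=\nabla\Psi_t(\tilde w)+\lambda\pmb{1}_k$, so $w=\nabla\Psi_t^*(\nabla\Psi_t(\tilde w)+\lambda\pmb{1}_k)$ for the unique normalising $\lambda$ (uniqueness by strict convexity of $\Psi_t$), and by Lemma~\ref{lem:ftrl_lagrange_mult} this is precisely the FTRL representation $\nabla\Phi_t(\cdot)$. \emph{(b)} From Equation~\ref{eq:potential_identities}, $\nabla\Psi_{t+1}(w)_i=\frac{\eta_{t,i}}{\eta_{t+1,i}}\nabla\Psi_t(w)_i$, so switching from $\Psi_t$ to $\Psi_{t+1}$, which multiplies $\eta_{t,i}$ by $\beta$ exactly on $i\in\mathcal{R}_t$, rescales the dual point of any fixed $w$ coordinatewise by $\frac{1}{\beta}$ on $\mathcal{R}_t$ and leaves it unchanged off $\mathcal{R}_t$. \emph{(c)} $\nabla\Psi_t^*$ takes values in the positive orthant, the importance-weighted losses returned by \texttt{PLAY-ROUND} are coordinatewise nonnegative, and (using $\eta_{t,i}\le\frac{1}{4}$ together with Lemma~\ref{lem:lemma_16_zimmer}) the dual points arising stay in the region where $\nabla\Psi_t$ and $\nabla\Psi_t^*$ are mutually inverse, so Lemma~\ref{lem:ftrl_lagrange_mult} applies at each sub-step.

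\textbf{Main steps.} Entering the block, $w_t$ is the last FTRL iterate, so $\nabla\Psi_t(w_t)=-\hat L+\nu\pmb{1}_k$ with $(\hat L-\nu\pmb{1}_k)$ coordinatewise positive (Lemma~\ref{lem:ftrl_lagrange_mult}), where $\hat L$ is the loss for which $w_t=\nabla\Phi_t(-\hat L)$. The first OMD step gives $\nabla\Psi_t(\tilde w_{t+1})=\nabla\Psi_t(w_t)-\hat\ell_t$; projecting via \emph{(a)} yields $\hat w_{t+1}=\nabla\Phi_t(-(\hat L+\hat\ell_t))=\nabla\Psi_t^*(-(\hat L+\hat\ell_t)+\mu\pmb{1}_k)$ for some scalar $\mu$ (the lazy-eager equivalence of OMD with a fixed regularizer), and Lemma~\ref{lem:ftrl_lagrange_mult} again gives $(\hat L+\hat\ell_t-\mu\pmb{1}_k)>0$ coordinatewise. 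Increasing the step size on $\mathcal{R}_t$ and applying \emph{(b)} with $\e=\sum_{i\in\mathcal{R}_t}\e_i$, $\nabla\Psi_{t+1}(\hat w_{t+1})=-\big[(\pmb{1}_k-\e)\odot(\hat L+\hat\ell_t-\mu\pmb{1}_k)+\frac{1}{\beta}\e\odot(\hat L+\hat\ell_t-\mu\pmb{1}_k)\big]$; the second OMD step subtracts $\hat\ell_{t+1}$, so $\nabla\Psi_{t+1}(\tilde w_{t+2})=-\hat L_{t+1}$ with $\hat L_{t+1}$ of precisely the form in the statement, after identifying $\mu$ with $\nu_{t-1}+\nu_t$ and $\hat L+\hat\ell_t$ with the bookkept $\hat L_t$. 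Nonnegativity of $\hat L_{t+1}$ is then immediate: $(\hat L+\hat\ell_t-\mu\pmb{1}_k)>0$, $\frac{1}{\beta}>0$, the masks $\pmb{1}_k-\e$ and $\e$ are $0/1$-valued, and $\hat\ell_{t+1}$ is nonnegative. Projecting once more onto $\Delta^{K-1}$ and using $\eta_{t+2}=\eta_{t+1}$ gives $\hat w_{t+2}=\nabla\Psi_{t+2}^*(-\hat L_{t+1}+\lambda'\pmb{1}_k)=\nabla\Phi_{t+2}(-\hat L_{t+1})=w_{t+2}$, which is the assertion.

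\textbf{Main obstacle.} The nonroutine point is not the chaining of \emph{(a)}--\emph{(c)} but the identification $\mu=\nu_{t-1}+\nu_t$ between the Lagrange constant of the intermediate iterate $\hat w_{t+1}$ and the shift that \texttt{OMD-STEP} subtracts, together with the fact that this matching survives inductively when earlier epochs have already run \texttt{NEG-REG-STEP} and altered $\hat L$. This requires the invariant that every iterate passed to \texttt{PLAY-ROUND} is of the form $\nabla\Phi_s(-\hat L_{s-1})$ for the running bookkept loss, with the cumulative dual shift from all past projections and step-size rescalings recorded exactly by the running multipliers $\nu_s$, and then a direct check that the explicit loss update in \texttt{OMD-STEP}/\texttt{NEG-REG-STEP} cancels those shifts precisely. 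Threading the domain conditions ($\eta_{t,i}\le\frac{1}{4}$; dual arguments lying in the half-space where $\nabla\Psi_t^*$ is well defined and inverts $\nabla\Psi_t$) through each sub-step so that Lemma~\ref{lem:ftrl_lagrange_mult} and Lemma~\ref{lem:lemma_16_zimmer} can be invoked is then bookkeeping, but it must be carried out carefully.
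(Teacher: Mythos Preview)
Your proposal is correct and follows essentially the same route as the paper's proof: write $\hat w_{t+1}=\nabla\Psi_t^*(-\hat L_t+(\nu_{t-1}+\nu_t)\pmb{1}_k)$ via the shift-by-Lagrange-multiplier characterization of the simplex projection, use the coordinatewise identity $\nabla\Psi_{t+1}(w)_i=\frac{1}{\beta}\nabla\Psi_t(w)_i$ on $\mathcal{R}_t$, subtract $\hat\ell_{t+1}$, and read off $\hat w_{t+2}=\nabla\Phi_{t+2}(-\hat L_{t+1})$; nonnegativity of $\hat L_{t+1}$ follows because $\nabla\Psi_t(\hat w_{t+1})_i\le 0$ on the simplex. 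Two minor remarks: your invocation of Lemma~\ref{lem:lemma_16_zimmer} in point~(c) is unnecessary here (that lemma is used only for the stability bounds, not for this equivalence), and the ``main obstacle'' you flag about identifying $\mu=\nu_{t-1}+\nu_t$ is not really an obstacle in the paper's treatment, since the modified cumulative loss in \texttt{OMD-STEP} is \emph{defined} using precisely those two multipliers, so the matching is by construction rather than something to be verified inductively.
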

\begin{proof}
By the definition of the update we have 
\begin{align*}
    \hat w_{t+1} &= \nabla \Phi_t(\nabla\Psi_t(w_t) - \hat \ell_t) = \nabla \Phi_t(-\hat L_t + \nu_{t-1}\pmb{1}_k)\\
    &= \nabla \Psi_t^*(-\hat L_t + (\nu_{t-1}+\nu_t)\pmb{1}_k),\\
    \hat w_{t+2} &= \nabla \Phi_{t+1} (\nabla \Psi_{t+1}(\hat w_{t+1}) - \hat\ell_{t+1}),
\end{align*}
where in the first equality we have used the fact that $\nabla \Psi_t(w_t) = -L_{t-1} + \nu_{t-1}\pmb{1}_k$.
For any $i$ such that the OMD update increased the step size, i.e. $\eta_{t+1,i} = \beta \eta_{t,i}$ it holds from the definition of $\nabla \Psi_{t+1}(\cdot)$ that $\nabla \Psi_{t+1}(w)_i = \frac{1}{\beta}\nabla\Psi_{t}(w)_i$. Since $\nabla \Psi_t^*$ inverts $\nabla \Psi_t$ coordinate wise, we can write
\begin{align*}
    \nabla \Psi_{t+1}(\hat w_{t+1})_i = \frac{1}{\beta}\nabla \Psi_{t}(\hat w_{t+1})_i = \frac{1}{\beta}(-\hat L_t + (\nu_{t-1}+\nu_t)\pmb{1}_k)_i.
\end{align*}
If we let $e$ be the the sum of all $e_i$'s such that $\eta_{t+1,i} = \beta \eta_{t,i}$ we can write 
\begin{align*}
    \hat w_{t+2} = \nabla \Phi_{t+1}\left( (\pmb{1}_k - e)\odot(-\hat L_{t} + (\nu_{t-1} + \nu_t)\pmb{1}_k) + \frac{1}{\beta}e\odot((-\hat L_t + (\nu_{t-1}+\nu_t)\pmb{1}_k)) - \hat \ell_{t+1}\right).
\end{align*}
The fact that $\hat L_{t+1,i} \geq 0$ for any $i$ follows since any coordinate $\nabla \Psi_{t}(\hat w_{t+1})_i \leq 0$ which implies that any coordinate of $(-\hat L_t + (\nu_{t-1} + \nu_t)\pmb{1}_k)_i \leq 0$.
\end{proof}
We can finally couple $-D_{\Phi_{t+1}}(-\hat L_{t+1}, \nabla \Phi_{t+1}^*(u))$ with the term from the next FTRL step which is $D_{\Phi_{t+2}}(-\hat L_{t+1}, \nabla \Phi_{t+2}^*(u))$ and use Lemma~\ref{lem:lemma_12_zimmert} to bound the sum of this two terms. Putting everything together we arrive at the following regret guarantee.

\begin{theorem}
\label{thm:corral_bound_stoch}
The regret bound for Algorithm~\ref{alg:tsallis_inf_incr_dcr} for any step size schedule which is non-increasing on the FTRL steps and any $T_0$ satisfies
\begin{align*}
    \mathbb{E}\left[\sum_{t=1}^T \langle \hat \ell_t, w_t - u \rangle \right] &\leq  \sum_{t=T_0+1}^T \sum_{i\neq i^*} \mathbb{E}[\frac{3}{2}\eta_{t,i}\sqrt{ w_{t,i}} + \frac{\eta_{t,i}+\eta_{t,i^*}}{2}w_{t,i}] + \sum_{t=1}^{T_0} \sum_{i=1}^K \mathbb{E}\left[\frac{\eta_{t,i}}{2} \sqrt{w_{t,i}}\right]\\
    &+ \sum_{t\in\cT_{OMD}} \mathbb{E}\left[-2\left(\frac{1}{\sqrt{\hat w_{t+1,i^*}}} - 3\right)\left(\frac{1}{\eta_{t,i^*}} - \frac{1}{\eta_{t+1,i^*}}\right)\right]\\
    &+ \mathbb{E}\left[\Psi_1(u) - \Psi_{1}(w_1)\right] + \mathbb{E}\left[\sum_{t \in [T]\setminus \cT_{OMD}} 4\sum_{i\neq i^*}\left(\frac{1}{\eta_{t,i}} - \frac{1}{\eta_{t-1,i}}\right)\left( \sqrt{w_{t,i}}\right) \right].
\end{align*}
\end{theorem}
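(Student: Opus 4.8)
The plan is to sum the one-round regret $\langle\hat\ell_t, w_t - u\rangle$, with $u = e_{i^*}$, over the FTRL rounds and over the round-pairs generated by \texttt{NEG-REG-STEP}, arranging every Bregman-divergence term so that the ``penalty'' pieces telescope along the whole trajectory. First I would split $[T]$ into $\cT_{OMD}$---the rounds carrying the genuine OMD update of~\eqref{eq:omd_step} together with the FTRL-form update $\hat w_{t+2}=\nabla\Phi_{t+2}(-\hat L_{t+1})$ that always follows it---and its complement, and fix the burn-in time $T_0$ at which one switches from the crude to the refined stability estimates.

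On an FTRL round $t\notin\cT_{OMD}$ I would use the stability/penalty decomposition stated right before Lemma~\ref{lem:lemma_11_zimmert}. The stability part is bounded by Lemma~\ref{lem:lemma_11_zimmert}: its first inequality for $t\le T_0$, giving $\sum_{i=1}^K\tfrac{\eta_{t,i}}2\sqrt{w_{t,i}}$; and its second inequality with $j=i^*$ for $t>T_0$ (legitimate since shifting $\hat\ell_t$ by any multiple of $\pmb 1_k$ leaves the update on $\Delta^{K-1}$ unchanged), giving $\sum_{i\ne i^*}\bigl(\tfrac32\eta_{t,i}\sqrt{w_{t,i}}+\tfrac{\eta_{t,i}+\eta_{t,i^*}}2 w_{t,i}\bigr)$. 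The penalty part equals, by~\eqref{eq:stability_dual}, $D_{\Phi_t}(-\hat L_{t-1},\nabla\Phi_t^*(u)) - D_{\Phi_t}(-\hat L_t,\nabla\Phi_t^*(u))$, which is the piece to be telescoped.

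On a round-pair $(t,t+1)$ inside \texttt{NEG-REG-STEP} I would apply the three-point identity for $D_{\Psi_t}$ (the OMD decomposition displayed before Lemma~\ref{lem:omd_stability}) to obtain
\begin{align*}
 \langle\hat\ell_t,w_t-u\rangle+\langle\hat\ell_{t+1},\hat w_{t+1}-u\rangle
 &\le D_{\Psi_t}(u,w_t) - D_{\Psi_{t+1}}(u,\hat w_{t+2}) + D_{\Psi_t}(w_t,\tilde w_{t+1}) + D_{\Psi_{t+1}}(\hat w_{t+1},\tilde w_{t+2})\\
 &\quad + \bigl[\,D_{\Psi_{t+1}}(u,\hat w_{t+1}) - D_{\Psi_t}(u,\hat w_{t+1})\,\bigr].
\end{align*}
The bracketed mismatch is the negative-regret term: by~\eqref{eq:neg_regret} (after absorbing one extra $\bigl(\tfrac1{\eta_{t,i^*}}-\tfrac1{\eta_{t+1,i^*}}\bigr)$ factor, which is what turns the constant $2$ there into the $3$ in the statement) it is at most $-2\bigl(\tfrac1{\sqrt{\hat w_{t+1,i^*}}}-3\bigr)\bigl(\tfrac1{\eta_{t,i^*}}-\tfrac1{\eta_{t+1,i^*}}\bigr)$. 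The two stability divergences are bounded by Lemma~\ref{lem:omd_stability}, again using its first inequality when $t\le T_0$ and its second when $t>T_0$. By Lemma~\ref{lem:duality_eq}, $D_{\Psi_t}(u,w_t)=D_{\Phi_t}(-\hat L_{t-1},\nabla\Phi_t^*(u))$, which joins the penalty chain of the FTRL step preceding the \texttt{NEG-REG-STEP}; and by Lemma~\ref{lem:ftrl_omd_equiv} the iterate $\hat w_{t+2}$ equals $\nabla\Phi_{t+2}(-\hat L_{t+1})$ for a cumulative loss $\hat L_{t+1}$ with nonnegative coordinates and $\eta_{t+2}=\eta_{t+1}$, so Lemma~\ref{lem:duality_eq} again gives $-D_{\Psi_{t+1}}(u,\hat w_{t+2})=-D_{\Phi_{t+2}}(-\hat L_{t+1},\nabla\Phi_{t+2}^*(u))$, the right boundary term to hand to the FTRL step after the \texttt{NEG-REG-STEP}.

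Summing everything, all the $\pm D_{\Phi_{\cdot}}(-\hat L_{\cdot},\nabla\Phi_{\cdot}^*(u))$ terms telescope; what remains is (i) at every junction where the step size strictly decreases---that is, at every update performing $1/\eta_{t}^2=1/\eta_{t-1}^2+1$---the index mismatch $D_{\Phi_t}(-\hat L_{t-1},\nabla\Phi_t^*(u))-D_{\Phi_{t-1}}(-\hat L_{t-1},\nabla\Phi_{t-1}^*(u))$, which Lemma~\ref{lem:lemma_12_zimmert} bounds by $4\sum_{i\ne i^*}\bigl(\tfrac1{\eta_{t,i}}-\tfrac1{\eta_{t-1,i}}\bigr)\bigl(\sqrt{w_{t,i}}-\tfrac12 w_{t,i}\bigr)\le 4\sum_{i\ne i^*}\bigl(\tfrac1{\eta_{t,i}}-\tfrac1{\eta_{t-1,i}}\bigr)\sqrt{w_{t,i}}$ (Lemma~\ref{lem:lemma_12_zimmert} is never invoked at OMD transitions, where the step size increases and~\eqref{eq:neg_regret} is used instead); and (ii) a boundary term that, after the warm-up phase and by standard FTRL arguments, is at most $\Psi_1(u)-\Psi_1(w_1)$. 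Since the OMD stability bounds of Lemma~\ref{lem:omd_stability} are term-for-term identical to those of Lemma~\ref{lem:lemma_11_zimmert}, the $\cT_{OMD}$ stability contributions merge into the two sums over $t\le T_0$ and $t>T_0$; taking expectations then yields the stated inequality. I expect the principal obstacle to be exactly this index bookkeeping across the transition from an FTRL step into an OMD step and back---verifying that the rescaled cumulative loss $\tilde L_{t-1}$ built in \texttt{OMD-STEP} is precisely what makes the $D_{\Phi_{\cdot}}$ chain close up, which is the reason an FTRL-form step is forced immediately after each OMD step and why Lemma~\ref{lem:ftrl_omd_equiv} is needed---together with the routine checks that the schedule is non-increasing on the FTRL sub-sequences (needed for Lemma~\ref{lem:lemma_12_zimmert}) and satisfies $\eta_{t,i}\le\tfrac14$ (needed, via Lemma~\ref{lem:lemma_16_zimmert}, for the shifted stability bound of Lemma~\ref{lem:lemma_11_zimmert}).
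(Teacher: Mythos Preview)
Your proposal is correct and follows essentially the same approach as the paper: stability/penalty decomposition for FTRL rounds via Lemma~\ref{lem:lemma_11_zimmert}, the three-point identity for the OMD rounds with stability handled by Lemma~\ref{lem:omd_stability}, the negative-regret contribution from~\eqref{eq:neg_regret}, bridging the OMD and FTRL chains via Lemmas~\ref{lem:duality_eq} and~\ref{lem:ftrl_omd_equiv}, and Lemma~\ref{lem:lemma_12_zimmert} for the residual step-size mismatches after telescoping. The only cosmetic difference is that the paper groups each \texttt{NEG-REG-STEP} as a triple $t,t{+}1,t{+}2$ (keeping the trailing FTRL-form round $t{+}2$ inside the block and applying the FTRL stability/penalty bound to it there), whereas you hand round $t{+}2$ back to the FTRL sum; both partitions close the $D_{\Phi_\cdot}$ chain identically. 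One small inaccuracy: the coefficient $\tfrac32$ on $\eta_{t,i}\sqrt{w_{t,i}}$ in the statement is not the direct output of the second inequality of Lemma~\ref{lem:lemma_11_zimmert} on a single round (that gives $\tfrac12$); it appears after aggregating and re-indexing the several per-round stability contributions produced inside each OMD block into a uniform per-$t$ bound.
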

\begin{proof}
Let $\cT_{FTRL}$ be the set of all rounds in which the FTRL step is taken except for all rounds immediately before the OMD step and immediately after the OMD step. Let $\cT_{OMD}$ be the set of all round immediately before the OMD step. The regret is bounded as follows:
\begin{align*}
    \mathbb{E}\left[\sum_{t=1}^T \langle \hat \ell_t, w_t - u \rangle \right]&= \sum_{t\in \cT_{FTRL}}\mathbb{E}\left[\langle \hat \ell_t, w_t - u \rangle\right] + \sum_{t\in [T]\setminus \cT_{FTRL}}\mathbb{E}\left[\langle \hat \ell_t, w_t - u \rangle\right]\\
    &= \sum_{t\in [T]\setminus \cT_{FTRL}}\mathbb{E}\left[\langle \hat \ell_t, w_t - u \rangle\right] +\sum_{t \in \cT_{FTRL}} \mathbb{E}\left[\langle \hat \ell_t,w_t \rangle + \Phi_t(-\hat L_t) - \Phi_t(-\hat L_{t-1})\right.\\
    &\left.+ D_{\Phi_t}(-\hat L_{t-1}, \nabla \Phi_t^*(u)) - D_{\Phi_t}(-\hat L_{t}, \nabla \Phi_t^*(u))\right].
\end{align*}
For any $T_0$, by the stability bound in Lemma~\ref{lem:lemma_11_zimmert} we have 
\begin{align*}
    \sum_{t \in \cT_{FTRL}} \mathbb{E}\left[\langle \hat \ell_t,w_t \rangle + \Phi_t(-\hat L_t) - \Phi_t(-\hat L_{t-1})\right] &\leq \sum_{t \in \cT_{FTRL}\bigcap\{[T_0]\}} \sum_{i=1}^K \frac{\eta_{t,i}}{2}\sqrt{\mathbb{E}[w_{t,i}]}\\
    &+ \sum_{t \in \cT_{FTRL}\setminus\{[T_0]\}}\sum_{i\neq i^*} \mathbb{E}[\frac{\eta_{t,i}}{2} (\sqrt{w_{t,i}} + w_{t,i})].
\end{align*}
Next we consider the penalty term
\begin{align*}
    &\sum_{t\in \cT_{FTRL}}\mathbb{E}\left[D_{\Phi_t}(-\hat L_{t-1}, \nabla \Phi_t^*(u)) - D_{\Phi_t}(-\hat L_{t}, \nabla \Phi_t^*(u))\right] = \mathbb{E}\left[D_{\Phi_1}(0, \nabla \Phi_1^*(u))\right]\\
    + &\sum_{t+1\in \cT_{FTRL}}\mathbb{E}\left[D_{\Phi_{t+1}}(-\hat L_{t}, \nabla \Phi_t^*(u)) - D_{\Phi_t}(-\hat L_{t}, \nabla \Phi_t^*(u))\right] - \mathbb{E}\left[\sum_{t \in \cT_{OMD}} D_{\Phi_{t-1}}(-\hat L_{t-1}, \nabla \Phi_{t-1}^*(u))\right]\\
    +&\mathbb{E}\left[\sum_{t \in \cT_{OMD}}D_{\Phi_{t+2}}(-\hat L_{t+1}, \nabla \Phi_{t+2}^*(u))\right] - \mathbb{E}\left[D_{\Phi_{T}}(-\hat L_{T}, \nabla \Phi_{T}^*(u))\right].
\end{align*}
We are now going to complete the penalty term by considering the extra terms which do not bring negative regret from $\sum_{t \in [T]\setminus \cT_{FTRL}} \mathbb{E}[\langle \hat \ell_t,w_t-u\rangle]$.
\begin{align*}
    &\sum_{t \in [T]\setminus \cT_{FTRL}} \mathbb{E}[\langle \hat \ell_t,w_t-u\rangle] \leq \sum_{t \in \cT_{OMD}} \mathbb{E}\left[D_{\Psi_{t}}(u,w_t) - D_{\Psi_{t}}(u,\hat w_{t+1}) + D_{\Psi_{t}}(w_t,\tilde w_{t+1})\right]\\
    +&\sum_{t \in \cT_{OMD}} \mathbb{E}\left[D_{\Psi_{t+1}}(u,\hat w_{t+1}) - D_{\Psi_{t+1}}(u,\hat w_{t+2}) + D_{\Psi_{t+1}}(\hat w_{t+1},\tilde w_{t+2})\right]\\
    +&\sum_{t \in\cT_{OMD}} \mathbb{E}\left[ \langle\hat \ell_{t+2},w_{t+2} \rangle + \Phi_{t+2}(-\hat L_{t+2}) - \Phi_{t+2}(-\hat L_{t+1})\right]\\
    +&\sum_{t \in \cT_{OMD}} \mathbb{E}\left[ D_{\Phi_{t+2}}(-\hat L_{t+1}, \nabla \Phi_{t+2}^*(u)) - D_{\Phi_{t+2}}(-\hat L_{t+2}, \nabla \Phi_{t+2}^*(u)) \right]\\
    =&\sum_{t \in \cT_{OMD}} \mathbb{E}\left[\langle\hat \ell_{t+2},w_{t+2} \rangle + \Phi_{t+2}(-\hat L_{t+2}) - \Phi_{t+2}(-\hat L_{t+1}) +  D_{\Psi_{t}}(w_t,\tilde w_{t+1}) + D_{\Psi_{t+1}}(\hat w_{t+1},\tilde w_{t+2})\right]\\
    +&\sum_{t \in \cT_{OMD}} \mathbb{E}\left[D_{\Psi_{t+1}}(u,\hat w_{t+1}) - D_{\Psi_{t}}(u,\hat w_{t+1})\right]\\
    +&\sum_{t \in \cT_{OMD}} \mathbb{E}\left[ D_{\Phi_t}(-\hat L_{t-1},\nabla \Phi_t^*(u)) - D_{\Phi_{t+2}}(-\hat L_{t+2}, \nabla \Phi_{t+2}^*(u))\right]\\
    +&\sum_{t \in \cT_{OMD}} \mathbb{E}\left[D_{\Phi_{t+2}}(-\hat L_{t+1}, \nabla \Phi_{t+2}^*(u)) - D_{\Psi_{t+1}}(u,\hat w_{t+2})\right],
\end{align*}
where in the first inequality we have used the 3-point rule for Bregman divergence and the definition of the set $\tau_{FTRL}$. For any $T_0$ the term 
\begin{align*}
    \sum_{t \in \cT_{OMD}} \mathbb{E}\left[\langle\hat \ell_{t+2},w_{t+2} \rangle + \Phi_{t+2}(-\hat L_{t+2}) - \Phi_{t+2}(-\hat L_{t+1}) +  D_{\Psi_{t}}(w_t,\tilde w_{t+1}) + D_{\Psi_{t+1}}(\hat w_{t+1},\tilde w_{t+2})\right]
\end{align*} 
is bounded by Lemma~\ref{lem:lemma_11_zimmert} and Lemma~\ref{lem:omd_stability} as follows
\begin{align*}
    &\sum_{t \in \cT_{OMD}} \mathbb{E}\left[\langle\hat \ell_{t+2},w_{t+2} \rangle + \Phi_{t+2}(-\hat L_{t+2}) - \Phi_{t+2}(-\hat L_{t+1}) +  D_{\Psi_{t}}(w_t,\tilde w_{t+1}) + D_{\Psi_{t+1}}(\hat w_{t+1},\tilde w_{t+2})\right]\\
    \leq &\sum_{t \in \cT_{OMD}\setminus \{[T_0]\}} \sum_{i\neq i^*} \mathbb{E}[\eta_{t+2,i}\sqrt{w_{t+2,i}} + \frac{\eta_{t+2,i}+\eta_{t+2,i^*}}{2}w_{t+2,i}] + \sum_{t \in \cT_{OMD}\bigcap \{[T_0]\}}\sum_{i=1}^K \frac{\eta_{t,i}}{2}\sqrt{\mathbb{E}[w_{t+2,i}]},
\end{align*}
where we have used the $i\neq i^*$ bound from the above lemmas for all terms past $T_0$ and the bound which includes all $i \in [K]$ for the first $T_0$ terms. The term $\sum_{t \in \cT_{OMD}} \mathbb{E}\left[D_{\Psi_{t+1}}(u,\hat w_{t+1}) - D_{\Psi_{t}}(u,\hat w_{t+1})\right]$ is bounded from Equation~\ref{eq:neg_regret} as follows
\begin{align*}
    \sum_{t \in \cT_{OMD}} \mathbb{E}\left[D_{\Psi_{t+1}}(u,\hat w_{t+1}) - D_{\Psi_{t}}(u,\hat w_{t+1})\right] \leq \sum_{t\in\cT_{OMD}} \mathbb{E}\left[-2\left(\frac{1}{\sqrt{\hat w_{t+1,i^*}}} - 2\right)\left(\frac{1}{\eta_{t,i^*}} - \frac{1}{\eta_{t+1,i^*}}\right)\right].
\end{align*}
By Lemma~\ref{lem:ftrl_omd_equiv} and Lemma~\ref{lem:duality_eq}
\begin{align*}
&\sum_{t \in \cT_{OMD}} \mathbb{E}\left[D_{\Phi_{t+2}}(-\hat L_{t+1}, \nabla \Phi_{t+2}^*(u)) - D_{\Psi_{t+1}}(u,\hat w_{t+2})\right]\\
= &\sum_{t \in \cT_{OMD}} \mathbb{E}\left[D_{\Phi_{t+2}}(-\hat L_{t+1}, \nabla \Phi_{t+2}^*(u)) - D_{\Phi_{t+1}}(-\hat L_{t+1}, \nabla \Phi_{t+1}^*(u))\right].
\end{align*}
Combining all of the above we have
\begin{equation}
\label{eq:half_way_there}
\begin{aligned}
    \mathbb{E}\left[\sum_{t=1}^T \langle \hat \ell_t, w_t - u \rangle \right] &\leq  \sum_{t=T_0+1}^T \sum_{i\neq i^*} \mathbb{E}[\frac{3}{2}\eta_{t,i}\sqrt{ w_{t,i}} + \frac{\eta_{t,i}+\eta_{t,i^*}}{2}w_{t,i}] + \sum_{t=1}^{T_0} \sum_{i=1}^K \mathbb{E}\left[\frac{\eta_{t,i}}{2} \sqrt{w_{t,i}}\right]\\
    &+ \sum_{t\in\cT_{OMD}} \mathbb{E}\left[-2\left(\frac{1}{\sqrt{\hat w_{t+1,i^*}}} - 2\right)\left(\frac{1}{\eta_{t,i^*}} - \frac{1}{\eta_{t+1,i^*}}\right)\right]\\
    &+\sum_{t \in [T]\setminus \cT_{OMD}}\mathbb{E}\left[D_{\Phi_{t+1}}(-\hat L_{t}, \nabla \Phi_{t+1}^*(u)) - D_{\Phi_t}(-\hat L_{t}, \nabla \Phi_t^*(u))\right]\\
    &+ \mathbb{E}[D_{\Phi_1}(0,\nabla\Phi_1^*(u))] - \mathbb{E}[D_{\Phi_T}(-\hat L_T, \nabla \Phi_T^*(u))].
\end{aligned}
\end{equation}
Using Lemma~\ref{lem:lemma_12_zimmert} we have that
\begin{align*}
    &\sum_{t \in [T]\setminus \cT_{OMD}}\mathbb{E}\left[D_{\Phi_{t+1}}(-\hat L_{t}, \nabla \Phi_{t+1}^*(u)) - D_{\Phi_t}(-\hat L_{t}, \nabla \Phi_t^*(u))\right] \leq\\
    &\sum_{t \in [T]\setminus \cT_{OMD}}\mathbb{E}\left[4\sum_{i\neq i^*} \left(\frac{1}{\eta_{t+1,i}} - \frac{1}{\eta_{t,i}}\right)\left(\sqrt{w_{t+1,i}} - \frac{1}{2}w_{t+1,i}\right)\right].
\end{align*}
By definition of $w_1$ we have $D_{\Phi_1}(0,\nabla \Phi_1^*(u)) = \Psi_1(u) - \Psi_1(w_1)$.
Plugging back into Equation~\ref{eq:half_way_there} we have
\begin{align*}
    \mathbb{E}\left[\sum_{t=1}^T \langle \hat \ell_t, w_t - u \rangle \right] &\leq  \sum_{t=T_0+1}^T \sum_{i\neq i^*} \mathbb{E}[\frac{3}{2}\eta_{t,i}\sqrt{ w_{t,i}} + \frac{\eta_{t,i}+\eta_{t,i^*}}{2}w_{t,i}] + \sum_{t=1}^{T_0} \sum_{i=1}^K \mathbb{E}\left[\frac{\eta_{t,i}}{2} \sqrt{w_{t,i}}\right]\\
    &+ \sum_{t\in\cT_{OMD}} \mathbb{E}\left[-2\left(\frac{1}{\sqrt{\hat w_{t+1,i^*}}} - 3\right)\left(\frac{1}{\eta_{t,i^*}} - \frac{1}{\eta_{t+1,i^*}}\right)\right]\\
    &+ \mathbb{E}\left[\Psi_1(u) - \Psi_{1}(w_1)\right] + \mathbb{E}\left[\sum_{t \in [T]\setminus \cT_{OMD}} 4\sum_{i\neq i^*}\left(\frac{1}{\eta_{t,i}} - \frac{1}{\eta_{t-1,i}}\right)\left( \sqrt{w_{t,i}}\right) \right].
\end{align*}
\end{proof}
The algorithm begins by running each algorithm for $\log{T}+1$ rounds. We set the probability thresholds so that $\rho_1 = 36$, $\rho_j = 2 \rho_{j-1}$ and $\frac{1}{\rho_n} \geq \frac{1}{KT}$, because we mix each $w_t$ with the uniform distribution weighted by $1/KT$. This implies $n \leq \operatorname{log}_2(T)$. The algorithm now proceeds in epochs.
The sizes of the epochs are as follows. The first epoch was of size $K\log{T}+K$, each epoch after doubles the size of the preceding one so that the number of epochs is bounded by $\log{T}$. In the beginning of each epoch, except for the first epoch we check if $w_{t,i} < \frac{1}{\rho_1}$. If it is we increase the step size $\eta_{t+1,i} = \beta \eta_{t,i}$ and run the OMD step. Let the $\tau$-th epoch have size $s_{\tau}$. Let $\frac{1}{\rho_\tau}$ be the largest threshold which was not exceeded during epoch $\tau$. We require that each of the algorithms have the following expected regret bound under the unbiased rescaling of the losses $\bar R_{i}(t)$: $\mathbb{E}[\bar R_{i}(\sum_{\tau=1}^S s_{\tau})] \leq \sum_{\tau=1}^S \mathbb{E}[\sqrt{\rho_\tau} R(s_\tau)]$. This can be ensured by restarting the algorithms in the beginning of the epochs if at the beginning of epoch $\tau$ it happens that $w_{t,i} > \frac{1}{\rho_{\tau-1}}$. Let $\ell_t$ be the loss over all possible actions. Let $i_t$ be the algorithm selected by the corralling algorithm at time $t$. Let $a^*$ be the best overall action.

\begin{lemma}
\label{lem:regret_bound}
Let $\bar R_{i^*}(\cdot)$ be a function upper bounding the expected regret of $\cA_{i^*}$, $\mathbb{E}[R_{i^*}(\cdot)]$. For any $\eta$ such that $\eta_{1,i} \leq \min_{t\in [T]} \frac{\left(1-\exp{-\frac{1}{\log{T}^2}}\right)\sqrt{t}}{50\bar R_{i}(t)}, \forall i \in [K]$ it holds that
\begin{align*}
    &\mathbb{E}\left[\sum_{t=1}^T \ell_t(a_{i_t,j_t}) - \ell_t(a^*) \right] \leq  \sum_{t=T_0+1}^T \sum_{i\neq i^*} \mathbb{E}[\frac{3}{2}(\eta_{t,i}+\eta_{t,i^*})(\sqrt{ w_{t,i}} + w_{t,i})] + \sum_{t=1}^{T_0} \sum_{i=1}^K \mathbb{E}\left[\frac{\eta_{t,i}}{2} \sqrt{w_{t,i}}\right]\\
    + &\mathbb{E}\left[\Psi_1(u) - \Psi_{1}(w_1)\right] + \mathbb{E}\left[\sum_{t \in [T]\setminus \cT_{OMD}} 4\sum_{i\neq i^*}\left(\frac{1}{\eta_{t,i}} - \frac{1}{\eta_{t-1,i}}\right)\left( \sqrt{w_{t,i}}\right) \right] + 1 + 36\mathbb{E}[R_{i^*}(T)].
\end{align*}
\end{lemma}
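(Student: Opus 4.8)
I would bridge the linearized bound of Theorem~\ref{thm:corral_bound_stoch} (which controls $\mathbb E[\sum_t\langle\hat\ell_t,w_t-u\rangle]$ for $u=e_{i^*}$) to the true corralling regret $\mathbb E[\sum_t\ell_t(a_{i_t,j_t})-\ell_t(a^*)]$ by inserting the regret of the base learner $\cA_{i^*}$ in the importance-weighted environment $\cE'$ and then paying for that regret with the negative-regret terms that Theorem~\ref{thm:corral_bound_stoch} has already reserved. First I would fix $u=e_{i^*}$, recall $a^*=a_{i^*,1}$, and write the exact identity
\[
\sum_{t=1}^T\!\big(\ell_t(a_{i_t,j_t})-\ell_t(a^*)\big)=\sum_{t=1}^T\langle\hat\ell_t,w_t-u\rangle+\sum_{t=1}^T\!\big(\hat\ell_t(a_{i^*,j_t})-\ell_t(a^*)\big)+\sum_{t=1}^T\!\big(\ell_t(a_{i_t,j_t})-\langle\hat\ell_t,w_t\rangle\big).
\]
Conditioning on the past and using that \texttt{PLAY-ROUND} samples $i_t\sim\bar w_t$ and returns an unbiased estimate $\hat\ell_t$ of $\bar\ell_t=(\ell_t(a_{i,j_t}))_i$: the last sum has conditional expectation $\sum_t\langle\bar w_t-w_t,\bar\ell_t\rangle$, which is at most $\sum_t\|\bar w_t-w_t\|_1\le 2/K\le 1$ because $\bar w_t$ mixes $w_t$ with weight $1/(TK)$; and since $\mathbb E[\hat\ell_t(a_{i^*,j_t})\mid\mathcal F_{t-1}]=\ell_t(a_{i^*,j_t})$, the middle sum is, in expectation, exactly the pseudo-regret of $\cA_{i^*}$ against its best arm when fed the rescaled feedback, i.e.\ its regret under $\cE'$. (The $O(K\log T)$ warm-up rounds add only a lower-order term.)

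\textbf{Reduction.} Next I would plug Theorem~\ref{thm:corral_bound_stoch} into the first sum. This reproduces exactly the four non-negative groups of terms in the statement — the two stability sums split at $T_0$, the initial penalty $\Psi_1(u)-\Psi_1(w_1)$, and the FTRL penalty, after the trivial majorization $\tfrac32\eta_{t,i}\sqrt{w_{t,i}}+\tfrac{\eta_{t,i}+\eta_{t,i^*}}2w_{t,i}\le\tfrac32(\eta_{t,i}+\eta_{t,i^*})(\sqrt{w_{t,i}}+w_{t,i})$ — together with the negative OMD contribution $N:=\sum_{t\in\cT_{OMD}}\mathbb E\big[-2(\tfrac1{\sqrt{\hat w_{t+1,i^*}}}-3)(\tfrac1{\eta_{t,i^*}}-\tfrac1{\eta_{t+1,i^*}})\big]\le 0$. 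It then remains to prove that $\big(\cE'\text{-regret of }\cA_{i^*}\big)+N\le 36\,\mathbb E[R_{i^*}(T)]$.

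\textbf{Crux.} For the last inequality I would apply the stability property (Definition~\ref{def:stability}, $\gamma=\tfrac12$) \emph{epoch by epoch on coordinate $i^*$}: on the maximal stretch where the active threshold for $i^*$ equals $\rho_s=2^{s-1}\rho_1$, the $\cE'$-regret of the (restarted) copy of $\cA_{i^*}$ is at most $\sqrt{\rho_s}$ times its $\cE$-regret there. The base level $s=1$ already yields the slack $\sqrt{\rho_1}\le 36$. Each crossing $s\to s+1$ is a \texttt{NEG-REG-STEP} that multiplies $\eta_{\cdot,i^*}$ by $\beta=e^{1/\log^2T}$ at some $t\in\cT_{OMD}$ with $\hat w_{t+1,i^*}$ below the just-crossed threshold, so the matching negative term has magnitude at least $2(\sqrt{\rho_s}-3)(\tfrac1{\eta_{t,i^*}}-\tfrac1{\eta_{t+1,i^*}})=2(\sqrt{\rho_s}-3)\tfrac{1-\beta^{-1}}{\eta_{t,i^*}}$; invoking $\tfrac1{\eta_{t,i^*}}\ge\tfrac1{\eta_{1,i^*}}\ge\tfrac{50\bar R_{i^*}(t)}{(1-\beta^{-1})\sqrt t}$, one shows — and this is precisely the calibration behind $\rho_1=36$, $\beta=e^{1/\log^2T}$ and the ceiling on $\eta_{1,i}$ — that the telescoped sum of these magnitudes over the $O(\log T)$ crossings dominates the total extra inflation beyond the $s=1$ level, that is $-N\ge\big(\cE'\text{-regret of }\cA_{i^*}\big)-36\,\mathbb E[R_{i^*}(T)]$. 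Combining the three pieces with the mixing bound $\le1$ gives the lemma.

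\textbf{Main obstacle.} The decomposition and the substitution of Theorem~\ref{thm:corral_bound_stoch} are bookkeeping; the real work is the crux. I expect the hard part to be verifying, uniformly in the crossing time $t$ and across all $O(\log T)$ threshold crossings, that the negative regret harvested by a step-size increase on $i^*$ truly absorbs the $\sqrt\rho$-blow-up the \emph{same} step introduces via restarting $\cA_{i^*}$ in the rescaled environment $\theta_{i^*}=2/w_{t,i^*}$ — a delicate two-sided balance rather than a monotone estimate. One also has to carry the Lagrange-multiplier shifts $\nu_t$ that enter $\hat L_t$ after each OMD step (Algorithm~\ref{alg:omd_step}, Lemma~\ref{lem:ftrl_omd_equiv}) so that the FTRL and OMD contributions telescope exactly as in Theorem~\ref{thm:corral_bound_stoch}; without that identification the negative terms cannot be matched against the $\cE'$-regret of $\cA_{i^*}$.
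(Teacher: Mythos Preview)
Your decomposition, the reduction to Theorem~\ref{thm:corral_bound_stoch}, and the identification of the crux --- that the accumulated negative regret $N$ must absorb the $\cE'$-blowup of $\cA_{i^*}$ --- are exactly the paper's route. The proof in the paper writes the identity as two terms rather than three (it folds your mixing error into the $+1$ directly), but that is cosmetic.

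There is, however, a genuine error in your crux computation. The chain $\tfrac{1}{\eta_{t,i^*}}\ge\tfrac{1}{\eta_{1,i^*}}$ is false: every \texttt{NEG-REG-STEP} multiplies $\eta_{\cdot,i^*}$ by $\beta>1$, so early in the run $\eta_{t,i^*}$ can exceed $\eta_{1,i^*}$. More importantly, even if the inequality held it would be too weak: it would give a negative contribution per crossing of order $(\sqrt{\rho_s}-3)\,\bar R_{i^*}(t)/\sqrt{t}$, which for $\bar R_{i^*}(t)\!\asymp\!\sqrt{t}$ is a constant and cannot cancel an epoch-level blowup $\sqrt{\rho_\tau}\,R_{i^*}(s_\tau)\!\asymp\!\sqrt{\rho_\tau s_\tau}$. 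The paper instead uses the FTRL schedule $1/\eta_{t+1}^2=1/\eta_t^2+1$ together with the bound on the total number of OMD multiplications ($\le \log_2^2 T$, so $\beta^{\log_2^2 T}\le 25$) to obtain $\eta_{t,i^*}\le 25\,\eta_{1,i^*}/\sqrt{t}$, hence $\tfrac{1}{\eta_{t,i^*}}-\tfrac{1}{\eta_{t+1,i^*}}\ge (1-\beta^{-1})\sqrt{t}/(25\,\eta_{1,i^*})$. It is this $\sqrt{t}$ growth that, combined with the hypothesis on $\eta_{1,i^*}$, makes the negative term at the relevant OMD step of epoch $\tau$ dominate $\sqrt{\rho_\tau}\,\bar R_{i^*}(2t_\tau)\ge\sqrt{\rho_\tau}\,R_{i^*}(s_\tau)$ (using $s_\tau\le 2t_\tau$).

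A second, smaller point: the paper does the accounting \emph{per epoch}, not per global threshold crossing. Since the active threshold $\theta_{i^*}$ is reset at the start of each epoch, the number of crossings is $O(\log^2 T)$, not $O(\log T)$, and the same threshold level can be crossed in several epochs. The clean statement is: for each epoch $\tau$, let $\rho_\tau$ be the largest threshold not exceeded on $i^*$ during $\tau$; then either $w_{t,i^*}>1/\rho_1$ throughout (contributing at most $\rho_1\cdot R_{i^*}$, whence the constant $36$), or there is one OMD step in $\tau$ with $w_{t,i^*}\le 1/\rho_{\tau-1}$ whose negative contribution, via the $\sqrt{t}$ bound above, cancels $\sqrt{\rho_\tau}\,R_{i^*}(s_\tau)$. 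Organizing it this way avoids the ambiguity in your ``telescoped sum over crossings.''
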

\begin{proof}
First we note that $\mathbb{E}[\hat \ell_t (i^*)] = \mathbb{E}\left[w_{i^*,t}\frac{\ell_t(a_{i^*,j_t})}{w_{i^*,t}}\right] =\mathbb{E}[\ell_t(a_{i^*,j_t})]$. Using Theorem~\ref{thm:corral_bound_stoch} we have
\begin{align*}
    \sum_{t=1}^T \mathbb{E}\left[ \ell_t(a_{i_t,j_t}) - \ell_t(a^*) \right] &= \sum_{t=1}^T\mathbb{E}\left[ \ell_t(a_{i^*,j_t}) - \ell_t(a^*) \right] + \sum_{t=1}^T\mathbb{E}\left[\langle \hat \ell_t, \bar w_t - u \rangle\right] \leq \sum_{t=1}^T\mathbb{E}\left[ \hat \ell_t(i^*) -  \ell_t(a^*) \right]\\
    &+ \sum_{t=T_0+1}^T \sum_{i\neq i^*} \mathbb{E}[\frac{3}{2}(\eta_{t,i}+\eta_{t,i^*})(\sqrt{ w_{t,i}} + w_{t,i})] + \sum_{t=1}^{T_0} \sum_{i=1}^K \mathbb{E}\left[\frac{\eta_{t,i}}{2} \sqrt{w_{t,i}}\right]\\
    &+ \sum_{t\in\cT_{OMD}} \mathbb{E}\left[-2\left(\frac{1}{\sqrt{\hat w_{t+1,i^*}}} - 3\right)\left(\frac{1}{\eta_{t,i^*}} - \frac{1}{\eta_{t+1,i^*}}\right)\right]\\
    &+ \mathbb{E}\left[\Psi_1(u) - \Psi_{1}(w_1)\right] + \mathbb{E}\left[\sum_{t \in [T]\setminus \cT_{OMD}} 4\sum_{i\neq i^*}\left(\frac{1}{\eta_{t,i}} - \frac{1}{\eta_{t-1,i}}\right)\left( \sqrt{w_{t,i}}\right) \right] + 1.
\end{align*}
Let us focus on $\sum_{t=1}^T\mathbb{E}\left[ \hat \ell_t(i^*) - \ell_t(a^*) \right] - 2\sum_{t\in\cT_{OMD}} \mathbb{E}\left[\left(\frac{1}{\sqrt{\hat w_{t+1,i^*}}} - 3\right)\left(\frac{1}{\eta_{t,i^*}} - \frac{1}{\eta_{t+1,i^*}}\right)\right]$. 
By our assumption on $\cA_{i^*}$ it holds that
\begin{align*}
    \sum_{t=1}^T\mathbb{E}\left[ \hat \ell_t(i^*) - \ell_t(a^*) \right] \leq \mathbb{E}\left[ R_{i^*}(\sum_{\tau=1}^{\log{T}} s_{\tau})\right] \leq \sum_{\tau=1}^{\log{T}} \mathbb{E}[\sqrt{\rho_\tau} R_{i^*}(s_\tau)].
\end{align*} 
We now claim that during epoch $\tau$ there is a $t$ in that epoch such that also $t \in \cT_{OMD}$ and for which $w_{t,i^*} \leq \frac{1}{\rho_{\tau-1}}$. We consider two cases, first if OMD was invoked because at least one of the probability thresholds $\rho_s$ was passed by a $w_{t_s,i^*}$, we must have $\rho_s \leq \rho_\tau$. Also by definition of $\rho_\tau$ as the largest threshold not passed by any $w_{t,i^*}$ there exists at least one $t' \geq t_s$ for which $\frac{1}{\rho_{\tau-1}} \geq w_{t',i^*} > \frac{1}{\rho_{\tau}}$. 
This implies that we have subtracted at least $ 2\mathbb{E}\left[\left(\frac{1}{\sqrt{\hat w_{t'+1,i^*}}} - 3\right)\left(\frac{1}{\eta_{t',i^*}} - \frac{1}{\eta_{t'+1,i^*}}\right)\right] \geq 2\mathbb{E}\left[\left(\sqrt{\rho_{\tau-1}} - 3\right)\left(\frac{1}{\eta_{t',i^*}} - \frac{1}{\eta_{t'+1,i^*}}\right)\right]$. In the second case we have that for all $t$ in epoch $\tau$ it holds that $\frac{1}{\rho_{\tau-1}} \geq w_{t,i^*} > \frac{1}{\rho_\tau}$ or $w_{t,i^*} > \frac{1}{\rho_1}$. In the second case we only incur regret $\mathbb{E}[R_{1}(t)]$ scaled by $36$ and in the first case the OMD played in the beginning of the epoch has resulted in at least $-2\mathbb{E}\left[\left(\sqrt{\rho_{\tau-1}} - 3\right)\left(\frac{1}{\eta_{t,i^*}} - \frac{1}{\eta_{t+1,i^*}}\right)\right]$ negative contribution, where $t$ indexes the beginning of the epoch. 
We set $\beta = e^{1/\log{T}^2}$ and now evaluate the difference 
$\frac{1}{\eta_{t,i^*}} - \frac{1}{\eta_{t+1,i^*}} \geq \left(1 - \frac{1}{\beta}\right)\frac{\sqrt{t}}{25\eta_{1,i^*}}$. Where we have used the fact that $\eta_{t,i^*} \leq \frac{\eta_{1,i^*} \beta^{\operatorname{log}_2(T)^2}}{\sqrt{t}} \leq \frac{25\eta_{1,i^*}}{\sqrt{t}}$. This follows by noting that there are $\operatorname{log}_2(T)$ epochs and during each epoch one can call the OMD step only $\operatorname{log}_2(T)$ times. Let $\beta' = \left(1- \frac{1}{\beta}\right)$. 
Thus if $t_{\tau}$ is the beginning of epoch $\tau$ we subtract at least $\frac{\beta'\sqrt{t_\tau \rho_{\tau-1}}}{25\eta_{1,i^*}}$. Notice that the length of each epoch $s_{\tau}$ does not exceed $2t_{\tau}$, thus we have
\begin{align*}
    \sum_{\tau=1}^{\log{T}} \mathbb{E}[\sqrt{\rho_\tau} R_{i^*}(s_\tau)] \leq \sum_{\tau=1}^{\log{T}} \mathbb{E}[\sqrt{\rho_\tau} R_{i^*}(2t_\tau)],
\end{align*}
and so as long as we set $\eta_{1,i^*} \leq \frac{\beta'\sqrt{2t_{\tau}}}{50 \bar R_{i^*}(2t_\tau)}$, where $\mathbb{E}[R_{{i^*}}(2t_{\tau})]\leq \bar R_{i^*}(2t_\tau)$ we have
\begin{align*}
    \sum_{\tau=1}^{\log{T}} \mathbb{E}[\sqrt{\rho_\tau} R_{i^*}(2t_\tau)] &- 2\sum_{t\in\cT_{OMD}} \mathbb{E}\left[\left(\frac{1}{\sqrt{\hat w_{t+1,i^*}}} - 3\right)\left(\frac{1}{\eta_{t,i^*}} - \frac{1}{\eta_{t+1,i^*}}\right)\right]\\
    &\leq \sum_{\tau=1}^{\log{T}} \mathbb{E}[\sqrt{\rho_\tau} R_{i^*}(2t_\tau) - \sqrt{\rho_\tau} R_{i^*}(2t_\tau)] \leq 0.
\end{align*}
\end{proof}
We can now use the self-bounding trick of the regret as in~\cite{zimmert2018optimal} to finish the proof. Let $\mu^*$ denote the reward of the best arm. First note that we can write
\begin{eqnarray}
    \mathbb{E}\left[\sum_{t=1}^T\ell_t(a_{i_t,j_t}) - \ell_t(a^*)\right] &=& \mathbb{E}\left[\sum_{t=1}^T \chi_{i_t\neq i^*}(\ell_t(a_{i_t,j_t}) - \mu^*)\right] + \mathbb{E}\left[R_{i^*}(T_{i^*}(T))\right] \nonumber \\
    &\geq& \mathbb{E}\left[\sum_{t=1}^T\sum_{i=1}^K w_{t,i}\chi_{i_t\neq i^*}(\ell_t(a_{i_t,j_t}) - \mu^*)\right] \nonumber \\
    &\geq& \mathbb{E}\left[\sum_{t=1}^T\sum_{i\neq i^*} w_{t,i}\Delta_i\right].\nonumber
\end{eqnarray}
\begin{theorem}
Let $\bar R_{i^*}(\cdot)$ be a function upper bounding the expected regret of $\cA_{i^*}$, $\mathbb{E}[R_{i^*}(\cdot)]$. For any $\eta$ such that $\eta_{1,i} \leq \min_{t\in [T]} \frac{\left(1-\exp{-\frac{1}{\log{T}^2}}\right)\sqrt{t}}{50\bar R_{i}(t)}, \forall i \in [K]$ and $\beta = e^{1/\log{T}^2}$ it holds that the expected regret of Algorithm~\ref{alg:tsallis_inf_incr_dcr} is bounded as
\begin{align*}
    \mathbb{E}\left[R(T)\right] & \leq \sum_{i\neq i^*}\frac{1500(1/\eta_{1,i}+\eta_{1,i})^2}{\Delta_i}\left(\log{\frac{T\Delta_i - 15\eta_{1,i} }{T_0\Delta_i - 15\eta_{1,i}}} + \log{225\eta_{1,i}^2\Delta_i/\Delta_1}\right) \nonumber \\
    & + \sum_{i\in [K]}\frac{8}{\eta_{1,i}\sqrt{K}} + 2 +72R_{i^*}(T),
\end{align*}
where $T_0 = \max_{i\neq i^*}\frac{225\eta_{1,i}^2}{\Delta_i}$.
\end{theorem}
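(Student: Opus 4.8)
The plan is to combine the per-round regret bound of Lemma~\ref{lem:regret_bound} (equivalently Theorem~\ref{thm:corral_bound_stoch} passed back to the true losses) with the self-bounding lower bound $\mathbb{E}[R(T)] \ge \mathbb{E}\bigl[\sum_{t=1}^T\sum_{i\neq i^*} w_{t,i}\Delta_i\bigr]$ derived immediately above the statement, following the self-bounding technique of \citep{zimmert2018optimal}. I would instantiate Lemma~\ref{lem:regret_bound} with $\beta = e^{1/\log{T}^2}$ and $\eta$ satisfying the stated bound (so that the requirement on $\eta_{1,i^*}$ inside the lemma holds, which is what makes the OMD negative-regret terms exactly absorb the $\sqrt{\rho_\tau}$-inflation of $\bar R_{i^*}$ under the importance-weighted environment), and set the free threshold to $T_0 = \max_{i\neq i^*} 225\eta_{1,i}^2/\Delta_i$.

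\textbf{Steps.} First, I would evaluate $\mathbb{E}[\Psi_1(u)-\Psi_1(w_1)]$ for $u = e_{i^*}$ and $w_1 = \mathrm{Unif}(\Delta^{K-1})$ using the explicit form of $\Psi_1$: the $i^*$-coordinate contributes $-\tfrac{2}{\eta_{1,i^*}}(1-1/\sqrt{K})^2 \le 0$, while the remaining coordinates contribute $4\sum_{i\neq i^*}\tfrac1{\eta_{1,i}}\bigl(\tfrac1{\sqrt K}-\tfrac1{2K}\bigr)$; this, together with the short initial phase $t\le T_0$ (on which $\eta_{t,i}\le c\,\eta_{1,i}$ and $\sum_i\sqrt{w_{t,i}}\le\sqrt K$), is absorbed into $\sum_{i\in[K]} 8/(\eta_{1,i}\sqrt K)$. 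Second, for the dominant part --- the sum over $t > T_0$ and $i\neq i^*$ of the stability-plus-penalty terms, which after Jensen reads $\sum_{t>T_0}\sum_{i\neq i^*} a_{t,i}\sqrt{\mathbb{E}[w_{t,i}]}$, with $a_{t,i}$ collecting $\tfrac32(\eta_{t,i}+\eta_{t,i^*})$, the linear-in-$w$ piece (bounded by $\sqrt{w_{t,i}}$ since $w_{t,i}\le1$), and $4\bigl(\tfrac1{\eta_{t,i}}-\tfrac1{\eta_{t-1,i}}\bigr)$ --- I would apply the self-bounding inequality $a_{t,i}\sqrt{\mathbb{E}[w_{t,i}]} - \tfrac{\Delta_i}{2}\mathbb{E}[w_{t,i}] \le \tfrac{a_{t,i}^2}{2\Delta_i}$ for each $(t,i)$ and use $\sum_{t>T_0}\sum_{i\neq i^*}\Delta_i\mathbb{E}[w_{t,i}]\le\mathbb{E}[R(T)]$ to move half of the regret to the left-hand side, obtaining $\mathbb{E}[R(T)] \le 2\sum_{i\neq i^*}\tfrac1{\Delta_i}\sum_{t>T_0}a_{t,i}^2 + 2(\text{remainder})$. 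Third, I would bound $\sum_{t>T_0}a_{t,i}^2$ in closed form: the recursion $1/\eta_{t,i}^2 = 1/\eta_{t-1,i}^2 + 1$ on FTRL rounds, together with at most $O(\log{T}^2)$ multiplicative $\beta$-increases on OMD rounds, gives $a_{t,i}^2 = O\bigl((1/\eta_{1,i}+\eta_{1,i})^2/(1/\eta_{1,i}^2 + t)\bigr)$, so $\sum_{T_0<t\le T} a_{t,i}^2 = O\bigl((1/\eta_{1,i}+\eta_{1,i})^2\bigr)\log{\tfrac{1/\eta_{1,i}^2+T}{1/\eta_{1,i}^2+T_0}}$; the choice of $T_0$ rewrites the denominator in the stated $(T_0\Delta_i-15\eta_{1,i})$-form, and the extra $\log{225\eta_{1,i}^2\Delta_i/\Delta_1}$ correction tracks the smallest gap $\Delta_1$ entering through the OMD probability-threshold schedule ($\rho_1=36$, $\rho_j=2\rho_{j-1}$, $1/\rho_n\ge 1/(KT)$). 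Collecting the three pieces and adding the residual $2 + 72\,\mathbb{E}[R_{i^*}(T)]$ from Lemma~\ref{lem:regret_bound} yields the claim.

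\textbf{Main obstacle.} The delicate part is the bookkeeping of the contributions that self-bounding cannot handle: the $i^*$-components of the stability and penalty terms carry no factor $\Delta_{i^*}$ and must be controlled separately --- this is precisely what forces the split at $T_0$ and the appearance of $\mathbb{E}[\Psi_1(u)-\Psi_1(w_1)]$ --- and one must ensure the OMD negative-regret terms are counted exactly once, having already been spent inside Lemma~\ref{lem:regret_bound} to offset $\bar R_{i^*}\bigl(\sum_\tau s_\tau\bigr)\le\sum_\tau\sqrt{\rho_\tau}\,R_{i^*}(s_\tau)$. Once this is in place, converting the harmonic sum $\sum_{T_0<t\le T}1/(1/\eta_{1,i}^2+t)$ into the explicit logarithm and tracking the numerical constants is routine.
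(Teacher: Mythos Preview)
Your overall strategy matches the paper's: start from Lemma~\ref{lem:regret_bound}, replace $\eta_{t,i}\le 25\eta_{1,i}/\sqrt t$ and $1/\eta_{t,i}-1/\eta_{t-1,i}=O(1/(\eta_{1,i}\sqrt t))$, and apply the self-bounding inequality $2\alpha\sqrt{w}-\beta w\le\alpha^2/\beta$ against the lower bound $\mathbb{E}[R(T)]\ge\sum_{t}\sum_{i\ne i^*}\Delta_i\,\mathbb{E}[w_{t,i}]$. However, several of your execution details are off and would not reproduce the stated form of the bound.

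The paper does \emph{not} bound the linear piece by $w_{t,i}\le\sqrt{w_{t,i}}$. It keeps the term $c\,\eta_{1,i}\,t^{-1/2}\,w_{t,i}$ (from the second bound in Lemma~\ref{lem:lemma_11_zimmert}) separate and folds it into $-\Delta_i w_{t,i}$, so that AM--GM is applied with $\beta=\Delta_i-15\eta_{1,i}/\sqrt t$ for $t>T_0$. The resulting sum $\sum_{t>T_0}\bigl(t\Delta_i-15\eta_{1,i}\sqrt t\bigr)^{-1}$, bounded by an integral, is precisely what yields the logarithm $\log{\tfrac{T\Delta_i-15\eta_{1,i}}{T_0\Delta_i-15\eta_{1,i}}}$ in the statement; your route would give only $\log{T/T_0}$. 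Correspondingly, the role of $T_0=\max_{i\ne i^*}225\eta_{1,i}^2/\Delta_i$ is not to isolate $i^*$-components --- for $t>T_0$ the refined stability bound already carries no $\sqrt{w_{t,i^*}}$ term --- but to guarantee that $\Delta_i-15\eta_{1,i}/\sqrt t>0$. The second logarithm in the statement comes from $\sum_{t\le T_0}1/t\approx\log{T_0}$ after applying AM--GM with $\beta=\Delta_i$ on the range $t\le T_0$, not from the OMD probability-threshold schedule. Finally, $\sum_{i}8/(\eta_{1,i}\sqrt K)$ is exactly $2\bigl(\Psi_1(u)-\Psi_1(w_1)\bigr)$ (the factor $2$ from doubling the upper bound in the self-bounding step); it does not absorb the $t\le T_0$ contributions.
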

\begin{proof}[Proof of Theorem~\ref{thm:regret_bound}]
By Lemma~\ref{lem:regret_bound} we have that the overall regret is bounded by
\begin{eqnarray}
    \mathbb{E}[R(T)] &\leq& \sum_{t=T_0+1}^T \sum_{i\neq i^*} \mathbb{E}[\frac{3}{2}(\eta_{t,i}+\eta_{t,i^*})(\sqrt{ w_{t,i}} + w_{t,i})] + \sum_{t=1}^{T_0} \sum_{i=1}^K \mathbb{E}\left[\frac{\eta_{t,i}}{2} \sqrt{w_{t,i}}\right] \nonumber \\
    &&+ \mathbb{E}\left[\Psi_1(u) - \Psi_{1}(w_1)\right] + \mathbb{E}\left[\sum_{t \in [T]\setminus \cT_{OMD}} 4\sum_{i\neq i^*}\left(\frac{1}{\eta_{t,i}} - \frac{1}{\eta_{t-1,i}}\right)\left( \sqrt{w_{t,i}}\right) \right] \nonumber \\
    &&+ 1 + 36\mathbb{E}[R_{{i^*}}(T)]\nonumber \\
    &\leq& \sum_{t=T_0+1}^T \sum_{i\neq i^*} \mathbb{E}[\frac{75\eta_{1,i}}{2\sqrt{t}}(\sqrt{ w_{t,i}} + w_{t,i})] + \sum_{t=1}^{T_0} \sum_{i=1}^K \mathbb{E}\left[\frac{25\eta_{1,i}}{2\sqrt{t}} \sqrt{w_{t,i}}\right]\nonumber \\
    &&+ \mathbb{E}\left[\Psi_1(u) - \Psi_{1}(w_1)\right] + \mathbb{E}\left[\sum_{t=1}^T \sum_{i\neq i^*}\frac{10}{\eta_{1,i}\sqrt{t}}\left( \sqrt{w_{t,i}}\right) \right] \nonumber \\
    &&+ 1 + 36\mathbb{E}[R_{{i^*}}(T)]\nonumber \\
    &\leq& \sum_{t=T_0+1}^T \sum_{i\neq i^*} \mathbb{E}[\frac{75\eta_{1,i}}{2\sqrt{t}}(\sqrt{ w_{t,i}} + w_{t,i})] + \sum_{t=1}^{T_0} \sum_{i=1}^K \mathbb{E}\left[\frac{25\eta_{1,i}}{2\sqrt{t}} \sqrt{w_{t,i}}\right]\nonumber \\
    &&+ \mathbb{E}\left[\Psi_1(u) - \Psi_{1}(w_1)\right] + \mathbb{E}\left[\sum_{t=1}^T \sum_{i\neq i^*}\frac{10}{\eta_{1,i}\sqrt{t}}\left( \sqrt{w_{t,i}}\right) \right]\nonumber  \\
    && + 1 + 36\mathbb{E}[R_{{i^*}}(T)] + \mathbb{E}[R(T)] - \mathbb{E}\left[\sum_{t=1}^T \sum_{i\neq i^*} w_{t,i}\Delta_i\right]\nonumber \\
    &\leq& \sum_{t=T_0+1}^T \sum_{i\neq i^*} \mathbb{E}[\frac{75\eta_{1,i}}{\sqrt{t}}(\sqrt{ w_{t,i}} + w_{t,i})] + \sum_{t=1}^{T_0} \sum_{i=1}^K \mathbb{E}\left[\frac{25\eta_{1,i}}{\sqrt{t}} \sqrt{w_{t,i}}\right]\nonumber \\
    &&+ 2\mathbb{E}\left[\Psi_1(u) - \Psi_{1}(w_1)\right] + \mathbb{E}\left[\sum_{t=1}^T \sum_{i\neq i^*}\frac{20}{\eta_{1,i}\sqrt{t}}\left( \sqrt{w_{t,i}}\right) \right]\nonumber  \\
    && + 2 + 3672\mathbb{E}[R_{{i^*}}(T)] - \mathbb{E}\left[\sum_{t=1}^T \sum_{i\neq i^*} w_{t,i}\Delta_i\right].\nonumber
\end{eqnarray}
In the first inequality we used the fact that for any $i$ we have $\eta_{t,i} \leq 25\eta_{1,i}/\sqrt{t}$, in the second inequality we have used the self bounding property derived before the statement of the theorem and in the third inequality we again used the bound on the expected regret $\mathbb{E}[R(T)]$ from the first inequality. We are now going to use the fact that for any $w > 0$ it holds that $2\alpha\sqrt{w} - \beta w \leq \frac{\alpha^2}{\beta}$. For $t\leq T_0$ we have
\begin{align*}
\sum_{t=1}^{T_0}\sum_{i\neq i^*}\left( 20\frac{\sqrt{w_{t,i}}}{\sqrt{t}}\left(\frac{1}{\eta_{1,i}}+\eta_{1,i}\right) - \Delta_i w_{t,i}\right) \leq \sum_{t=1}^{T_0} \sum_{i\neq i^*} \frac{1500(1/\eta_{1,i}+\eta_{1,i})^2}{t\Delta_i}.
\end{align*}
For $t > T_0$ we have 
\begin{align*}
    \sum_{T_0+1}^T \sum_{i\neq i^*}\left( \frac{\sqrt{w_{t,i}}}{\sqrt{t}}\left(\frac{20}{\eta_{1,i}} + 75\eta_{1,i}\right) - \left(\!\!\Delta_i - \frac{15\eta_{1,i}}{\sqrt{t}}\right)w_{t,i}\!\!\right) &\leq \sum_{t=T_0+1}^T \sum_{i\neq i^*}\frac{1500(1/\eta_{1,i}+\eta_{1,i})^2}{t\Delta_i - 15\eta_{1,i}\sqrt{t}}\\
    &\leq \sum_{i\neq i^*}\int_{T_0}^T \frac{1500(1/\eta_{1,i}+\eta_{1,i})^2}{t\Delta_i - 15\eta_{1,i}\sqrt{t}} dt\\
    &= \frac{1500(1/\eta_{1,i}+\eta_{1,i})^2}{\Delta_i}\log{\frac{15\eta_{1,i} - T\Delta_i}{15\eta_{1,i} - T_0\Delta_i}}.
\end{align*}
We now choose $T_0 = \max_{i\neq i^*}\frac{225\eta_{1,i}^2}{\Delta_i}$. To bound $\mathbb{E}\left[\Psi_1(u) - \Psi_{1}(w_1)\right]$ we have set $w_1$ to be the uniform distribution over the $K$ algorithms and recall that $\Psi_1(w) = -4\sum_{i}\frac{\sqrt{w_i} - \frac{1}{2}w_i}{\eta_{1,i}}$. This implies $\Psi_1(u) - \Psi_{1}(w_1) \leq \sum_{i\in [K]}\frac{4}{\eta_{1,i}\sqrt{K}}$. Putting everything together we have 
\begin{align*}
    \mathbb{E}\left[R(T)\right] & \leq \sum_{i\neq i^*}\frac{1500(1/\eta_{1,i}+\eta_{1,i})^2}{\Delta_i}\left(\log{\frac{T\Delta_i - 15\eta_{1,i} }{T_0\Delta_i - 15\eta_{1,i}}} + \log{225\eta_{1,i}^2/\Delta_i}\right) \nonumber \\
    & + \sum_{i\in [K]}\frac{8}{\eta_{1,i}\sqrt{K}} + 2 +72R_{i^*}(T)
\end{align*}
\end{proof}

To parse the above regret bound in the stochastic setting we note that the min-max regret bound for the $k$-armed problem is $\Theta(\sqrt{k T})$. Most popular algorithms like UCB, Thompson sampling and mirror descent have a regret bound which is (up to poly-logarithmic factors) $O(\sqrt{k T})$. If we were to corral only such algorithms, the condition of the theorem implies that $\frac{1}{\eta_{1,i}} \in \tilde O(\sqrt{k})$ as $\frac{\sqrt{t}}{\bar R_i(t)} \leq O(1),\forall t\in [T]$. What happens, however, if algorithm $\cA_i$ has a worst case regret bound of the order $\omega(\sqrt{T})$? For the next part of the discussion we only focus on time horizon dependence. As a simple example suppose that $\cA_i$ has worst case regret of $T^{2/3}$ and that $\cA_{i^*}$ has a worst case regret of $\sqrt{T}$. In this case Theorem~\ref{thm:corral_bound_stoch} tells us that we should set $\eta_{1,i} = \tilde O(1/T^{1/6})$ and hence the regret bound scales at least as $\Omega(T^{1/3}/\Delta_i + \mathbb{E}[R_{i^*}(T)])$. In general if the worst case regret bound of $\cA_i$ is in the order of $T^{\alpha}$ we have a regret bound scaling at least as $T^{2\alpha-1}/\Delta_{i}$.

\subsection{Stability of UCB and UCB-like algorithms under a change of environment}
In this section we discuss how the regret bounds for UCB and similar algorithms change whenever the variance of the stochastic losses is rescaled by Algorithm~\ref{alg:tsallis_inf_incr_dcr}. Assume that the UCB algorithm plays against stochastic rewards bounded in $[0,1]$. We begin by noting that after every call to $\texttt{OMD-STEP}$ (Algorithm~\ref{alg:omd_step}) the UCB algorithm should be restarted with a change in the environment which reflects that the variance of the losses has now been rescaled. Let the UCB algorithm of interest be $\cA_i$. If the OMD step occurred at time $t'$ and it was the case that $\frac{1}{\rho_{s-1}} \geq w_{t',i} > \frac{1}{\rho_s}$, then we know that the rescaled rewards will be in $[0,\rho_s]$ until the next time the UCB algorithm is restarted. This suggests that the confidence bound for arm $j$ at time $t$ should become $\sqrt{\frac{\rho_s^2\log{t}}{T_{i,j}(t)}}$. However, we note that the second moment of the rescaled rewards is only $\frac{\ell_t(a_{i,j_t})^2}{w_{t,i}}$. A slightly more careful analysis using Bernstein's inequality for martingales (e.g. Lemma 10~\cite{bartlett2008high}) allows us to show the following.
\begin{theorem}[Theorem \ref{thm:ucb_stable} formal]
Suppose that during epoch $\tau$ of size $\cT$ UCB-I is restarted and its environment was changed by $\rho_s$ so that the upper confidence bound is changed to $\sqrt{\frac{4\rho_s\log{t}}{T_{i,j}(t)}} +\frac{4\rho_s\log{t}}{3T_{i,j}(t)}$ for arm $j$ at time $t$. Then the expected regret of the algorithm is bounded by
\begin{align*}
    \mathbb{E}[R_{i}(\cT)] \leq \sqrt{8\rho_s k_i\cT\log{\cT}}
\end{align*}
\end{theorem}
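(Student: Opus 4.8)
The plan is to run the classical gap-dependent analysis of UCB-I (as in \citep{auer2002finite,bubeck2010bandits}), replacing Hoeffding's inequality by a Bernstein-type martingale concentration inequality so as to exploit that the importance-weighted rewards $\cA_i$ observes, although possibly as large as $\rho_s$, have conditional second moment bounded by $O(\rho_s)$. Throughout, $T_{i,j}(t)$ and the empirical means $\hat\mu_{i,j}(t)$ refer to the counts and averages accumulated \emph{within} the current epoch (since $\cA_i$ is restarted at its beginning), and $t\le\cT$.

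\textbf{Concentration.} From the point of view of $\cA_i$, whenever it proposes arm $j$ at a round $t$ of the epoch it observes $\hat\ell_t(a_{i,j})$, which equals $\ell_t(a_{i,j})/\bar w_{t,i}$ if $i_t=i$ and $0$ otherwise. Conditioning on the history $\mathcal{F}_{t-1}$ (which determines $\cA_i$'s proposed arm as well as the sampling distribution $\bar w_t$), this observation is an unbiased estimate of $\mu_{i,j}$, it is supported on $[0,1/\bar w_{t,i}]\subseteq[0,2\rho_s]$ (the threshold $\rho_s$ is never exceeded during the epoch, so $\bar w_{t,i}\ge(1-\tfrac{1}{Tk})/\rho_s$), and its conditional second moment is $\mathbb{E}[\ell_t(a_{i,j})^2]/\bar w_{t,i}\le 1/\bar w_{t,i}\le 2\rho_s$. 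Hence the centered observations associated with arm $j$ form a martingale difference sequence with range $O(\rho_s)$ and predictable quadratic variation at most $O(\rho_s)$ per step. Applying a martingale Bernstein/Freedman inequality (\citep{Freedman1975}, or Lemma~10 of \citep{bartlett2008high}) together with a union bound over $t\le\cT$ and over the possible values of the random count $T_{i,j}(t)\le t$ yields a ``clean event'', of probability at least $1-\cT^{-2}$, on which for every $t$ and every $j$
\[
    \left|\hat\mu_{i,j}(t)-\mu_{i,j}\right| \le \sqrt{\frac{4\rho_s\log{t}}{T_{i,j}(t)}} + \frac{4\rho_s\log{t}}{3T_{i,j}(t)},
\]
which (up to absolute constants absorbed into the ``$4$'') is exactly the width $b_j(t)$ of the confidence bound used by the modified UCB-I.

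\textbf{Pull counts and regret.} On the clean event the usual optimism argument applies: if a suboptimal arm $j$ is selected at round $t$, then its index exceeds that of the optimal arm, which forces $2b_j(t)\ge\Delta_{i,j}$. Since $\Delta_{i,j}\le 1\le\rho_s$, the square-root term is binding, so this can happen only while $T_{i,j}(t)\le c\,\rho_s\log\cT/\Delta_{i,j}^2$ for an absolute constant $c$; adding the $O(1)$ contribution of the failure event gives $\mathbb{E}[T_{i,j}(\cT)]\le c\,\rho_s\log\cT/\Delta_{i,j}^2+O(1)$. Writing $\mathbb{E}[R_i(\cT)]=\sum_{j\ne 1}\Delta_{i,j}\,\mathbb{E}[T_{i,j}(\cT)]$ and applying the standard thresholding trick — bound the contribution of arms with $\Delta_{i,j}\le\Delta_0:=\sqrt{\rho_s k_i\log\cT/\cT}$ trivially by $\Delta_0\cT$, and that of arms with $\Delta_{i,j}>\Delta_0$ by $c\,k_i\rho_s\log\cT/\Delta_0$ — makes both terms $O(\sqrt{\rho_s k_i\cT\log\cT})$, and tracking the constants through the confidence width yields $\mathbb{E}[R_i(\cT)]\le\sqrt{8\rho_s k_i\cT\log\cT}$.

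\textbf{Main obstacle.} The delicate step is the concentration bound: unlike in vanilla UCB the rewards $\cA_i$ sees are neither i.i.d.\ nor bounded by $1$, the importance weights $\bar w_{t,i}$ are history-dependent, and the set of rounds on which $\cA_i$ proposes arm $j$ is itself adaptive. One therefore has to identify the correct filtration, control the predictable quadratic variation, and carry out a peeling/union bound over the random pull count to obtain an \emph{anytime} confidence bound; the mixing of $w_t$ with the uniform distribution enters only through harmless constant factors.
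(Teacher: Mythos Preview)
Your proposal is correct and follows essentially the same route as the paper: identify the importance-weighted observations as a martingale difference sequence with range $O(\rho_s)$ and conditional second moment $O(\rho_s)$, invoke a Freedman/Bernstein martingale bound to justify the modified confidence radius, and deduce $\mathbb{E}[T_{i,j}(\cT)]\le 8\rho_s\log\cT/\Delta_{i,j}^2$ via the usual optimism argument.

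The only (minor) divergence is in the last step. You convert the gap-dependent pull bound to a worst-case regret bound by the threshold-splitting trick over $\Delta_0=\sqrt{\rho_s k_i\log\cT/\cT}$, which yields $2\sqrt{8\rho_s k_i\cT\log\cT}$ rather than the stated $\sqrt{8\rho_s k_i\cT\log\cT}$. The paper instead writes $\Delta_{i,j}\mathbb{E}[T_{i,j}]=\sqrt{\Delta_{i,j}^2\mathbb{E}[T_{i,j}]}\cdot\sqrt{\mathbb{E}[T_{i,j}]}\le\sqrt{8\rho_s\log\cT}\cdot\sqrt{\mathbb{E}[T_{i,j}]}$ and then applies Cauchy--Schwarz to $\sum_j\sqrt{\mathbb{E}[T_{i,j}]}\le\sqrt{k_i}\sqrt{\sum_j\mathbb{E}[T_{i,j}]}=\sqrt{k_i\cT}$, which recovers the exact constant $\sqrt{8}$. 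Your careful discussion of the filtration, the mixing with the uniform distribution, and the peeling over the random pull count is more explicit than the paper's; the paper simply cites \cite{bartlett2008high} and moves on.
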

\begin{proof}[Proof of Theorem~\ref{thm:ucb_stable}]
Let the reward of arm $j$ at time $t$ be $r_{t,j}$ and the rescaled reward be $\hat r_{t,j}$. Without loss of generality assume that the arm with highest reward is $j=1$. Denote the mean of arm $j$ as $\mu_j$ and denote the mean of the best arm as $\mu^*$. During this run of UCB we know that each $|\hat r_{t,j}| \leq \rho_s$. Further if we denote the probability with which the algorithm is sampled at time $t$ as $w_{t,i}$ we have $\mathbb{E}[\hat r_{t,j} -  \mu_j |w_{1:t-1,i}] = 0$ and hence $r_{t,j} -  \mu_j$ is a martingale difference. Further notice that the conditional second moment of $r_{t,j}$ is $\mathbb{E}[\hat r_{t,j}^2|w_{1:t-1,i}] = \mathbb{E}[w_{t,i}\frac{r_{t,j}^2}{w_{t,i}^2} + 0|w_{1:t-1,i}] \leq \rho$. Let $Y_t = (\hat r_{\tau,j} - \mu_{j})$. Bernstein's inequality for martingales (\cite{bartlett2008high}[Lemma 10]) now implies that $\prob{\sum_{t=1}^\cT Y_t > \sqrt{2\cT\rho\log{1/\delta}} + \frac{2}{3}\rho\log{1/\delta}} \leq \delta$. This implies that the confidence bound should be changed to
\begin{align*}
    \sqrt{\frac{4\rho_s\log{t}}{T_{i,j}(t)}} +\frac{4\rho_s\log{t}}{3T_{i,j}(t)}.
\end{align*}
Following the standard proof of UCB we can now conclude that a suboptimal arm can be pulled at most $T_{i,j}(t)$ times up to time $t$ where
\begin{align*}
    2\Delta_j \geq \sqrt{\frac{4\rho_s\log{t}}{T_{i,j}(t)}} +\frac{4\rho_s\log{t}}{3T_{i,j}(t)}.
\end{align*}
This implies that 
\begin{align*}
    \mathbb{E}\left[T_{i,j}(t)\right] \leq \frac{8\rho_s\log{t}}{\Delta_j^2}.
\end{align*}
Next we bound the regret of the algorithm up to time $t$ as follows:
\begin{align*}
    \mathbb{E}[R_{i}(t)] &\leq \sum_{j\neq 1} \Delta_j \mathbb{E}\left[T_{i,j}(t)\right] = \sum_{j\neq j^*}\sqrt{\mathbb{E}\left[T_{i,j}(t)\right]}\sqrt{\Delta_j^2\mathbb{E}\left[T_{i,j}(t)\right]}\\
    &\leq\sum_{j\neq j^*}\sqrt{\mathbb{E}\left[T_{i,j}(t)\right]}\sqrt{8\rho_s\log{t}} \leq k_i\sqrt{\frac{1}{k_i}\sum_{j}\mathbb{E}[T_{i,j}(t)]} = \sqrt{8\rho_s k_i t \log{t}}.
\end{align*}
\end{proof}
In general the argument can be repeated for other UCB-type algorithms (e.g. Successive Elimination) and hinges on the fact that the rescaled rewards $\hat r_{t,j}$ have second moment bounded by $\rho$ since with probability $w_{t,i}$ we have $\hat r_{t,j}^2 = \frac{r_{t,j}^2}{w_{t,i}^2}$ and with probability $1-w_{t,i}$ it equals $\hat r_{t,j}^2 = 0$. We are not sure if similar arguments can be carried out for more delicate versions of UCB, like KL-UCB and leave it as future work to check.

\section{Regret bound in the adversarial setting}
\label{app:adv_regret_bounds}
We now consider the setting in which the best overall arm does not maintain a gap at every round. 
Following the proof of Theorem~\ref{thm:corral_bound_stoch} we are able to show the following.
\begin{theorem}
\label{thm:corral_bound_adv}
The regret bound for Algorithm~\ref{alg:tsallis_inf_incr_dcr} for any step size schedule which is non-increasing on the FTRL steps satisfies
\begin{align*}
    \mathbb{E}\left[\sum_{t=1}^T \langle \hat \ell_t, w_t - u \rangle \right] &\leq 4\max_{w\in \Delta^{K-1}}\sqrt{T}\sum_{i=1}^K\left(\eta_{1,i}+\frac{1}{\eta_{1,i}}\right)\sqrt{w_i}\\
    &+\sum_{t\in\cT_{OMD}} \mathbb{E}\left[-2\left(\frac{1}{\sqrt{\hat w_{t+1,i^*}}} - 3\right)\left(\frac{1}{\eta_{t,i^*}} - \frac{1}{\eta_{t+1,i^*}}\right)\right].
\end{align*}
\end{theorem}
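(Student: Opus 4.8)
The plan is to reuse the machinery behind Theorem~\ref{thm:corral_bound_stoch} but \emph{without} the self-bounding step that converted its conclusion into a gap-dependent bound in the stochastic case. Concretely, I would invoke the bound of Theorem~\ref{thm:corral_bound_stoch} with the free threshold parameter set to $T_0 = T$ (equivalently, charge every round with the all-coordinates stability estimate of Lemma~\ref{lem:lemma_11_zimmert} rather than splitting off an $i\neq i^*$ regime). Then the ``$t>T_0$'' sum vanishes and the bound becomes
\begin{align*}
\mathbb{E}\Big[\sum_{t=1}^T \langle \hat\ell_t, w_t - u\rangle\Big]
&\le \sum_{t=1}^T\sum_{i=1}^K \mathbb{E}\Big[\tfrac{\eta_{t,i}}{2}\sqrt{w_{t,i}}\Big]
+ \sum_{t\in\cT_{OMD}} \mathbb{E}\Big[-2\Big(\tfrac{1}{\sqrt{\hat w_{t+1,i^*}}} - 3\Big)\Big(\tfrac{1}{\eta_{t,i^*}} - \tfrac{1}{\eta_{t+1,i^*}}\Big)\Big]\\
&\quad + \mathbb{E}\big[\Psi_1(u) - \Psi_1(w_1)\big]
+ \mathbb{E}\Big[\sum_{t\in[T]\setminus\cT_{OMD}} 4\sum_{i\neq i^*}\Big(\tfrac{1}{\eta_{t,i}} - \tfrac{1}{\eta_{t-1,i}}\Big)\sqrt{w_{t,i}}\Big].
\end{align*}
The second term above is already verbatim the negative-regret term in the claimed bound, so the remaining task is to show that the three other terms together are at most $4\sqrt{T}\max_{w\in\Delta^{K-1}}\sum_{i=1}^K\big(\eta_{1,i} + 1/\eta_{1,i}\big)\sqrt{w_i}$.

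The single substantive observation is a pointwise maximization over the simplex: for each round $t$, since $w_t\in\Delta^{K-1}$, for any nonnegative weights $a=(a_i)_i$ one has $\sum_i a_i\sqrt{w_{t,i}}\le\max_{w\in\Delta^{K-1}}\sum_i a_i\sqrt{w_i}=\sqrt{\sum_i a_i^2}$. Applying this with $a_i=\eta_{1,i}$ and then with $a_i=1/\eta_{1,i}$, together with the step-size estimates for Algorithm~\ref{alg:tsallis_inf_incr_dcr} already derived in the proof of Theorem~\ref{thm:regret_bound} --- $\eta_{t,i}\le c\,\eta_{1,i}/\sqrt{t}$, and consequently $\tfrac{1}{\eta_{t,i}}-\tfrac{1}{\eta_{t-1,i}}\le c'\,\eta_{1,i}/\sqrt{t}$ on FTRL rounds (using that the poly-logarithmically many OMD increases inflate $1/\eta_{t,i}^2$ by at most a constant factor) --- and the elementary inequality $\sum_{t\le T}t^{-1/2}\le 2\sqrt{T}$, I would bound the first sum by $c\sqrt{T}\max_w\sum_i\eta_{1,i}\sqrt{w_i}$ and the regularizer-increment sum by $O(\sqrt{T})\max_w\sum_i\eta_{1,i}\sqrt{w_i}$ (after relaxing ``$i\neq i^*$'' to all $i$). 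The divergence term $\Psi_1(u)-\Psi_1(w_1)$ is a $T$-independent constant: with $u=\e_{i^*}$ one has $\Psi_1(u)=-2/\eta_{1,i^*}\le 0$, and with $w_1$ the uniform distribution $-\Psi_1(w_1)=4\sum_i\eta_{1,i}^{-1}\big(K^{-1/2}-\tfrac12 K^{-1}\big)\le \tfrac{4}{\sqrt K}\sum_i\eta_{1,i}^{-1}\le 4\max_w\sum_i\eta_{1,i}^{-1}\sqrt{w_i}$, which is dominated by $4\sqrt{T}\max_w\sum_i\eta_{1,i}^{-1}\sqrt{w_i}$ for $T\ge1$. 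Summing the three contributions and carrying the constants (with the choice $\beta=e^{1/\log{T}^2}$ of Theorem~\ref{thm:regret_bound_adv}, for which the step-size constant is a small absolute constant) yields the displayed inequality.

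I expect the main obstacle to be bookkeeping, not a new idea. One has to replay the long chain of Bregman-divergence identities underlying Theorem~\ref{thm:corral_bound_stoch} while ensuring that, in the adversarial regime, \emph{no} gap $\Delta_i$ is ever introduced; in particular the self-bounding step of Lemma~\ref{lem:regret_bound} is deliberately \emph{not} invoked, which is exactly why the negative OMD contribution remains on the right-hand side instead of being spent to absorb $\mathbb{E}[R_{i^*}(T)]$. The only genuinely technical ingredient, the cross-epoch $1/\sqrt{t}$ decay of the step sizes (and the resulting control of $1/\eta_{t,i}-1/\eta_{t-1,i}$), is identical to the computation in the proof of Theorem~\ref{thm:regret_bound} and can be quoted verbatim.
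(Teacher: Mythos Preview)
Your proposal is correct and follows essentially the same route as the paper. The paper re-derives the intermediate decomposition from scratch (repeating the Bregman-divergence manipulations behind Theorem~\ref{thm:corral_bound_stoch}), whereas you simply invoke Theorem~\ref{thm:corral_bound_stoch} with $T_0=T$; both arrive at the same intermediate bound $\sum_{t,i}\tfrac{\eta_{t,i}}{2}\sqrt{w_{t,i}}+4\sum_{t\notin\cT_{OMD}}\sum_i(\tfrac{1}{\eta_{t,i}}-\tfrac{1}{\eta_{t-1,i}})\sqrt{w_{t,i}}+[\Psi_1(u)-\Psi_1(w_1)]$ plus the negative OMD term, and both finish by plugging the $O(1/\sqrt{t})$ step-size estimates and taking the simplex maximum. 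The only cosmetic difference is that the paper first combines all coefficients into $(\eta_{1,i}+1/\eta_{1,i})/\sqrt{t}$ and then maximizes once, while you maximize the pieces separately and then sum; since $\max_w\sum_i\eta_{1,i}\sqrt{w_i}$ and $\max_w\sum_i\eta_{1,i}^{-1}\sqrt{w_i}$ are each dominated by $\max_w\sum_i(\eta_{1,i}+1/\eta_{1,i})\sqrt{w_i}$, this is immaterial.
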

\begin{proof}
From the proof of Theorem~\ref{thm:corral_bound_stoch} we have
\begin{align*}
    \mathbb{E}\left[\sum_{t=1}^T \langle \hat \ell_t, w_t - u \rangle \right]&= \sum_{t\in [T]\setminus \cT_{FTRL}}\mathbb{E}\left[\langle \hat \ell_t, w_t - u \rangle\right] +\sum_{t \in \cT_{FTRL}} \mathbb{E}\left[\langle \hat \ell_t,w_t \rangle + \Phi_t(-\hat L_t) - \Phi_t(-\hat L_{t-1})\right.\\
    &\left.+ D_{\Phi_t}(-\hat L_{t-1}, \nabla \Phi_t^*(u)) - D_{\Phi_t}(-\hat L_{t}, \nabla \Phi_t^*(u))\right].
\end{align*}
Lemma~\ref{lem:lemma_11_zimmert} implies 
\begin{align*}
    \sum_{t \in \cT_{FTRL}} \mathbb{E}\left[\langle \hat \ell_t,w_t \rangle + \Phi_t(-\hat L_t) - \Phi_t(-\hat L_{t-1})\right] &\leq \sum_{t \in \cT_{FTRL}} \sum_{i=1}^K \frac{\eta_{t,i}}{2}\sqrt{\mathbb{E}[w_{t,i}]}.
\end{align*}
As before the penalty term is decomposed as follows
\begin{align*}
    &\sum_{t\in \cT_{FTRL}}\mathbb{E}\left[D_{\Phi_t}(-\hat L_{t-1}, \nabla \Phi_t^*(u)) - D_{\Phi_t}(-\hat L_{t}, \nabla \Phi_t^*(u))\right] = \mathbb{E}\left[D_{\Phi_1}(0, \nabla \Phi_1^*(u))\right]\\
    + &\sum_{t+1\in \cT_{FTRL}}\mathbb{E}\left[D_{\Phi_{t+1}}(-\hat L_{t}, \nabla \Phi_t^*(u)) - D_{\Phi_t}(-\hat L_{t}, \nabla \Phi_t^*(u))\right] - \mathbb{E}\left[\sum_{t \in \cT_{OMD}} D_{\Phi_{t-1}}(-\hat L_{t-1}, \nabla \Phi_{t-1}^*(u))\right]\\
    +&\mathbb{E}\left[\sum_{t \in \cT_{OMD}}D_{\Phi_{t+2}}(-\hat L_{t+1}, \nabla \Phi_{t+2}^*(u))\right] - \mathbb{E}\left[D_{\Phi_{T}}(-\hat L_{T}, \nabla \Phi_{T}^*(u))\right].
\end{align*}
Next the term $\sum_{t\in [T]\setminus \cT_{FTRL}}\mathbb{E}[\langle \hat \ell_t,w_t-u\rangle]$ is again decomposed as in the proof of Theorem~\ref{thm:corral_bound_stoch}
\begin{align*}
    &\sum_{t \in [T]\setminus \cT_{FTRL}} \mathbb{E}[\langle \hat \ell_t,w_t-u\rangle]\\
    \leq&\sum_{t \in \cT_{OMD}} \mathbb{E}\left[\langle\hat \ell_{t+2},w_{t+2} \rangle + \Phi_{t+2}(-\hat L_{t+2}) - \Phi_{t+2}(-\hat L_{t+1}) +  D_{\Psi_{t}}(w_t,\tilde w_{t+1}) + D_{\Psi_{t+1}}(\hat w_{t+1},\tilde w_{t+2})\right]\\
    +&\sum_{t \in \cT_{OMD}} \mathbb{E}\left[D_{\Psi_{t+1}}(u,\hat w_{t+1}) - D_{\Psi_{t}}(u,\hat w_{t+1})\right]\\
    +&\sum_{t \in \cT_{OMD}} \mathbb{E}\left[ D_{\Phi_t}(-\hat L_{t-1},\nabla \Phi_t^*(u)) - D_{\Phi_{t+2}}(-\hat L_{t+2}, \nabla \Phi_{t+2}^*(u))\right]\\
    +&\sum_{t \in \cT_{OMD}} \mathbb{E}\left[D_{\Phi_{t+2}}(-\hat L_{t+1}, \nabla \Phi_{t+2}^*(u)) - D_{\Psi_{t+1}}(u,\hat w_{t+2})\right].
\end{align*}
Using Lemma~\ref{lem:lemma_11_zimmert} and Lemma~\ref{lem:omd_stability} we bound the first term of the above inequality as
\begin{align*}
    &\sum_{t \in \cT_{OMD}} \mathbb{E}\left[\langle\hat \ell_{t+2},w_{t+2} \rangle + \Phi_{t+2}(-\hat L_{t+2}) - \Phi_{t+2}(-\hat L_{t+1}) +  D_{\Psi_{t}}(w_t,\tilde w_{t+1}) + D_{\Psi_{t+1}}(\hat w_{t+1},\tilde w_{t+2})\right]\\
    \leq &\sum_{t \in \cT_{OMD}}\sum_{i=1}^K \frac{\eta_{t,i}}{2}\sqrt{\mathbb{E}[w_{t+2,i}]}
\end{align*}
The term $\sum_{t \in \cT_{OMD}} \mathbb{E}\left[D_{\Psi_{t+1}}(u,\hat w_{t+1}) - D_{\Psi_{t}}(u,\hat w_{t+1})\right]$ is bounded from Equation~\ref{eq:neg_regret} as follows
\begin{align*}
    \sum_{t \in \cT_{OMD}} \mathbb{E}\left[D_{\Psi_{t+1}}(u,\hat w_{t+1}) - D_{\Psi_{t}}(u,\hat w_{t+1})\right] \leq \sum_{t\in\cT_{OMD}} \mathbb{E}\left[-2\left(\frac{1}{\sqrt{\hat w_{t+1,i^*}}} - 2\right)\left(\frac{1}{\eta_{t,i^*}} - \frac{1}{\eta_{t+1,i^*}}\right)\right].
\end{align*}
By Lemma~\ref{lem:ftrl_omd_equiv} and Lemma~\ref{lem:duality_eq}
\begin{align*}
&\sum_{t \in \cT_{OMD}} \mathbb{E}\left[D_{\Phi_{t+2}}(-\hat L_{t+1}, \nabla \Phi_{t+2}^*) - D_{\Psi_{t+1}}(u,\hat w_{t+2})\right]\\
= &\sum_{t \in \cT_{OMD}} \mathbb{E}\left[D_{\Phi_{t+2}}(-\hat L_{t+1}, \nabla \Phi_{t+2}^*) - D_{\Phi_{t+1}}(-\hat L_{t+1}, \nabla \Phi_{t+2}^*)\right].
\end{align*}
Combining all of the above we have
\begin{equation}
\label{eq:half_way_there2}
\begin{aligned}
    \mathbb{E}\left[\sum_{t=1}^T \langle \hat \ell_t, w_t - u \rangle \right] &\leq \sum_{t=1}^{T} \sum_{i=1}^K \mathbb{E}\left[\frac{\eta_{t,i}}{2} \sqrt{w_{t,i}}\right] + \sum_{t\in\cT_{OMD}} \mathbb{E}\left[-2\left(\frac{1}{\sqrt{\hat w_{t+1,i^*}}} - 2\right)\left(\frac{1}{\eta_{t,i^*}} - \frac{1}{\eta_{t+1,i^*}}\right)\right]\\
    &+\sum_{t \in [T]\setminus \cT_{OMD}}\mathbb{E}\left[D_{\Phi_{t+1}}(-\hat L_{t}, \nabla \Phi_t^*(u)) - D_{\Phi_t}(-\hat L_{t}, \nabla \Phi_t^*(u))\right]\\
    &+ \mathbb{E}[D_{\Phi_1}(0,\nabla\Phi_1^*(u))] - \mathbb{E}[D_{\Phi_T}(-\hat L_T, \nabla \Phi_T^*(u))].
\end{aligned}
\end{equation}
The last two terms are bounded in the same way as in the proof of Theorem~\ref{thm:corral_bound_stoch}
\begin{align*}
    &\!\!\!\!\!\!\!\!\!\!\!\!\!\!\!\!\sum_{t \in [T]\setminus \cT_{OMD}}\mathbb{E}\left[D_{\Phi_{t+1}}(-\hat L_{t}, \nabla \Phi_t^*(u)) - D_{\Phi_t}(-\hat L_{t}, \nabla \Phi_t^*(u))\right]\\
     &\qquad\qquad\qquad + \mathbb{E}[D_{\Phi_1}(0,\nabla\Phi_1^*(u))] - \mathbb{E}[D_{\Phi_T}(-\hat L_T, \nabla \Phi_T^*(u))]\\
    \leq& \mathbb{E}\left[\Psi_1(u) - \Psi_{1}(w_1)\right] + \mathbb{E}\left[\sum_{t \in [T]\setminus \cT_{OMD}} 4\sum_{i\neq i^*}\left(\frac{1}{\eta_{t,i}} - \frac{1}{\eta_{t-1,i}}\right)\left( \sqrt{w_{t,i}}\right) \right]
\end{align*}
Plugging back into Equation~\ref{eq:half_way_there2} we have
\begin{align*}
    \mathbb{E}\left[\sum_{t=1}^T \langle \hat \ell_t, w_t - u \rangle \right] &\leq \sum_{t=1}^{T} \sum_{i=1}^K \mathbb{E}\left[\frac{\eta_{t,i}}{2} \sqrt{w_{t,i}}\right] + \mathbb{E}\left[\Psi_1(u) - \Psi_{1}(w_1)\right]\\
    &+ 4\mathbb{E}\left[\sum_{t \in [T]\setminus \cT_{OMD}} \sum_{i=1}^K\left(\frac{1}{\eta_{t,i}} - \frac{1}{\eta_{t-1,i}}\right)\left( \sqrt{w_{t,i}}\right) \right]\\
    &+ \sum_{t\in\cT_{OMD}} \mathbb{E}\left[-2\left(\frac{1}{\sqrt{\hat w_{t+1,i^*}}} - 3\right)\left(\frac{1}{\eta_{t,i^*}} - \frac{1}{\eta_{t+1,i^*}}\right)\right]\\
    &\leq \sum_{t=1}^T 4\sum_{i=1}^K\left(\eta_{1,i}+\frac{1}{\eta_{1,i}}\right)\sqrt{\frac{w_{t,i}}{t}}\\
    &+\sum_{t\in\cT_{OMD}} \mathbb{E}\left[-2\left(\frac{1}{\sqrt{\hat w_{t+1,i^*}}} - 3\right)\left(\frac{1}{\eta_{t,i^*}} - \frac{1}{\eta_{t+1,i^*}}\right)\right]\\
    &\leq 4\max_{w\in \Delta^{K-1}}\sqrt{T}\sum_{i=1}^K\left(\eta_{1,i}+\frac{1}{\eta_{1,i}}\right)\sqrt{w_i}\\
    &+\sum_{t\in\cT_{OMD}} \mathbb{E}\left[-2\left(\frac{1}{\sqrt{\hat w_{t+1,i^*}}} - 3\right)\left(\frac{1}{\eta_{t,i^*}} - \frac{1}{\eta_{t+1,i^*}}\right)\right],
\end{align*}
where the last inequality follows from the fact that the maximizer of the function $\sum_{i=1}^K\sqrt{\frac{w_i}{t}}\alpha_i$ over the simplex, for $\alpha_i \geq 0$ is the same for all $t\in[T]$.
\end{proof}

Following the proof of Lemma~\ref{lem:regret_bound} and replacing the bound on $\mathbb{E}\left[\sum_{t=1}^T\langle \hat\ell_t,w_t - u \rangle\right]$ from Theorem~\ref{thm:corral_bound_stoch} with the one from Theorem~\ref{thm:corral_bound_adv} yields the next result.
\begin{theorem}[Theorem~\ref{thm:regret_bound_adv}]
Let $\bar R_{i^*}(\cdot)$ be a function upper bounding the expected regret of $\cA_{i^*}$, $\mathbb{E}[R_{i^*}(\cdot)]$. For any $\eta_{1,i^*} \leq \min_{t\in [T]} \frac{\left(1-\exp{-\frac{1}{\log{T}^2}}\right)\sqrt{t}}{50\bar R_{i^*}(t)}$ and $\beta = e^{1/\log{T}^2}$ it holds that the expected regret of Algorithm~\ref{alg:tsallis_inf_incr_dcr} is bounded as
\begin{align*}
    \mathbb{E}\left[\sum_{t=1}^T \ell_t(a_t) - \ell_t(a^*)\right] \leq 4\max_{w\in \Delta^{K-1}}\sqrt{T}\sum_{i=1}^K\left(\eta_{1,i}+\frac{1}{\eta_{1,i}}\right)\sqrt{w_i} + 36R_{i^*}(T).
\end{align*}
\end{theorem}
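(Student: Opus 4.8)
The plan is to derive this adversarial bound as a corollary of Theorem~\ref{thm:corral_bound_adv} by repeating, almost verbatim, the bookkeeping in the proof of Lemma~\ref{lem:regret_bound}, with the simplification that no self-bounding (gap) step is invoked. First I would pass from the true regret against the best overall action $a^*$ to the corralling regret: writing $u = e_{i^*}$ and using $\mathbb{E}[\hat\ell_t(i^*)] = \mathbb{E}[\ell_t(a_{i^*,j_t})]$ together with the $\tfrac{1}{Tk}$-uniform mixing in \texttt{PLAY-ROUND},
\begin{align*}
\mathbb{E}\Big[\sum_{t=1}^T \ell_t(a_{i_t,j_t}) - \ell_t(a^*)\Big] \le \sum_{t=1}^T \mathbb{E}\big[\hat\ell_t(i^*) - \ell_t(a^*)\big] + \mathbb{E}\Big[\sum_{t=1}^T \langle \hat\ell_t, w_t - u\rangle\Big] + 1 .
\end{align*}
Then I would apply Theorem~\ref{thm:corral_bound_adv} to the second term, which bounds it by $4\max_{w\in\Delta^{K-1}}\sqrt{T}\sum_{i}\big(\eta_{1,i}+\tfrac{1}{\eta_{1,i}}\big)\sqrt{w_i}$ plus the accumulated negative-regret OMD terms $\sum_{t\in\cT_{OMD}}\mathbb{E}\big[-2\big(\tfrac{1}{\sqrt{\hat w_{t+1,i^*}}}-3\big)\big(\tfrac{1}{\eta_{t,i^*}}-\tfrac{1}{\eta_{t+1,i^*}}\big)\big]$.

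Second, I would control the first term via the stability of $\cA_{i^*}$: the epoch-wise restart rule gives $\sum_t \mathbb{E}[\hat\ell_t(i^*) - \ell_t(a^*)] \le \mathbb{E}[R_{i^*}(\sum_\tau s_\tau)] \le \sum_\tau \mathbb{E}[\sqrt{\rho_\tau}\,R_{i^*}(s_\tau)]$, where $1/\rho_\tau$ is the smallest sampling probability of $\cA_{i^*}$ reached during epoch $\tau$. The crux — and the step I expect to be the main obstacle — is to show that the negative OMD terms compensate for this $\sqrt{\rho_\tau}$ amplification epoch by epoch. As in Lemma~\ref{lem:regret_bound}, every epoch $\tau$ contains a round $t\in\cT_{OMD}$ with $w_{t,i^*}\le 1/\rho_{\tau-1}$, contributing at least $2(\sqrt{\rho_{\tau-1}}-3)\big(\tfrac{1}{\eta_{t,i^*}}-\tfrac{1}{\eta_{t+1,i^*}}\big)$ of negative regret; with $\beta = e^{1/\log{T}^2}$ one has $\eta_{t,i^*}\le 25\eta_{1,i^*}/\sqrt{t}$ (there are at most $\operatorname{log}_2(T)$ epochs, each triggering at most $\operatorname{log}_2(T)$ step-size increases) and $\tfrac{1}{\eta_{t,i^*}}-\tfrac{1}{\eta_{t+1,i^*}}\ge (1-\tfrac{1}{\beta})\sqrt{t}/(25\eta_{1,i^*})$, so using $s_\tau\le 2t_\tau$ and the hypothesis $\eta_{1,i^*}\le \min_{t\in[T]} \big(1-\exp{-\frac{1}{\log{T}^2}}\big)\sqrt{t}/(50\bar R_{i^*}(t))$ we obtain that $\sqrt{\rho_\tau}R_{i^*}(s_\tau)$ minus the negative contribution is $\le 0$; the epochs where all probabilities remain above $1/\rho_1$ are absorbed into a residual $36\,R_{i^*}(T)$ term (in those epochs $\cA_{i^*}$'s regret is blown up only by the constant $\rho_1 = 36$).

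Finally, collecting the surviving pieces yields $\mathbb{E}[\sum_t \ell_t(a_{i_t,j_t}) - \ell_t(a^*)] \le 4\max_{w\in\Delta^{K-1}}\sqrt{T}\sum_i \big(\eta_{1,i}+\tfrac{1}{\eta_{1,i}}\big)\sqrt{w_i} + 1 + 36\,R_{i^*}(T)$, which is the claimed bound after absorbing the additive constant. Two remarks on where this differs from Theorem~\ref{thm:regret_bound}: we need the step-size condition only at $i^*$, since the regret of the suboptimal base learners never enters the analysis (only the corralling term $\langle \hat\ell_t, w_t - e_{i^*}\rangle$, already handled by Theorem~\ref{thm:corral_bound_adv}); and the negative-regret budget must be matched to the per-epoch increment $\sqrt{\rho_\tau}R_{i^*}(s_\tau)$ rather than to one global quantity, which is precisely why the algorithm forces an OMD step at the start of every epoch in which some coordinate dips below $1/\rho_1$ and why the thresholds double.
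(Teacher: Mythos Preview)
Your proposal is correct and follows essentially the same approach as the paper. The paper's own proof is a single sentence: follow the proof of Lemma~\ref{lem:regret_bound}, replacing the bound on $\mathbb{E}\big[\sum_{t=1}^T\langle \hat\ell_t, w_t - u\rangle\big]$ from Theorem~\ref{thm:corral_bound_stoch} by the one from Theorem~\ref{thm:corral_bound_adv}; you have simply spelled this substitution out in detail, including the per-epoch cancellation of the $\sqrt{\rho_\tau}$ amplification against the negative OMD contributions and the observation that only the step-size condition at $i^*$ is needed.
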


A few remarks are in order. First, when the rewards obey the stochastically constrained adversarial setting i.e., there exists a gap $\Delta_i$ at every round between the best action and every other action during all rounds $t\in [T]$, then the regret for corralling bandit algorithms with worst case regret bounds of the order $\tilde O(\sqrt{T})$ in time horizon is at most $\tilde O(\sum_{i\neq i^*} \frac{\log{T}^5}{\Delta_i} + R_{i^*}(T))$. On the other hand, if there is no gap in the rewards then a worst case regret bound is still $\tilde O(\max\{\sqrt{KT}, \max_{i}\bar R_i(T)\} + R_{i^*}(T))$. This implies that Algorithm~\ref{alg:tsallis_inf_incr_dcr} can be used as a model selection tool when we are not sure what environment we are playing against. For example, if we are not sure if we should use a contextual bandit algorithm, a linear bandit algorithm or a stochastic multi-armed bandit algorithm, we can corral all of them and Algorithm~\ref{alg:tsallis_inf_incr_dcr} will perform almost as well as the algorithm for the best environment. Further, if we are in a distributed setting where we have access to multiple algorithms of the same type but not the arms they are playing, we can do almost as well as an algorithm which plays on all the arms simultaneously. We believe that our algorithm will have numerous other applications outside of the scope of the above examples.

\section{Proof of Theorem~\ref{thm:model_selection}}
\label{app:model_selection}
Recall the gap assumption made in Theorem~\ref{thm:model_selection}:
\begin{assumption}
\label{assump:feature_gap}
For any $i < i^*$ it holds that for all $(x,a)\in \mathcal{X}\times\mathcal{A}$
    \begin{align*}
        \mathbb{E}[\langle \beta_i, \phi_i(x,a)\rangle] - \min_{a\in\mathcal{A}}\mathbb{E}[\langle \beta^*, \phi_{i^*}(x,a) \rangle] \geq 2\frac{d_{i^*}^{2\alpha} - d_{i}^{2\alpha}}{\sqrt{T}}.
    \end{align*}
\end{assumption}

Since the losses might not be bounded in $[0,1]$ as $d_K = \Theta(T)$ we need to slightly modify the bound for the Stability term in Lemma~\ref{lem:lemma_11_zimmert} and the term $D_{\Psi_t}(w_t,\tilde w_{t+1})$ in Lemma~\ref{lem:omd_stability}. Recall that we need to bound the term $\mathbb{E}\left[\max_{w \in [w_{t},\nabla\Psi_t^*(\nabla\Psi_t(w_{t}) -\hat\ell_t + \alpha\pmb{1}_k)]} \|\hat\ell_t\|_{\nabla^2\Psi_t^{-1}(w)}^2\right]$. The argument is the same as in~\ref{lem:lemma_11_zimmert} up to 
\begin{align*}
    \mathbb{E}\left[\max_{w \in [w_{t},\nabla\Psi_t^*(\nabla\Psi_t(w_{t}) -\hat\ell_t + \alpha\pmb{1}_k)]} \|\hat\ell_t\|_{\nabla^2\Psi_t^{-1}(w)}^2\right] \leq \mathbb{E}\left[\sum_{i=1}^K \frac{\eta_{t,i}}{2}w_{t,i}^{3/2}(\hat\ell_{t,i})^2\right].
\end{align*}
Let $\ell_{t,i} = \langle \beta_{i_t}, \phi_{i_t}(x_t,a_{i_t,j_t} \rangle + \xi_t)$, then we have
\begin{align*}
    \mathbb{E}\left[\frac{\eta_{t,i}}{2}w_{t,i}^{3/2}(\hat\ell_{t,i})^2\right] \leq  \mathbb{E}\left[\eta_{t,i}\chi_{(i_t=i)}w_{t,i}^{3/2}\frac{\ell_{t,i}^2}{w_{t,i}^2} + \eta_{t,i}w_{t,i}^{3/2}\frac{d_i^{4\alpha}}{T}\right] \leq \mathbb{E}\left[\eta_{t,i}w_{t,i}\frac{d_i^{4\alpha}}{T}\right] + 2\mathbb{E}\left[\eta_{t,i}\sqrt{w_{t,i}}\right],
\end{align*}
where in the last inequality we have used the fact that $w_{t,i} \geq w_{t,i}^{3/2}$ together with the our assumption that $\xi_{t}$ is zero-mean with variance proxy $1$. Following the proof of Lemma~\ref{lem:regret_bound} with the bound on the stability term we can bound
\begin{align*}
    \mathbb{E}\left[\sum_{t=1}^T \ell_t(a_{i_t,j_t}) - \ell_t(a^*)\right] &= \sum_{t=1}^T \mathbb{E}\left[\ell_t(a_{i^*,j_t}) - \ell_t(a^*)\right] + \sum_{t=1}^T \mathbb{E}\left[\langle\hat\ell_t + \mathbf{d}, w_t - u\rangle\right] - \sum_{t=1}^T\mathbb{E}\left[\langle\mathbf{d}, w_t - u\rangle\right] + 1\\
    &\leq  \sum_{t=1}^T \mathbb{E}\left[\hat\ell_t(i^*) - \ell_t(a^*)\right] + 2\sum_{t=1}^T \sum_{i=1}^K \mathbb{E}\left[\eta_{t,i}\sqrt{w_{t,i}}+ \eta_{t,i}w_{t,i}\frac{d_{i}^{4\alpha}}{T}\right] + 4\sum_{t=1}^T\sum_{i=1}^K \left(\frac{1}{\eta_{t,i}} - \frac{1}{\eta_{t-1,i}}\right)\sqrt{w_{t,i}}\\
    &-\sum_{t=1}^T\mathbb{E}[\langle \mathbf{d}, w_t - u\rangle] -\sum_{t\in\cT_{OMD}} \mathbb{E}\left[2\left(\frac{1}{\sqrt{\hat w_{t+1,i^*}}} - 3\right)\left(\frac{1}{\eta_{t,i^*}} - \frac{1}{\eta_{t+1,i^*}}\right)\right] + \sqrt{K}+1\\
    &\leq 4\sum_{t=1}^T\sum_{i=1}^K \sqrt{\frac{w_{t,i}}{t}}\left(\eta_{1,i}+\frac{1}{\eta_{t,i}}\right) + 2\sum_{t=1}^T\sum_{i=1}^K\mathbb{E}\left[\eta_{t,i}w_{t,i}\frac{d_i^{4\alpha}}{T}\right]-\sum_{t=1}^T\mathbb{E}[\langle \mathbf{d}, w_t - u\rangle] + 36\mathbb{E}[R_{i^*}(T)].
\end{align*}
For a fixed $t$ we have
\begin{align*}
    -\langle \mathbf{d},w_t - u \rangle = \frac{d_{i^*}^{2\alpha}}{\sqrt{T}}(1-w_{t,{i^*}}) - \sum_{i\neq i^*}w_{t,i}\frac{d_i^{2\alpha}}{\sqrt{T}} = \sum_{i<i^*} w_{t,i}\frac{d_{i^*}^{2\alpha}-d_{i}^{2\alpha}}{\sqrt{T}} - \sum_{i>i^*} w_{t,i} \frac{d_{i}^{2\alpha} - d_{i^*}^{2\alpha}}{\sqrt{T}}.
\end{align*}
First we consider the terms $i>i^*$. Assume WLOG that $d_K^{2\alpha} \leq T/4$, as otherwise the learning guarantees are trivial. For these terms we have
\begin{align*}
    \sqrt{\frac{w_{t,i}}{t}}\frac{1}{\eta_{1,i}} + w_{t,i}\left(\frac{\eta_{1,i}}{\sqrt{t}}\frac{d_{i}^{4\alpha}}{T} - \frac{d_i^{2\alpha}}{\sqrt{T}}\right) \leq \sqrt{\frac{w_{t,i}}{t}}\frac{1}{\eta_{1,i}} - w_{t,i}\frac{d_i^{2\alpha}}{2\sqrt{T}} \leq \frac{\sqrt{T}}{t d_i^{2\alpha} \eta_{1,i}^2}.
\end{align*}
Since $\eta_{1,i} = \tilde\Theta(1/d_{i}^{\alpha})$ we have that the above is further bounded by $\tilde O(\sqrt{T}/t)$.

Next we consider the terms for $i<i^*$ given by $w_{t,i}\frac{d_{i^*}^{2\alpha}-d_{i}^{2\alpha}}{\sqrt{T}}$. Here we use our assumption that the regret $\mathbb{E}\left[\sum_{t=1}^T \ell_t(a_{i_t,j_t}) - \ell_t(a^*)\right] \geq \mathbb{E}\left[w_{t,i}\Delta_i\right]$, where $\Delta_i = \mathbb{E}[\langle \beta_i, \phi_i(x,a)\rangle] - \min_{a\in\mathcal{A}}\mathbb{E}[\langle \beta^*, \phi_{i^*}(x,a) \rangle]$. Using the self-bounding trick we can cancel out the terms $w_{t,i}\frac{d_{i^*}^{2\alpha}-d_i^{2\alpha}}{\sqrt{T}}$ as soon as $\Delta_i \geq 2w_{t,i}\frac{d_{i^*}^{2\alpha}-d_i^{2\alpha}}{\sqrt{T}}$, which holds by Assumption~\ref{assump:feature_gap}. All other terms in the regret bound are bounded by $\tilde O(d_{i^*}^{\alpha}\sqrt{T})$. Thus we have shown that the regret of the corralling algorithm is bounded as
\begin{align*}
    \mathbb{E}\left[\sum_{t=1}^T \ell_t(a_{i_t,j_t}) - \ell_t(a^*)\right] \leq \tilde O\left(\mathbb{E}[R_{i^*}(T)] + K\sqrt{T}\right).
\end{align*}

\end{document}